%% file: main_ArXiv.tex
\documentclass{article}

\usepackage[margin=1in]{geometry}

\usepackage{ulem}

\usepackage{natbib}
\PassOptionsToPackage{numbers, sort&compress}{natbib}

\usepackage[dvipsnames]{xcolor}

\usepackage{graphicx}
\usepackage{subcaption}
\usepackage{xfrac}

\usepackage{empheq}

\usepackage{commath}
\usepackage{mathtools}
\usepackage{bm}
\mathtoolsset{showonlyrefs}
\usepackage{algorithm}
\usepackage{algorithmic}
\usepackage[dvipsnames]{xcolor}

\usepackage{enumitem}
\usepackage[utf8]{inputenc} 
\usepackage[T1]{fontenc}    
\usepackage{hyperref}       
\usepackage{url}            
\usepackage{booktabs}       
\usepackage{amsfonts}       
\usepackage{nicefrac}       

\usepackage{microtype}      
\usepackage{xcolor}         
\usepackage{xfrac}
\usepackage{wrapfig}
\usepackage{enumitem}
\usepackage{amsmath}
\usepackage{amssymb}
\usepackage{enumitem}

\newlist{assumptions}{enumerate}{1}
\setlist[assumptions]{label=(A\arabic*), ref=A\arabic*} 

\usepackage{hyperref}
 \hypersetup{
     colorlinks=true,
     linkcolor=blue,
     filecolor=blue,
     citecolor = blue,      
     urlcolor=magenta,
 }

\usepackage{amssymb}

\usepackage{amsthm}
\usepackage{amsmath}
\usepackage{mathtools}
\mathtoolsset{showonlyrefs}
\usepackage{dsfont}

\theoremstyle{plain}
\newtheorem{theorem}{Theorem}[section]
\newtheorem{proposition}[theorem]{Proposition}
\newtheorem{lemma}[theorem]{Lemma}

\theoremstyle{definition}
\newtheorem{definition}[theorem]{Definition}
\newtheorem{assumption}[theorem]{Assumption}
\theoremstyle{remark}
\newtheorem{remark}[theorem]{Remark}

\include{commands_ArXiv}

\usepackage{authblk}

\title{Asymptotic Theory of Iterated Empirical Risk Minimization, \\
with Applications to Active Learning}

\author[1]{Hugo Cui}
\author[2]{Yue M. Lu}

\affil[1]{\normalsize  Université Paris-Saclay, CNRS, Laboratoire de mathématiques d’Orsay, 91405, Orsay, France
}
\affil[2]{Applied Mathematics, Harvard John A. Paulson School of Engineering and Applied Sciences}
\date{}

\begin{document}

\maketitle

\begin{abstract}
We study a class of iterated empirical risk minimization (ERM) procedures in which two successive ERMs are performed on the same dataset, and the predictions of the first estimator enter as an argument in the loss function of the second. This setting, which arises naturally in active learning and reweighting schemes, introduces intricate statistical dependencies across samples and fundamentally distinguishes the problem from classical single-stage ERM analyses. For linear models trained with a broad class of convex losses on Gaussian mixture data, we derive a sharp asymptotic characterization of the test error in the high-dimensional regime where the sample size and ambient dimension scale proportionally. Our results provide explicit, fully asymptotic predictions for the performance of the second-stage estimator despite the reuse of data and the presence of prediction-dependent losses. We apply this theory to revisit a well-studied pool-based active learning problem, removing oracle and sample-splitting assumptions made in prior work. We uncover a fundamental tradeoff in how the labeling budget should be allocated across stages, and demonstrate a double-descent behavior of the test error driven purely by data selection, rather than model size or sample count.
\end{abstract}

\tableofcontents

\section{Introduction}
Let $\{x_i,y_i\}_{i\in[n]}$ be a dataset composed of $n$ data samples $x_i$ in $d$ dimensions, together with the associated labels $y_i$. We consider the problem of\textit{two successive} Empirical Risk Minimizations (ERMs) performed \textit{on the same dataset}, where the predictions $\inprod{\hat{w}_0, x_i}$ of a first ERM estimator
\begin{align}\label{eq:intro_ERM1}
    \hat{w}_0=\argmin_w \frac{1}{n}\sum\limits_{i=1}^n\ell_0(\inprod{w,x_i}, y_i)+\frac{\lambda_0}{2}\norm{w}^2
\end{align}
enter as an argument in the loss function of the second ERM:
\begin{align}\label{eq:intro_ERM2}
     \hat{w}=\argmin_w \frac{1}{n}\sum\limits_{i=1}^n\ell(\inprod{w,x_i},\inprod{\hat{w}_0, x_i}, y_i )+\frac{\lambda}{2}\norm{w}^2.
\end{align}
Such iterated ERMs (and their multi-step generalizations) arise naturally across a wide range of important machine learning applications:
\begin{itemize}
    \item Many classical optimization algorithms, such as Newton's method or the proximal point method \citep{rockafellar1976monotone}, can be framed as a sequence of ERMs, whose risk explicitly involves the predictions of the previous iterate.
    \item In the first step of an AdaBoost iteration \citep{freund1997decision}, samples for which the predictions of the first learner are incorrect are upweighted. The following learner is then trained on the sample-reweighted risk.\looseness=-1
    \item In uncertainty-based active learning \citep{tong2001support, roth2006margin, wang2014new}, an estimator is iteratively trained on growing subsets of the dataset, with the predictions of each estimator leveraged to select additional samples to include in the training set of the following estimator.
\end{itemize}
Despite its ubiquity, this class of iterated ERMs has not yet been the subject of a thorough theoretical treatment. While the properties of the first-stage estimator $\hat{w}_0$ in \eqref{eq:intro_ERM1} are by now well understood in the high-dimensional regime $n \asymp d$ \citep{karoui2013asymptotic, donoho2016high, thrampoulidis2018precise}, the analysis of the second ERM iteration \eqref{eq:intro_ERM2} poses qualitatively new challenges.

A central difficulty stems from the explicit appearance of the first estimator $\hat{w}_0$ inside the loss function of the second ERM \eqref{eq:intro_ERM2}. This dependence induces intricate correlations between each single-sample loss term $\ell(\langle w, x_j\rangle, \langle \hat{w}_0, x_j\rangle, y_j)$ and the entire dataset $\{x_i, y_i\}_{i\in[n]}$ through the definition of $\hat{w}_0$ in \eqref{eq:intro_ERM1}. As a result, characterizing the statistical properties of the final estimator $\hat{w}$ requires carefully unraveling and controlling these dependencies. Most existing theoretical works on iterated ERMs instead focus on the sample-splitting (or online) setting, where each ERM is trained on a fresh, independent batch of data \citep{chandrasekher2024alternating, cyffers2025optimal, lou2025hyperparameter, kaushik2024precise}.

The challenge of data re-use has been partially addressed in recent theoretical analyses, motivated in part by a stream of works in the statistical physics literature \citep{saglietti2022solvable, takahashi2022role, takanami2025effect, okajima2025asymptotic} that rely on the non-rigorous replica and Franz--Parisi methods \citep{franz1997phase}. On the rigorous front, \citep{garg2025preventing} study iterated ERMs in the specific case of the square loss, exploiting the fact that the corresponding ERM estimators admit closed-form solutions and can be analyzed using tools from random matrix theory. More recently, \citep{celentano2025state} establish a state-evolution description for a broad class of iterated min--max optimization problems under unimodal isotropic Gaussian data, extending the Gaussian min--max theorem \citep{gordon1985some, thrampoulidis2018precise} to the sequential setting. While their framework is, in principle, sufficiently general to encompass the iterated ERM setting considered here, it does not provide explicit, fully asymptotic characterizations of estimator performance in terms of a finite set of scalar order parameters.

Our \textbf{main technical contribution} is the following: we establish a sharp and fully asymptotic characterization of the test error of the second-stage estimator for Gaussian mixture data and a broad class of convex loss functions in the high-dimensional regime $n \asymp d$. In contrast to existing rigorous analyses of iterated ERMs, which either focus on quadratic losses or do not yield explicit asymptotic predictions, our results provide asymptotic characterizations in terms of a finite set of scalar order parameters. The proof relies on a new approach based on nested applications of the \textit{leave-one-out technique} \citep{karoui2013asymptotic}, and may be of independent technical interest.

\paragraph{Instantiation for active learning --- }
The asymptotic characterization developed above finds a natural and important application in \textit{active learning}. In particular, it allows us to revisit a well-studied problem in pool-based data selection and to remove oracle assumptions that are pervasive in prior analyses. Given $n$ unlabeled data samples $\{x_i\}_{i\in[n]}$, practical constraints often limit labeling to only a fraction $\gamma\in(0,1)$ of the data, yielding a training set of size $\gamma n$. This setting arises, for instance, in medical applications \citep{liu2004active}, where labeling is expensive \citep{roh2019survey}. A classical strategy for selecting such a subset relies on training a preliminary base classifier and using its predictions to identify informative samples, most commonly those lying close to the decision boundary. A now sizable body of recent work \citep{kolossov2023towards, sorscher2022beyond, askari2025improving, feng2024beyond, dohmatob2025less} has analyzed this approach in the high-dimensional regime $n\asymp d$.

However, these analyses typically rely on a crucial simplification: the base classifier is assumed to be either an oracle or trained on a separate held-out dataset at zero labeling cost. This assumption is restrictive in several important respects. In realistic budget-constrained settings, any data used to train the base classifier must be taken from the same unlabeled pool and deducted from the labeling budget. Moreover, existing works generally discard the base training set when fitting the final classifier, thereby squandering already labeled samples. As a result, these analyses are partially oracle in nature and do not fully capture the tradeoffs inherent to budget-limited data acquisition.

Our results lift these limitations and enable a principled treatment of the fully budget-constrained setting. Given a budget $\gamma\in(0,1)$, a fraction $\psi\in(0,\gamma)$ of the samples is first allocated to train a base classifier, whose predictions are then used to select the remaining $(\gamma-\psi)n$ samples within the budget constraint $\gamma n$. The final classifier is trained on all $\gamma n$ selected and labeled samples, allowing the initial training set of $\psi n$ samples to be fully or partially re-used. We find that this refined analysis unveils several novel phenomena of interest. \\

\noindent Our \textbf{main findings} are:
\begin{itemize}
    \item \textbf{Budget-allocation tradeoff.}
    In the case of separable data and margin-based selection \citep{kolossov2023towards, sorscher2022beyond}, the analysis reveals that the test error exhibits a non-monotonic dependence on the fraction of data $\psi$ allocated to training the base classifier, for a fixed total budget $\gamma$. In particular, there exists an optimal allocation $\psi^\star$ for which the test error is minimized. This behavior reflects a fundamental tradeoff between allocating sufficiently many samples $\psi$ to obtain an accurate base classifier that can guide informative sample selection, and preserving a sufficient remaining budget $\gamma-\psi$ to label and include such informative samples in the final training set.\looseness=-1

    \item \textbf{Real-data validation.}
    We illustrate that this tradeoff is not merely a theoretical artifact by demonstrating it on real datasets, using pneumonia diagnosis from chest X-rays as a representative example \citep{kermany2018identifying}.\looseness=-1

    \item \textbf{Selection-driven double-descent.}
    We further observe a double-descent behavior of the test error as a function of $\psi$. In contrast to classical double-descent phenomena studied in prior work \citep{belkin2019reconciling, geiger2020scaling, nakkiran2019more}, this effect arises at fixed model capacity and fixed total sample size $\gamma n$, and is driven purely by changes in the composition of the selected training set.
\end{itemize}

The rest of the manuscript is organized as follows. Section~\ref{sec:theory} presents our main technical result, namely a sharp and rigorous asymptotic characterization of the test error of the final estimator $\hat{w}$ in the two-stage iterated ERM~\eqref{eq:intro_ERM2}, in the high-dimensional limit $n\asymp d$. In Section~\ref{sec: main_setting}, we instantiate this result in the context of margin-based data selection for active binary classification \citep{kolossov2023towards, sorscher2022beyond, askari2025improving, feng2024beyond, dohmatob2025less}. In particular, our framework allows us to lift oracle and sample-splitting assumptions made in prior work, enabling a principled study of the fully budget-constrained setting. Subsections~\ref{sec:tradeoff} and~\ref{sec:policy} explore the implications of our results in this setting.\looseness=-1

\section{High-dimensional asymptotics of iterated ERMs}\label{sec:theory}

In this section, we present the main technical result of the manuscript, namely a sharp asymptotic characterization of the estimator $\hat{w}$ obtained from a two-stage iterated ERM in the high-dimensional regime $n\asymp d$. We begin by precisely defining the class of problems covered by our analysis.

\subsection{Setting}

\paragraph{Data distribution ---}
We consider a collection of $n$ identically and independently distributed covariate--label pairs $\{(x_i,y_i)\}_{i\in[n]}$, where the labels take the form 
\begin{equation}
    y_i = \phi(\langle \beta, x_i\rangle, c_i, \epsilon_i),
\end{equation}
for a given link function $\phi$ and a vector $\beta\in\mathbb{R}^d$. The variables $c_i$ and $\epsilon_i$ respectively denote the \textit{class label} of sample $x_i$ and a stochastic noise term.
The class labels $\{c_i\}_{i\in[n]}$ are drawn i.i.d. from an arbitrary probability measure on a finite set $\mathcal{V}$.

To each class $c\in\mathcal{V}$, we associate a vector $\mu_c\in\mathbb{R}^d$, and conditionally on $c_i$, the covariates are sampled as $x_i\sim\mathcal{N}(\mu_{c_i}, I_d)$. The latent variables $c_i$ thus induce a random partition of the dataset into $|\mathcal{V}|$ classes. These classes may, for instance, correspond to mixture components in Gaussian mixture models, in which case the vectors $\mu_c$ represent cluster centroids, and may also encode which samples are used in the first ERM, the second ERM, or both. Finally, let $\{\epsilon_i\}_{i\in[n]}$ be a collection of i.i.d.\ random variables capturing possible noise in the problem, and assume that all moments of $\epsilon_i$ are finite. We further assume that there exist deterministic quantities $\nu\in\mathbb{R}^{|\mathcal{V}|}$, $\rho\in\mathbb{R}^{|\mathcal{V}|\times|\mathcal{V}|}$, and $\varrho\in\mathbb{R}_+$ such that the inner products
\begin{align}
  \langle \beta, \mu_c\rangle = \nu_c, && 
\langle \mu_c, \mu_{c'}\rangle = \rho_{cc'}, &&
\|\beta\|^2 = \varrho  
\end{align}
are fixed for all $d$ as the limit $n,d\to\infty$ is taken.

\paragraph{Iterated ERMs ---}
Given $\{x_i, c_i, \epsilon_i\}_{i\in[n]}$, we consider the following two successive ERM problems. In the first stage, we train a base estimator $\hat{w}_0$ defined as $\hat{w}_0 = \arg\min_{w\in\mathbb{R}^d} \hat{\mathcal{R}}_0(w)$,
where the empirical risk is given by
\begin{align}\label{eq:first_stage_ERM_full}
\hat{\mathcal{R}}_0(w)
= \frac{1}{n}\sum_{i\in[n]}
\ell_0\left(\langle w,x_i\rangle,\langle \beta,x_i\rangle,c_i,\epsilon_i\right)
+ \frac{\lambda_0}{2}\|w\|^2 .
\end{align}
The loss function $\ell_0:\mathbb{R}^2\times\mathcal{V}\times\mathbb{R}\to\mathbb{R}_+$ is assumed to be convex and four times differentiable in its first argument, with derivatives bounded by $O(\polylog(d))$. In the second stage, the final estimator is obtained as $\hat{w} = \arg\min_{w\in\mathbb{R}^d} \hat{\mathcal{R}}(w)$, with empirical risk
\begin{equation}\label{eq:second_stage}
\hat{\mathcal{R}}(w)
= \frac{1}{n}\sum_{i\in[n]}
\ell\left(\langle w,x_i\rangle,\langle \hat{w}_0,x_i\rangle,\langle \beta,x_i\rangle,c_i,\epsilon_i\right)
+ \frac{\lambda}{2}\|w\|^2 .
\end{equation}
Here, the loss $\ell:\mathbb{R}^3\times\mathcal{V}\times\mathbb{R}\to\mathbb{R}_+$ is convex in its first argument and admits derivatives up to order four in its first two arguments, with all such derivatives bounded by $O(\polylog(d))$. We refer the interested reader to Appendix~\ref{app:notations} for a detailed list of technical assumptions, of which we only highlight the main ones here. 

\subsection{Precise asymptotics in high dimensions}

The principal difficulty of the problem lies in the explicit dependence of the second-stage loss on the base prediction term $\langle \hat{w}_0, x_i\rangle$, which induces nontrivial correlations between each individual loss term in $\hat{\mathcal{R}}(w)$ and the entire dataset $\{x_j\}_{j\in[n]}$ through the estimator $\hat{w}_0$ trained in \eqref{eq:first_stage_ERM_full}. Our main theorem characterizes test metrics associated with the second-stage estimator $\hat{w}$.
\begin{theorem}\label{theorem:main_theorem}
Consider any test metric 
\begin{align}\label{eq:test_metric}
\egen
=\Eb{x,c,\epsilon}{L\left(
\inprod{\hat{w},x},\inprod{\beta,x},c,\epsilon
\right)},
\end{align}
where $L$ is $O(\polylog(n))$-Lipschitz in its first variable, and there exists a polynomial $Q_L$ of $O(1)$ degree with $O(\polylog(n))$ coefficients such that
$
\forall s,c,\epsilon,~~
\abs{L(0,s,c,\epsilon)}\le \abs{Q_L(s,\epsilon)}.
$
As $n,d\to\infty$ with $\alpha=\sfrac{n}{d}=\Theta_d(1)$, the test metric $\egen$ converges in $L_1$ to
\begin{align}
\egen
=\Ea{L\left(
g_1,g_3,c,\epsilon
\right)}
+\tilde{O}_{L_1}\big(n^{-1/4}\big),
\end{align}
where the $\tilde{O}$ notation hides $\polylog(n)$ factors. Here, $g_1$ and $g_3$ are components of a $(3+|\mathcal{V}|)$-dimensional Gaussian random vector $g\sim\mathcal{N}(\Psi_c,\Phi)$, with mean and covariance
 \begin{align}
        \Psi_{c}=\left[\begin{array}{c}
        m_{c}\\  
        m_{0,c}\\
             \nu_{c} \\
            \hline
           \\
            \rho_{c} \\
            ~
        \end{array}\right], && \Phi= \left[
        \begin{array}{cccccc}
           q & t & \theta & \vline& m& \\
            t & q_0 & \theta_0 &\vline &m_0&\\
            \theta &\theta_0 &\varrho& \vline& \nu&\\
            \hline
            &&&\vline&&\\
            m &m_0& \nu & \vline&\rho&\\
            &&&\vline&& 
        \end{array}
        \right],
    \end{align}
conditionally on $c$. The summary statistics $q,t,\theta,m,m_0, q_0, \theta_0$ verify the self-consistent equations
\begin{align}
    &  
        \theta =-\sfrac{1}{\lambda}\Ea{\partial_r\ell(r,u,g_3,c,\epsilon)g_3}
        +\tilde{O}\big(n^{-1/4}\big),\\
    &  
        q =-\sfrac{1}{\lambda}\Ea{\partial_r\ell(r,u,g_3,c,\epsilon)r}
        +\tilde{O}\big(n^{-1/4}\big),\\
    & 
        t =-\sfrac{1}{\lambda}\Ea{\partial_r\ell(r,u,g_3,c,\epsilon)u}
       +\tilde{O}\big(n^{-1/4}\big),\\
    &      
        m_c =-\sfrac{1}{\lambda}\Ea{\partial_r\ell(r,u,g_3,c,\epsilon)g_c}
        +\tilde{O}\big(n^{-1/4}\big),
\end{align}
and
\begin{align}
     &  
        \theta_0 =-\sfrac{1}{\lambda_0}\Ea{\partial_u\ell_0(u,g_3,c,\epsilon)g_3}
        +\tilde{O}\big(n^{-1/4}\big),\\
    &  
        q_0=-\sfrac{1}{\lambda_0}\Ea{\partial_u\ell_0(u,g_3,c,\epsilon)u}
        +\tilde{O}\big(n^{-1/4}\big),\\
    &      
        m_{0,c} =-\sfrac{1}{\lambda_0}\Ea{\partial_u\ell_0(u,g_3,c,\epsilon)g_c}
        +\tilde{O}\big(n^{-1/4}\big).
\end{align}
We defined the random variables
\begin{align}\label{eq:proximals_ru}
     &r=\prox_{V \ell(\cdot,u,g_3,c,\epsilon )}\left(g_1+\chi\partial_u\ell_{0}(u,g_3,c,\epsilon ))\right),\\
    &u=\prox_{V_0 \ell_{0}(\cdot,g_3,c,\epsilon)}(g_2),
\end{align}
and introduced the auxiliary statistics $V,V_0, \chi$ that satisfy
\begin{align}
    &\frac{1}{\frac{n}{d} V}=\Ea{\frac{\partial_r^2\ell(r,u,g_3,c,\epsilon)}{1+V\partial_r^2\ell(r,u,g_3,c,\epsilon)}}+\lambda
        +\tilde{O}\big(n^{-1/4}\big),\\
    &\frac{1}{\frac{n}{d} V_0}=\Ea{\frac{\partial_u^2\ell_0(u,g_3,c,\epsilon)}{1+V_0\partial_u^2\ell_0(u,g_3,c,\epsilon)}}+\lambda_0
        +\tilde{O}\big(n^{-1/4}\big),\\
         &\chi=\frac{\frac{d}{n}\frac{\Ea{\frac{\partial_r\partial_u \ell(r, u,g_3,c,\epsilon) }{1+V_0\partial_u^2\ell_0(u,g_3,c,\epsilon)  }
 }}{\lambda_0+\Ea{\frac{\partial_u^2\ell_0(u,g_3,c,\epsilon)}{1+V_0\partial_u^2\ell_0(u,g_3,c,\epsilon)  }
 }}-\Ea{\frac{\partial_r^2\ell(r,u,g_3,c,\epsilon)\partial_r\partial_u\ell(r,u,g_3,c,\epsilon)V_0V}{(1+V\partial_r^2\ell(r,u,g_3,c,\epsilon))(1+V_0\partial_u^2\ell_0(u,g_3,c,\epsilon))}}}{\lambda+\Ea{\frac{\partial_r^2\ell(r,u,g_3,c,\epsilon)}{(1+V\partial_r^2\ell(r,u,g_3,c,\epsilon))(1+V_0\partial_u^2\ell_0(u,g_3,c,\epsilon))}}}
        +\tilde{O}\big(n^{-1/4}\big).
\end{align}
In the above displays, $\partial_u$ (resp. $\partial_r,\partial_u$) denote the derivatives of $\ell_0$ (resp. $\ell$) with respect to its first (resp. first two) argument.
\end{theorem}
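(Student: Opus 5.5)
The proof will proceed by nested leave-one-out analysis, following \citep{karoui2013asymptotic}.

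\textbf{First-stage estimator.} We first treat the first-stage estimator $\hat{w}_0$ in isolation. For each $i$, let $\hat{w}_{0,(i)}$ be the minimizer of $\hat{\mathcal{R}}_0$ with sample $i$ removed. A one-step Newton expansion of the first-order condition gives
\begin{equation}
\langle \hat{w}_0,x_i\rangle=\prox_{V_{0,i}\ell_0(\cdot,\langle\beta,x_i\rangle,c_i,\epsilon_i)}\big(\langle \hat{w}_{0,(i)},x_i\rangle\big)+\tilde{O}(n^{-1/2}),
\end{equation}
where $V_{0,i}=\frac1n x_i^\top H_{0,(i)}^{-1}x_i$ and $H_{0,(i)}$ is the leave-one-out Hessian. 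Since $x_i$ is independent of $\hat{w}_{0,(i)}$, the vector $(\langle\hat{w}_{0,(i)},x_i\rangle,\langle\beta,x_i\rangle,\langle\mu_{c'},x_i\rangle)$ is exactly Gaussian conditional on the remaining data. Its mean and covariance are built from $\langle \hat{w}_{0,(i)},\mu_c\rangle$, $\|\hat{w}_{0,(i)}\|^2$ and $\langle\hat{w}_{0,(i)},\beta\rangle$.

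Concentration of these overlaps comes from polylog bounds on the loss derivatives together with Efron--Stein or Poincaré arguments. Concentration of the trace $V_{0,i}\approx \frac1n\operatorname{tr}H_0^{-1}$ comes from Hanson--Wright. Plugging the stationarity condition $\lambda_0\hat{w}_0=-\frac1n\sum_i\partial_u\ell_0\,x_i$, projected onto $\hat{w}_0,\beta,\mu_c$, into these overlaps yields the equations for $q_0,\theta_0,m_{0,c}$. A resolvent identity for $\operatorname{tr}H_0^{-1}$ gives the equation for $V_0$. This is the standard single-ERM argument.

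\textbf{Second-stage estimator.} We next remove sample $i$ from \emph{both} stages, defining $\hat{w}_{(i)}$ as the minimizer of the second risk with $\hat{w}_0$ replaced by $\hat{w}_{0,(i)}$ and sample $i$ dropped. Then $x_i$ is independent of the pair $(\hat{w}_{0,(i)},\hat{w}_{(i)})$, so the joint vector $(\langle\hat{w}_{(i)},x_i\rangle,\langle\hat{w}_{0,(i)},x_i\rangle,\langle\beta,x_i\rangle,\ldots)$ is conditionally Gaussian with covariance $\Phi$. This explains the cross term $t$.

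The key step is to expand $\hat{w}-\hat{w}_{(i)}$. It has two contributions:
\begin{itemize}
\item[(a)] the direct effect of reinserting loss term $i$, which produces the proximal operator with parameter $V$;
\item[(b)] the indirect effect of changing $\hat{w}_0\to\hat{w}_{0,(i)}$ in all \emph{other} loss terms, through $\hat{w}_0-\hat{w}_{0,(i)}\approx -\frac1n H_{0,(i)}^{-1}x_i\,\partial_u\ell_0(u_i,\ldots)$.
\end{itemize}
Contribution (b) perturbs the second-stage gradient by $\frac1n\sum_{j\neq i}\partial_r\partial_u\ell\,x_jx_j^\top(\hat{w}_0-\hat{w}_{0,(i)})$. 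Applying $H_{(i)}^{-1}$ and taking the inner product with $x_i$ gives a shift $\chi\,\partial_u\ell_0(u_i,\ldots)$ inside the proximal argument, with
\begin{equation}
\chi\approx \tfrac1n x_i^\top H_{(i)}^{-1}D\,H_{0,(i)}^{-1}x_i,
\end{equation}
where $D=\frac1n\sum_j\partial_r\partial_u\ell\,x_jx_j^\top$. By Hanson--Wright, $\chi$ concentrates on the normalized trace $\frac1n\operatorname{tr}(H^{-1}DH_0^{-1})$.

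\textbf{Main obstacle.} The hardest step is evaluating this mixed trace of products of two different resolvents. I would attempt a second nested leave-one-out: remove $x_j$ from each of $H,D,H_0$ and use Sherman--Morrison twice. Each summand then factorizes into a scalar of the form
\begin{equation}
\frac{\partial_r\partial_u\ell}{(1+V\partial_r^2\ell)(1+V_0\partial_u^2\ell_0)},
\end{equation}
times the remaining trace, and solving the resulting linear self-consistency yields the stated formula for $\chi$. All replacements must be controlled uniformly over $i$ up to $\tilde{O}(n^{-1/2})$ errors in high probability. One also needs polylog bounds on $\|\hat{w}\|$ and $\|\hat{w}_0\|$, and polylog bounds on the Hessian inverses via $\lambda,\lambda_0>0$.

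\textbf{Conclusion.} With the representation $\langle\hat{w},x_i\rangle\approx r_i$ and $\langle\hat{w}_0,x_i\rangle\approx u_i$ in hand, the stationarity condition $\lambda\hat{w}=-\frac1n\sum_i\partial_r\ell\,x_i$, projected onto $\hat{w},\hat{w}_0,\beta,\mu_c$, gives the equations for $q,t,\theta,m_c$. The $V$ equation again follows from the trace identity. Finally, for a fresh test point $x$, the vector $(\langle\hat{w},x\rangle,\langle\beta,x\rangle)$ is Gaussian with the concentrated overlaps. The Lipschitz property of $L$ and the polynomial bound $Q_L$ transfer the overlap errors into an $L_1$ error. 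The $n^{-1/4}$ rate arises from taking square roots of the second-moment (Efron--Stein) variance bounds.
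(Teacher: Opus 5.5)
Your overall architecture matches the paper's. Both use the same doubly-left-out estimator $\hat w_{\setminus i}$ (sample $i$ removed from both stages) and the same surrogate, whose extra term $\zeta_i=\frac1n\partial_u\ell_{0,i}(\tilde u_{i,i})H_{\setminus i}^{-1}G_{\setminus i}H_{0,\setminus i}^{-1}x_i$ is exactly your contribution (b). Both identify $\upsilon_i=\frac1n x_i^\top H_{\setminus i}^{-1}G_{\setminus i}H_{0,\setminus i}^{-1}x_i\approx\chi$, use Gaussianity of the leave-one-out residuals and stationarity projections for $q,t,\theta,m_c$, and finish with a Lipschitz transfer for the test metric.

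The genuine difference is how you evaluate $\chi=\frac1n\mathrm{tr}[H^{-1}GH_0^{-1}]$. The paper first computes the auxiliary trace $\Omega=\frac1n\E\,\mathrm{tr}[GH_0^{-1}]$, from $G=H_0H_0^{-1}G$ and $\frac1n\E\,\mathrm{tr}\,G\approx\frac dn\Ea{\partial_r\partial_u\ell}$. It then starts from $GH_0^{-1}=HH^{-1}GH_0^{-1}$ and removes $x_i$ from all three of $H$, $G$, $H_0$. This gives a linear equation for $\chi$ with $\Omega$ as source term, which is why the stated formula has a nested fraction.

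You instead expand $G$ as a sum first and need only two Sherman--Morrison removals. The trace left over is then $\kappa=\frac1n\mathrm{tr}[H^{-1}H_0^{-1}]$, not $\chi$. Your ``linear self-consistency'' must therefore be written for $\kappa$, e.g.\ from $\lambda H^{-1}=I-H^{-1}\frac1n\sum_j\partial_r^2\ell_j x_jx_j^\top$. This gives $\kappa\approx V_0/(\lambda+\Ea{\frac{\partial_r^2\ell}{(1+V\partial_r^2\ell)(1+V_0\partial_u^2\ell_0)}})$, and hence
\[
\chi\approx\frac{V_0\,\Ea{\frac{\partial_r\partial_u\ell}{(1+V\partial_r^2\ell)(1+V_0\partial_u^2\ell_0)}}}{\lambda+\Ea{\frac{\partial_r^2\ell}{(1+V\partial_r^2\ell)(1+V_0\partial_u^2\ell_0)}}}.
\]
This agrees with the stated formula. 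By the $V_0$ equation, the first numerator term there equals $V_0\Ea{\partial_r\partial_u\ell/(1+V_0\partial_u^2\ell_0)}$, and combining it with the second numerator term gives exactly the numerator above. Your route is shorter and yields a more transparent expression. It trades the paper's auxiliary $\mathfrak{W}$ for an auxiliary $\kappa$, whose concentration needs the same kind of Efron--Stein argument.

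Two points your sketch undersells are where much of the paper's effort goes.
\begin{itemize}
\item \textbf{Trace concentration.} Hanson--Wright only ties $\gamma_i$, $\upsilon_i$ (and your $\kappa$-type quadratic forms) to normalized traces of leave-one-out matrices. Replacing those traces by deterministic $V,\chi$ inside the proximal map requires $\tilde O(1/n)$ variance bounds. Efron--Stein with the crude $\tilde O(n^{-1/2})$ leave-one-out change of a trace gives only an $O(1)$ variance bound. The paper instead shows the per-sample change is $\tilde O(1/n)$ in $L_2$. It does so via a second-order expansion in which the first-order term collapses to a single linear form in $x_i$, with an $x_i$-independent coefficient vector of norm $\tilde O(1/n)$.
\item \textbf{First-stage accuracy.} The nested step needs the first-stage surrogate to be $\tilde O(1/n)$-accurate in norm, so that $\sum_{j\neq i}(u_j-\tilde u_{j,i})^2=\tilde O(1/n)$. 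Uniform $\tilde O(n^{-1/2})$ control of individual residuals alone would, through the natural bound, only give $\|\hat w-\tilde w_i\|=\tilde O(n^{-1/2})$, which is too weak.
\end{itemize}
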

Theorem~\ref{theorem:main_theorem} provides an explicit expression for any test metric $\egen$ of the form~\eqref{eq:test_metric} in terms of a collection of scalar summary statistics $q,t,\theta,m_c,q_0,\theta_0,m_{0,c}$, characterized as the solutions of a system of coupled equations. An attentive reader will recognize that the equations governing the statistics $q_0,\theta_0,m_{0,c}$ associated with the first ERM estimator $\hat{w}_0$ recover the now classical asymptotic results for single-stage ERMs developed in, e.g.,~\citep{karoui2013asymptotic, donoho2016high, thrampoulidis2018precise}. The characterization of the statistics $q,\theta,m,t$ associated with the second ERM estimator $\hat{w}$ takes a remarkably similar form, and likewise involves expectations over a scalar random variable $r$ defined through a proximal map~\eqref{eq:proximals_ru}, in direct analogy with the role played by the variable $u$ for the first ERM. What is fundamentally new in the iterated setting is that the argument of this proximal map is no longer simply Gaussian. Instead, it consists of the sum of a Gaussian term $g_1$ and an additional correction term $\chi\,\partial_u \ell_{0}(u,g_3,c,\epsilon)$. This latter term compactly captures the intricate statistical correlations induced by the presence of the first ERM estimator $\hat{w}_0$ in the second ERM~\eqref{eq:second_stage}, whereby each individual single-sample loss term becomes correlated with the entire dataset through the training of $\hat{w}_0$ in~\eqref{eq:first_stage_ERM_full}.

The proof of Theorem~\ref{theorem:main_theorem} is provided in Appendices~\ref{app:approx}-\ref{app:proof}.
Recent work by \citep{celentano2025state} develops a state-evolution analysis of iterative min-max optimization problems involving  unimodal isotropic Gaussian data, providing an extension of the convex Gaussian min--max theorem (CGMT) \citep{gordon1985some, thrampoulidis2018precise} to a sequential setting. Much like the CGMT itself, this approach is broadly applicable and, in principle, could be specialized---through additional technical work---to the iterated ERM setting considered here. However, such a specialization is not carried out in \citep{celentano2025state}. In particular, their results do not yield explicit, fully asymptotic characterizations of estimator performance in terms of a finite collection of scalar order parameters. By contrast, the characterization of Theorem~\ref{theorem:main_theorem} is explicitly fully asymptotic, involves only scalar quantities, and accommodates Gaussian mixture data. This explicitness enables direct quantitative predictions of generalization performance despite data re-use and prediction-dependent losses.

From a technical standpoint, our analysis follows a different proof strategy based on nested applications of the leave-one-out approach developed in, e.g.,~\citep{karoui2013asymptotic, el2018impact}. We expect this methodology to extend naturally to multi-stage iterated ERMs beyond the two-stage setting studied here.

A subtle yet crucial technical point is that the assumption of $O(\polylog(d))$-bounded loss derivatives does not, in its current form, allow a direct application of Theorem~\ref{theorem:main_theorem} to the square loss or to non-differentiable test metrics $L$. This restriction is purely technical and stems from the proof technique rather than from any intrinsic limitation of the result. Indeed, such losses and metrics can be approximated arbitrarily well as $d\to\infty$ by smooth functions satisfying the required assumptions, for instance via routine mollification arguments. In the next section, we nonetheless illustrate in Figs.~\ref{fig:tradeoff} and~\ref{fig:main_strat} that even for the quadratic loss, for losses with non-continuous dependence on the second argument of $\ell$, and for non-differentiable test metrics, the characterization of Theorem~\ref{theorem:main_theorem} captures numerical experiments with high accuracy in large but finite dimensions. Finally, we note that commonly used losses such as the logistic loss fully satisfy the assumptions of the theorem without modification.

\subsection{Special cases}

The problem formulation \eqref{eq:main_base_ERM}--\eqref{eq:main_rew_ERM} provides a flexible framework that encompasses several settings of interest, of which we briefly highlight a subset. As discussed in the introduction, the two-stage iterated ERM formulation can model the initial steps of several standard optimization algorithms. Of particular interest are settings in which the loss function factorizes as $\ell(r,u,c,s,\epsilon)
=\pi(u,c,s,\epsilon)\,\tilde{\ell}(r,c,s,\epsilon)$, that is, as a base loss $\tilde{\ell}$ that is \textit{reweighted} by a function $\pi$ depending on the base predictions. Active learning problems, such as the one described in Section~\ref{sec: main_setting}, correspond to the case where $\pi$ takes values in $\{0,1\}$ and indicates whether a given sample is included in the training subset. We discuss this example in detail in Section~\ref{sec: main_setting}. Allowing $\pi$ to take more general values in $\mathbb{R}_+$ naturally leads to a broader class of sample reweighting schemes. A particularly relevant instance in binary classification corresponds to choices of the form $\pi(u,s,c,\epsilon)=\pi(\mathds{1}_{y\neq \sign(u)})$, whereby samples incorrectly classified by the base model are up-weighted. This can be viewed as a stylized model for the first step of an AdaBoost-type reweighting procedure \citep{freund1997decision}. Finally, while we focus on classification tasks in the remainder of this manuscript, we emphasize that Theorem~\ref{theorem:main_theorem} also applies to regression settings.\looseness=-1

\section{Applications to active learning} 
\label{sec: main_setting}

Theorem~\ref{theorem:main_theorem} affords a tight characterization of iterated ERMs that sharply captures the intricate correlations induced by prediction-dependent losses and data re-use. As discussed above, this class of problems arises across a wide range of machine learning tasks. In the remainder of this manuscript, we focus on a particularly notable application in the context of \textit{pool-based active learning}. The theoretical results of Theorem~\ref{theorem:main_theorem} place us in a position to revisit a well-studied problem in data selection for binary classification, previously analyzed in a stream of recent works \citep{sorscher2022beyond, kolossov2023towards, feng2024beyond, askari2025improving, dohmatob2025less} under oracle or sample-splitting assumptions. By contrast, our framework fully removes these assumptions and enables a principled treatment of the fully data-reuse, budget-constrained setting. This refined analysis further reveals several novel phenomena of interest. We begin by describing the problem in detail and situating it within the related literature.

\subsection{Setting}\label{subsec:setting}

 Consider a simple data distribution on the sample-label pairs $(x,y)\in\R^d\times \{-1,1\}$ where the samples are drawn from an isotropic Gaussian distribution $x\sim\mathcal{N}(0,I_d)$, and where there exists a fixed vector $\beta\in\R^d$ such that the labels $y$ are deterministically given as $y=\sign(\inprod{x,\beta})$. 
We assume the availability of $n$ independently drawn samples $\{x_i\}_{i\in[n]}$. Importantly, the associated labels are unknown, and the cost of their acquisition is supposed to be high, motivating the \textit{budget constraint} that only a fraction $\gamma\in(0,1)$ of the $n$ samples may be labeled. The goal of the active learning problem is then to \textit{select an informative subset} $\mathcal{S}\subset [n]$ of size $\sfrac{|\mathcal{S}|}{n}=\gamma$ to label and use for training.
Multiple active selection strategies exist \citep{settles2009active}, that aim at constructing a training subset $\mathcal{S}$ leading to a final test error smaller than that achieved for a subset naively sampled at random. In this work, following \citep{kolossov2023towards, sorscher2022beyond, askari2025improving, feng2024beyond, dohmatob2025less}, we focus on the following margin-based strategy.

\paragraph{Base classifier ---} Since it is difficult to discern informative samples from the sole knowledge of the unlabeled data $\{x_i\}_{i\in[n]}$, a natural first step consists in training a first estimator, allowing to obtain a prediction of the unknown labels, that can in turn be leveraged for selection. 
In order to constitute a training set on which this classifier may be trained, a first training set $\mathcal{S}_0\subset [n]$ is uniformly subsampled from $[n]$, with each sample independently included at random according to a Bernoulli law $c_i=\mathds{1}_{i\in\mathcal{S}_0}\sim\mathcal{B}(\psi)$, so that $\Ea{\sfrac{|\mathcal{S}_0|}{n}}=\psi$. The associated labels $\{y_i\}_{i\in\mathcal{S}_0}$ are then queried, thereby spending a first fraction $\psi\in(0,\gamma)$ of the budget. A linear base classifier with weights $\hat{w}_0$ can then be trained on the thus constituted dataset $\{x_i,y_i\}_{i\in \mathcal{S}_0}$ through the ERM $ \hat{w}_0=\argmin_{w\in\R^d}\hat{\mathcal{R}}_0(w)$, with risk
\begin{align}  \label{eq:main_base_ERM}
    \hat{\mathcal{R}}_0(w)=\frac{1}{\psi n}\sum\limits_{i\in [n]} c_i \Tilde{\ell}_0(\inprod{w,x_i},y_i)+\frac{\lambda_0}{2}\norm{w}^2,
\end{align}
using a convex loss $\Tilde{\ell}_0$, and a $\ell_2$ regularization of strength $\lambda_0$. Note importantly that only the queried labels $\{y_i\}_{i\in\mathcal{S}_0}$ actually appear in the risk with non-zero weight $c_i$.

\paragraph{Data selection --- }The base classifier predictions $\inprod{\hat{w}_0, x_i}$ can then be used as the argument of a \textit{selection policy }$\pi:\R\to\{0,1\}$ to select the remaining $(\gamma-\psi)n$ samples allowed by the budget, since $|\mathcal{S}_0|=\psi n$ samples have already been labeled. For $j\in[n]\setminus \mathcal{S}_0$, the sample $x_j$ is added to the final set $\mathcal{S}$ if and only if $\pi(\inprod{\hat{w}_0, x_j})=1$. We choose $\pi$ so that the budget constraint is satisfied on average, namely $$
    \Ea{\frac{1}{n}\sum_{j\in[n]\setminus \mathcal{S}_0} \pi(\inprod{\hat{w}_0, x_j})}=\gamma-\psi,
$$
where the expectation bears over the draw of $\{x_i\}_{i\in[n]}$ and the sampling of $\mathcal{S}_0$. Although the budget constraint is satisfied only in expectation, we show in Lemma \ref{lemma:pi_concentrates} of Appendix \ref{app:aux} that the quantity within the expectation in fact concentrates in the considered asymptotic limit under technical assumptions on $\pi$, implying that the budget constraint is in fact also satisfied with high probability. To give a concrete example, in the following, we for example discuss margin-based policies of the form $\pi(u)=\mathds{1}_{|u|\le \kappa}$, for some adjustable threshold $\kappa$, with the intuition that samples closest to the decision boundary are hardest to classify, and will therefore bring the most information once labeled. At the end of the selection step, the labels of the selected samples $\{j\in[n] \setminus \mathcal{S}_0| \pi(\inprod{\hat{w}_0, x_j})=1\}$ are queried, saturating the labeling budget.

\paragraph{Training ---} Finally, a second linear classifier can be trained on the selected dataset $\mathcal{S}=\mathcal{S}_0 \cup \{j\in[n] \setminus \mathcal{S}_0| \pi(\inprod{\hat{w}_0, x_j})=1\}$, composed of the union of the base training set $\mathcal{S}_0$ and of the newly selected data. This ERM $\hat{w}=\argmin_{w\in \R^d}\mathcal{R}(w)$ can be written in terms of a reweighted risk
\begin{align} 
     \mathcal{R}(w)=&\frac{1}{\gamma n}\sum\limits_{i\in [n]}\left(
(1-c_i)\pi(\inprod{\hat{w}_0, x_i})+c_i    
\right)\Tilde{\ell}\left(\inprod{w,x_i}, y_i\right)+\frac{\lambda }{2}\norm{w}^2,\label{eq:main_rew_ERM}
\end{align}
using a convex loss $\Tilde{\ell}$, and a $\ell_2$ regularization of strength $\lambda$. Note that the multiplicative factor $(1-c_i)\pi(\inprod{\hat{w}_0, x_i})+c_i $ in the summand allocates a weight of $1$ (resp. $0$) to samples in $\mathcal{S}$ (resp. in $[n]\setminus \mathcal{S}$). Finally, the accuracy of the trained classifier $\hat{w}$ is quantified by its test error, defined as the probability to misclassify an unseen test sample $$
    \egen =\Eb{(x,y)\sim\mathbb{P}}{\mathds{1}_{\sign(\inprod{\hat{w},x})\ne y}}.$$

\subsection{Related works in active learning}

\paragraph{Theory of active learning and subsampling ---} The problem of data selection has been the object of a vast line of works in statistical learning theory, surveyed in e.g. \citep{settles2009active} and \citep{hanneke2014theory}. The majority of these works prove worst-case bounds which become void as the dimension of the data grows, and thus do not capture typical high-dimensional problems. A large body of works has similarly been devoted to the closely related problem of subsampling, with optimal subsampling probabilities derived in e.g. \citep{drineas2006sampling, ma2015statistical, ma2022asymptotic} for linear regression, and \citep{wang2018optimal} for logistic regression. Almost all these studies however focus on the low-dimensional regime $n\gg d$, leaving the regime $n\lesssim d$ largely uncharted.

\paragraph{High-dimensional analyses of active learning ---} The high-dimensional case $n\asymp d$ was studied in a number of studies, with a particular focus on supervised active binary classification with linear models. The seminal work of \citep{seung1992query} studies a disagreement-based active learning method, assuming the learner can freely query \textit{any} sample. In contrast, in many real applications, the sample rather needs to be \textit{chosen from a pre-existing pool} of available data. In the latter scenario, \citep{cui2021large} conjecture a tight information-theoretic lower-bound on a notion of test error when the subset is optimally selected, but do not analyze a concrete algorithm. Closer to the present work, \citep{sorscher2022beyond, kolossov2023towards, askari2025improving, dohmatob2025less} analyze the binary classification task described Subsection \ref{subsec:setting}. However, these works all make the significant simplification that the base classifier is either an \textit{oracle}, or trained on a \textit{separate held-out dataset} at zero label cost. In real settings however, such a dataset would instead need to be carved out from the pool of available data, and labeled at the expense of a fraction of the budget. Furthermore,  these works do not consider re-using the base training set, in addition to the selected data, when training the final classifier  --- thereby misspending already labeled data. The eager reader may take an anticipated look at Fig.\,\ref{fig:tradeoff} (inset), which shows how holding out the base training set (dashed lines), instead of re-employing it (solid lines), may result in a significant loss in accuracy for the final model. Theorem \ref{theorem:main_theorem} enables the removal of those assumptions, and the treatment of the fully budget-constrained problem.

\paragraph{Margin-based active learning with linear classifiers --- } Let us mention that outside of the pool-based setting investigated in the present work, the problem of margin-based methods for binary classification with linear classifiers has traditionally constituted a central question in uncertainty-based active learning. \citep{dasgupta2004analysis} analyze a greedy algorithm, assuming access to oracle knowledge on the prior of the separating hyperplane.
The literature on streaming (online) active learning algorithms \citep{dasgupta2005analysis,cesa2009robust, dekel2010robust, orabona2011better, raj2022convergence,wang2016noise} in particular contains many non-asymptotic guarantees, for the square or hinge loss and specific selection policies $\pi$. We focus in the present work on the pool-based case where all unlabeled samples are available at once, and prove sharp error characterizations for a generic classes of convex loss functions and selection policies under smoothness assumptions. \citep{balcan2007margin, balcan2013active} report non-asymptotic sample complexity guarantees, but consider the case where the data distribution can be freely sampled, instead of assuming a fixed pre-existing pool.\looseness=-1  

\begin{figure}
    \centering
    \includegraphics[scale=0.5]{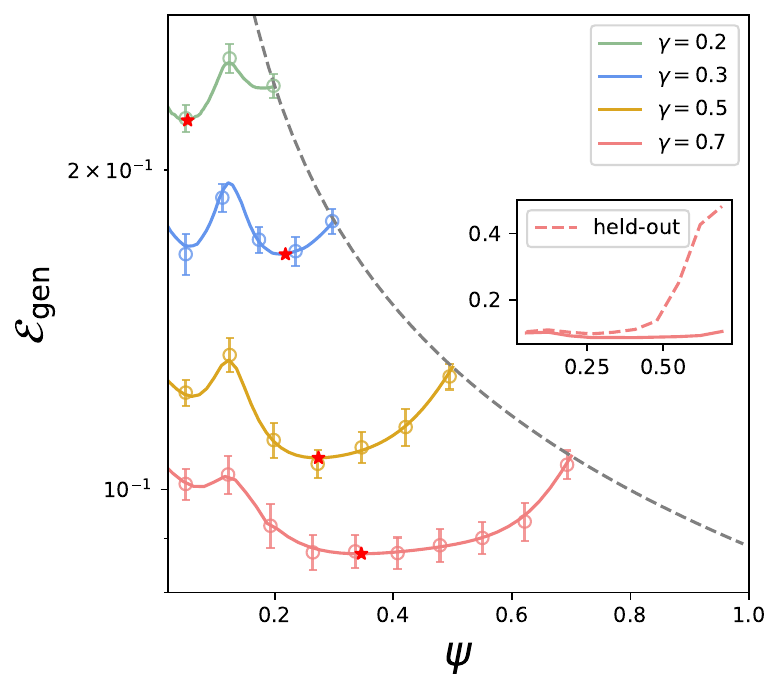}
    \caption{Test error $\egen$ of the classifier $\hat{w}$ \eqref{eq:main_rew_ERM} trained on the subset $\mathcal{S}$, selected using a base classifier $\hat{w}_0$ \eqref{eq:main_base_ERM} trained on a fraction $\psi\in (0,\gamma)$ of the $n$ samples for $\alpha=8, \lambda=0.01$, losses  $\ell(y,z),\ell_0(y,z)=\sfrac{1}{2}(y-z)^2$, and policy $\pi(u)=\mathds{1}_{|u|<\kappa^-(\gamma,\psi)}$. Different curves correspond to different total budgets $\gamma\in(0,1)$. Solid lines: theoretical predictions. Dots: numerical experiments in dimension $d=2000$; error bars represent one standard deviation over $20$ trials. Dashed line: test error of the base classifier. Inset: for $\gamma=0.7$, the dashed line represents the test error when\textit{ only the newly selected subset} $\mathcal{S}\setminus \mathcal{S}_0$ is used for the second ERM, with the base training set $\mathcal{S}_0$ being held-out. }
    \label{fig:tradeoff}
\end{figure}

\subsection{Tradeoff in budget allocation}\label{sec:tradeoff}
An attentive reader might have already realized that the budget-constrained active binary classification task described in Subsection \ref{subsec:setting} in fact constitutes a special instantiation of the generic class of iterated ERM problems \eqref{eq:intro_ERM1}, \eqref{eq:intro_ERM2} described in Section \ref{sec:theory}, with first loss $\ell_0(u,s,c,\epsilon)=c\tilde{\ell}_0(u,\sign(s))$ and  second loss $\ell(r,u,s,c,\epsilon)=((1-c)\pi(u)+c)\tilde{\ell}(r,\sign(s))$. As such, the test error of the final classifier is sharply described in the high-dimensional regime $n\asymp d$ by Theorem \ref{theorem:main_theorem}, in the generic case with data re-use and without making sample-splitting or oracle assumptions, thereby lifting a core limitation of \citep{sorscher2022beyond, kolossov2023towards, feng2024beyond, dohmatob2025less, askari2025improving}. In this Subsection, we explore the rich implications of this characterization.

Fig.\,\ref{fig:tradeoff} presents the theoretical predictions of Theorem \ref{theorem:main_theorem} for the active learning example of Section \ref{sec: main_setting}, and contrast them to numerical experiments in large but finite dimension $d=1000$, revealing good agreement. The illustrated setting corresponds to quadratic losses $\ell(y,z),\ell_0(y,z)=\sfrac{1}{2}(y-z)^2$, regularizations $\lambda=\lambda_0=0.01$, and a margin-based selection policy $\pi(u)=\mathds{1}_{|u|<\kappa^-(\gamma,\psi)}$, with a threshold $\kappa^-(\gamma,\psi)= \sqrt{2q_0}{\rm erf}^{-1}\left(\sfrac{\gamma-\psi}{1-\psi}\right)$ chosen so as to saturate the budget constraint with high probability. Fig.\,\ref{fig:tradeoff} represents the test error $\egen$ of the final classifier, as a function of fraction of the budget $\psi$ allocated to training the base classifier, with different curves corresponding to different total budget $\gamma$. In particular, the rightmost point $\psi=\gamma$ of each curve corresponds to the performance of a classifier trained on a passively (i.e. randomly) selected subset $\mathcal{S}$ of size $\gamma n$, and exhibits higher test error than most cases where $\mathcal{S}$ is in contrast actively chosen ($\psi<\gamma$).

\paragraph{A tradeoff in budget allocation --- } A first striking feature of the learning curves is their non-monotonicity, revealing the existence of an optimal budget allocation $\psi^\star$ (red star in Fig.\,\ref{fig:tradeoff}) for which the test error $\egen$ is minimized. This U-shape results from a fundamental tradeoff in the allocation $\psi, \gamma-\psi$ of the labeling budget $\gamma$ across the two ERMs \eqref{eq:main_base_ERM} and \eqref{eq:main_rew_ERM}:\looseness=-1
\begin{itemize}
    \item If $\psi$ is chosen too small, the base classifier will yield poor predictions, which results in turn in the selection of scarcely informative samples to include in $\mathcal{S}$.
    \item Conversely, if too much budget $\psi$ is spent on constituting the training set $\mathcal{S}_0$ of the base classifier, while the accurate base classifier does allow to correctly identify the most valuable samples among the remaining unlabeled samples, only a few of those can be labeled and included in $\mathcal{S}$, due to the scarce $\gamma-\psi$ budget remaining. As a result, the selected subset $\mathcal{S}$ will be in majority composed of the randomly subsampled $\mathcal{S}_0$, with only a few actually selected samples, eventually yielding marginally better-than-passive performance.
\end{itemize}
This fundamental tension between these two effects was absent from prior works \citep{kolossov2023towards, sorscher2022beyond, feng2024beyond, askari2025improving, dohmatob2025less}. 

\paragraph{A selection-driven double-descent ---} The learning curves in Fig.\ref{fig:tradeoff} also exhibit a characteristic double-descent shape, where two successive phases of decrease of the test error $\egen$ are separated by an interpolation peak. Double-descent phenomena have been the object of renewed interest in recent years, and extensively described in a rich body of literature, see e.g. \citep{belkin2019reconciling, geiger2020scaling, nakkiran2019more}. Such double-descent are however classically observed as a function of the number of parameters of the classifier, or the number of training samples. In stark contrast, in the present setting, both the number of model parameters ($d$ for linear methods) and number of training samples ($|\mathcal{S}|=\gamma n$) are \textit{fixed}, and only the \textit{composition} of the training set $\mathcal{S}\subset [n]$ varies along the curves of Fig.\,\ref{fig:tradeoff}. The double-descent is here in fact driven by the way the selected subset $\mathcal{S}$ is constructed. Close to $\psi=\sfrac{1}{\alpha}$, the base classifier overfits, causing in turn a biased selection of samples,  eventually resulting in a poor accuracy.

\begin{figure}
    \centering
    \includegraphics[scale=0.5]{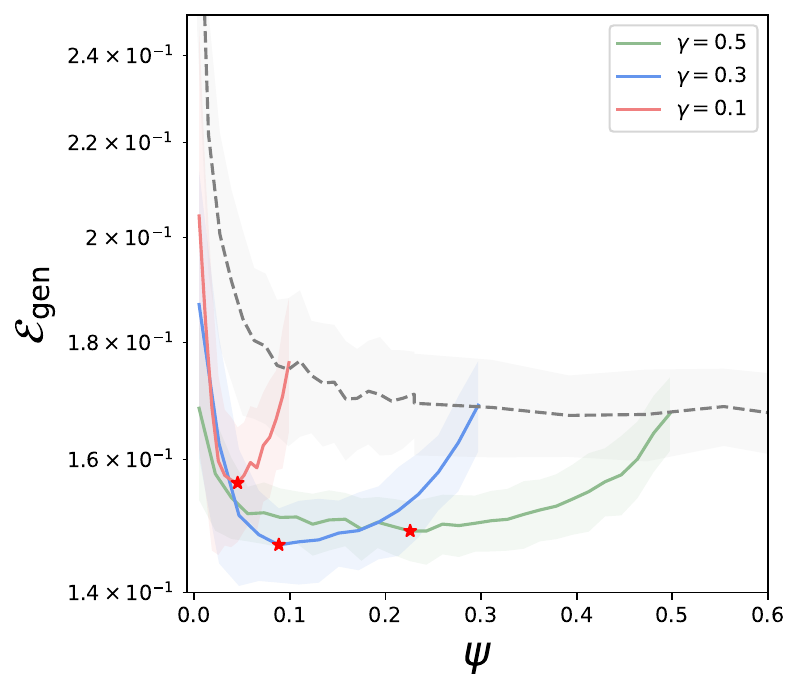}
    \caption{Test error as a function of the fraction $\psi\in(0,\gamma)$ allocated to the first stage of ERM, for the pneumonia diagnosis task on the chest X-ray dataset \citep{kermany2018identifying}, pre-processed through a scattering transform with scale $4$ and $6$ angles \citep{andreux2020kymatio}. The logistic loss $\ell(z,y)=\ell_0(z,y)=\log(1+\exp(-yz))$ was employed, with $\alpha=5.41, \lambda=0.01$. The samples closest to the decision boundary of the first classifier were retained, until the budget was saturated. Different curves represent different total budget $\gamma$. Shades represent one standard deviation over $100$ trials.\looseness=-1}
    \label{fig:main_real}
\end{figure}

\paragraph{Real data validation ---} 
While Theorem \ref{theorem:main_theorem} describes the precision-budget tradeoff in simple synthetic settings for Gaussian data, the phenomenon can also be observed for real datasets. We illustrate this point in Fig.\,\ref{fig:main_real}, for the task of pneumonia diagnosis on a dataset of chest X-ray images \citep{kermany2018identifying}. This dataset furthermore illustrates in timely fashion an application case where the labeling of the samples was in fact expensive and carried out by several human clinicians, and for which active learning is therefore directly relevant. The images were processed using scattering transform \citep{andreux2020kymatio}, and a subset selected using the same procedure as the one described in Section \ref{sec: main_setting}, using logistic regression. A final classifier was fitted on the selected set using logistic regression. The code for this experiment is provided on this \href{https://github.com/HugoCui/iteratedERM_AL}{online repository}. The resulting learning curve displays a clear U-shape, strongly suggesting the presence of the tradeoff described above.

\subsection{On the selection policy}\label{sec:policy}

\begin{figure}
    \centering
    \includegraphics[scale=0.5]{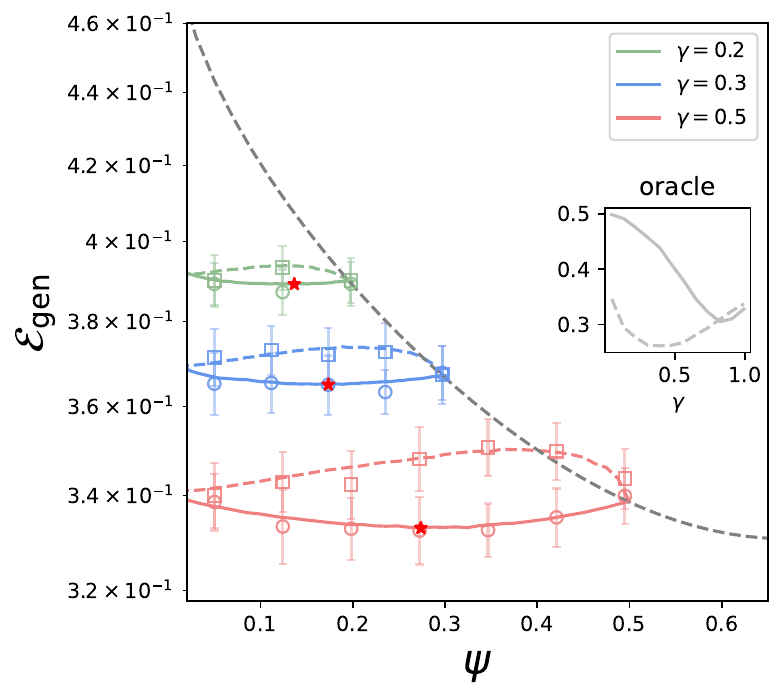}
    \caption{Same setting as Fig.\,\ref{fig:tradeoff}, with $\alpha=1$. Dashed curves represent the selection policy $\pi(u)=\mathds{1}_{|u|>\kappa^+(\gamma,\psi)}$, where $\kappa^+(\gamma,\psi)=\sqrt{2q_0}{\rm erf}^{-1}\left(\sfrac{1-\gamma}{1-\psi}\right)$, corresponding to selecting the samples with higher margin. Solid lines still represent the small-margin selection policy $\pi(u)=\mathds{1}_{|u|<\kappa^-(\gamma,\psi)}$. The inset represents the case where oracle access of $\beta$ is available, so that no budget $\psi=0$ needs to be used to train the base classifier, and the data can be selected directly using the true margins as $\pi(\inprod{\beta,x_i})$.}
    \label{fig:main_strat}
\end{figure}

Having explored the effect of the choice of the budget allocation $\psi$ for the training of the base estimator, we now discuss the second aspect of the active learning algorithm, namely the choice of the selection policy $\pi$. In the likeness of many prior works, \citep{kolossov2023towards, sorscher2022beyond}, the previous section focused on the particular case of margin-based selection, where samples whose label prediction by the base classifier were smallest in absolute value were retained. This natural choice for linear classifiers corresponds to the intuition that labeling samples closest to the base estimator's decision boundary, and thus a priori hardest and most uncertain, lead to greater information gain when labeled.

While \citep{dohmatob2025less} prove for linear regression in the population case $\alpha\to\infty$ that selecting small-margin samples is optimal, the answer is more nuanced at finite sample complexity. \citep{sorscher2022beyond} have shown that there exist regimes where conversely, selecting \textit{larger} margin samples far from the decision boundary can prove beneficial. This was found to be the case in particular for small budget regimes with small values of $\alpha\gamma$, provided a good estimate $\hat{w}_0$ remains available from a held-out dataset, or from an oracle. In such settings, the large margin samples indeed provide a clearer sketch of the geometry of the target function, and aid the alignment of the final weights $\hat{w}$ with $\hat{w}_0$ and $\beta$. These conclusions, which also echo the findings of \citep{hacohen2022active} for a related problem, however need to be nuanced in truly budget-constrained settings, where a small budget rules out the constitution of a large held-out set on which a base estimator may be trained up to satisfactory accuracy. In such a case, selecting large-margin samples might reversely \textit{detrimentally bias} the final classifier towards learning an erroneous direction $\hat{w}_0$, rather than the ground-truth $\beta$. This is illustrated in Fig.\,\ref{fig:main_strat}, where the large-margin policy (dashed) consistently leads to worse accuracy than the small-margin policy (solid lines), despite performing better in the oracle case where a good base estimator is available despite the limited data budget (inset). The latter case corresponds to the observations of \citep{sorscher2022beyond, kolossov2023towards, hacohen2022active}. This comparison highlights fundamental discrepancies between the fully budget-constrained setting characterized by Theorem \ref{theorem:main_theorem}, and cases where an oracle or an additional held-out set is available.

\begin{figure}
    \centering
    \includegraphics[scale=0.5]{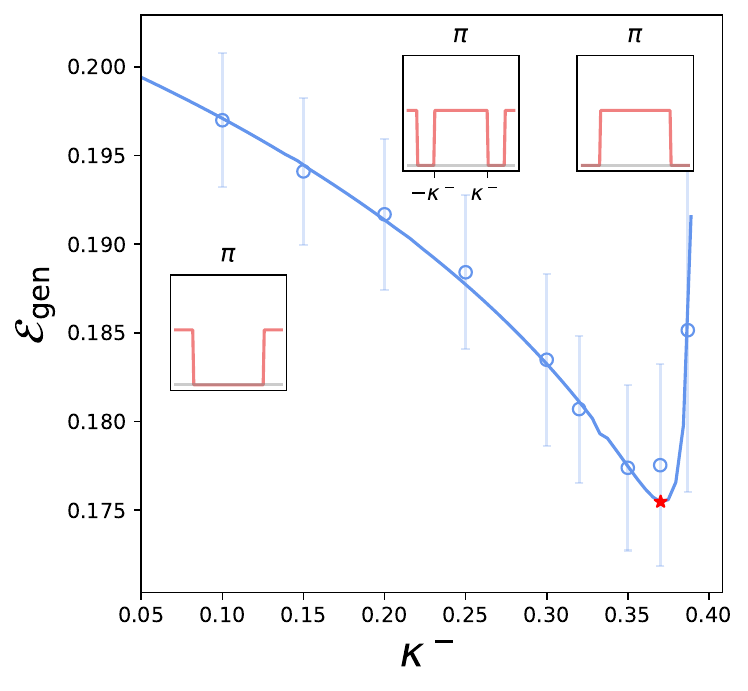}
    \caption{Test error $\egen$ achieved by the second classifier, with $\alpha=8,\gamma=0.3,\psi=0.1, \lambda_0=\lambda=0.001$, and square loss. A selection policy $\pi(u)=\mathds{1}_{|u|<\kappa^-}+\mathds{1}_{|u|>\kappa^+}$ was used, for $\kappa^-\in(0,\kappa^-(\gamma,\psi))$, and $\kappa^+$ determined from $\kappa^-$ by requiring that the budget constraint should be saturated. Solid lines: characterization of Theorem \ref{theorem:main_theorem}. Dots: numerical experiments in dimension $d=3000$. Error bars represent one standard deviation over $50$ trials. }
    \label{fig:mixed}
\end{figure}

These observations naturally prompt the question : for given models $\Tilde{\ell}_0,\lambda_0,\Tilde{\ell},\lambda$ and budgets $\gamma,\psi$, \textit{what is then the optimal selection policy $\pi$?} The answer to this question is generically complex. Fig.\,\ref{fig:mixed} evidences that for certain choices of $\gamma,\psi$, a mixed strategy of the form $\pi(u)=\mathds{1}_{|u|<\kappa^-}+\mathds{1}_{|u|>\kappa^+}$ that retains a mixture of samples both far and close to the decision boundary can achieve a better accuracy than naively selecting only the farthest or closest samples. This finding echoes similar observations from \citep{kolossov2023towards,hacohen2023select}.

Looking ahead, the theoretical characterization of Theorem \ref{theorem:main_theorem} provides an explicit expression of the test error $\egen$ as a functional of the selection policy $\pi$, which can in principle be optimized over. While this analytical objective represents a first step forward since it removes the need for expensive empirical search, this optimization remains very challenging, since it involves a functional minimization over a discrete-valued function. 
Solving this problem constitutes an open question of significant interest for future research.

\subsection{Data pruning}
For completeness, and in an effort to build further connections with prior works \citep{kolossov2023towards, askari2025improving, dohmatob2025less, hacohen2022active} we briefly discuss in this last subsection the case of \textit{data pruning}.
In this paragraph only, we assume all labels are available, and the base classifier is trained on the whole dataset, namely $\mathcal{S}_0=[n]$, then used to select a final and \textit{smaller} training subset $\mathcal{S}\subset [n]$ through a selection (pruning) policy $\pi$. Several of the above works report settings in which training a model on the smaller subset $\mathcal{S}\subset[n]$ can yield \textit{better} accuracy than training on the full dataset, when an oracle base classifier is used for selection. In Appendix \ref{app:related}, we show by instantiating Theorem \ref{theorem:main_theorem} in two examples (see Fig.\,\ref{fig:Clean} and Fig.\,\ref{fig:flipped}) that this phenomenon can still be observed in the more realistic setting when the base classifier is used to prune the \textit{same} dataset it was trained on.

\section{Conclusions}
We study two iterated ERMs on the same dataset, where the predictions of the first ERM estimator enter the loss function of the second ERM. The dependence of the loss on the predictions and the re-use of the data introduce intricate statistical correlations across samples which fundamentally set the problem apart from the now mature study of single-stage ERM. For Gaussian mixture data and a large class of convex losses, in the asymptotic regime where the number of samples $n$ and the dimension of the covariates $d$ diverge at proportional rate $n\asymp d$, we prove a sharp asymptotic characterization of the statistics of the second-stage ERM estimator. Our proof builds on a sequential version of the standard leave-one-out method \citep{karoui2013asymptotic}. These results find a notable application in the theory of pool-based active learning. Our analysis lets us revisit a well-studied problem in active binary classification \citep{sorscher2022beyond, kolossov2023towards, feng2024beyond, askari2025improving, dohmatob2025less}, and fully remove sample splitting and oracle assumptions made in prior work. The analysis of the budget-constrained case further uncovers a tradeoff in the allocation of the data acquisition budget, which is also observed in real-data experiments. It also unveils a double-descent behavior of the test error as a function of the budget allocation, which in contrast to prior studies occurs at fixed sample and model size.

\subsection*{Acknowledgements}
The work of Y.M.L. is supported by a Harvard College Professorship, by the Harvard FAS Dean's Fund for Promising Scholarship, and by DARPA under grant DIAL-FP-038.

\bibliography{references}
\bibliographystyle{plainnat}
\newpage 
\appendix

\section{Notations and assumptions}\label{app:notations}

\input{Appendices_ArXiv/Notations}
\newpage

\section{Leave-one-out approximation results}\label{app:approx}
\input{Appendices_ArXiv/Approximations}

\newpage

\section{Concentration of the summary statistics}\label{app:concentration}
\input{Appendices_ArXiv/Concentration}

\newpage

\section{ Distribution of the residuals}\label{app:distrib}

\input{Appendices_ArXiv/Distribution}
\newpage

\section{Proof of Theorem \ref{theorem:main_theorem}}\label{app:proof}
\input{Appendices_ArXiv/Proof_end}

\newpage

\section{Auxiliary lemmas}\label{app:aux}

\input{Appendices_ArXiv/Auxiliary}
\newpage

\section{Comments on data pruning}\label{app:related}
\input{Appendices_ArXiv/related}

\end{document}

%% file: commands_arXiv.tex
\usepackage{amssymb}
\usepackage{latexsym}
\usepackage{verbatim}
\usepackage{xfrac}
\usepackage{physics}
\usepackage{enumitem}

\usepackage{bbm}

\usepackage{amsthm}

{\begin{list}               
    {$\bullet$ \hfill}{
        \setlength{\leftmargin}{\parindent}
        \setlength{\parsep}{0.04\baselineskip}
        \setlength{\itemsep}{0.5\parsep}
        \setlength{\labelwidth}{\leftmargin}
        \setlength{\labelsep}{0em}}
    }
{\end{list}}

\providecommand{\cref}[1]{Chapter~\ref{chap:#1}}

\providecommand{\R}{\ensuremath{\mathbb{R}}}

\providecommand{\abs}[1]{\lvert#1\rvert}

\providecommand{\norm}[1]{\lVert#1\rVert}

\providecommand{\inprod}[1]{\left\langle#1\right\rangle}

\newcommand{\E}{\mathbb{E}}
\renewcommand{\P}[1]{\mathbb{P}\left[#1\right]}
\newcommand{\Var}[1]{\mathrm{Var}\left[#1\right]}
\newcommand{\Ea}[1]{\E\left[#1\right]}
\newcommand{\Eb}[2]{\E_{#1}\left[#2\right]}

\newcommand{\polylog}{\mathrm{polylog}}

\providecommand{\smi}{{\setminus i}}
\providecommand{\smj}{{\setminus j}}

\providecommand{\egen}{\mathcal{E}_{\mathrm{gen}}}

\newcommand{\prox}{\mathrm{prox}}

\newcommand{\sign}{\mathrm{sign}}

\DeclareMathOperator*{\argmin}{arg\,min}

\theoremstyle{plain}

%% file: Appendices_ArXiv/Notations.tex
\paragraph{Notations}
\begin{itemize}
    \item For two random variables $a,b$ and a integer $k\in\mathbb{N}$, we write $a=O_{L_k}(b)$ if $\Ea{|a|^k}\le \Ea{|b|^k}$. If $k$ is not specified in the statement, the latter is understood to hold for any $k\in\mathbb{N}$.
    \item In the following, for $a,b\in\R$, $(a,b)$ denotes the unordered interval.
    \item For $i\in [n]$, we denote $X_\smi\in \R^{(n-1)\times d}$ the leave-one-row-out matrix constructed by removing the $i-$th row from the full data matrix $X.$
    \item $\norm{\cdot}$ denotes the 2-norm for vectors, and the operator norm for matrices.
\end{itemize}

\begin{remark}[Properties of $O_{L_k}$]
We remind for convenience the following properties of the $O_{L_k}$ notation.
\begin{itemize}
    \item If $\alpha=O_{L_{2k}}(a_n), \beta=O_{L_{2k}}(b_n)$, for $a_n, b_n$ two positive sequences, it follows from Minkowski's inequality that $\alpha\beta=O_{L_k}(a_nb_n).$
    \item  Let $\alpha,\beta$ be two positive random variables bounded as $\alpha=O_{L_k}(a_n), \beta=O_{L_k}(b_n)$, for $a_n, b_n$ two positive sequences. Then it follows from H\"older's inequality that $\alpha+\beta=2O_{L_k}(a_n\vee b_n)$. As a consequence, from the identity $\sqrt{x}\le 1+x$ for all $x\ge 0$, it follows that $\sqrt{\alpha}=2O_{L_k}(\sqrt{a_n})$.
\end{itemize}
\end{remark}

\begin{assumption}[First-stage loss $\ell_0$]\label{ass:base_loss}
Let $\ell_0:\R\times \R\times \mathcal{V}\times \mathcal{E}\to \R_+$ denote the loss used for the first-stage estimator. We assume,
    \begin{enumerate}[label=(A.\arabic*)]
        \item $\ell_{0}$ is everywhere three times differentiable in its first argument. We denote by $\partial_u$ the derivative with respect to the latter.
         \item $\ell_0$ is convex in its first argument, namely $\partial_u^2\ell_0$ is everywhere positive.
        \item The derivatives $\partial_u\ell_{0},\partial_u^2\ell_{0},\partial_u^3\ell_{0}$ are everywhere well defined and bounded by $O(\polylog(n))$.
        \item The reduced function $\tilde{\ell}_0: s,c,\epsilon \to \ell_0(0,s,c,\epsilon)$ is bounded by an $O(\polylog(n))$. 
    \end{enumerate}
\end{assumption}

\begin{assumption}[Second-stage loss $\ell$]
\label{ass:ell} Let $\ell:\R\times \R\times \R\times \mathcal{V}\times \mathcal{E}\to \R_+$ denote the loss used for the second-stage problem.  
   \begin{enumerate}[label=(B.\arabic*)]
       \item The derivatives $\partial_r \ell,\partial_r^2 \ell,\partial_r^3 \ell,\partial_r^4\ell,\partial_r\partial_u \ell,\partial_r\partial_u^2 \ell,\partial_r^2\partial_u \ell,\partial_r^3\partial_u \ell,\partial_r^2\partial_u^2 \ell$ are everywhere well defined, and bounded by $O(\polylog(n))$. We denoted by $\partial_r,\partial_u$ the derivatives with respect to the first and second arguments of $\ell$.
       \item $\ell$ is convex in its first argument, namely $\partial_r^2\ell$ is everywhere positive.
       \item The reduced function $\tilde{\ell}:u,s,c,\epsilon\to\ell(0,u,s,c,\epsilon)$ is bounded by $O(\polylog(n))$. 
   \end{enumerate}
\end{assumption}

 In the following, for conciseness, we adopt the shorthands $\ell_{i}(r_i, u_i)=\ell(r_i,u_i, s_i, c_i,\epsilon_i)$ and  $\ell_{0,i}(u_i)=\ell_0(u_i, s_i, c_i,\epsilon_i)$.

 \begin{assumption}[Data distribution]\label{ass:data}
We make the following assumptions on the sample distribution.
\begin{enumerate}[label=(C.\arabic*)]
    \item The latent classes $\{c_i\}_{i\in[n]}$ are drawn i.i.d from an arbitrary distribution on a finite set $\mathcal{V}$.
    \item Likewise, the stochastic noises $\{\epsilon_i\}_{i\in[n]}$ follow a distribution on $\mathcal{E}\subset \R$, with all moments being finite. 
    \item Conditional on $\{c_i\}_{i\in[n]}$, the samples $z_i=x_i -\mu_{c_i}$ have i.i.d unit variance Gaussian entries. 
\item We further assume that the inner products $  \rho_{c,k}=\inprod{\mu_c,\mu_k},   \nu_{c}=\inprod{\mu_c,\beta}, \varrho=\norm{\beta}^2$ are fixed absolute constants. In particular, there thus exist absolute constants $M,B$ such that $\norm{\mu_c}\le M,\norm{\beta}\le B$ for all $c\in\mathcal{V}$ and for all $n$. Furthermore, the Gram matrix
\begin{align}
    A= \left[
        \begin{array}{cccccc}
           \varrho &  \vline& \nu& \\
            \hline
            &\vline&&\\
            \nu &\vline&\rho&\\
            &\vline&& 
        \end{array}
        \right].
    \end{align}
has a minimal eigenvalue bounded away from $0$ by an absolute constant.
 It will also prove convenient to introduce the matrix $\mathcal{M}\in\R^{|\mathcal{V}|\times d}$ with rows $\mu_1, ...,\mu_{|\mathcal{V}|}$.
\end{enumerate}
\end{assumption}
Finally, we assume throughout the proof that the regularizations are strictly positive, namely $\lambda,\lambda_0>0.$

\subsection{First-stage ERM } We remind the first-stage ERM
\begin{align}\label{eq:base_ERM}
    \hat{w}_0=\underset{w}{\rm argmin} ~\hat{\mathcal{R}}_0(w), && \hat{\mathcal{R}}_0(w)=\frac{1}{n}\sum\limits_{j\in[n]}\ell_{0,j}(\inprod{w,x_j})+\frac{\lambda_0}{2}\norm{w}_2^2
\end{align}
 and for any $i\in [n]$ the leave-one out surrogate
 \begin{align}
    \hat{w}_{0,\smi}=\underset{w}{\rm argmin} ~\hat{\mathcal{R}}_{0, \smi}(w), && \hat{\mathcal{R}}_{0, \smi}(w)=\frac{1}{n}\sum\limits_{j\ne i }\ell_{0,j}(\inprod{w,x_j})+\frac{\lambda_0}{2}\norm{w}_2^2.
\end{align}
We define the full and leave-one-out residuals
\begin{align}
    & u_j=\inprod{\hat{w}_{0}, x_j}, && u_{j,\smi}=\inprod{\hat{w}_{0,\smi}, x_j},
\end{align}
as well as the surrogate estimator
\begin{align}
    \tilde{w}_{0,i}=\hat{w}_{0,\smi}-\frac{1}{n}\partial_u\ell_{0,i}(\tilde{u}_{i,i})H_{0,\smi}^{-1}x_i
\end{align}
where we introduced the leave-$i$
 out Hessian $H_{0,\smi}$ as
 \begin{align}
    H_{0,\smi}=\frac{1}{n}\sum\limits_{j\ne i} \partial_u^2 \ell_{0, j}(u_{j,\smi})x_jx_j^\top +\lambda_0 I_d
 \end{align}
 and denoted
 \begin{align}
     \tilde{u}_{i,i}=\prox_{\gamma_{0,i} \ell_{0,i}(\cdot)}(u_{i,\smi}), &&\gamma_{0,i}=\frac{1}{n} x_i^\top H_{0,\smi}^{-1}x_i.
 \end{align}
  Finally, for $j\ne i$, we similarly define the surrogate residual
  \begin{align}
\tilde{u}_{j,i}=\inprod{\tilde{w}_{0,i}, x_j}.      
  \end{align}
For simplicity, we will also adopt the shorthand
\begin{align}
    \xi_i=-\frac{1}{n}\partial_u\ell_{0,i}(\tilde{u}_{i,i})H_{0,\smi}^{-1}x_i,
\end{align}
so that $\tilde{w}_{0,i}$ may be more compactly rewritten as $\tilde{w}_{0,i}=\hat{w}_{0,\smi}+\xi_i$. It will finally also prove convenient to introduce the surrogate Hessian
\begin{align}
     \tilde{H}_{0,i}=\frac{1}{n}\sum\limits_{j\in[n]} \partial_u^2 \ell_{0,j}(\tilde{u}_{j,i})x_jx_j^\top +\lambda  I_d.
\end{align}
\subsection{Second-stage ERM}

By the same token, for the second-stage problem
\begin{align}\label{eq:reweighted_ERM}
    \hat{w}=\underset{w}{\rm argmin} ~\hat{\mathcal{R}}(w), && \hat{\mathcal{R}}(w)= \frac{1}{n}\sum\limits_{j\in[n]}\ell_j(\inprod{w,x_j}, u_j)+\frac{\lambda}{2}\norm{w}_2^2
\end{align}
we define the leave-one out problem 
 \begin{align}
    \hat{w}_\smi=\underset{w}{\rm argmin} ~\hat{\mathcal{R}}_\smi(w), && \hat{\mathcal{R}}_\smi(w)= \frac{1}{n}\sum\limits_{j\ne i}\ell_j(\inprod{w,x_j}, u_{j,\smi})+\frac{\lambda}{2}\norm{w}_2^2.
\end{align}
Let us also introduce the surrogate risk
\begin{align}
     \tilde{\mathcal{R}}_i(w)= \frac{1}{n}\sum\limits_{j\in[n]}\ell_j(\inprod{w,x_j}, \tilde{u}_j)+\frac{\lambda}{2}\norm{w}_2^2.
\end{align}
Note that despite the close notations, $\tilde{w}_i$ is \textit{not} the minimizer of $\tilde{\mathcal{R}}_i$. 
We define the full and leave-one-out residuals
\begin{align}
    & r_j=\inprod{\hat{w}, x_j}, && r_{j,\smi}=\inprod{\hat{w}_{\smi}, x_j},
\end{align}
as well as the surrogate estimator
\begin{align}
    \tilde{w}_{i}=\hat{w}_{\smi}-\frac{1}{n}\partial_r\ell_{i}(\tilde{r}_{i,i},  \tilde{u}_{i,i})H_{\smi}^{-1}x_i+\frac{1}{n}\partial_u\ell_{0,i}(\tilde{u}_{i,i})H_\smi^{-1}G_\smi H_{0,\smi}^{-1}x_i
\end{align}
where we introduced the leave-$i$
 out Hessians $H_{\smi}, G_\smi$ as
 \begin{align}
    &H_{\smi}=\frac{1}{n}\sum\limits_{j\ne i} \partial_r^2 \ell_{j}(r_{j,\smi},u_{j,\smi})x_jx_j^\top +\lambda  I_d,\\
    & G_\smi=\frac{1}{n}\sum\limits_{j\ne i} \partial_u\partial_r \ell_{j}(r_{j,\smi},u_{j,\smi})x_jx_j^\top 
 \end{align}
 and denoted
 \begin{align}
     &\tilde{r}_{i,i}=\prox_{\gamma_i \ell_{i}(\cdot, \tilde{u}_{i,i})}\left(r_{i,\smi}+\partial_u\ell_{0,i}(\tilde{u}_{i,i})\upsilon_i\right), \\
     &\gamma_i=\frac{1}{n} x_i^\top H_{\smi}^{-1}x_i, \\
     & \upsilon_i=\frac{1}{n}x_i^\top H_\smi^{-1}G_\smi H_{0,\smi}^{-1}x_i.
 \end{align}
 We also introduce the full Hessian
 \begin{align}
     H=\frac{1}{n}\sum\limits_{j\in[n]} \partial_r^2 \ell_{j}(r_{j},u_{j})x_jx_j^\top +\lambda  I_d,
 \end{align}
 and the surrogate Hessians
  \begin{align}
     \tilde{H}_i=\frac{1}{n}\sum\limits_{j\in[n]} \partial_r^2 \ell_{j}(\tilde{r}_{j,i},\tilde{u}_{j,i})x_jx_j^\top +\lambda  I_d, &&\tilde{G}_i=\frac{1}{n}\sum\limits_{j\in[n]} \partial_r\partial_u \ell_{j}(\tilde{r}_{j,i},\tilde{u}_{j,i})x_jx_j^\top
 \end{align}
 In the following, we adopt the shorthands
 \begin{align}
     &\eta_i=-\frac{1}{n}\partial_r\ell_{i}(\tilde{r}_{i,i},  \tilde{u}_{i,i})H_{\smi}^{-1}x_i,\\
     &\zeta_i=\frac{1}{n}\partial_u\ell_{0,i}(\tilde{u}_{i,i})H_\smi^{-1}G_\smi H_{0,\smi}^{-1}x_i,
 \end{align}
 so that $\tilde{w}_i$ may be more compactly written as $\tilde{w}_i=\hat{w}_\smi +\eta_i+\zeta_i$.
  Finally, for $j\ne i$, we similarly define the surrogate residual
  \begin{align}
\tilde{r}_{j,i}=\inprod{\tilde{w}_{i}, x_j}.
\end{align}

%% file: Appendices_ArXiv/Approximations.tex
\subsection{First-stage estimator}
In this first part of the proof, we show that the surrogates $\tilde{w}_i$ (resp. $\tilde{w}_{0,i}$) provide $O(\sfrac{\polylog(n)}{n})$ approximations of the full minimizers $\hat{w}$ (resp. $\hat{w}_0$) in Euclidean norm. We start from the following approximation results for the first-stage estimator:

\begin{proposition}(Approximations of the first-stage estimator) \label{prop: base_estimator_approxs}
    For any $k\in\mathbb{N}$,
    \begin{align}
        \underset{i\in [n]}{\sup} ~\norm{\hat{w}_0-\tilde{w}_{0,i}}=O_{L_k}\left(\frac{\polylog(n)}{n}\right).
    \end{align}
We also have \begin{align}
    \sup_{i\in[n]}\norm{\tilde{w}_{0,i}-\hat{w}_{0,\smi}}= \sup_{i\in[n]}\norm{\xi_i}=O_{L_k}\left(\frac{\polylog(n)}{\sqrt{n}}\right).
\end{align}
Furthermore, at the level of the residuals,
\begin{align}
  \underset{i\in [n]}{\sup}~\underset{j\ne i}{\sup}  ~|u_{j,\smi}-u_j|=O_{L_k}\left(\frac{\polylog(n)}{\sqrt{n}}\right)
\end{align}
and
\begin{align}
     \underset{i\in [n]}{\sup} |u_i-\tilde{u}_{i,i}|=O_{L_k}\left(\frac{\polylog(n)}{\sqrt{n}}\right),
\end{align}
and finally
\begin{align}
    \underset{i\in [n]}{\sup} \sup_{j\ne i}|\tilde{u}_{j,i}-u_{j,\smi}|=O_{L_k}\left(\frac{\polylog(n)}{\sqrt{n}}\right)
\end{align}
\end{proposition}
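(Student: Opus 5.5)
Our plan is the classical leave-one-out argument of \citep{karoui2013asymptotic}, using strong convexity of $\hat{\mathcal{R}}_0$ (modulus at least $\lambda_0$) to turn a small gradient at the surrogate into closeness to the minimizer.

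\textbf{Step 1: the first-order optimality of $\tilde{u}_{i,i}$.} By definition of the proximal map, $\tilde{u}_{i,i}=u_{i,\smi}-\gamma_{0,i}\partial_u\ell_{0,i}(\tilde{u}_{i,i})$. Since $\tilde{u}_{i,i}$ is \emph{not} itself $\inprod{\tilde{w}_{0,i},x_i}$ a priori, we first check the consistency identity
\begin{align}
\inprod{\tilde{w}_{0,i},x_i}=u_{i,\smi}-\frac{1}{n}\partial_u\ell_{0,i}(\tilde{u}_{i,i})x_i^\top H_{0,\smi}^{-1}x_i=\tilde{u}_{i,i}.
\end{align}
So the surrogate residual at $i$ coincides with $\tilde{u}_{i,i}$.

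\textbf{Step 2: bound $\norm{\xi_i}$.} We have $\norm{H_{0,\smi}^{-1}}\le 1/\lambda_0$, $|\partial_u\ell_{0,i}|=O(\polylog(n))$ by Assumption~\ref{ass:base_loss}, and $\norm{x_i}\le \norm{z_i}+M=O_{L_k}(\sqrt{d})$. Hence $\norm{\xi_i}=O_{L_k}(\polylog(n)/\sqrt{n})$. The supremum over $i$ is handled by Gaussian concentration of $\norm{z_i}$ together with a union bound, which costs only a $\polylog(n)$ factor. The bound $\sup_{j\ne i}|\tilde{u}_{j,i}-u_{j,\smi}|=\sup_{j\ne i}|\inprod{\xi_i,x_j}|$ follows similarly. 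Conditionally on $\hat{w}_{0,\smi}$ and $X_\smi$, the vector $H_{0,\smi}^{-1}x_i$ is independent of $x_j$ only in a weak sense. We instead write $\inprod{\xi_i,x_j}=-\frac1n\partial_u\ell_{0,i}\,x_i^\top H_{0,\smi}^{-1}x_j$ and note that $x_i$ is independent of $H_{0,\smi}^{-1}x_j$. The quantity is then Gaussian, up to the mean $\mu_{c_i}$, with variance $\norm{H_{0,\smi}^{-1}x_j}^2/n^2=O(1/n)$. A union bound over $j$ gives the $\polylog(n)/\sqrt n$ rate.

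\textbf{Step 3 (main step): the gradient at the surrogate.} We write $\nabla\hat{\mathcal{R}}_0(\tilde{w}_{0,i})=\frac1n\sum_{j\ne i}\partial_u\ell_{0,j}(\tilde{u}_{j,i})x_j+\frac1n\partial_u\ell_{0,i}(\tilde{u}_{i,i})x_i+\lambda_0\tilde{w}_{0,i}$. We subtract $\nabla\hat{\mathcal{R}}_{0,\smi}(\hat{w}_{0,\smi})=0$ and Taylor expand $\partial_u\ell_{0,j}$ to second order around $u_{j,\smi}$. The first-order terms give $H_{0,\smi}\xi_i=-\frac1n\partial_u\ell_{0,i}(\tilde{u}_{i,i})x_i$, which cancels the $i$-th term exactly. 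The remainder is $\frac1{2n}\sum_{j\ne i}\partial_u^3\ell_{0,j}(\cdot)\inprod{\xi_i,x_j}^2x_j$. We bound its norm by $\polylog(n)\cdot\norm{\frac1n X_\smi^\top D X_\smi}\,\sup_j|\inprod{\xi_i,x_j}|\cdot\norm{\xi_i}$, with $D$ diagonal and bounded. Using $\norm{X^\top X/n}=O_{L_k}(1)$ together with Step 2, this is $O_{L_k}(\polylog(n)/n)$. Strong convexity then gives $\norm{\hat{w}_0-\tilde{w}_{0,i}}\le\lambda_0^{-1}\norm{\nabla\hat{\mathcal{R}}_0(\tilde{w}_{0,i})}=O_{L_k}(\polylog(n)/n)$. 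Taking the supremum over $i$ costs only a $\polylog$ factor, since the high-probability events are uniform.

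\textbf{Step 4: the residual bounds.} We have $|u_i-\tilde{u}_{i,i}|=|\inprod{\hat{w}_0-\tilde{w}_{0,i},x_i}|\le\norm{\hat{w}_0-\tilde{w}_{0,i}}\norm{x_i}=O(\polylog(n)/\sqrt n)$. For $j\ne i$, the triangle inequality gives $|u_j-u_{j,\smi}|\le|\inprod{\hat{w}_0-\tilde{w}_{0,i},x_j}|+|\tilde{u}_{j,i}-u_{j,\smi}|$, and both terms are $O(\polylog(n)/\sqrt n)$. The main obstacle is the uniform control of $\sup_j|\inprod{\xi_i,x_j}|$ in Step 2, since this is what makes the quadratic Taylor remainder of order $1/n$ rather than $1/\sqrt n$. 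It relies on the conditional independence of $x_i$ from $H_{0,\smi}$ and $X_\smi$, which is precisely why the leave-one-out construction is used.
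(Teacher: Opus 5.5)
Your proposal is correct and follows essentially the paper's route. The paper proves this proposition by specializing its second-stage argument to a trivial zeroth-stage ERM, so that $\zeta_i$ and $\Delta_i^{(2)}$ vanish; what remains is your computation: strong convexity reduces the claim to the gradient at $\tilde w_{0,i}$, the first-order term cancels against $H_{0,\setminus i}\xi_i$, and the third-derivative remainder of size $\|X\|^2/n\cdot\sup_{j\ne i}|\langle\xi_i,x_j\rangle|\cdot\|\xi_i\|$ is controlled by the quadratic-form bound on $\sup_i\sup_{j\ne i}|x_i^\top H_{0,\setminus i}^{-1}x_j|/n$. Two minor points: your Step 1 check that $\langle\tilde w_{0,i},x_i\rangle=\tilde u_{i,i}$ is implicit in the paper but genuinely needed, and in Step 2 the factor $\partial_u\ell_{0,i}(\tilde u_{i,i})$ depends on $x_i$, so bound it by $\|\partial_u\ell_0\|_\infty$ before using conditional Gaussianity in $x_i$.
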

The following Lemma is a direct consequence of Proposition \ref{prop: base_estimator_approxs}, and complements it for the cross terms
\begin{lemma}\label{lemma:l2_norm_surrogate_full}
    We have 
    \begin{align}
       \sup_{i\in[n]} \sum\limits_{j\ne i} (u_j-\tilde{u}_{j,i})^2=O_{L_k}\left(\frac{\polylog(n)}{n}\right).
    \end{align}
\end{lemma}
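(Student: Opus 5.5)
Since $u_j=\inprod{\hat{w}_0,x_j}$ and $\tilde{u}_{j,i}=\inprod{\tilde{w}_{0,i},x_j}$, the quantity to bound is a quadratic form in the data matrix:
\begin{align}
\sum_{j\ne i}(u_j-\tilde{u}_{j,i})^2=\sum_{j\ne i}\inprod{\hat{w}_0-\tilde{w}_{0,i},x_j}^2\le \norm{X(\hat{w}_0-\tilde{w}_{0,i})}^2\le \norm{X}^2\,\norm{\hat{w}_0-\tilde{w}_{0,i}}^2 .
\end{align}
The plan is to bound the two factors separately and uniformly in $i$. The key point of this route is that no per-sample control is needed: the $\ell_2$ sum over $j$ is absorbed entirely by the operator norm of $X$. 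This is what upgrades the $O(\polylog(n)/\sqrt{n})$ per-coordinate bounds of Proposition~\ref{prop: base_estimator_approxs} to an $O(\polylog(n)/n)$ bound on the whole sum.

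\textbf{Bounding $\norm{X}^2$.} Write $X=Z+C\mathcal{M}$, where $Z$ has i.i.d.\ standard Gaussian entries conditionally on the classes (Assumption~\ref{ass:data}) and $C\in\{0,1\}^{n\times|\mathcal{V}|}$ is the one-hot class matrix. Then $\norm{X}\le\norm{Z}+\norm{C\mathcal{M}}$.
\begin{itemize}
\item For the noise part, standard Gaussian concentration of the largest singular value gives $\norm{Z}\le\sqrt{n}+\sqrt{d}+t$ with Gaussian tails in $t$. Hence $\norm{Z}^2=O_{L_k}(n)$ for every $k$, using $d=n/\alpha$.
\item For the mean part, $\norm{C\mathcal{M}}\le\sqrt{n}\,\norm{\mathcal{M}}_F\le\sqrt{n|\mathcal{V}|}\,M$, which is deterministically $O(\sqrt{n})$.
\end{itemize}
Together these give $\norm{X}^2=O_{L_{k}}(n)$ for all $k$.

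\textbf{Bounding $\norm{\hat{w}_0-\tilde{w}_{0,i}}^2$.} Because $\norm{X}$ does not depend on $i$, the supremum over $i$ falls only on this second factor:
\begin{align}
\sup_i\sum_{j\ne i}(u_j-\tilde{u}_{j,i})^2\le\norm{X}^2\,\sup_i\norm{\hat{w}_0-\tilde{w}_{0,i}}^2 .
\end{align}
Proposition~\ref{prop: base_estimator_approxs}, applied with $2k$ in place of $k$ and then squared, gives $\sup_i\norm{\hat{w}_0-\tilde{w}_{0,i}}^2=O_{L_{k}}(\polylog(n)/n^2)$.

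\textbf{Combining.} The product rule of the $O_{L_k}$ remark (Cauchy--Schwarz/H\"older, with $2k$ moments on each factor) then yields
\begin{align}
\sup_i\sum_{j\ne i}(u_j-\tilde{u}_{j,i})^2=O_{L_k}\!\left(n\cdot\frac{\polylog(n)}{n^2}\right)=O_{L_k}\!\left(\frac{\polylog(n)}{n}\right).
\end{align}

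The only step with genuine content is the moment bound on $\norm{X}^2$. This requires checking that the mean part stays $O(\sqrt{n})$ uniformly, which follows from the bounded $\norm{\mu_c}$ in Assumption~\ref{ass:data}. It also requires that the Gaussian tail bound on $\norm{Z}$ integrates to all moments, which is routine.
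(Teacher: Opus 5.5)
Your proposal is correct and follows essentially the same route as the paper: bound the sum by the squared operator norm of the data matrix times $\sup_i\norm{\hat{w}_0-\tilde{w}_{0,i}}^2$, then combine Proposition~\ref{prop: base_estimator_approxs} with an $O_{L_k}(n\,\polylog(n))$ bound on the squared operator norm. The only cosmetic differences are that you use the full matrix $X$, which removes the supremum over $i$ from that factor, and you prove its norm bound directly via Gaussian concentration, whereas the paper uses $X_{\setminus i}$ and cites Lemma~\ref{lemma:normx}.
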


\begin{proof}
    Note that we can rewrite more compactly 
    \begin{align}
        \sqrt{\sup_{i\in[n]} \sum\limits_{j\ne i} (u_j-\tilde{u}_{j,i})^2}&=\sup_{i\in[n]}\norm{X_\smi (\hat{w}_0-\tilde{w}_{0,i}) } \le \sup_{i\in [n]}\frac{\norm{X_\smi}}{\sqrt{n}} \sqrt{n}\sup_{i\in [n]} \norm{\hat{w}_0-\tilde{w}_{0,i}}\\
        &=\sup_{i\in [n]}\frac{\norm{X_\smi}}{\sqrt{n}}O_{L_k}\left(\frac{\polylog(n)}{\sqrt{n}}\right) 
    \end{align}
using Proposition \ref{prop: base_estimator_approxs} for the last step.
The control of the remaining term follows from Lemma \ref{lemma:normx}
 and yields $\sup_{i\in [n]}\sfrac{\norm{X_\smi}}{\sqrt{n}}=O_{L_k}(\polylog(n))$, completing the proof.\end{proof}

Note that Proposition \ref{prop: base_estimator_approxs} does not immediately follow from \cite{el2018impact}, as the latter in particular assumes the samples to have zero mean, and a simpler structure of the losses.
\begin{remark}
    It is an important observation that the proof of Proposition \ref{prop: base_estimator_approxs} in fact follows from the one reported below for the second-stage ERM. To see this, formally introduce an additional trivial ERM  with zero loss $\ell_{-1,j}(\cdot)=0$ and $\lambda_{-1}=1$. In particular, this loss satisfies the set of assumptions \ref{ass:base_loss}. The resulting weight minimizer is trivially zero $\hat{w}_{-1}=0_d$. The first-stage ERM can then be viewed as a ERM which depends on the (vanishing) predictions of this trivial ERM, and the proof below applies, exchanging the roles of the first-stage / second-stage ERM sequence with the trivial/first-stage ERM sequence. Note that there is no circularity in this argument, as Proposition \ref{prop: base_estimator_approxs} trivially holds for $\ell_{-1,j}(\cdot)=0$ and $\lambda_{-1}=1$. Therefore the following proof can be applied first
to prove Proposition \ref{prop: base_estimator_approxs} for the first-stage estimator, which can then in turn be used as a starting point for the second-stage estimator, following a direct re-application of the proof.
\end{remark}

\subsection{Second-stage estimator}
We now turn to establishing a similar approximation result between the second-stage estimator $\hat{w}$ and its surrogate $\tilde{w}_i$ and leave-one-out $\hat{w}_\smi$ approximations.


\begin{lemma}
    The distance between the full and surrogate estimators admits the upper bound
    \begin{align}
        \underset{i\in[n]}{\sup}\norm{\hat{w}-\tilde{w}_i}\le \frac{1}{\lambda}\underset{i\in[n]}{\sup}\norm{\nabla\tilde{\mathcal{R}}_i(\tilde{w}_i)}+\frac{1}{\lambda}\norm{\partial_r\partial_u\ell}_\infty O_{L_k}\left(\frac{\polylog(n)}{n}\right)
    \end{align}
\end{lemma}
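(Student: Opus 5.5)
We plan to combine the $\lambda$-strong convexity of $\hat{\mathcal{R}}$ with a comparison between the true risk $\hat{\mathcal{R}}$ and the surrogate risk $\tilde{\mathcal{R}}_i$. These two risks differ only through the second argument of the loss: the true risk uses the full first-stage residuals $u_j$, while the surrogate uses $\tilde{u}_{j,i}$ (with $\tilde{u}_{i,i}$ for $j=i$).

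\textbf{Step 1: strong convexity.} Since $\partial_r^2\ell\ge 0$ and $\lambda>0$, $\hat{\mathcal{R}}$ is $\lambda$-strongly convex, and $\nabla\hat{\mathcal{R}}(\hat{w})=0$. Monotonicity of the gradient of a strongly convex function then gives, for every $i$,
\begin{align}
\lambda\norm{\hat{w}-\tilde{w}_i}^2\le \inprod{\nabla\hat{\mathcal{R}}(\tilde{w}_i)-\nabla\hat{\mathcal{R}}(\hat{w}),\tilde{w}_i-\hat{w}},
\end{align}
and hence, by Cauchy--Schwarz,
\begin{align}
\norm{\hat{w}-\tilde{w}_i}\le \tfrac{1}{\lambda}\norm{\nabla\hat{\mathcal{R}}(\tilde{w}_i)}.
\end{align}

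\textbf{Step 2: splitting the gradient.} By the triangle inequality,
\begin{align}
\norm{\nabla\hat{\mathcal{R}}(\tilde{w}_i)}\le\norm{\nabla\tilde{\mathcal{R}}_i(\tilde{w}_i)}+\norm{\nabla\hat{\mathcal{R}}(\tilde{w}_i)-\nabla\tilde{\mathcal{R}}_i(\tilde{w}_i)}.
\end{align}
The first term is the one appearing in the statement. The regularization cancels in the difference, which therefore equals
\begin{align}
\frac{1}{n}\sum_j\big[\partial_r\ell_j(\tilde{r}_{j,i},u_j)-\partial_r\ell_j(\tilde{r}_{j,i},\tilde{u}_{j,i})\big]x_j=\frac{1}{n}X^\top\delta,
\end{align}
where $\tilde{r}_{j,i}=\inprod{\tilde{w}_i,x_j}$ for all $j$ (including $j=i$; this is just what the gradient evaluates, even though $\tilde{r}_{i,i}$ was defined separately through a prox). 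By the mean value theorem in the second argument,
\begin{align}
|\delta_j|\le\norm{\partial_r\partial_u\ell}_\infty\,|u_j-\tilde{u}_{j,i}|.
\end{align}

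\textbf{Step 3: bounding the difference.} We have
\begin{align}
\norm{\tfrac{1}{n}X^\top\delta}\le\frac{\norm{X}}{\sqrt n}\cdot\frac{\norm{\delta}}{\sqrt n},
\end{align}
and we split $\norm{\delta}$ into the sum over $j\ne i$ and the single term $j=i$.
\begin{itemize}
\item For $j\ne i$, Lemma~\ref{lemma:l2_norm_surrogate_full} gives $\sum_{j\ne i}(u_j-\tilde{u}_{j,i})^2=O_{L_k}(\polylog(n)/n)$ uniformly in $i$. This part of $\norm{\delta}$ is therefore $\norm{\partial_r\partial_u\ell}_\infty O_{L_k}(\polylog(n)/\sqrt n)$.
\item For $j=i$, we need the diagonal residual. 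Taking the surrogate to use $\tilde{u}_{i,i}$ at index $i$, Proposition~\ref{prop: base_estimator_approxs} gives $\sup_i|u_i-\tilde{u}_{i,i}|=O_{L_k}(\polylog(n)/\sqrt n)$. If instead $\tilde{u}_{i,i}$ is understood as $\inprod{\tilde{w}_{0,i},x_i}$, the same order follows from $\norm{\hat{w}_0-\tilde{w}_{0,i}}\,\norm{x_i}=O(\polylog/n)\cdot O(\sqrt n)$.
\end{itemize}
In both cases $\norm{\delta}=\norm{\partial_r\partial_u\ell}_\infty O_{L_k}(\polylog(n)/\sqrt n)$. Lemma~\ref{lemma:normx} gives $\norm{X}/\sqrt n=O_{L_k}(\polylog(n))$, so the product is $\norm{\partial_r\partial_u\ell}_\infty O_{L_k}(\polylog(n)/n)$. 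Taking $\sup_i$ throughout, which is legitimate since all the cited bounds are already uniform in $i$, yields the claim.

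\textbf{Main obstacle.} The delicate point is the bookkeeping of the index $j=i$. There the surrogate residual is defined through a prox rather than as an inner product, so we must check which version enters $\tilde{\mathcal{R}}_i$. Fortunately, either version is within $O(\polylog/\sqrt n)$ of $u_i$, and a single such term contributes only $O(\polylog/n)$ after the $1/\sqrt n$ normalization. A further check is that the $L_k$ products are handled via the Hölder-type remark in the notations section (raising $k$ as needed).
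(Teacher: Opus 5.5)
Your proof is correct and is essentially the paper's argument. The paper expands $\nabla\tilde{\mathcal{R}}_i(\tilde w_i)-\nabla\hat{\mathcal{R}}(\hat w)$ jointly in $(r,u)$ and inverts the mean-value Hessian $\check H\succeq\lambda I_d$. You instead get the same factor $1/\lambda$ from strong monotonicity of $\hat{\mathcal{R}}$ and isolate the $u$-mismatch by a triangle inequality, after which the $j\ne i$ and $j=i$ contributions are controlled exactly as in the paper, via Lemma~\ref{lemma:l2_norm_surrogate_full}, Proposition~\ref{prop: base_estimator_approxs} and the operator-norm bound on $X$.

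Your concern about the index $j=i$ is moot. The prox optimality condition $\tilde u_{i,i}+\gamma_{0,i}\partial_u\ell_{0,i}(\tilde u_{i,i})=u_{i,\setminus i}$ shows $\tilde u_{i,i}=\langle\tilde w_{0,i},x_i\rangle$, and likewise $\tilde r_{i,i}=\langle\tilde w_i,x_i\rangle$, so the two readings coincide.
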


\begin{proof}
    We have 
    \begin{align}
        \nabla\tilde{\mathcal{R}}_i(\tilde{w}_i)&=\nabla\tilde{\mathcal{R}}_i(\tilde{w}_i)-\nabla\hat{\mathcal{R}}_i(\hat{w})\\
        &=\frac{1}{n}\sum\limits_{j\in [n]}\left[
        \partial_r^2\ell_j(\check{r}_{j,i},\check{u}_{j,i})\inprod{\tilde{w}_i-\hat{w}, x_j}+\partial_r\partial_u\ell_j(\check{r}_{j,i},\check{u}_{j,i})(\tilde{u}_{j,i}-u_j)
        \right]x_j+\lambda(\tilde{w}_i-\hat{w}),
    \end{align}
    for some  $(\check{r}_{j,i},\check{u}_{j,i})$ lying on the segment connecting $(\tilde{r}_{j,i},\tilde{u}_{j,i})$ and $(r_{j,\smi},u_{j,\smi})$. Introducing
    \begin{align}
        \check{H}=\frac{1}{n}\sum\limits_{j\in [n]}
        \partial_r^2\ell_j(\check{r}_{j,i},\check{u}_{j,i})x_jx_j^\top +\lambda I_d
    \end{align}
    one can write
    \begin{align}
\tilde{w}_i-\hat{w}=\check{H}^{-1}\left[
\nabla\tilde{\mathcal{R}}_i(\tilde{w}_i) +\frac{1}{n} \partial_r\partial_u\ell_j(\check{r}_{i,i},\check{u}_{i,i})(\tilde{u}_{i,i}-u_i)x_i+\frac{1}{n}\sum\limits_{j\ne i}\partial_r\partial_u\ell_j(\check{r}_{j,i},\check{u}_{j,i})(\tilde{u}_{j,i}-u_j) x_j
\right]     
    \end{align}
The first term can be simply controlled as
\begin{align}
    \underset{i\in[n]}{\sup}\norm{\check{H}^{-1}
\nabla\tilde{\mathcal{R}}_i(\tilde{w}_i) }\le \frac{1}{\lambda } \underset{i\in[n]}{\sup}\norm{\nabla\tilde{\mathcal{R}}_i(\tilde{w}_i) }.
\end{align}
The second term can be upper-bounded as
\begin{align}
    \underset{i\in[n]}{\sup}\norm{\frac{1}{n} \partial_r\partial_u\ell_j(\check{r}_{i,i},\check{u}_{i,i})(\tilde{u}_{i,i}-u_i)\check{H}^{-1}x_j} \le\frac{1}{\sqrt{n}\lambda }\norm{\partial_r\partial_u\ell}_\infty \underset{i\in[n]}{\sup}\frac{\norm{x_i}}{\sqrt{n}} O_{L_k}\left(\frac{\polylog(n)}{\sqrt{n}}\right),
\end{align}
using Proposition \ref{prop: base_estimator_approxs}. Using Lemma \ref{lemma:normx}, $\sup_{i\in[n]}\sfrac{\norm{x_i}}{\sqrt{n}}=O_{L_k}(\polylog(n))$. Thus, 
\begin{align}
    \underset{i\in[n]}{\sup}\norm{\frac{1}{n} \partial_r\partial_u\ell_j(\check{r}_{i,i},\check{u}_{i,i})(\tilde{u}_{i,i}-u_i)\check{H}^{-1}x_j} = \frac{1}{\lambda}\norm{\partial_r\partial_u\ell}_\infty O_{L_k}\left(\frac{\polylog(n)}{n}\right).
\end{align}
Finally, defining $h_i\in\R^{(n-1)\times d}$ with entries $h_{i,j}=\partial_r\partial_u\ell_j(\check{r}_{j,i},\check{u}_{j,i})(\tilde{u}_{j,i}-u_j) $, the third term can be written as
\begin{align}
    \underset{i\in[n]}{\sup}\norm{\frac{1}{n}\sum\limits_{j\ne i}\partial_r\partial_u\ell_j(\check{r}_{j,i},\check{u}_{j,i})(\tilde{u}_{j,i}-u_j) \check{H}^{-1}x_j}=\frac{1}{n}\norm{\check{H}^{-1}X_\smi^\top h_i}\le \frac{1}{\sqrt{n}\lambda } \underset{i\in[n]}{\sup}\frac{\norm{X_\smi}}{\sqrt{n}} \underset{i\in[n]}{\sup}\norm{h_i}.
\end{align}
From Lemma \ref{lemma:normx}, $ \sup_{i\in[n]}\sfrac{\norm{X_\smi}}{\sqrt{n}}=O_{L_k}(\polylog(n)).$
Furthermore, 
\begin{align}
    \sup_{i\in[n]}\norm{h_i}\le \norm{\partial_r\partial_u\ell}_\infty \sqrt{\sup_{i\in[n]}\sum\limits_{j\ne i}(\tilde{u}_{j,i}-u_j)^2}=\norm{\partial_r\partial_u\ell}_\infty O_{L_k}\left(\frac{\polylog(n)}{\sqrt{n}}\right).
\end{align}
Thus, 
\begin{align}
     \norm{\frac{1}{n}\sum\limits_{j\ne i}\partial_r\partial_u\ell_j(\check{r}_{j,i},\check{u}_{j,i})(\tilde{u}_{j,i}-u_j) \check{H}^{-1}x_j}=\frac{1}{\lambda }\norm{\partial_r\partial_u\ell}_\infty O_{L_k}\left(\frac{\polylog(n)}{n}\right),
\end{align}
which concludes the proof.
\end{proof}

\begin{lemma}\label{lem:decompogradR}
    We have $\nabla\tilde{\mathcal{R}}(\tilde{w}_i)=\Delta_i^{(1)}+\Delta_i^{(2)}$ where
    \begin{align}
        & \Delta_i^{(1)}= \frac{1}{n}\sum\limits_{j\ne i}\left[\partial_r^2 \ell_j(\check{r}_{j,i},\check{u}_{j,i})-\partial_r^2 \ell_j(r_{j,\smi},u_{j,\smi})\right]x_jx_j^\top (\eta_i+\zeta_i),\\
        & \Delta_i^{(2)}= \frac{1}{n}\sum\limits_{j\ne i}\left[\partial_r\partial_u \ell_j(\check{r}_{j,i},\check{u}_{j,i})-\partial_r\partial_u  \ell_j(r_{j,\smi},u_{j,\smi})\right](\tilde{u}_{j,i}-u_{j,\smi})x_j,
    \end{align}
where $\check{r}_{j,i}$ (resp. $\check{u}_{j,i}$)  belongs to the unordered interval $(\tilde{r}_{j,i},r_{j,\smi})$ (resp. $(\tilde{u}_{j,i},u_{j,\smi})$).
\end{lemma}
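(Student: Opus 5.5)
The plan is to expand $\nabla\tilde{\mathcal{R}}_i(\tilde w_i)$ directly and subtract the first-order optimality condition of the leave-one-out problem, which reads $\frac1n\sum_{j\ne i}\partial_r\ell_j(r_{j,\smi},u_{j,\smi})x_j+\lambda\hat w_\smi=0$. I read the surrogate risk $\tilde{\mathcal{R}}_i$ as having $j$-th residual argument $\tilde u_{j,i}$ (with $\tilde u_{i,i}$ for $j=i$). Then
\begin{align}
\nabla\tilde{\mathcal{R}}_i(\tilde w_i)=\frac1n\sum_{j\in[n]}\partial_r\ell_j(\inprod{\tilde w_i,x_j},\tilde u_{j,i})x_j+\lambda\tilde w_i .
\end{align}

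\textbf{Step 1: the $i$-th residual is self-consistent.} First I check that $\inprod{\tilde w_i,x_i}=\tilde r_{i,i}$. Using $\tilde w_i=\hat w_\smi+\eta_i+\zeta_i$ together with the definitions of $\gamma_i$ and $\upsilon_i$,
\begin{align}
\inprod{\tilde w_i,x_i}=r_{i,\smi}-\gamma_i\partial_r\ell_i(\tilde r_{i,i},\tilde u_{i,i})+\partial_u\ell_{0,i}(\tilde u_{i,i})\upsilon_i .
\end{align}
This is exactly the fixed-point characterization of $\tilde r_{i,i}=\prox_{\gamma_i\ell_i(\cdot,\tilde u_{i,i})}(r_{i,\smi}+\partial_u\ell_{0,i}\upsilon_i)$. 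The analogous identity $\inprod{\tilde w_{0,i},x_i}=\tilde u_{i,i}$ holds for the first stage. Hence the $j=i$ summand equals $\frac1n\partial_r\ell_i(\tilde r_{i,i},\tilde u_{i,i})x_i$.

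\textbf{Step 2: expand the $j\ne i$ summands.} For $j\ne i$, apply the mean value theorem to $t\mapsto\partial_r\ell_j$ along the segment from $(r_{j,\smi},u_{j,\smi})$ to $(\tilde r_{j,i},\tilde u_{j,i})$. This gives a point $(\check r_{j,i},\check u_{j,i})$ on that segment with
\begin{align}
\partial_r\ell_j(\tilde r_{j,i},\tilde u_{j,i})-\partial_r\ell_j(r_{j,\smi},u_{j,\smi})=\partial_r^2\ell_j(\check r_{j,i},\check u_{j,i})(\tilde r_{j,i}-r_{j,\smi})+\partial_r\partial_u\ell_j(\check r_{j,i},\check u_{j,i})(\tilde u_{j,i}-u_{j,\smi}).
\end{align}
Here $\tilde r_{j,i}-r_{j,\smi}=\inprod{\eta_i+\zeta_i,x_j}$ and $\tilde u_{j,i}-u_{j,\smi}=\inprod{\xi_i,x_j}$. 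In each coefficient I add and subtract the value at $(r_{j,\smi},u_{j,\smi})$. The differences produce precisely $\Delta_i^{(1)}$ and $\Delta_i^{(2)}$. What remains, combined with the subtracted optimality condition and with $\lambda(\tilde w_i-\hat w_\smi)=\lambda(\eta_i+\zeta_i)$, is
\begin{align}
H_\smi(\eta_i+\zeta_i)+G_\smi\xi_i+\frac1n\partial_r\ell_i(\tilde r_{i,i},\tilde u_{i,i})x_i .
\end{align}

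\textbf{Step 3: cancellation.} By definition, $H_\smi\eta_i=-\frac1n\partial_r\ell_i(\tilde r_{i,i},\tilde u_{i,i})x_i$, which cancels the last term. Likewise $H_\smi\zeta_i=\frac1n\partial_u\ell_{0,i}(\tilde u_{i,i})G_\smi H_{0,\smi}^{-1}x_i=-G_\smi\xi_i$, which cancels the middle term. This is exactly the role of the correction $\zeta_i$, and leaves $\nabla\tilde{\mathcal{R}}_i(\tilde w_i)=\Delta_i^{(1)}+\Delta_i^{(2)}$.

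The argument is purely algebraic. The only delicate points are Step 1 and the bookkeeping that matches $\zeta_i$ against $G_\smi\xi_i$. Step 1 requires that the surrogate risk uses the residuals consistently, so that the $i$-th term is evaluated at the prox point $\tilde r_{i,i}$ rather than at some other value. The mean value point must be taken jointly in both arguments, using a single point per $j$, so that the same $(\check r_{j,i},\check u_{j,i})$ appears in both $\Delta$'s as stated.
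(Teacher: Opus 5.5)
Your proposal is correct and follows the paper's proof. Both subtract the leave-one-out stationarity condition, apply a single joint mean-value point per $j\ne i$, and add and subtract the coefficients at $(r_{j,\smi},u_{j,\smi})$ to isolate $\Delta_i^{(1)}$ and $\Delta_i^{(2)}$; the remainder then cancels via $H_\smi\eta_i=-\frac1n\partial_r\ell_i(\tilde r_{i,i},\tilde u_{i,i})x_i$ and $H_\smi\zeta_i=-G_\smi\xi_i$. Your Step 1 derives $\inprod{\tilde w_i,x_i}=\tilde r_{i,i}$ from the prox fixed-point equation; the paper uses this identity without comment when it writes the $i$-th summand directly as $\frac1n\partial_r\ell_i(\tilde r_{i,i},\tilde u_{i,i})x_i$.
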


\begin{proof}
    Starting from the identity $\nabla\hat{\mathcal{R}}_\smi(\hat{w}_\smi)$, one reaches
    \begin{align}
        \nabla\hat{\mathcal{R}}(\tilde{w}_i)&=\nabla\hat{\mathcal{R}}(\tilde{w}_i)-\nabla\hat{\mathcal{R}}_\smi(\hat{w}_\smi)\\
        &=\frac{1}{n}\partial_r\ell_i(\tilde{r}_{i,i},\tilde{u}_{i,i})x_i+\frac{1}{n}\sum\limits_{j\ne i} \left[
        \partial_r\ell_j(\tilde{r}_{j,i},\tilde{u}_{j,i})-\partial_r\ell_j(r_{j,\smi},u_{j,\smi})
        \right]x_j+\lambda(\eta_i+\zeta_i)
    \end{align}

From a Taylor expansion, the second term reads
\begin{align}
    &\frac{1}{n}\sum\limits_{j\ne i} \left[
        \partial_r\ell_j(\tilde{r}_{j,i},\tilde{u}_{j,i})-\partial_r\ell_j(r_{j,\smi},u_{j,\smi})
        \right]x_j\\
        &=\frac{1}{n}\sum\limits_{j\ne i} \left[
        \partial_r^2\ell_j(\check{r}_{j,i},\check{u}_{j,i})\inprod{x_j,\eta_i+\zeta_i}+\partial_r\partial_u\ell_j(\check{r}_{j,i},\check{u}_{j,i})(\tilde{u}_{j,i}-u_{j,\smi})
        \right]x_j\\
        &=\frac{1}{n}\sum\limits_{j\ne i} \left[
        \partial_r^2\ell_j(r_{j,\smi},u_{j,\smi})\inprod{x_j,\eta_i+\zeta_i}+\partial_r\partial_u\ell_j(r_{j,\smi},u_{j,\smi})(\tilde{u}_{j,i}-u_{j,\smi})
        \right]x_j+\Delta_i^{(1)}+\Delta_i^{(2)},
\end{align}
with $(\check{r}_{j,i},\check{u}_{j,i})$ lying on the segment connecting $(\tilde{r}_{j,i},\tilde{u}_{j,i})$ and $(r_{j,\smi},u_{j,\smi})$. Thus
\begin{align}
    \nabla\hat{\mathcal{R}}(\tilde{w}_i)&=\frac{1}{n}\partial_r\ell_i(\tilde{r}_{i,i},\tilde{u}_{i,i})x_i+\Delta_i^{(1)}+\Delta_i^{(2)}+H_\smi (\eta_i+\zeta_i)-\frac{1}{n}\partial_u\ell_{0,i}(\tilde{u}_{i,i}) G_\smi
       H_{0,\smi}^{-1} x_i\\
       &=\Delta_i^{(1)}+\Delta_i^{(2)},
\end{align}
using the definitions of $\eta_i,\zeta_i$.
\end{proof}

\paragraph{ Control of $\sup_{i\in [n]}\norm{\Delta_i^{(1)}}$ ---} We first aim to bound the first correction $\sup_{i\in [n]}\norm{\Delta_i^{(1)}}$. Introducing the diagonal matrix $\Lambda\in \R^{(n-1)\times (n-1)}$ with elements $\Lambda_{jj}=\partial_r^2 \ell_j(\check{r}_{j,i},\check{u}_{j,i})-\partial_r^2 \ell_j(r_{j,\smi},u_{j,\smi})$, one has
\begin{align}
   &\sup_{i\in [n]} \norm{\Delta_i^{(1)}} \\&\le \sup_{i\in [n]} \frac{1}{n}\norm{X_\smi^\top \Lambda X_\smi} \left(\norm{\eta_i}+\norm{\zeta_i}\right)\\
    & \le \sup_{i\in [n]} \frac{\norm{X_\smi}^2}{n} \sup_{i\in [n]}\sup_{j\ne i} \left| \partial_r^2 \ell_j(\check{r}_{j,i},\check{u}_{j,i})-\partial_r^2 \ell_j(r_{j,\smi},u_{j,\smi})\right|\left(\sup_{i\in [n]}\norm{\eta_i}+\sup_{i\in [n]}\norm{\zeta_i}\right)\\
    &\le \left(\norm{\partial_r^3\ell}_\infty \vee \norm{\partial_r^2\partial_u\ell}_\infty\right) \sup_{i\in [n]} \frac{\norm{X_\smi}^2}{n} \left(\sup_{i\in [n]}\sup_{j\ne i} |r_{j,\smi}-\tilde{r}_{j,i}|+\sup_{i\in [n]}\sup_{j\ne i}|u_{j,\smi}-\tilde{u}_{j,i}|\right)\left(\sup_{i\in [n]}\norm{\eta_i}+\sup_{i\in [n]}\norm{\zeta_i}\right)\label{eq: decompo_dletar}
\end{align}
Lemma \ref{lemma:normx} provides control of the first term as $\sup_{i\in [n]} \sfrac{\norm{X_\smi}^2}{n} =O_{L_k}(\polylog(n)).$ The control of the last follows from the following lemma.

\begin{lemma}\label{lem:eta_zeta}
    We have
    \begin{align}
        &\sup_{i\in [n]}\norm{\eta_i}\le \frac{\norm{\partial_r\ell}_\infty}{\lambda }O_{L_k}\left(\frac{\polylog(n)}{\sqrt{n}}\right),\end{align}
        and
        \begin{align}
           \sup_{i\in [n]} \norm{\zeta_i}\le \frac{\norm{\partial_u\ell_0}_\infty\norm{\partial_r\partial_u\ell}_\infty}{\lambda_0\lambda}O_{L_k}\left(\frac{\polylog(n)}{\sqrt{n}}\right).
    \end{align}
\end{lemma}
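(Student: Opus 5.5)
The plan is to bound both vectors by operator-norm estimates on the resolvents, combined with the polylog bounds on the loss derivatives and on the data norms. Everything rests on the ridge terms. Because $\ell_0$ and $\ell$ are convex in their first argument, $\partial_u^2\ell_{0,j}\ge 0$ and $\partial_r^2\ell_j\ge 0$. Hence $H_{0,\smi}\succeq \lambda_0 I_d$ and $H_\smi\succeq\lambda I_d$, which gives $\norm{H_{0,\smi}^{-1}}\le 1/\lambda_0$ and $\norm{H_\smi^{-1}}\le 1/\lambda$ deterministically and uniformly in $i$.

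\textbf{Bound on $\eta_i$.} Directly from the definition,
\begin{align}
\norm{\eta_i}\le \frac{1}{n}\abs{\partial_r\ell_i(\tilde r_{i,i},\tilde u_{i,i})}\,\norm{H_\smi^{-1}}\,\norm{x_i}\le \frac{\norm{\partial_r\ell}_\infty}{\lambda}\,\frac{1}{\sqrt n}\,\frac{\norm{x_i}}{\sqrt n}.
\end{align}
Taking the supremum over $i$ and invoking Lemma \ref{lemma:normx}, which gives $\sup_i \norm{x_i}/\sqrt n=O_{L_k}(\polylog(n))$, yields the first claim. It does not matter where $\tilde r_{i,i}$ and $\tilde u_{i,i}$ are located, since $\partial_r\ell$ is bounded everywhere.

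\textbf{Bound on $\zeta_i$.} In the same way,
\begin{align}
\norm{\zeta_i}\le \frac{\norm{\partial_u\ell_0}_\infty}{n}\,\frac{1}{\lambda}\,\norm{G_\smi}\,\frac{1}{\lambda_0}\,\norm{x_i}.
\end{align}
The only new ingredient is $\norm{G_\smi}$. Unlike $H_\smi$, the matrix $G_\smi$ is not sign-definite, because $\partial_r\partial_u\ell$ can change sign. I will therefore bound it as $X_\smi^\top D X_\smi/n$, where $D$ is diagonal with entries $\partial_r\partial_u\ell_j(r_{j,\smi},u_{j,\smi})$. This gives
\begin{align}
\norm{G_\smi}\le \norm{\partial_r\partial_u\ell}_\infty \frac{\norm{X_\smi}^2}{n},
\end{align}
and Lemma \ref{lemma:normx} again controls $\sup_i\norm{X_\smi}^2/n=O_{L_k}(\polylog(n))$.

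Finally, the product of the two $O_{L_k}(\polylog(n))$ factors $\sup_i\norm{X_\smi}^2/n$ and $\sup_i\norm{x_i}/\sqrt n$ is handled with the H\"older/Minkowski property of $O_{L_k}$ recalled in the Remark. Working at level $L_{2k}$ for each factor gives an $O_{L_k}(\polylog(n))$ product, and multiplying by the explicit $1/\sqrt n$ gives the stated bound.

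The only step needing care is $G_\smi$: its indefiniteness rules out a resolvent-type bound, so one must fall back on the crude $\norm{X_\smi}^2/n$ estimate. This is harmless because only an operator-norm bound at polylog scale is required.
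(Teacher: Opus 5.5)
Your proposal is correct and follows the paper's proof essentially line by line. Both use the resolvent bounds $\norm{H_\smi^{-1}}\le 1/\lambda$ and $\norm{H_{0,\smi}^{-1}}\le 1/\lambda_0$, the crude estimate $\norm{G_\smi}\le\norm{\partial_r\partial_u\ell}_\infty\norm{X_\smi}^2/n$, and Lemma~\ref{lemma:normx} to control $\sup_i\norm{x_i}/\sqrt{n}$ and $\sup_i\norm{X_\smi}^2/n$; your explicit justifications (convexity for the resolvent bounds, the $L_{2k}$ H\"older step for the product of $O_{L_k}$ factors) spell out what the paper leaves implicit.
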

\begin{proof}
    One has
    \begin{align}
        \sup_{i\in [n]}\norm{\eta_i}\le \frac{1}{\sqrt{n}\lambda} \norm{\partial_r\ell}_\infty \sup_{i\in [n]}\frac{\norm{x_i}}{\sqrt{n}}=\frac{\norm{\partial_r\ell}_\infty}{\lambda}O_{L_k}\left(\frac{\polylog(n)}{\sqrt{n}}\right),
    \end{align}
    using Lemma \ref{lemma:normx}. By the same token, $\norm{\zeta_i}$ can be controlled as
    \begin{align}
       \sup_{i\in [n]} \norm{\zeta_i}\le \frac{1}{\sqrt{n}} \norm{\partial_u\ell_0}_\infty \frac{1}{\lambda_0\lambda } \sup_{i\in [n]}\norm{G_\smi} \sup_{i\in [n]}\frac{\norm{x_i}}{\sqrt{n}}.
    \end{align}
But 
\begin{align}
    \sup_{i\in [n]}\norm{G_\smi}&=\sup_{i\in [n]}\norm{\frac{1}{n}\sum\limits_{j\ne i}\partial_r\partial_u \ell(r_{j,\smi}, u_{j,\smi})x_jx_j^\top }\\
    &\le \norm{\partial_r\partial_u \ell}_\infty\sup_{i\in [n]}\frac{\norm{X_\smi}^2}{n}= \norm{\partial_r\partial_u \ell}_\infty O_{L_k}(\polylog(n)),
\end{align}
using again Lemma \ref{lemma:normx}.
\end{proof}

Finally, one needs control of the middle term in \eqref{eq: decompo_dletar}. The control of $\sup_{i\in [n]}\sup_{j\ne i}|u_{j,\smi}-\tilde{u}_{j,i}|$ follows from Lemma C.4 of \cite{el2018impact}, or equivalently the following proof on a surrogate problem with zero loss, as remarked above.

\begin{lemma}\label{lem:supsup} For any $k\in [\lceil 2\log(n)\rceil]$,
    \begin{align}
        \sup_{i\in [n]}\sup_{j\ne i} |r_{j,\smi}-\tilde{r}_{j,i}|= O_{L_k}\left(\frac{\polylog(n)}{\sqrt{n}}\right).
    \end{align}
\end{lemma}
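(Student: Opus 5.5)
Since $\tilde{w}_i-\hat{w}_\smi=\eta_i+\zeta_i$, for every $j\ne i$ we can write
\begin{align}
r_{j,\smi}-\tilde{r}_{j,i}=-\inprod{\eta_i+\zeta_i,x_j}.
\end{align}
So the task reduces to bounding the two scalar quantities $\sup_i\sup_{j\ne i}|\inprod{\eta_i,x_j}|$ and $\sup_i\sup_{j\ne i}|\inprod{\zeta_i,x_j}|$. A naive Cauchy--Schwarz bound using Lemma~\ref{lem:eta_zeta} only gives $O(\polylog(n))$. The missing $\sqrt{n}$ has to come from the near-independence of $x_j$ from the matrices $H_\smi^{-1}$, $G_\smi$, $H_{0,\smi}^{-1}$ applied to $x_i$. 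The prefactors $\partial_r\ell_i(\tilde{r}_{i,i},\tilde{u}_{i,i})$ and $\partial_u\ell_{0,i}(\tilde{u}_{i,i})$ are $O(\polylog(n))$ by Assumptions~\ref{ass:base_loss} and~\ref{ass:ell}, so they can be pulled out. It therefore suffices to show
\begin{align}
\sup_i\sup_{j\ne i}\tfrac1n|x_j^\top H_\smi^{-1}x_i|=O_{L_k}\big(\tfrac{\polylog n}{\sqrt n}\big),
\qquad
\sup_i\sup_{j\ne i}\tfrac1n|x_j^\top H_\smi^{-1}G_\smi H_{0,\smi}^{-1}x_i|=O_{L_k}\big(\tfrac{\polylog n}{\sqrt n}\big).
\end{align}

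To decouple $x_j$, I would use a second level of leave-out: the leave-two-out estimators $\hat{w}_{0,\setminus\{i,j\}}$ and $\hat{w}_{\setminus\{i,j\}}$, and the matrices $H_{\setminus\{i,j\}}$, $G_{\setminus\{i,j\}}$, $H_{0,\setminus\{i,j\}}$ built from them. Two corrections separate $H_\smi$ from $H_{\setminus\{i,j\}}$.
\begin{itemize}
\item The explicit rank-one term $\frac1n\partial_r^2\ell_j\,x_jx_j^\top$. This is handled by Sherman--Morrison: it gives $x_j^\top H_\smi^{-1}=x_j^\top H_{\setminus\{i,j\}}^{-1}/(1+\partial_r^2\ell_j\gamma_{ij})$, and the denominator is at least $1$ by convexity.
\item The perturbation of the weights $\partial_r^2\ell_k(r_{k,\smi},u_{k,\smi})-\partial_r^2\ell_k(r_{k,\setminus\{i,j\}},u_{k,\setminus\{i,j\}})$ for $k\ne i,j$. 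Applying Proposition~\ref{prop: base_estimator_approxs} and the second-stage analogue (the $\sqrt{n}$-rate estimator bound $\norm{\hat w_\smi-\hat w_{\setminus\{i,j\}}}$, derived as in Lemma~\ref{lem:eta_zeta}) to the data set with $i$ removed, the weight vectors move by $O(\polylog/\sqrt n)$. Together with $\norm{X}/\sqrt n=O(\polylog)$ from Lemma~\ref{lemma:normx}, the resulting matrix perturbation is controlled in operator norm by a Frobenius/diagonal argument, namely $\frac1n\norm{X^\top D X}$ with $\sum_k D_{kk}^2=O(\polylog/n)$ via the analogue of Lemma~\ref{lemma:l2_norm_surrogate_full}.
\end{itemize}

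Once decoupled, the matrix $M=H_{\setminus\{i,j\}}^{-1}G_{\setminus\{i,j\}}H_{0,\setminus\{i,j\}}^{-1}$ is independent of $(x_i,x_j)$ conditionally on the classes, and $\norm{M}=O(\polylog)$. Writing $x_i=\mu_{c_i}+z_i$ and $x_j=\mu_{c_j}+z_j$, the form $\frac1n x_j^\top M x_i$ splits into four pieces:
\begin{itemize}
\item $\frac1n\mu^\top M\mu=O(1/n)$;
\item the cross terms $\frac1n\mu^\top M z$, which are Gaussian with variance $O(\polylog/n^2)$;
\item the bilinear term $\frac1n z_j^\top M z_i$ with $z_i,z_j$ independent, which has conditional variance $\norm{M}_F^2/n^2\le d\norm{M}^2/n^2=O(\polylog/n)$.
\end{itemize}
Gaussian hypercontractivity then gives $L_k$ bounds of order $\polylog(n)/\sqrt n$ with constants growing like $k^{O(1)}$.

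The suprema over the $n^2$ pairs $(i,j)$ are handled by the standard trick $\E\sup\le(\sum\E|\cdot|^{k'})^{1/k'}$ with $k'\asymp\log n$. This costs only a constant factor, which explains the restriction $k\le\lceil2\log n\rceil$ in the statement and makes the polylog absorb the $k'^{O(1)}$ growth.

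\textbf{Main obstacle.} The main difficulty is the leave-two-out perturbation of the nested product $H^{-1}GH_0^{-1}$. Removing $j$ changes all three factors, and $G$ is not positive definite, so Sherman--Morrison does not directly give a bounded factor. My plan is to expand the difference by a telescoping identity. Each of the three terms contains either a rank-one piece, which produces a factor $\frac1n x_j^\top(\cdot)x_j\cdot\frac1n x_j^\top(\cdot)x_i$ that reduces back to a decoupled form, or a weight-perturbation piece of small operator norm. This keeps everything at $O_{L_k}(\polylog/\sqrt n)$.
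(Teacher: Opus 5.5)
Your reduction matches the paper's first step: $r_{j,\smi}-\tilde r_{j,i}=-\inprod{\eta_i+\zeta_i,x_j}$, then pull out the bounded derivatives, leaving $\sup_i\sup_{j\ne i}\frac1n|x_j^\top M_\smi x_i|$ with $M_\smi=H_\smi^{-1}$ or $M_\smi=H_\smi^{-1}G_\smi H_{0,\smi}^{-1}$. Your closing union bound over $n^2$ pairs with $k\asymp\log n$ moments is also what Lemma~\ref{lemma:quad} does.

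The gap is in the middle, where you misidentify the source of the $\sqrt n$ gain. You never need $x_j$ to be (nearly) independent of the matrix; you only need $x_i$ to be independent of both. Condition on $\{x_k\}_{k\ne i}$. Then $a=\frac1n M_\smi^\top x_j$ is fixed, and $\frac1n x_j^\top M_\smi x_i=\inprod{a,\mu_{c_i}}+\inprod{a,z_i}$. The second term is centered Gaussian with standard deviation $\norm{a}\le\norm{M_\smi}\norm{x_j}/n=O(\norm{M_\smi}\polylog(n)/\sqrt n)$, however $M_\smi$ depends on $x_j$. This is exactly Lemma~\ref{lemma:quad}. Combined with $\norm{H_\smi^{-1}}\le 1/\lambda$ and $\norm{H_\smi^{-1}G_\smi H_{0,\smi}^{-1}}=O_{L_k}(\polylog(n))$, it proves the statement in a few lines, and the ``main obstacle'' you describe never arises.

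The leave-two-out detour is not merely unnecessary; as you justify it, it fails.

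\begin{itemize}
\item Your claim $\sum_k D_{kk}^2=O(\polylog(n)/n)$ is false. Lemma~\ref{lemma:l2_norm_surrogate_full} compares full residuals with \emph{surrogate} residuals, which still contain the $x_j$-dependent correction. Your $D$ instead compares leave-$i$-out with leave-$\{i,j\}$-out residuals, and $r_{k,\smi}-r_{k,\setminus\{i,j\}}\approx-\frac1n\partial_r\ell_j\,x_k^\top H_{\setminus\{i,j\}}^{-1}x_j$ is of order $n^{-1/2}$ for every $k$. So $\sum_k D_{kk}^2$ is generically of order one, and your Frobenius argument only gives $\frac1n\norm{X^\top DX}=O(\polylog(n))$, i.e.\ no gain.
\item The estimator bound $\norm{\hat w_\smi-\hat w_{\setminus\{i,j\}}}=O(\polylog(n)/\sqrt n)$ does not rescue this: through Cauchy--Schwarz it yields only $O(\polylog(n))$ per residual.
\item What you would actually need is the sup bound $\sup_k|r_{k,\smi}-r_{k,\setminus\{i,j\}}|=O(\polylog(n)/\sqrt n)$, i.e.\ Proposition~\ref{prop: reweighted_estimator_approxs} applied to $X_\smi$. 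That proposition is proved using the present lemma, both directly and through Lemma~\ref{lem:Delta1}, so invoking it here is circular.
\item An induction on the sample size does not close either, because the weight perturbation re-enters with an $O(\polylog(n))$ coefficient, which is not contractive.
\end{itemize}

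The only non-circular way to get that sup bound is to control $\sup_k\frac1n|x_k^\top H_{\setminus\{i,j\}}^{-1}x_j|$ by conditioning on everything except $x_j$. That is precisely the conditioning argument that settles the original statement directly.
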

\begin{proof}
    Let us first note that 
    \begin{align}
\sup_{i\in [n]}\sup_{j\ne i}|r_{j,\smi}-\tilde{r}_{j,i}|& \le \sup_{i\in [n]}\sup_{j\ne i}|\inprod{x_j,\eta_i}|  +\sup_{i\in [n]}\sup_{j\ne i}|\inprod{x_j,\zeta_i}|       \\
&\le \norm{\partial_r\ell}_\infty \sup_{i\in [n]}\sup_{j\ne i}\left|\frac{1}{n}\inprod{x_j,H_\smi^{-1}x_i}\right|+  \norm{\partial_u\ell_0}_\infty\sup_{i\in [n]}\sup_{j\ne i} \left|\frac{1}{n}\inprod{x_j,H_\smi^{-1}G_\smi H_{0,\smi}^{-1}x_i}\right|\label{eq: supsup}
    \end{align}
It follows from Lemma \ref{lemma:quad} that
\begin{align}
    \sup_{i\in [n]}\sup_{j\ne i}\left|\frac{1}{n}\inprod{x_j,H_\smi^{-1}x_i}\right|=\frac{1}{\lambda }O_{L_k}\left(\frac{\polylog(n)}{\sqrt{n}}\right)
\end{align}
The second term of \eqref{eq: supsup} admits the same upper bound, since from Lemma \ref{lemma:normx} and the proof of Lemma \ref{lem:eta_zeta}, on again has $\norm{H_\smi^{-1}G_\smi H_{0,\smi}^{-1}}=O_{L_k}(\polylog(n))$. Hence returning to \eqref{eq: supsup},
    \begin{align}
        \sup_{i\in [n]}\sup_{j\ne i} |r_{j,\smi}-\tilde{r}_{j,i}|= O_{L_k}\left(\frac{\polylog(n)}{\sqrt{n}}\right).
    \end{align}
    This concludes the proof.
\end{proof}

Returning to \eqref{eq: decompo_dletar} and assembling these intermediary results finally allows to control $\sup_{i\in[n]}\norm{\Delta^{(1)}_i}$. This is summarized in the following lemma.

\begin{lemma} \label{lem:Delta1}
    For any $k\in [\lceil \sfrac{1}{2} \log(n) \rceil]$,
    \begin{align}
        \sup_{i\in[n]}\norm{\Delta^{(1)}_i}=O_{L_k}\left(\frac{\polylog(n)}{n^2}\right).
    \end{align}
\end{lemma}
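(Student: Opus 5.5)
The plan is to start from the deterministic inequality \eqref{eq: decompo_dletar}. It bounds $\sup_{i\in[n]}\norm{\Delta^{(1)}_i}$ by a product of three random factors, together with the deterministic constant $\norm{\partial_r^3\ell}_\infty\vee\norm{\partial_r^2\partial_u\ell}_\infty=O(\polylog(n))$ from Assumption \ref{ass:ell}. We then control each factor in $L_{k'}$ for a suitably larger $k'$, and recombine them using the H\"older/Minkowski properties of the $O_{L_k}$ notation recalled in the Remark of Appendix \ref{app:notations}.

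\textbf{Step 1 (controlling the three factors).}
\begin{itemize}
\item The spectral factor $\sup_{i}\norm{X_\smi}^2/n$ is $O_{L_{k'}}(\polylog(n))$ by Lemma \ref{lemma:normx}.
\item The residual-perturbation factor is $\sup_i\sup_{j\ne i}|r_{j,\smi}-\tilde r_{j,i}|+\sup_i\sup_{j\ne i}|u_{j,\smi}-\tilde u_{j,i}|$. Its first term is $O_{L_{k'}}(\polylog(n)/\sqrt n)$ by Lemma \ref{lem:supsup}. Its second term is $O_{L_{k'}}(\polylog(n)/\sqrt n)$ by the last display of Proposition \ref{prop: base_estimator_approxs}, or equivalently by Lemma C.4 of \cite{el2018impact}.
\item The step factor $\sup_i\norm{\eta_i}+\sup_i\norm{\zeta_i}$ is $O_{L_{k'}}(\polylog(n)/\sqrt n)$ by Lemma \ref{lem:eta_zeta}. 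This uses that $\norm{\partial_r\ell}_\infty$, $\norm{\partial_u\ell_0}_\infty$ and $\norm{\partial_r\partial_u\ell}_\infty$ are polylogarithmic and that $\lambda,\lambda_0>0$ are fixed.
\end{itemize}

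\textbf{Step 2 (moment bookkeeping).} The product of three factors in $L_{3k}$ lies in $L_k$ by generalized H\"older. Lemma \ref{lem:supsup} is only available for $k'\le\lceil 2\log n\rceil$, so taking $k'=3k$ forces $k\le \lceil\tfrac{2}{3}\log n\rceil$. This is compatible with the range $k\in[\lceil\tfrac12\log n\rceil]$ in the statement, and it is the reason for that restriction. Within this range, the polylogarithmic constants hidden in the $O_{L_{k'}}$ bounds remain polylogarithmic, since those constants grow at most polynomially in $k'=O(\log n)$.

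\textbf{Step 3 (assembling the rate) and main obstacle.} Multiplying the three bounds from Step 1 gives the claimed bound on $\sup_i\norm{\Delta^{(1)}_i}$ with polylogarithmic prefactor. The step I expect to be delicate is matching the power of $n$: the naive product of the three factors in Step 1 only gives $\polylog(n)\cdot n^{-1/2}\cdot n^{-1/2}$. To reach the stated $n^{-2}$ rate, I would try first to sharpen the middle factor. The idea is to use the fact that $\eta_i+\zeta_i$ is itself of the form $\frac1n(\cdot)\,M_\smi x_i$ with $M_\smi$ independent of $x_i$. Then $\check r_{j,i}-r_{j,\smi}$ is controlled by the same quadratic forms $\frac1n x_j^\top M_\smi x_i$ as in Lemma \ref{lemma:quad}. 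I would write $\Delta^{(1)}_i=\frac1n X_\smi^\top\Lambda X_\smi(\eta_i+\zeta_i)$ and exploit this structure directly, instead of bounding the operator norm $\norm{X_\smi^\top\Lambda X_\smi}$ separately. If this extra gain cannot be extracted, the fallback is to record exactly which power of $n$ the three-factor product delivers, and to propagate that rate into the subsequent bound on $\sup_i\norm{\hat w-\tilde w_i}$.
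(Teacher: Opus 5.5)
Your route is the paper's. Its entire proof of this lemma is the remark that one returns to \eqref{eq: decompo_dletar} and multiplies the bounds of Lemma~\ref{lemma:normx}, Lemma~\ref{lem:supsup} (with Proposition~\ref{prop: base_estimator_approxs} for the $u$-differences) and Lemma~\ref{lem:eta_zeta}. The restriction on $k$ is inherited from Lemma~\ref{lem:supsup}, as in your Step~2. You are right that this product is only $\polylog(n)\cdot n^{-1/2}\cdot n^{-1/2}=\polylog(n)/n$. The paper adds no further ingredient, so the exponent $n^{-2}$ in the statement is an overstatement, evidently a misprint, and not something its argument proves. The first sentence of your Step~3, which says the product ``gives the claimed bound'', contradicts your own computation and should be removed.

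Do not pursue the sharpening you sketch, because $\Delta^{(1)}_i$ is generically of order $1/n$. Write $\Delta^{(1)}_i=\frac1n\sum_{j\ne i}w_jx_j$ with $w_j=\Lambda_{jj}\inprod{x_j,\eta_i+\zeta_i}$. To leading order, $w_j$ is a quadratic form in $\delta r_j=\tilde r_{j,i}-r_{j,\smi}$ and $\delta u_j=\tilde u_{j,i}-u_{j,\smi}$, with coefficients $\partial_r^3\ell_j$ and $\partial_r^2\partial_u\ell_j$. Both differences are typically of order $n^{-1/2}$, not just in the supremum. Hence $w_j\asymp 1/n$ and $\sum_j w_j^2\asymp 1/n$, except in degenerate cases where these third derivatives vanish. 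Heuristically, the components of the $z_j$ orthogonal to the $O(1)$ directions on which the $w_j$ depend already contribute a vector of norm about $\frac1n\bigl(d\sum_j w_j^2\bigr)^{1/2}\asymp\frac1n$. The means $\mu_{c_j}$ add a further, generically order-$1/n$, term. Expressing $\check r_{j,i}-r_{j,\smi}$ through the quadratic forms $\frac1n x_j^\top M_\smi x_i$ only reproduces Lemma~\ref{lem:supsup}, i.e.\ the $n^{-1/2}$ size you already used.

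The correct conclusion is your fallback, $\sup_i\norm{\Delta^{(1)}_i}=O_{L_k}(\polylog(n)/n)$, and it costs nothing downstream. Lemma~\ref{lem:Delta2} is itself only $O_{L_k}(\polylog(n)/n)$. Proposition~\ref{prop: reweighted_estimator_approxs} only asserts $\sup_i\norm{\hat w-\tilde w_i}=O_{L_k}(\polylog(n)/n)$, which requires exactly that both $\Delta^{(1)}_i$ and $\Delta^{(2)}_i$ be $O_{L_k}(\polylog(n)/n)$.
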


\paragraph{ Control of $\sup_{i\in [n]}\norm{\Delta_i^{(2)}}$ ---} We now turn to the second correction $\sup_{i\in [n]}\norm{\Delta_i^{(2)}}$. It can similarly be bounded along the same steps as $\sup_{i\in [n]}\norm{\Delta_i^{(1)}}$.

\begin{lemma}\label{lem:Delta2}
    For any $k\in [\lceil \sfrac{1}{2} \log(n) \rceil]$,
    \begin{align}
        \sup_{i\in[n]}\norm{\Delta^{(2)}_i}=O_{L_k}\left(\frac{\polylog(n)}{n}\right).
    \end{align}
\end{lemma}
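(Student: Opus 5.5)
We bound $\Delta_i^{(2)}$ with the same operator-norm argument used for $\Delta_i^{(1)}$ in \eqref{eq: decompo_dletar}. Write $\Delta_i^{(2)}=\frac{1}{n}X_\smi^\top \Lambda' v_i$. Here $\Lambda'\in\R^{(n-1)\times(n-1)}$ is diagonal with entries $\Lambda'_{jj}=\partial_r\partial_u \ell_j(\check{r}_{j,i},\check{u}_{j,i})-\partial_r\partial_u  \ell_j(r_{j,\smi},u_{j,\smi})$, and $v_i\in\R^{n-1}$ has entries $v_{i,j}=\tilde{u}_{j,i}-u_{j,\smi}$. This gives
\begin{align}
\sup_{i\in[n]}\norm{\Delta_i^{(2)}}\le \frac{1}{\sqrt{n}}\sup_{i\in[n]}\frac{\norm{X_\smi}}{\sqrt{n}}\,\sup_{i\in[n]}\sup_{j\ne i}|\Lambda'_{jj}|\,\sup_{i\in[n]}\norm{v_i}.
\end{align}
Lemma \ref{lemma:normx} gives $\sup_i \norm{X_\smi}/\sqrt{n}=O_{L_k}(\polylog(n))$.

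For the diagonal factor, we apply the mean value theorem in both arguments. This yields
\begin{align}
|\Lambda'_{jj}|\le \left(\norm{\partial_r^2\partial_u\ell}_\infty\vee\norm{\partial_r\partial_u^2\ell}_\infty\right)\left(|r_{j,\smi}-\tilde{r}_{j,i}|+|u_{j,\smi}-\tilde{u}_{j,i}|\right).
\end{align}
Both derivative norms are $O(\polylog(n))$ by Assumption \ref{ass:ell}. Lemma \ref{lem:supsup} controls the $r$ term, and Proposition \ref{prop: base_estimator_approxs} (last display) controls the $u$ term. Hence $\sup_i\sup_{j\ne i}|\Lambda'_{jj}|=O_{L_k}(\polylog(n)/\sqrt{n})$.

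The factor $v_i$ is the main obstacle. The crude bound $\norm{v_i}\le\sqrt{n}\sup_j|v_{i,j}|$ only gives $O(\polylog(n))$, and combined with the other factors this yields $O(\polylog(n)/n)$. That is exactly the claimed rate, so the crude bound already suffices. We can still do better. Since $\tilde{u}_{j,i}-u_{j,\smi}=\inprod{x_j,\xi_i}$, we have $\norm{v_i}=\norm{X_\smi\xi_i}\le \norm{X_\smi}\norm{\xi_i}$. Proposition \ref{prop: base_estimator_approxs} gives $\sup_i\norm{\xi_i}=O_{L_k}(\polylog(n)/\sqrt{n})$, so $\sup_i\norm{v_i}=O_{L_k}(\polylog(n))$. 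This is consistent with, but not sharper than, the crude bound.

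Multiplying the factors gives
\begin{align}
\frac{1}{\sqrt{n}}\cdot O(\polylog(n))\cdot O\!\left(\frac{\polylog(n)}{\sqrt{n}}\right)\cdot O(\polylog(n))=O\!\left(\frac{\polylog(n)}{n}\right).
\end{align}
To combine the random factors we use the Hölder/Minkowski properties of $O_{L_k}$ recalled in the Remark. This halves the moment index at each product step, which is why the statement restricts to $k\le\lceil\frac12\log n\rceil$, matching the range in Lemma \ref{lem:supsup}. The suprema over $i$ are already absorbed inside the cited lemmas, so no additional union bound is needed.
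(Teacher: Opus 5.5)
Your proposal is correct and is essentially the paper's own proof. The paper also writes $\Delta_i^{(2)}=\frac{1}{n}X_{\setminus i}^\top\Lambda X_{\setminus i}\xi_i$ and bounds $\|\Lambda\|$ by the mean value theorem together with Lemma~\ref{lem:supsup} and Proposition~\ref{prop: base_estimator_approxs}. It then uses $\frac{1}{n}\|H_{0,\setminus i}^{-1}x_i\|=O_{L_k}(\polylog(n)/\sqrt{n})$, which is exactly your bound $\|v_i\|\le\|X_{\setminus i}\|\,\|\xi_i\|$. Your factor $\sup_i\|X_{\setminus i}\|^2/n=O_{L_k}(\polylog(n))$ is the correct one; the paper's display writes $\polylog(n)\sqrt{n}$ there, which is a typo that would not yield the stated $1/n$ rate.
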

\begin{proof}
Introducing the diagonal matrix $\Lambda\in \R^{(n-1)\times (n-1)}$ with elements $\Lambda_{jj}=\partial_r\partial_u \ell_j(\check{r}_{j,i},\check{u}_{j,i})-\partial_r\partial_u \ell_j(r_{j,\smi},u_{j,\smi})$,
\begin{align}
    \sup_{i\in [n]}\norm{\Delta_i^{(2)}}&\le \sup_{i\in [n]}\frac{1}{n}\norm{X_\smi^\top \Lambda X_\smi} \norm{\partial_u\ell_0}_\infty \norm{H_{0,\smi}^{-1} x_i}\\
    & \le\sup_{i\in [n]} \frac{\norm{X_\smi}^2}{n} \sup_{i\in [n]}\sup_{j\ne i} \left|\partial_r\partial_u \ell_j(\check{r}_{j,i},\check{u}_{j,i})-\partial_r\partial_u \ell_j(r_{j,\smi},u_{j,\smi})\right|\sup_{i\in [n]}\norm{\partial_u\ell_0}_\infty \frac{1}{n}\norm{H_{0,\smi}^{-1} x_i}\\
    & \le \left(\norm{\partial_r^2\partial_u\ell}_\infty \vee \norm{\partial_r\partial_u^2\ell}_\infty \right) \sup_{i\in [n]}\frac{\norm{X_\smi}^2}{n} \left(\sup_{i\in [n]}\sup_{j\ne i} |r_{j,\smi}-\tilde{r}_{j,i}|+\sup_{i\in [n]}\sup_{j\ne i}|u_{j,\smi}-\tilde{u}_{j,i}|\right)\\
    &\qquad \norm{\partial_u\ell_0}_\infty\sup_{i\in [n]} \frac{1}{n}\norm{H_{0,\smi}^{-1} x_i}
\end{align}
From Lemma \ref{lemma:normx},
\begin{align}
    \sup_{i\in [n]} \frac{1}{n}\norm{H_{0,\smi}^{-1} x_i}=O_{L_k}\left(\frac{\polylog(n)}{\sqrt{n}}\right), &&  \sup_{i\in [n]}\frac{\norm{X_\smi}^2}{n}=O_{L_k}\left(\polylog(n){\sqrt{n}}\right)
\end{align}
From the proof of Lemma \ref{lem:Delta1}, for any $k\in [\lceil2\log(n) \rceil]$
\begin{align}
    \left(\sup_{i\in [n]}\sup_{j\ne i} |r_{j,\smi}-\tilde{r}_{j,i}|+\sup_{i\in [n]}\sup_{j\ne i}|u_{j,\smi}-\tilde{u}_{j,i}|\right)=O_{L_k}\left(\frac{\polylog(n)}{\sqrt{n}}\right).
\end{align}
The result follows from combining these bounds.
\end{proof}

One is finally in a position to write the equivalent of Proposition \ref{prop: base_estimator_approxs} for the residuals of the second-stage problem.

\begin{proposition}[Approximations of the second-stage estimator] \label{prop: reweighted_estimator_approxs}
    For any $k\in\mathbb{N}$,
    \begin{align}
        \underset{i\in [n]}{\sup} ~\norm{\hat{w}-\tilde{w}_{i}}=O_{L_k}\left(\frac{\polylog(n)}{n}\right).
    \end{align}
Furthermore, at the level of the residuals,
\begin{align}
  \underset{i\in [n]}{\sup}~\underset{j\ne i}{\sup}  ~|r_{j,\smi}-r_j|=O_{L_k}\left(\frac{\polylog(n)}{\sqrt{n}}\right)
\end{align}
and
\begin{align}
     \underset{i\in [n]}{\sup} |r_i-\tilde{r}_{i,i}|=O_{L_k}\left(\frac{\polylog(n)}{\sqrt{n}}\right).
\end{align}
\end{proposition}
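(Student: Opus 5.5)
The first bound is obtained by assembling the lemmas just established. The lemma bounding $\sup_i\norm{\hat{w}-\tilde{w}_i}$ reduces it to $\frac{1}{\lambda}\sup_i\norm{\nabla\tilde{\mathcal{R}}_i(\tilde{w}_i)}$ plus an $O_{L_k}(\polylog(n)/n)$ term. Lemma \ref{lem:decompogradR} writes the gradient as $\Delta_i^{(1)}+\Delta_i^{(2)}$. Lemmas \ref{lem:Delta1} and \ref{lem:Delta2} bound these by $O_{L_k}(\polylog(n)/n^2)$ and $O_{L_k}(\polylog(n)/n)$ respectively. Since $\lambda>0$ is fixed and the loss derivatives are $O(\polylog(n))$, this gives $\sup_i\norm{\hat{w}-\tilde{w}_i}=O_{L_k}(\polylog(n)/n)$ for $k\le\lceil\frac12\log n\rceil$.

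Extending to all fixed $k\in\mathbb{N}$ is routine: for $n$ large enough any fixed $k$ lies in $[\lceil\frac12\log n\rceil]$, and for finitely many small $n$ one uses the crude deterministic bound $\norm{\hat{w}}\le\sqrt{2\hat{\mathcal{R}}(0)/\lambda}$. This bound is polynomially controlled in $L_k$ by the boundedness of the reduced losses in Assumptions \ref{ass:base_loss} and \ref{ass:ell}, and the same holds for $\tilde{w}_i$. One subtlety must be addressed: the lemma on $\norm{\hat w-\tilde w_i}$ used Taylor points between $\tilde r_{j,i}$ and $r_{j,\smi}$ in its statement, but the expansion actually requires points between $\tilde r_{j,i}$ and $r_j$. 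I would redo that step with the mean-value points on the segment to $(r_j,u_j)$. Only $\check H\succeq\lambda I_d$ is used there, so nothing changes.

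For the residuals, I write $|r_{j,\smi}-r_j|\le|\inprod{x_j,\hat w_\smi-\tilde w_i}|+|\inprod{x_j,\tilde w_i-\hat w}|$. The second term is at most $\sup_j\norm{x_j}\,\sup_i\norm{\tilde w_i-\hat w}=O_{L_k}(\sqrt n\polylog(n))\cdot O_{L_k}(\polylog(n)/n)$, using Lemma \ref{lemma:normx} and the first claim. The first term is $|r_{j,\smi}-\tilde r_{j,i}|$, which Lemma \ref{lem:supsup} bounds by $O_{L_k}(\polylog(n)/\sqrt n)$. Taking the supremum over $i$ and $j\neq i$ inside before bounding is legitimate, since all estimates are already uniform.

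For $|r_i-\tilde r_{i,i}|$, I split as $|\inprod{x_i,\hat w-\tilde w_i}|+|\inprod{x_i,\tilde w_i}-\tilde r_{i,i}|$. The first term is again $O_{L_k}(\polylog(n)/\sqrt n)$ by the first claim. The second is the main obstacle, as it requires verifying the prox identity. Expanding gives
\[
\inprod{x_i,\tilde w_i}=r_{i,\smi}-\gamma_i\partial_r\ell_i(\tilde r_{i,i},\tilde u_{i,i})+\partial_u\ell_{0,i}(\tilde u_{i,i})\upsilon_i .
\]
The prox definition of $\tilde r_{i,i}$ gives $\tilde r_{i,i}+\gamma_i\partial_r\ell_i(\tilde r_{i,i},\tilde u_{i,i})=r_{i,\smi}+\partial_u\ell_{0,i}(\tilde u_{i,i})\upsilon_i$. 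These two expressions coincide exactly, so the second term vanishes identically. This is precisely what the surrogate $\tilde w_i$ was designed to achieve, and the claim follows. If an additional term appears in the definitions, for instance from $\tilde u_{i,i}$ versus $u_{i,\smi}$, I would absorb it using Proposition \ref{prop: base_estimator_approxs} together with the $O(\polylog(n))$ Lipschitz bounds on $\partial_r\ell$ in $u$.
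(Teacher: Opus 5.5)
Your proposal is correct and follows the paper's proof essentially line by line: the first bound comes from chaining the gradient-to-distance lemma with Lemma \ref{lem:decompogradR}, Lemma \ref{lem:Delta1} and Lemma \ref{lem:Delta2}, and the two residual bounds come from Cauchy--Schwarz with Lemma \ref{lemma:normx} and Lemma \ref{lem:supsup}. Your explicit check that $\inprod{x_i,\tilde w_i}=\tilde r_{i,i}$ via the prox identity is exactly what the paper's one-line bound $|r_i-\tilde r_{i,i}|\le\norm{x_i}\norm{\hat w-\tilde w_i}$ uses implicitly; your relocation of the mean-value points to the segment ending at $(r_j,u_j)$ rather than $(r_{j,\smi},u_{j,\smi})$ correctly fixes a harmless typo.
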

\begin{proof}
    The first point follows from combining Lemma \ref{lemma:l2_norm_surrogate_full}, Lemma \ref{lem:decompogradR}, Lemma \ref{lem:Delta1} and Lemma \ref{lem:Delta2}. The second point follows from Lemma \ref{lemma:normx} and the Cauchy-Schwartz inequality as
    \begin{align}
        \underset{i\in [n]}{\sup}~\underset{j\ne i}{\sup}  ~|r_{j,\smi}-r_j|& \le  ~\underset{j\in[n]}{\sup}  ~ \norm{x_j} \underset{i\in [n]}{\sup} \norm{\hat{w}-\tilde{w}_{i}}
        +\underset{i\in [n]}{\sup}\sup_{j\ne i} |\inprod{x_j,\eta_i+\zeta_i}|
        \\
        &=O_{L_k}\left(\frac{\polylog(n)}{\sqrt{n}}\right).
    \end{align}
We remind that the term $\underset{i\in [n]}{\sup}\sup_{j\ne i} |\inprod{x_j,\eta_i+\zeta_i}|$ was bounded in Lemma \ref{lem:supsup}. The last point follows from Lemma \ref{lemma:normx} and the Cauchy-Schwartz inequality:
\begin{align}
     \underset{i\in [n]}{\sup} |r_i-\tilde{r}_{i,i}| \le  ~\underset{i\in[n]}{\sup}  ~ \norm{x_i} \underset{i\in [n]}{\sup} \norm{\hat{w}-\tilde{w}_{i}}=O_{L_k}\left(\frac{\polylog(n)}{\sqrt{n}}\right)
\end{align}
\end{proof}

\begin{remark}\label{remark:distances}
 One also has 
      \begin{align}
        \underset{i\in [n]}{\sup} ~\norm{\tilde{w}_i-\hat{w}_{\smi}}=O_{L_k}\left(\frac{\polylog(n)}{\sqrt{n}}\right).
    \end{align}
    and
     \begin{align}
        \underset{i\in [n]}{\sup} ~\norm{\hat{w}-\hat{w}_{\smi}}=O_{L_k}\left(\frac{\polylog(n)}{\sqrt{n}}\right).
    \end{align}
\end{remark}
\begin{proof}
    The first point follows from $\norm{\tilde{w}_i-\hat{w}_{\smi}}\le \norm{\eta_i}+\norm{\zeta_i}$ and Lemma \ref{lem:eta_zeta}. The second point follows from the first and Proposition \ref{prop: reweighted_estimator_approxs} with the triangle inequality.
\end{proof}

As for the base case, the following Lemma follows from Proposition \ref{prop: reweighted_estimator_approxs} as a corollary.
\begin{lemma}\label{lemma:l2_norm_surrogate_full_reweighted}
    We have 
    \begin{align}
       \sup_{i\in[n]} \sum\limits_{j\ne i} (r_j-\tilde{r}_{j,i})^2=O_{L_k}\left(\frac{\polylog(n)}{n }\right).
    \end{align}
\end{lemma}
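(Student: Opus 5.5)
The plan is to reproduce, essentially verbatim, the argument used for Lemma~\ref{lemma:l2_norm_surrogate_full}, replacing the first-stage quantities by their second-stage counterparts. First I would rewrite the sum as the squared norm of a matrix-vector product. For $j\ne i$ we have $r_j=\inprod{\hat{w},x_j}$ and $\tilde{r}_{j,i}=\inprod{\tilde{w}_i,x_j}$, so
\begin{align}
    \sum\limits_{j\ne i}(r_j-\tilde{r}_{j,i})^2=\norm{X_\smi(\hat{w}-\tilde{w}_i)}^2 .
\end{align}
It therefore suffices to bound $\sup_{i\in[n]}\norm{X_\smi(\hat{w}-\tilde{w}_i)}$ by $O_{L_k}(\polylog(n)/\sqrt{n})$ and then square the bound.

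Next I would use the operator norm to split this into two factors:
\begin{align}
    \sup_{i\in[n]}\norm{X_\smi(\hat{w}-\tilde{w}_i)}\le \sup_{i\in[n]}\frac{\norm{X_\smi}}{\sqrt{n}}\cdot\sqrt{n}\,\sup_{i\in[n]}\norm{\hat{w}-\tilde{w}_i}.
\end{align}
\begin{itemize}
    \item The first factor is $O_{L_k}(\polylog(n))$ by Lemma~\ref{lemma:normx}.
    \item The second factor is $\sqrt{n}\cdot O_{L_k}(\polylog(n)/n)=O_{L_k}(\polylog(n)/\sqrt{n})$ by the first point of Proposition~\ref{prop: reweighted_estimator_approxs}.
\end{itemize}

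To combine the two factors I would use the product rule for $O_{L_k}$ from the remark on its properties. A product of an $O_{L_{2k}}$ term and an $O_{L_{2k}}$ term is $O_{L_k}$ of the product of the rates. Since both inputs hold for every $k$, this costs nothing. Squaring the result then gives $O_{L_k}(\polylog(n)/n)$, again after adjusting $k$.

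The only point needing care is bookkeeping of the moment index. Some of the intermediate lemmas (Lemmas~\ref{lem:supsup}, \ref{lem:Delta1} and \ref{lem:Delta2}) are stated only for $k$ up to order $\log n$. Proposition~\ref{prop: reweighted_estimator_approxs} is stated for all $k$, and I would take it as given. If one wanted to be careful, the same doubling of $k$ simply stays inside the admissible range for $n$ large. No genuine obstacle is expected: the statement is a direct corollary of the proposition, exactly as in the first-stage case.
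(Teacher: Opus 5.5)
Your proposal is correct and follows the paper's proof exactly: the paper says the argument is identical to that of Lemma~\ref{lemma:l2_norm_surrogate_full}. That means rewriting the sum as $\norm{X_\smi(\hat{w}-\tilde{w}_i)}^2$ and bounding it via $\sup_i \norm{X_\smi}/\sqrt{n}$ from Lemma~\ref{lemma:normx} together with the first point of Proposition~\ref{prop: reweighted_estimator_approxs}. There is also no circularity, since that proposition relies only on the first-stage Lemma~\ref{lemma:l2_norm_surrogate_full}, not on this one.
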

\begin{proof}
    The proof is identical to that of Lemma \ref{lemma:l2_norm_surrogate_full}.
\end{proof}

%% file: Appendices_ArXiv/Concentration.tex
We now show the concentration of the summary statistics $\mathfrak{m}_c=\inprod{\hat{w},\mu_c},\mathfrak{h}=\inprod{\hat{w},\beta}, \mathfrak{t}=\inprod{\hat{w}, \hat{w}_0},\mathfrak{q}=\norm{\hat{w}}^2$ and $\gamma_i=\sfrac{1}{n}x_i^\top H_\smi^{-1}x_i, \mathfrak{V}= \sfrac{1}{n}\tr[H_\smi^{-1}]$, $\upsilon_i=\sfrac{1}{n}H_\smi^{-1}G_\smi H_{0,\smi}^{-1}x_i$, $\mathfrak{X}=\sfrac{1}{n}\tr[H^{-1}G H_{0}^{-1}]$. We reserve the gothic letters to denote random variables, and will later adopt normal font for their respective deterministic equivalents in $L_2$.

\begin{lemma}[$\mathfrak{m}_c$ concentrates]\label{lem:m_concentrates}
    For any $c\in\mathcal{V}$, $\mathfrak{m}_c=\inprod{\hat{w},\mu_c}$ concentrates in $L_2$, namely
    \begin{align}
        \Var{\mathfrak{m}_c}=O\left(\frac{\polylog(n)}{n}\right).
    \end{align}
\end{lemma}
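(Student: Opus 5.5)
We bound the variance by the Efron--Stein inequality. The data consist of the independent triples $(x_i,c_i,\epsilon_i)_{i\in[n]}$. Let $\hat{w}^{(i)}$ be the second-stage estimator obtained when the $i$-th triple is replaced by an independent copy, and let $\mathfrak{m}_c^{(i)}=\inprod{\hat{w}^{(i)},\mu_c}$. Then
\begin{align}
\Var{\mathfrak{m}_c}\le \frac12\sum_{i\in[n]}\Ea{(\mathfrak{m}_c-\mathfrak{m}_c^{(i)})^2}.
\end{align}
A crude bound of the form $\norm{\hat w-\hat w^{(i)}}=O(n^{-1/2})$ is not enough, since it only yields $\Var{\mathfrak{m}_c}=O(1)$. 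We therefore need $|\inprod{\hat{w}-\hat{w}^{(i)},\mu_c}|=\tilde O_{L_2}(n^{-1})$, uniformly in $i$. This requires exploiting the fact that the perturbation is essentially in the direction $H_\smi^{-1}x_i$, whose projection onto a fixed bounded vector is small.

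The first step is the key decomposition. Since $\hat w_\smi$ is common to both datasets, we write
\begin{align}
\hat{w}-\hat{w}^{(i)}=(\hat w-\tilde w_i)-(\hat w^{(i)}-\tilde w_i^{(i)})+(\eta_i+\zeta_i)-(\eta_i^{(i)}+\zeta_i^{(i)}).
\end{align}
By Proposition~\ref{prop: reweighted_estimator_approxs}, applied to both the original and the resampled dataset (which have the same law), the first two terms have norm $O_{L_k}(\polylog(n)/n)$ uniformly in $i$. Since $\norm{\mu_c}\le M$, their contribution to $\mathfrak{m}_c-\mathfrak{m}_c^{(i)}$ is $O_{L_k}(\polylog(n)/n)$.

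The second step handles the remaining terms. Here
\begin{align}
\inprod{\mu_c,\eta_i}=-\tfrac1n\partial_r\ell_i(\tilde r_{i,i},\tilde u_{i,i})\,\inprod{H_\smi^{-1}\mu_c,x_i},
\end{align}
and
\begin{align}
\inprod{\mu_c,\zeta_i}=\tfrac1n\partial_u\ell_{0,i}(\tilde u_{i,i})\,\inprod{H_{0,\smi}^{-1}G_\smi H_\smi^{-1}\mu_c,x_i}.
\end{align}
The matrices $H_\smi$, $G_\smi$ and $H_{0,\smi}$ are built from $\hat w_{0,\smi}$ and $\hat w_\smi$, so they are independent of $(x_i,c_i,\epsilon_i)$. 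The vectors $a=H_\smi^{-1}\mu_c$ and $b=H_{0,\smi}^{-1}G_\smi H_\smi^{-1}\mu_c$ therefore have norm $O_{L_k}(\polylog(n))$, by the operator norm bounds $\norm{H_\smi^{-1}}\le 1/\lambda$, $\norm{H_{0,\smi}^{-1}}\le1/\lambda_0$ and the bound on $\norm{G_\smi}$ from the proof of Lemma~\ref{lem:eta_zeta}. Conditionally on everything but sample $i$, $\inprod{a,x_i}=\inprod{a,\mu_{c_i}}+\inprod{a,z_i}$ with $z_i$ standard Gaussian, so $\inprod{a,x_i}=O_{L_k}(\polylog(n))$; the same holds for $b$. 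Together with the $O(\polylog(n))$ bounds on $\partial_r\ell$ and $\partial_u\ell_0$, this gives $|\inprod{\mu_c,\eta_i+\zeta_i}|=O_{L_k}(\polylog(n)/n)$. The same bound holds for the resampled copies.

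Combining the two steps, each Efron--Stein summand is $O(\polylog(n)/n^2)$, and summing over $i$ gives $\Var{\mathfrak{m}_c}=O(\polylog(n)/n)$.

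The main obstacle is the careful use of Proposition~\ref{prop: reweighted_estimator_approxs} on the resampled dataset. Its proof treats only leave-one-out perturbations, so one must check that the resampled pair $(\hat w^{(i)},\tilde w^{(i)}_i)$ shares the same leave-$i$ objects $\hat w_\smi$, $\hat w_{0,\smi}$, $H_\smi$, $G_\smi$. It does, because removing sample $i$ from either dataset yields the same reduced dataset. Uniformity in $i$ of the $L_k$ bounds is also needed, and this is provided by the suprema in the proposition.
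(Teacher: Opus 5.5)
Your proof is correct and follows the paper's argument. Both use Efron--Stein, split through the surrogate $\tilde w_i=\hat w_\smi+\eta_i+\zeta_i$, invoke Proposition~\ref{prop: reweighted_estimator_approxs} for $\hat w-\tilde w_i$, and get the $O(\polylog(n)/n)$ bound on $\inprod{\mu_c,\eta_i+\zeta_i}$ from the independence of $H_\smi$, $G_\smi$, $H_{0,\smi}$ from sample $i$. The only difference is that the paper uses the leave-one-out form $\Var{\mathfrak{m}_c}\le\sum_{i}\Ea{\inprod{\hat w-\hat w_\smi,\mu_c}^2}$, which removes the need for the resampled copy $\hat w^{(i)}$ and hence for the symmetrization step you flag as the main obstacle.
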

\begin{proof}
    From the Efron-Stein lemma \citep{efron1981jackknife},
    \begin{align}
   \Var{\mathfrak{m}_c}&\le \sum\limits_{i\in[n]} \Ea{\left(
   \inprod{\hat{w}-\hat{w}_\smi,\mu_c}
   \right)^2} \\
   & \le 2\sum\limits_{i\in[n]} \Ea{\left(
   \inprod{\hat{w}-\tilde{w}_i,\mu_c}
   \right)^2}+ 2\sum\limits_{i\in[n]} \Ea{\left(
   \inprod{\tilde{w}_i-\hat{w}_\smi,\mu_c}
   \right)^2}
    \end{align}
The first term can be controlled using the Cauchy-Schwartz inequality and Proposition \ref{prop: reweighted_estimator_approxs}.
\begin{align}
    \sum\limits_{i\in[n]} \Ea{\left(
   \inprod{\hat{w}-\tilde{w}_i,\mu_c}
   \right)^2}\le  \sum\limits_{i\in[n]} \Ea{M \norm{\hat{w}-\tilde{w}_i}^2}=O\left(\frac{\polylog(n)}{n}\right).
\end{align}
Now recalling that
\begin{align}
  \inprod{\tilde{w}_i-\hat{w}_\smi,\mu_c}^2\le 2\inprod{\eta_i,\mu_c}^2+2\inprod{\zeta_i,\mu_c}^2 ,
\end{align}
we focus on bounding $\inprod{\eta_i,\mu_c}, \inprod{\zeta_i,\mu_c}$. Since
\begin{align}
    \inprod{\eta_i,\mu_c}^2\le \norm{\partial_r\ell}_\infty^2 \frac{1}{n^2} \mu_c^\top H_\smi^{-1} x_ix_i^\top H_\smi^{-1} \mu_c ,
\end{align}
one has 
\begin{align}
    \Ea{ \inprod{\eta_i,\mu_c}^2}\le \norm{\partial_r\ell}_\infty^2\frac{1}{n^2} \Ea{\norm{H_\smi^{-1} \mu_c}^2}\le \norm{\partial_r\ell}_\infty^2\frac{M}{\lambda ^2 n^2}.
\end{align}
By the same token, 
\begin{align}
    \Ea{ \inprod{\zeta_i,\mu_c}^2}\le \norm{\partial_u\ell_0}_\infty^2\frac{1}{n^2} \Ea{\norm{H_{0\smi}^{-1}G_\smi H_\smi^{-1} \mu_c}^2}\le \frac{M \norm{\partial_u\ell_0}_\infty^2 }{\lambda^2\lambda_0^2 n_2} \Ea{\norm{G_\smi}^2}=O\left(\frac{\polylog(n)}{n^2}\right),
\end{align}
since we showed in Lemma \ref{lem:eta_zeta} that $\norm{G_\smi}= O_{L_{k}}(\polylog(n))$. Thus,
\begin{align}
      \sum\limits_{i\in[n]} \Ea{ \inprod{\tilde{w}_i-\hat{w}_\smi,\mu_c}^2}=O\left(\frac{\polylog(n)}{n}\right),
\end{align}
completing the proof.
\end{proof}
The following concentration result for $\mathfrak{h}$ can be established using an identical proof, substituting $\mu_c$ for $\beta$.
\begin{lemma}[$\mathfrak{h}$ concentrates]\label{lem:h_concentrates}
   $\mathfrak{h}=\inprod{\hat{w},\beta}$ concentrates in $L_2$, namely
    \begin{align}
        \Var{\mathfrak{h}}=O\left(\frac{\polylog(n)}{n}\right).
    \end{align}
\end{lemma}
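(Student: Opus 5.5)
We follow the proof of Lemma~\ref{lem:m_concentrates} essentially verbatim, replacing $\mu_c$ by $\beta$ throughout. The only property of $\mu_c$ used there is the deterministic norm bound $\norm{\mu_c}\le M$. By Assumption~\ref{ass:data} (C.4) we likewise have $\norm{\beta}\le B$ with $B$ an absolute constant, and $\beta$ is deterministic, so it is independent of the data.

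First, we apply the Efron--Stein inequality to $\mathfrak{h}=\inprod{\hat{w},\beta}$. The random inputs are the independent triples $(x_i,c_i,\epsilon_i)$, and $\hat{w}_\smi$ is a function of the other $n-1$ triples only. This gives
$$\Var{\mathfrak{h}}\le \sum_{i\in[n]}\Ea{\inprod{\hat{w}-\hat{w}_\smi,\beta}^2}.$$
We then split $\hat{w}-\hat{w}_\smi=(\hat{w}-\tilde{w}_i)+\eta_i+\zeta_i$, so that
$$\inprod{\hat{w}-\hat{w}_\smi,\beta}^2\le 3\inprod{\hat{w}-\tilde{w}_i,\beta}^2+3\inprod{\eta_i,\beta}^2+3\inprod{\zeta_i,\beta}^2.$$

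Second, we bound the three terms separately.
\begin{itemize}
\item For the first term, Cauchy--Schwarz and Proposition~\ref{prop: reweighted_estimator_approxs} in $L_2$ give $\Ea{\inprod{\hat{w}-\tilde{w}_i,\beta}^2}\le B^2\,\Ea{\sup_i\norm{\hat{w}-\tilde{w}_i}^2}=O(\polylog(n)/n^2)$.
\item For $\eta_i$, we have $\inprod{\eta_i,\beta}^2\le \norm{\partial_r\ell}_\infty^2\, n^{-2}\,\bigl(\beta^\top H_\smi^{-1}x_i\bigr)^2$. Writing $x_i=\mu_{c_i}+z_i$ and conditioning on the leave-$i$ data, $z_i$ is independent of $H_\smi$. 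Hence $\Ea{(\beta^\top H_\smi^{-1}x_i)^2}\le 2\norm{H_\smi^{-1}\beta}^2(1+M^2)\le 2B^2(1+M^2)/\lambda^2$, and therefore $\Ea{\inprod{\eta_i,\beta}^2}=O(\polylog(n)/n^2)$.
\item For $\zeta_i$, the same conditioning gives $\Ea{\inprod{\zeta_i,\beta}^2}\lesssim \norm{\partial_u\ell_0}_\infty^2\, n^{-2}\,\Ea{\norm{H_{0,\smi}^{-1}G_\smi H_\smi^{-1}\beta}^2}$. We bound this by $(\lambda\lambda_0)^{-2}B^2\,\Ea{\norm{G_\smi}^2}\,n^{-2}=O(\polylog(n)/n^2)$, using $\norm{G_\smi}=O_{L_k}(\polylog(n))$ from the proof of Lemma~\ref{lem:eta_zeta}. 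This step uses that $G_\smi$, $H_\smi$ and $H_{0,\smi}$ depend only on the leave-$i$ data.
\end{itemize}

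Summing the three bounds over $i\in[n]$ yields $\Var{\mathfrak{h}}=O(\polylog(n)/n)$.

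There is no genuine obstacle. The only point needing care is the conditioning in the $\eta_i$ and $\zeta_i$ terms: the mean component $\mu_{c_i}$ of $x_i$ contributes only an $O(1)$ factor, which Lemma~\ref{lem:m_concentrates} absorbs in the same way.
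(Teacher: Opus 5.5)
Your proposal is correct and matches the paper's proof: the paper establishes this lemma by rerunning the proof of Lemma~\ref{lem:m_concentrates} with $\mu_c$ replaced by $\beta$. That proof is Efron--Stein, then the split $\hat{w}-\hat{w}_\smi=(\hat{w}-\tilde{w}_i)+\eta_i+\zeta_i$, then the same three bounds you give. Your explicit handling of the nonzero mean $\mu_{c_i}$ (the factor $2(1+M^2)$ when averaging over $x_i$ given the leave-$i$ data) is, if anything, more careful than the paper's write-up, which bounds $\Ea{(v^\top x_i)^2}$ by $\norm{v}^2$ directly.
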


Establishing a similar result for $\mathfrak{t}$  requires a more careful analysis, since $\hat{w}_0$ is a stochastic vector that depends on the samples.

\begin{lemma}[$\mathfrak{t}$ concentrates]\label{lem:t_concentrates}
   $\mathfrak{t}=\inprod{\hat{w},\hat{w}_0}$ concentrates in $L_2$, namely
    \begin{align}
        \Var{\mathfrak{t}}=O\left(\frac{\polylog(n)}{n}\right).
    \end{align}
\end{lemma}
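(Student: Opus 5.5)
We follow the Efron--Stein route of Lemma~\ref{lem:m_concentrates}, with the added difficulty that resampling sample $i$ now moves both factors of $\mathfrak{t}=\inprod{\hat{w},\hat{w}_0}$. The Efron--Stein lemma gives
\begin{align}
\Var{\mathfrak{t}}\le \sum_{i\in[n]}\Ea{\left(\inprod{\hat{w},\hat{w}_0}-\inprod{\hat{w}_\smi,\hat{w}_{0,\smi}}\right)^2}.
\end{align}
Here $\hat{w}_\smi,\hat{w}_{0,\smi}$ are both independent of $(x_i,c_i,\epsilon_i)$. We first telescope:
\begin{align}
\inprod{\hat{w},\hat{w}_0}-\inprod{\hat{w}_\smi,\hat{w}_{0,\smi}}=\inprod{\hat{w}-\hat{w}_\smi,\hat{w}_{0,\smi}}+\inprod{\hat{w}_\smi,\hat{w}_0-\hat{w}_{0,\smi}}+\inprod{\hat{w}-\hat{w}_\smi,\hat{w}_0-\hat{w}_{0,\smi}}.
\end{align}
The cross term is at most $\norm{\hat{w}-\hat{w}_\smi}\norm{\hat{w}_0-\hat{w}_{0,\smi}}$. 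By Remark~\ref{remark:distances} and Proposition~\ref{prop: base_estimator_approxs} this is $O_{L_k}(\polylog(n)/n)$, so its squared contribution summed over $i$ is $O(\polylog(n)/n)$.

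For the first term, we replace $\hat{w}$ by the surrogate $\tilde{w}_i$ at cost $\norm{\hat{w}-\tilde{w}_i}\norm{\hat{w}_{0,\smi}}$. This is $O_{L_k}(\polylog(n)/n)$ by Proposition~\ref{prop: reweighted_estimator_approxs}, together with a deterministic bound $\norm{\hat{w}_{0,\smi}}^2\le 2\sup\tilde{\ell}_0/\lambda_0=O(\polylog(n))$, which follows from comparing the risk at $\hat{w}_{0,\smi}$ with the risk at $0$. It remains to control $\inprod{\eta_i+\zeta_i,\hat{w}_{0,\smi}}$. Writing out $\eta_i$, we have
\begin{align}
\inprod{\eta_i,\hat{w}_{0,\smi}}^2\le \frac{\norm{\partial_r\ell}_\infty^2}{n^2}\left(x_i^\top H_\smi^{-1}\hat{w}_{0,\smi}\right)^2.
\end{align}
The key point is that $H_\smi$ and $\hat{w}_{0,\smi}$ are independent of $x_i=\mu_{c_i}+z_i$. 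Conditionally on the rest, the quantity $x_i^\top v$ with $v=H_\smi^{-1}\hat{w}_{0,\smi}$ therefore has second moment at most $2(\norm{v}^2+\max_c\inprod{\mu_c,v}^2)\le 2(1+M^2)\norm{v}^2=O(\polylog(n))$, since $\norm{v}\le\norm{\hat{w}_{0,\smi}}/\lambda$. This gives $\Ea{\inprod{\eta_i,\hat{w}_{0,\smi}}^2}=O(\polylog(n)/n^2)$. Two ingredients make the same argument work for $\zeta_i$:
\begin{itemize}
\item $\norm{H_{0,\smi}^{-1}G_\smi H_\smi^{-1}\hat{w}_{0,\smi}}=O_{L_k}(\polylog(n))$, which follows from Lemma~\ref{lem:eta_zeta} and Hölder's inequality;
\item the conditional Gaussianity of $z_i$ given the leave-$i$-out quantities.
\end{itemize}
One subtlety is that the prefactors $\partial_r\ell_i(\tilde{r}_{i,i},\tilde{u}_{i,i})$ and $\partial_u\ell_{0,i}(\tilde{u}_{i,i})$ depend on $x_i$. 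We handle this simply by bounding them by their sup norms before taking the conditional expectation, exactly as in Lemma~\ref{lem:m_concentrates}.

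For the second term, $\inprod{\hat{w}_\smi,\hat{w}_0-\hat{w}_{0,\smi}}$, we use the symmetric decomposition $\hat{w}_0-\hat{w}_{0,\smi}=(\hat{w}_0-\tilde{w}_{0,i})+\xi_i$. The first piece contributes $O_{L_k}(\polylog(n)/n)$ by Proposition~\ref{prop: base_estimator_approxs} and the bound $\norm{\hat{w}_\smi}^2\le 2\sup\tilde{\ell}/\lambda$, obtained again by comparing with the risk at $0$. The second piece is
\begin{align}
\inprod{\xi_i,\hat{w}_\smi}=-\frac{1}{n}\partial_u\ell_{0,i}(\tilde{u}_{i,i})\,x_i^\top H_{0,\smi}^{-1}\hat{w}_\smi.
\end{align}
Here $\hat{w}_\smi$ depends on the data only through $X_\smi$ and $u_{j,\smi}$, $j\ne i$, and so is independent of $x_i$. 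This is precisely why the leave-one-out second-stage problem was defined with $u_{j,\smi}$ in place of $u_j$. The same conditional second-moment bound then gives $O(\polylog(n)/n^2)$. Summing the squared bounds over $i\in[n]$ yields $\Var{\mathfrak{t}}=O(\polylog(n)/n)$.

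The main obstacle is the step that relies on the independence of $\hat{w}_\smi$ and $\hat{w}_{0,\smi}$ from $x_i$, and on turning $O_{L_k}$ products into honest second-moment bounds. We will do the latter via Hölder's inequality with the uniform-in-$k$ statements of Propositions~\ref{prop: base_estimator_approxs} and~\ref{prop: reweighted_estimator_approxs}, taking care that the allowed range of $k$ in Lemmas~\ref{lem:Delta1}--\ref{lem:Delta2} covers the moments we need (here $k=2$ or $4$ suffices).
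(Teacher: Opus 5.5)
Your proposal is correct and is essentially the paper's proof: Efron--Stein followed by the same decomposition. Your three-term telescoping, refined by $\hat{w}-\hat{w}_\smi=(\hat{w}-\tilde{w}_i)+\eta_i+\zeta_i$ and $\hat{w}_0-\hat{w}_{0,\smi}=(\hat{w}_0-\tilde{w}_{0,i})+\xi_i$, is exactly the paper's five-term split; like the paper, you use Cauchy--Schwarz for the approximation errors and the cross term, and the independence of $H_\smi,H_{0,\smi},G_\smi,\hat{w}_\smi,\hat{w}_{0,\smi}$ from $x_i$ to get $O(\polylog(n)/n^2)$ for the $\eta_i,\zeta_i,\xi_i$ contributions, with your explicit handling of the mean $\mu_{c_i}$ and of the factor $2$ in the norm bounds being slightly more careful than the written argument.
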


\begin{proof}
    From the Efron-Stein lemma,
    \begin{align}
   \Var{\mathfrak{t}}&\le \sum\limits_{i\in[n]} \Ea{\left(
   \inprod{\hat{w},\hat{w}_0}-\inprod{\hat{w}_\smi,\hat{w}_{0,\smi}}
   \right)^2} \\
   & \le 5 \sum\limits_{i\in[n]} \Ea{\scriptstyle
\inprod{\hat{w}-\tilde{w}_i,\hat{w}_{0,\smi}}^2+\inprod{\hat{w}_\smi-\tilde{w}_i,\hat{w}_{0,\smi}}^2 +\inprod{\hat{w}_\smi,\hat{w}_0-\tilde{w}_{0,i}}^2+\inprod{\hat{w}_\smi,\hat{w}_{0,\smi}-\tilde{w}_{0,i}}^2  +\inprod{\hat{w}_{0,\smi}-\hat{w}_{0},\hat{w}_{\smi}-\hat{w}}^2
   }
    \end{align}
From the Cauchy-Schwartz inequality, Proposition \ref{prop: reweighted_estimator_approxs} and Lemma \ref{lem: normw0},
\begin{align}
    \sum\limits_{i\in[n]} \Ea{\left(
   \inprod{\hat{w}-\tilde{w}_i,\hat{w}_{0,\smi}}
   \right)^2}\le  \sum\limits_{i\in[n]} \Ea{\norm{\hat{w}_{0,\smi}}^2\norm{\hat{w}-\tilde{w}_i}^2}=O\left(\frac{\polylog(n)}{n}\right).
\end{align}
From the Cauchy-Schwartz inequality, Proposition \ref{prop: base_estimator_approxs} and Lemma \ref{lem: normw},
\begin{align}
    \sum\limits_{i\in[n]} \Ea{\left(
   \inprod{\hat{w}_0-\tilde{w}_{0,i},\hat{w}_{\smi}}
   \right)^2}\le  \sum\limits_{i\in[n]} \Ea{\norm{\hat{w}_{,\smi}}^2\norm{\hat{w}_0-\tilde{w}_{0,i}}^2}=O\left(\frac{\polylog(n)}{n}\right).
\end{align}
The last term can simply be controlled from the Cauchy-Schwartz inequality, and Remark \ref{remark:distances}:
\begin{align}
     \sum\limits_{i\in[n]} \Ea{\inprod{\hat{w}_{0,\smi}-\hat{w}_{0},\hat{w}_{\smi}-\hat{w}}^2} \le  \sum\limits_{i\in[n]} \Ea{\norm{\hat{w}_{0,\smi}-\hat{w}_{0}}^2\norm{\hat{w}_{\smi}-\hat{w}}^2}=O\left(\frac{1}{n}\right)
\end{align}
Now recalling that
\begin{align}
  \inprod{\tilde{w}_i-\hat{w}_\smi,\hat{w}_{0,\smi}}^2\le 2\inprod{\eta_i,\hat{w}_{0,\smi}}^2+2\inprod{\zeta_i,\hat{w}_{0,\smi}}^2 ,
\end{align}
we focus on bounding $\inprod{\eta_i,\hat{w}_{0,\smi}}, \inprod{\zeta_i,\hat{w}_{0,\smi}}$. Since
\begin{align}
    \inprod{\eta_i,\hat{w}_{0,\smi}}^2\le \norm{\partial_r\ell}_\infty^2 \frac{1}{n^2} \hat{w}_{0,\smi}^\top H_\smi^{-1} x_ix_i^\top H_\smi^{-1} \hat{w}_{0,\smi} ,
\end{align}
one has from Lemma \ref{lem: normw0}
\begin{align}
    \Ea{ \inprod{\eta_i,\hat{w}_{0,\smi}}^2}\le \norm{\partial_r\ell}_\infty^2\frac{1}{n^2} \Ea{\norm{H_\smi^{-1} \hat{w}_{0,\smi}}^2}\le \norm{\partial_r\ell}_\infty^2\frac{1}{\lambda ^2 n^2} O(\polylog(n)).
\end{align}
By the same token, 
\begin{align}
    \Ea{ \inprod{\zeta_i,\hat{w}_{0,\smi}}^2}&\le \norm{\partial_u\ell_0}_\infty^2\frac{1}{n^2} \Ea{\norm{H_{0\smi}^{-1}G_\smi H_\smi^{-1}\hat{w}_{0,\smi}}^2}\le \frac{ \norm{\partial_u\ell_0}_\infty^2 }{\lambda^2\lambda_0^2 n_2} \Ea{\norm{G_\smi}^2\norm{\hat{w}_{0,\smi}}^2}\\
    &=O\left(\frac{\polylog(n)}{n^2}\right),
\end{align}
since we showed in Lemma \ref{lem:eta_zeta} that $\norm{G_\smi}= O_{L_{k}}(\polylog(n))$. Thus,
\begin{align}
      \sum\limits_{i\in[n]} \Ea{  \inprod{\tilde{w}_i-\hat{w}_\smi,\hat{w}_{0,\smi}}^2}=O\left(\frac{\polylog(n)}{n}\right).
\end{align}
Finally, let us analyze similarly the last term
\begin{align}
    \inprod{\hat{w}_\smi,\hat{w}_0-\tilde{w}_{0,i}}^2=\inprod{\hat{w}_\smi,\frac{1}{n}\partial_u\ell_0(\tilde{u}_{i,i})H_{0,\smi}^{-1}x_i}^2
\end{align}
Thus, 
\begin{align}
    \Ea{\inprod{\hat{w}_\smi,\hat{w}_0-\tilde{w}_{0,i}}^2} \le \norm{\partial_u\ell_0}_\infty^2\frac{1}{n^2}\Ea{\norm{H_{0,\smi}^{-1}\hat{w}_\smi}^2}=O\left(\frac{\polylog(n)}{n^2}\right).
\end{align}
Therefore, finally,
\begin{align}
      \sum\limits_{i\in[n]} \ \Ea{\inprod{\hat{w}_\smi,\hat{w}_0-\tilde{w}_{0,i}}^2}=O\left(\frac{\polylog(n)}{n}\right),
\end{align}
completing the proof.
\end{proof}

\begin{lemma}[$\mathfrak{q}$ concentrates]\label{lemma:q_concentrates}
   $\mathfrak{q}=\norm{\hat{w}}^2$ concentrates in $L_2$, namely
    \begin{align}
        \Var{\mathfrak{q}}=O\left(\frac{\polylog(n)}{n}\right).
    \end{align}
\end{lemma}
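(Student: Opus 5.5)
We follow the same Efron--Stein route as in Lemmas \ref{lem:m_concentrates} and \ref{lem:t_concentrates}. The plan is to write
\begin{align}
\Var{\mathfrak{q}}\le \sum_{i\in[n]}\Ea{\left(\norm{\hat{w}}^2-\norm{\hat{w}_\smi}^2\right)^2},
\end{align}
and to use the identity $\norm{\hat{w}}^2-\norm{\hat{w}_\smi}^2=2\inprod{\hat{w}-\hat{w}_\smi,\hat{w}_\smi}+\norm{\hat{w}-\hat{w}_\smi}^2$. We then insert the surrogate $\tilde{w}_i=\hat{w}_\smi+\eta_i+\zeta_i$, which gives
\begin{align}
\norm{\hat{w}}^2-\norm{\hat{w}_\smi}^2=2\inprod{\hat{w}-\tilde{w}_i,\hat{w}_\smi}+2\inprod{\eta_i,\hat{w}_\smi}+2\inprod{\zeta_i,\hat{w}_\smi}+\norm{\hat{w}-\hat{w}_\smi}^2 .
\end{align}
By $(a_1+\dots+a_4)^2\le 4\sum a_k^2$, it suffices to show that each of the four squared terms has expectation $O(\polylog(n)/n^2)$. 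Summing over $i$ then gives the claim.

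\textbf{First term.} By Cauchy--Schwarz it is at most $\norm{\hat{w}-\tilde{w}_i}^2\norm{\hat{w}_\smi}^2$. Proposition \ref{prop: reweighted_estimator_approxs} gives $\sup_i\norm{\hat{w}-\tilde{w}_i}=O_{L_k}(\polylog(n)/n)$. The norm bound of Lemma \ref{lem: normw} (applied to the leave-one-out problem, which has the same structure) gives $\norm{\hat{w}_\smi}=O_{L_k}(\polylog(n))$. Combining these through the Hölder-type product rule for $O_{L_k}$ yields $O(\polylog(n)/n^2)$ in expectation.

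\textbf{Fourth term.} Remark \ref{remark:distances} gives $\norm{\hat{w}-\hat{w}_\smi}^4=O_{L_k}(\polylog(n)/n^2)$, which is already sufficient.

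\textbf{Middle terms.} These are the crucial ones, since the naive bound $\norm{\eta_i}\norm{\hat{w}_\smi}=O(n^{-1/2})$ only gives $O(1/n)$ per index, hence $O(1)$ after summation. Here we exploit that $\hat{w}_\smi$, $H_\smi$, $G_\smi$ and $H_{0,\smi}$ are all independent of $x_i$ conditionally on $c_i$. Writing $\partial_r\ell_i$ and $\partial_u\ell_{0,i}$ as bounded by $O(\polylog(n))$, we get
\begin{align}
\inprod{\eta_i,\hat{w}_\smi}^2\le \norm{\partial_r\ell}_\infty^2\frac{1}{n^2}\inprod{x_i,H_\smi^{-1}\hat{w}_\smi}^2 .
\end{align}
Decompose $x_i=\mu_{c_i}+z_i$. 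Conditionally on the other samples, $\inprod{z_i,H_\smi^{-1}\hat{w}_\smi}$ is Gaussian with variance $\norm{H_\smi^{-1}\hat{w}_\smi}^2\le\lambda^{-2}\norm{\hat{w}_\smi}^2$. The mean part satisfies $|\inprod{\mu_{c_i},H_\smi^{-1}\hat{w}_\smi}|\le M\lambda^{-1}\norm{\hat{w}_\smi}$. Hence $\Ea{\inprod{\eta_i,\hat{w}_\smi}^2}=O(\polylog(n)/n^2)$, exactly as in the treatment of $\inprod{\eta_i,\hat{w}_{0,\smi}}$ in Lemma \ref{lem:t_concentrates}. The term with $\zeta_i$ is handled identically, with the vector $H_{0,\smi}^{-1}G_\smi H_\smi^{-1}\hat{w}_\smi$, whose norm is $O_{L_k}(\polylog(n))$ by the bound $\norm{G_\smi}=O_{L_k}(\polylog(n))$ from Lemma \ref{lem:eta_zeta}.

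\textbf{Main obstacle.} The one subtlety is that the prefactors $\partial_r\ell_i(\tilde{r}_{i,i},\tilde{u}_{i,i})$ and $\partial_u\ell_{0,i}(\tilde{u}_{i,i})$ depend on $x_i$. This is harmless, because we bound them by their sup norms before taking the expectation over $z_i$. The remaining care is in the higher-moment bookkeeping: Cauchy--Schwarz splits products such as $\norm{G_\smi}^2\norm{\hat{w}_\smi}^2$ into $L_{2k}$ bounds, each of which is $O(\polylog(n))$. With all four terms controlled, summing $n$ contributions of order $\polylog(n)/n^2$ gives $\Var{\mathfrak{q}}=O(\polylog(n)/n)$.
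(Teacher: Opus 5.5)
Your proposal is correct and follows the paper's argument. The shared steps are Efron--Stein, then inserting the surrogate $\tilde{w}_i=\hat{w}_\smi+\eta_i+\zeta_i$ so that $\norm{\hat{w}-\tilde{w}_i}=O_{L_k}(\polylog(n)/n)$ makes its contribution negligible, and then using the independence of $x_i$ from $\hat{w}_\smi,H_\smi,G_\smi,H_{0,\smi}$ to bring $\Ea{\inprod{\eta_i+\zeta_i,\hat{w}_\smi}^2}$ down to $O(\polylog(n)/n^2)$. The only difference is bookkeeping: you expand exactly around $\hat{w}_\smi$ and control the remainder $\norm{\hat{w}-\hat{w}_\smi}^2$ via Remark~\ref{remark:distances}, whereas the paper splits through $\norm{\tilde{w}_i}^2$ and bounds $\norm{\eta_i}^2+\norm{\zeta_i}^2$ with Lemma~\ref{lem:eta_zeta}.
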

\begin{proof}
  Once more appealing to the Efron-Stein lemma,
    \begin{align}
   \Var{\mathfrak{q}}&\le \sum\limits_{i\in[n]} \Ea{\left(
 \norm{\hat{w}}^2- \norm{\hat{w}_\smi}^2
   \right)^2}\\
   &\le 2 \sum\limits_{i\in[n]} \Ea{\left(
 \norm{\hat{w}}^2- \norm{\tilde{w}_i}^2
   \right)^2} + \Ea{\left(
 \norm{\tilde{w}_i}^2- \norm{\hat{w}_\smi}^2
   \right)^2}. 
    \end{align}
But
\begin{align}
    \left(
 \norm{\hat{w}}^2- \norm{\tilde{w}_i}^2
   \right)^2\le \norm{\hat{w}-\tilde{w}_i}^2\left(\norm{\hat{w}}+\norm{\tilde{w}_i}\right)^2=O_{L_k}\left(\frac{\polylog(n)}{n^2}\right),
\end{align}
using Proposition \ref{prop: reweighted_estimator_approxs} and Lemma \ref{lem: normw}. By the same token,
\begin{align}
    \norm{\tilde{w}_i}^2- \norm{\hat{w}_\smi}^2\le 2\norm{\eta_i}^2+2\norm{\zeta_i}^2+2 \inprod{\hat{w}_\smi,\eta_i+\zeta_i}=2 \inprod{\hat{w}_\smi,\eta_i+\zeta_i}+O_{L_k}\left(\frac{\polylog(n)}{n}\right),
\end{align}
using Lemma \ref{lem:eta_zeta}. Furthermore, one has
\begin{align}
    \Ea{ \inprod{\eta_i,\hat{w}_\smi}^2}\le \norm{\partial_r\ell}_\infty^2\frac{1}{n^2} \Ea{\norm{H_\smi^{-1} \hat{w}_\smi}^2}\le \norm{\partial_r\ell}_\infty^2\frac{1}{\lambda ^2 n^2}O_{L_k}\left(\polylog(n)\right)
\end{align}
and 
By the same token, 
\begin{align}
    \Ea{ \inprod{\zeta_i,\hat{w}_{\smi}}^2}&\le \norm{\partial_u\ell_0}_\infty^2\frac{1}{n^2} \Ea{\norm{H_{0\smi}^{-1}G_\smi H_\smi^{-1}\hat{w}_{\smi}}^2}\le \frac{ \norm{\partial_u\ell_0}_\infty^2 }{\lambda^2\lambda_0^2 n_2} \Ea{\norm{G_\smi}^2\norm{\hat{w}_{\smi}}^2}\\
    &=O\left(\frac{\polylog(n)}{n^2}\right),
\end{align}
Thus,
\begin{align}
    \Ea{\left(
 \norm{\tilde{w}_i}^2- \norm{\hat{w}_\smi}^2
   \right)^2}=O\left(\frac{\polylog(n)}{n^2}\right),
\end{align}
concluding the proof.
\end{proof}

\begin{lemma}[$\mathfrak{V}$ concentrates]\label{lemma:V_concentrates}
   We remind $\gamma_i=\sfrac{1}{n}x_i^\top H_\smi^{-1}x_i,\mathfrak{V}= \frac{1}{n}\tr[H^{-1}]$. $\gamma_i$ admits an equivalent
    \begin{align}
        \sup_{i\in[n]}\left|
        \gamma_i -\mathfrak{V}
        \right| =O_{L_k}\left(\frac{\polylog(n)}{\lambda \sqrt{n}}\right).
    \end{align}
Furthermore, $\mathfrak{V}$ concentrates,
with
    \begin{align}
        \Var{\mathfrak{V}}=O\left(\frac{\polylog(n)}{n}\right).
    \end{align}
\end{lemma}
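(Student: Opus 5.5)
The proof splits into two parts. We first show $\gamma_i\approx\mathfrak{V}$ uniformly in $i$, and then show that $\mathfrak{V}$ concentrates by the Efron--Stein argument already used for $\mathfrak{m}_c,\mathfrak{t},\mathfrak{q}$.

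\textbf{Approximation of $\gamma_i$.} The key observation is that $H_\smi$ is built only from $\{x_j\}_{j\ne i}$ and the leave-one-out residuals $r_{j,\smi},u_{j,\smi}$. Hence $H_\smi$ is independent of $x_i$ conditionally on $c_i$. Write $x_i=\mu_{c_i}+z_i$ with $z_i$ standard Gaussian, and expand
\begin{align}
\gamma_i=\frac{1}{n}z_i^\top H_\smi^{-1}z_i+\frac{2}{n}\mu_{c_i}^\top H_\smi^{-1}z_i+\frac{1}{n}\mu_{c_i}^\top H_\smi^{-1}\mu_{c_i}.
\end{align}
The last term is at most $M^2/(\lambda n)$. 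The cross term is, conditionally on $H_\smi$, Gaussian with standard deviation $O(1/(\lambda n))$. For the first term, Hanson--Wright (or Gaussian quadratic-form concentration) conditionally on $H_\smi$ gives deviation $O_{L_k}(\polylog(n)\norm{H_\smi^{-1}}_F/n)\le O_{L_k}(\polylog(n)/(\lambda\sqrt n))$ around $\frac1n\tr[H_\smi^{-1}]$. The logarithmic moments $k\lesssim\log n$ together with a union bound over $i\in[n]$ give the supremum at a $\polylog(n)$ cost, as in Lemma \ref{lemma:quad}.

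It remains to compare $\frac1n\tr[H_\smi^{-1}]$ with $\frac1n\tr[H^{-1}]$. Write
\begin{align}
H-H_\smi=\frac1n\partial_r^2\ell_i(r_i,u_i)x_ix_i^\top+\frac1n\sum_{j\ne i}\big[\partial_r^2\ell_j(r_j,u_j)-\partial_r^2\ell_j(r_{j,\smi},u_{j,\smi})\big]x_jx_j^\top .
\end{align}
The rank-one piece changes the normalized trace by at most $1/(n\lambda)$. The second piece has operator norm at most $(\norm{\partial_r^3\ell}_\infty\vee\norm{\partial_r^2\partial_u\ell}_\infty)\frac{\norm{X_\smi}^2}{n}\sup_{j\ne i}(|r_j-r_{j,\smi}|+|u_j-u_{j,\smi}|)$. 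By Lemma \ref{lemma:normx} and Propositions \ref{prop: base_estimator_approxs} and \ref{prop: reweighted_estimator_approxs}, this is $O_{L_k}(\polylog(n)/\sqrt n)$. Using $|\frac1n\tr[A^{-1}-B^{-1}]|\le\norm{A^{-1}}\norm{B^{-1}}\norm{A-B}\le\lambda^{-2}\norm{A-B}$ yields the first claim.

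\textbf{Concentration of $\mathfrak{V}$.} By Efron--Stein,
\begin{align}
\Var{\mathfrak{V}}\le\sum_i\Ea{\big(\tfrac1n\tr[H^{-1}]-\tfrac1n\tr[H_\smi^{-1}]\big)^2}.
\end{align}
Here $H_\smi$ plays the role of the resampled matrix: it is a function of the data without sample $i$, so Efron--Stein in its leave-one-out form applies. The crude bound from the previous step gives only $O(\polylog(n)/n)$ per summand, hence $O(\polylog(n))$ in total, which is insufficient. This is the main obstacle.

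To overcome it, we sharpen the operator-norm control of the sum $\sum_{j\ne i}$ into a trace-level bound. The differences satisfy $\sum_{j\ne i}(r_j-r_{j,\smi})^2=O_{L_k}(\polylog(n)/n)$. This follows from $\norm{X_\smi(\hat w-\hat w_\smi)}$ together with the identity $\hat w-\hat w_\smi=(\hat w-\tilde w_i)+\eta_i+\zeta_i$, where $X_\smi\eta_i$ and $X_\smi\zeta_i$ each have norm $O(\polylog/\sqrt n)$. The analogue for $u$ holds by Lemma \ref{lemma:l2_norm_surrogate_full}. Writing the difference term as $\frac1n X_\smi^\top D X_\smi$ with $D$ diagonal, we get
\begin{align}
\tfrac1n\big|\tr[H^{-1}\tfrac1nX_\smi^\top DX_\smi H_\smi^{-1}]\big|\le\tfrac1n\norm{D}_F\,\norm{\tfrac1nX_\smi H_\smi^{-1}H^{-1}X_\smi^\top}_F\le\tfrac{1}{n}\,O(\polylog/\sqrt n)\,O(\sqrt n\,\polylog),
\end{align}
which is $O_{L_k}(\polylog(n)/n)$ per $i$. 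Hence each squared summand is $O(\polylog(n)/n^2)$, and summing over $i$ gives $\Var{\mathfrak{V}}=O(\polylog(n)/n)$. We expect the Frobenius bound on $\frac1nX_\smi H_\smi^{-1}H^{-1}X_\smi^\top$ to be the delicate step. It follows from its rank being at most $d$ and its operator norm being at most $\lambda^{-2}\norm{X_\smi}^2/n$.
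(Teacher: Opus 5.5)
Your treatment of the first claim is fine and matches the paper: Gaussian quadratic-form concentration of $\frac1n z_i^\top H_\smi^{-1}z_i$ around $\frac1n\tr[H_\smi^{-1}]$, followed by a resolvent comparison of $\frac1n\tr[H_\smi^{-1}]$ with $\frac1n\tr[H^{-1}]$.

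The gap is in the variance bound, at exactly the step you single out as the main obstacle. The claim $\sum_{j\ne i}(r_j-r_{j,\smi})^2=O_{L_k}(\polylog(n)/n)$ is false; the correct order is $\Theta(1)$.
\begin{itemize}
\item The only available bound is $\norm{X_\smi\eta_i}\le\norm{X_\smi}\norm{\eta_i}=O(\sqrt n)\cdot O(n^{-1/2})=O(1)$, and it is sharp. The vector $\eta_i\propto\frac1n H_\smi^{-1}x_i$ is a generic direction independent of $X_\smi$, so $\norm{X_\smi\eta_i}^2\approx\frac{\partial_r\ell_i(\tilde r_{i,i},\tilde u_{i,i})^2}{n}\tr[H_\smi^{-1}\tfrac1n X_\smi^\top X_\smi H_\smi^{-1}]\asymp d/n$.
\item Equivalently, each of the $n-1$ entries $\inprod{x_j,\eta_i}$ is typically of order $n^{-1/2}$ (cf. Lemma~\ref{lem:supsup}), not smaller. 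The same holds for $\zeta_i$.
\item Lemma~\ref{lemma:l2_norm_surrogate_full} controls $u_j-\tilde u_{j,i}$ (full versus surrogate), not $u_j-u_{j,\smi}$. The remaining piece $\tilde u_{j,i}-u_{j,\smi}=\inprod{x_j,\xi_i}$ again has $\ell_2$-norm of order one over $j$.
\end{itemize}
With $\norm{D}_F=\Theta(1)$, your Frobenius estimate gives $\frac1n\cdot O(1)\cdot O(\sqrt n)=O(n^{-1/2})$ per $i$. This is no better than the crude bound, and Efron--Stein then returns only $O(\polylog(n))$. No estimate based on the sizes $|D_{jj}|$ alone can do better, since $\frac1n\sum_{j}|D_{jj}|\cdot\frac1n x_j^\top H_\smi^{-1}H^{-1}x_j$ genuinely is of order $n^{-1/2}$.

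The missing idea is cancellation across $j$, which comes from the fact that the leading part of $D_{jj}$ is linear in $x_i$. The paper splits $\mathfrak V-\mathfrak V_\smi$ through the surrogate trace $\tilde{\mathfrak V}_i=\frac1n\tr[\tilde H_i^{-1}]$.
\begin{itemize}
\item The piece $\mathfrak V-\tilde{\mathfrak V}_i$ is handled exactly by your Cauchy--Schwarz argument. There, $\sum_j(r_j-\tilde r_{j,i})^2$ and $\sum_j(u_j-\tilde u_{j,i})^2$ really are $O_{L_k}(\polylog(n)/n)$ (Lemmas~\ref{lemma:l2_norm_surrogate_full} and~\ref{lemma:l2_norm_surrogate_full_reweighted}).
\item For $\tilde{\mathfrak V}_i-\mathfrak V_\smi$, Taylor-expand $\partial_r^2\ell_j$ to second order around $(r_{j,\smi},u_{j,\smi})$. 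The quadratic terms are $O_{L_k}(\polylog(n)/n)$ by the uniform bounds on $\sup_{j\ne i}|\tilde r_{j,i}-r_{j,\smi}|$ and $\sup_{j\ne i}|\tilde u_{j,i}-u_{j,\smi}|$.
\item Next, replace $\tilde H_i$ by $H_\smi$ in the weights, at cost $O(n^{-1/2})\cdot O(n^{-1/2})$, so that the weights no longer depend on $x_i$.
\item What remains is the first-order term. Since $\tilde r_{j,i}-r_{j,\smi}=\inprod{x_j,\eta_i+\zeta_i}$ and $\tilde u_{j,i}-u_{j,\smi}=\inprod{x_j,\xi_i}$, it equals a bounded scalar (such as $\partial_r\ell_i(\tilde r_{i,i},\tilde u_{i,i})$) times $\inprod{v_\smi,x_i}$.
\item Here $v_\smi=\frac1{n^2}K_\smi X_\smi^\top h$, with $K_\smi\in\{H_\smi^{-1},\,H_{0,\smi}^{-1}G_\smi H_\smi^{-1},\,H_{0,\smi}^{-1}\}$ and $h$ built from leave-$i$-out quantities only. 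So $v_\smi$ is independent of $x_i$, and $\norm{v_\smi}\le\frac{\norm{K_\smi}}{n}\frac{\norm{X_\smi}}{\sqrt n}\frac{\norm{h}}{\sqrt n}=O_{L_k}(\polylog(n)/n)$.
\end{itemize}
Averaging $\inprod{v_\smi,x_i}^2$ over $x_i$ then gives a second moment $O(\polylog(n)/n^2)$ per $i$, which is what Efron--Stein requires.
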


\begin{proof}
We begin by proving the first point. First note that the means bring negligible contributions to $\gamma_i$:
\begin{align}
    \gamma_i=\frac{\mu_{c_i}^\top H_\smi^{-1} \mu_{c_i}}{n}+\frac{z_i^\top H_\smi^{-1} \mu_{c_i}}{n}+\frac{z_i^\top H_\smi^{-1}z_i}{n}=\frac{z_i^\top H_\smi^{-1}z_i}{n}+O_{L_k}\left(\frac{\polylog(n)}{\sqrt{n}}\right),
\end{align}
using Cauchy-Schwartz bounds and the bound on $\sup_{i\in[n]}\sfrac{\norm{x_i}}{\sqrt{n}}$ of Lemma \ref{lemma:normx}. From Lemma G.2 of \cite{el2018impact},
defining $\mathfrak{V}_\smi=\frac{1}{n}\tr[H^{-1}_\smi]$, one has the bound
\begin{align}
     \sup_{i\in[n]}\left|\frac{z_i^\top H_\smi^{-1}z_i}{n}-\mathfrak{V}_\smi\right|=O_{L_k}\left(\frac{\polylog(n)}{\lambda \sqrt{n}}\right).
\end{align}
It now remains to control $|\mathfrak{V}_\smi-\mathfrak{V}|$. We start from the identity
\begin{align}
    \mathfrak{V}-\mathfrak{V}_\smi&=\frac{1}{n}\tr[H^{-1}\left(H^{-1}_\smi-H^{-1}\right)H^{-1}_\smi]\\
    &=-\frac{1}{n}\partial_r^2\ell_i(r_i,u_i) \frac{x_i^\top H_\smi^{-1}H^{-1}x_i}{n}-\frac{1}{n}\sum\limits_{j\ne i} \left(\partial_r^2\ell_j(r_j,u_j)-\partial_r^2\ell_j(r_{j,\smi},u_{j,\smi})\right)\frac{x_j^\top H_\smi^{-1}H^{-1}x_j}{n}.
\end{align}
The first term can be simply controlled as
\begin{align}
    \left|\frac{1}{n}\partial_r^2\ell_i(r_i,u_i) \frac{x_i^\top H_\smi^{-1}H^{-1}x_i}{n}\right| \le \norm{\partial_r^2\ell}_\infty \frac{1}{\lambda^2}O_{L_k}\left(\frac{\polylog(n)}{n}\right),
\end{align}
using Lemma \ref{lemma:normx}. The second term can be bounded as
\begin{align}
    &\left|\frac{1}{n}\sum\limits_{j\ne i} \left(\partial_r^2\ell_j(r_j,u_j)-\partial_r^2\ell_j(r_{j,\smi},u_{j,\smi})\right)\frac{x_j^\top H_\smi^{-1}H^{-1}x_j}{n}\right|\\
    &\le \left(\norm{\partial_r^3\ell}_\infty \vee \norm{\partial_r^2\partial_u\ell}_\infty\right)\left(\sup_{i\in [n]}\sup_{j\ne i} |r_{j,\smi}-r_{j}|+\sup_{i\in [n]}\sup_{j\ne i}|u_{j,\smi}-u_{j}|\right) \frac{1}{\lambda^2} \sup_{j\in [n]} \frac{\norm{x_j}^2}{n} =O_{L_k}\left(\frac{\polylog(n)}{\sqrt{n}}\right),
\end{align}
using in the last step Proposition \ref{prop: base_estimator_approxs}, Proposition \ref{prop: reweighted_estimator_approxs} and Lemma \ref{lemma:normx}. This concludes the proof of the first point. Establishing the second point requires to reach a finer control of $|\mathfrak{V}_\smi-\mathfrak{V}|$. The reasoning follows closely that of Lemma 11 in \cite{barnfield2025high}. Starting again from the Efron-Stein lemma,
\begin{align}
    \Var{\mathfrak{V}} \le \sum\limits_{i\in[n]} \Ea{(\mathfrak{V}-\mathfrak{V}_\smi)^2}\le 2\sum\limits_{i\in[n]} \left(\Ea{(\mathfrak{V}-\tilde{\mathfrak{V}}_i)^2}+\Ea{(\tilde{\mathfrak{V}}_i-\mathfrak{V}_\smi)^2}\right),
\end{align}
we introduce $\tilde{\mathfrak{V}}_i=\sfrac{1}{n}\tr[\tilde{H}_i]$ as the normalized trace of the surrogate Hessian. We first focus on $\Ea{(\mathfrak{V}-\tilde{\mathfrak{V}}_i)^2}$. 
\begin{align}
    \mathfrak{V}-\tilde{\mathfrak{V}}_i&=\tr[H^{-1}\left(\tilde{H}_i^{-1}-H^{-1}\right)\tilde{H}_i^{-1}]\\
    &= \frac{1}{n}\sum\limits_{j\in[n]} \left(\partial_r^2\ell_j(r_j,u_j)-\partial_r^2\ell_j(\tilde{r}_{j,i},\tilde{u}_{j,i})\right)\frac{x_j^\top \tilde{H}_i^{-1}H^{-1}x_j}{n}\\
    &=\frac{1}{n}\sum\limits_{j\in[n]} \left(\partial_r^3\ell_j(\check{r}_{j,i},\check{u}_{j,i})
        (r_j -\tilde{r}_{j,i})+\partial_r^2\partial_u\ell_j(\check{r}_{j,i},\check{u}_{j,i})
        (u_j -\tilde{u}_{j,i})\right)\frac{x_j^\top \tilde{H}_i^{-1}H^{-1}x_j}{n}
\end{align}
for some $\check{r}_{j,i},\check{u}_{j,i}$ lying on the segment connecting $\tilde{r}_{j,i},\tilde{u}_{j,i}$ and $r_j,u_j$, from the mean value theorem.
Let us introduce the vectors $h^r\in \R^n, h^u\in \R^n, $ with entries $h^r_j=\partial_r^3\ell_j(\check{r}_{j,i},\check{u}_{j,i})\sfrac{x_j^\top \tilde{H}_i^{-1}H^{-1}x_j}{n}$ and $h^u_j=\partial_r^2\partial_u\ell_j(\check{r}_{j,i},\check{u}_{j,i})\sfrac{x_j^\top \tilde{H}_i^{-1}H^{-1}x_j}{n}$. Similarly, let $\delta_r,\delta_u\in\R^n$ with entries $\delta_{r,j}=  (r_j -\tilde{r}_{j,i})$ and $\delta_{u,j}=  (u_j -\tilde{u}_{j,i})$. Then $\mathfrak{V}-\tilde{\mathfrak{V}}_i$ can be more compactly rewritten as
\begin{align}
    |\mathfrak{V}-\tilde{\mathfrak{V}}_i|\le \frac{1}{n}|\inprod{h^r,\delta_r}|+ \frac{1}{n}|\inprod{h^u,\delta_u}| \le \frac{1}{n}\left(\norm{h^r}\norm{\delta^r}+\norm{h^u}\norm{\delta^u}\right)
\end{align}
From Lemma \ref{lemma:l2_norm_surrogate_full} and Lemma \ref{lemma:l2_norm_surrogate_full_reweighted}, $\norm{\delta^r},\norm{\delta^u}=O_{L_k}\left(\sfrac{\polylog(n)}{\sqrt{n}}\right)$. In parallel,
\begin{align}
    &\norm{h^r}\le \sqrt{n} \frac{\norm{\partial_r^3\ell}_\infty}{\lambda^2} \sup_{j\in [n]}\frac{\norm{x_j}}{\sqrt{n}}=O_{L_k}\left(\sqrt{n}\polylog(n)\right),\\
    &\norm{h^u}\le \sqrt{n} \frac{\norm{\partial_r^2\partial_u\ell}_\infty}{\lambda^2} \sup_{j\in [n]}\frac{\norm{x_j}}{\sqrt{n}}=O_{L_k}\left(\sqrt{n}\polylog(n)\right).
\end{align}
Thus $|\mathfrak{V}-\tilde{\mathfrak{V}}_i|=O_{L_k}\left(\sfrac{\polylog(n)}{n}\right)$ and
\begin{align}
    2\sum\limits_{i\in[n]}\Ea{(\mathfrak{V}-\tilde{\mathfrak{V}}_i)^2}=O_{L_k}\left(\frac{\polylog(n)}{n}\right). 
\end{align}
We now turn to the term $\Ea{(\tilde{\mathfrak{V}}_i-\mathfrak{V}_\smi)^2}$. $\tilde{\mathfrak{V}}_i-\mathfrak{V}_\smi$ can be decomposed as
\begin{align}
    \tilde{\mathfrak{V}}_i-\mathfrak{V}_\smi&=\frac{1}{n}\partial_r^2\ell_i(\tilde{r}_{i,i},\tilde{u}_{i,i})\frac{x_i^\top \tilde{H}_i^{-1}H^{-1}_\smi x_i}{n}\\
    &+\frac{1}{n}\sum\limits_{j\ne i} \left(\partial_r^3\ell_j(r_{j,\smi},u_{j,\smi})(\tilde{r}_{j,i}-r_{j,\smi})+\partial_r^2\partial_u\ell_j(r_{j,\smi},u_{j,\smi})(\tilde{u}_{j,i}-u_{j,\smi})\right)\frac{x_j^\top \tilde{H}_i^{-1}H^{-1}_\smi x_j}{n}\\
    &+\frac{1}{2n}\sum\limits_{j\ne i} \Bigg(\partial_r^4\ell_j(\check{r}_{j,i},\check{u}_{j,i})^2(\tilde{r}_{j,i}-r_{j,\smi})^2+\partial_r^2\partial_u^2\ell_j(\check{r}_{j,i},\check{u}_{j,i})(\tilde{u}_{j,i}-u_{j,\smi})^2\\
    &\qquad \qquad \qquad +2\partial_r^3\partial_u\ell_j(\check{r}_{j,i},\check{u}_{j,i})(\tilde{u}_{j,i}-u_{j,\smi})(\tilde{r}_{j,i}-r_{j,\smi})\Bigg)\frac{x_j^\top \tilde{H}_i^{-1}H^{-1}_\smi x_j}{n}. \label{eq:decompo_V_ES}
\end{align}
using a second-order Taylor expansion, for $(\check{r}_{j,i},\check{u}_{j,i})$ lying on the line connecting $\tilde{r}_{j,i},\tilde{u}_{j,i}$ and $r_{j,\smi},u_{j,\smi}$. The control of the first term follows from classical arguments
\begin{align}
    \left|\frac{1}{n}\partial_r^2\ell_i(\tilde{r}_{i,i},\tilde{u}_{i,i})\frac{x_i^\top \tilde{H}_i^{-1}H_\smi^{-1}x_i}{n}\right|=\norm{\partial_r^2\ell}_\infty \frac{1}{\lambda^2} O_{L_k}\left(\frac{\polylog(n)}{n}\right).
\end{align}
The control of the last term follows from Propositions \ref{prop: base_estimator_approxs} and \ref{prop: reweighted_estimator_approxs} and Lemma \ref{eq: supsup}:
\begin{align}
    \left|\frac{1}{2n}\sum\limits_{j\ne i} \partial_r^4\ell_j(r_{j,\smi},u_{j,\smi})^2(\tilde{r}_{j,i}-r_{j,\smi})^2\frac{x_j^\top \tilde{H}_i^{-1}H_\smi^{-1}x_j}{n}\right|&\le \norm{\partial_r^4\ell}_\infty \frac{1}{2\lambda^2} \left(
    \sup_{j\ne i}\frac{\norm{x_j}}{\sqrt{n}} \sup_{j\ne i}|\tilde{r}_{j,i}-r_{j,\smi}|
    \right)^2\\
    &=O_{L_k}\left(\frac{\polylog(n)}{n}\right)
\end{align}
and
\begin{align}
     \left|\frac{1}{2n}\sum\limits_{j\ne i} \partial_r^2\partial_u^2\ell_j(r_{j,\smi},u_{j,\smi})^2(\tilde{u}_{j,i}-u_{j,\smi})^2\frac{x_j^\top \tilde{H}_i^{-1}H_\smi^{-1}x_j}{n}\right|&\le \norm{\partial_r^2\partial_u^2\ell}_\infty \frac{1}{2\lambda^2} \left(
    \sup_{j\ne i}\frac{\norm{x_j}}{\sqrt{n}} \sup_{j\ne i}|\tilde{u}_{j,i}-u_{j,\smi}|
    \right)^2\\
    &=O_{L_k}\left(\frac{\polylog(n)}{n}\right)
\end{align}
and
\begin{align}
     &\left|\frac{1}{2n}\sum\limits_{j\ne i} \partial_r^3\partial_u\ell_j(r_{j,\smi},u_{j,\smi})^2(\tilde{u}_{j,i}-u_{j,\smi})(\tilde{r}_{j,i}-r_{j,\smi})\frac{x_j^\top \tilde{H}_i^{-1}H_\smi^{-1}x_j}{n}\right|\\&\le \norm{\partial_r^2\partial_u^2\ell}_\infty \frac{1}{2\lambda^2} \left(
    \sup_{j\ne i}\frac{\norm{x_j}}{\sqrt{n}} \sup_{j\ne i}|\tilde{u}_{j,i}-u_{j,\smi}|\sup_{j\ne i}|\tilde{r}_{j,i}-r_{j,\smi}|
    \right)=O_{L_k}\left(\frac{\polylog(n)}{n}\right).
\end{align}
We finally focus on the second term in \eqref{eq:decompo_V_ES}. Following \cite{barnfield2025high}, we first approximate $\tilde{H}_i$ by the $x_i-$ independent Hessian $H_\smi$ to make explicit all statistical dependencies on $x_i$, which we subsequently carefully analyze. The corresponding correction term reads
\begin{align}
    &\left|\frac{1}{n}\sum\limits_{j\ne i} \left(\partial_r^3\ell_j(r_{j,\smi},u_{j,\smi})(\tilde{r}_{j,i}-r_{j,\smi})+\partial_r^2\partial_u\ell_j(r_{j,\smi},u_{j,\smi})(\tilde{u}_{j,i}-u_{j,\smi})\right)\frac{x_j^\top (\tilde{H}_i^{-1}-H^{-1}_\smi)H^{-1}_\smi x_j}{n}\right|\\
    &\le (\norm{\partial_r^3\ell}_\infty \vee\norm{\partial_r^2\partial_u\ell}_\infty)\left(\sup_{i\in [n]}\sup_{j\ne i} |r_{j,\smi}-\tilde{r}_{j,i}|+\sup_{i\in [n]}\sup_{j\ne i}|u_{j,\smi}-\tilde{u}_{j,i}|\right)\frac{1}{\lambda} \sup_{j\in[n]}\frac{\norm{x_j}^2}{n}\sup_{i\in[n]}\norm{\tilde{H}_i^{-1}-H^{-1}_\smi}\\
    &\le O_{L_k}\left(\frac{\polylog(n)}{\sqrt{n}}\right) \frac{1}{\lambda^2}\norm{\tilde{H}_i-H_\smi}.
\end{align}
The difference of the Hessians can further be bounded as
\begin{align}
    \norm{\tilde{H}_i-H_\smi}&=\norm{\frac{1}{n}\partial_r^2\ell_i(\tilde{r}_{i,i},\tilde{u}_{i,i})\frac{x_ix_i^\top}{n}+\frac{1}{n}\sum\limits_{j\ne i}\left(\partial_r^2\ell_j(r_{j,\smi},u_{j,\smi})-\partial_r^2\ell_j(\tilde{r}_{j,i},\tilde{u}_{j,i})\right)\frac{x_jx_j^\top}{n} }\\
    & \le \norm{\partial_r^2 \ell}_\infty O_{L_k}\left(\frac{\polylog(n)}{n}\right)\\
    &\qquad + (\norm{\partial_r^3\ell}_\infty \vee\norm{\partial_r^2\partial_u\ell}_\infty)\left(\sup_{i\in [n]}\sup_{j\ne i} |r_{j,\smi}-\tilde{r}_{j,i}|+\sup_{i\in [n]}\sup_{j\ne i}|u_{j,\smi}-\tilde{u}_{j,i}|\right)\sup_{j\in[n]}\frac{\norm{x_j}^2}{n}\\
    &=O_{L_k}\left(\frac{\polylog(n)}{\sqrt{n}}\right).
\end{align}
Therefore, in summary, at the price of a correction of order $\sfrac{\polylog(n)}{n}$ in $L_k$, the surrogate Hessian $\tilde{H}_i$ in the second term of \eqref{eq:decompo_V_ES} may be replaced by the leave-$i$ out Hessian $H_\smi$. It thus suffices to study the quantity
\begin{align}
    &\Ea{\left(\frac{1}{n}\sum\limits_{j\ne i} \left(\partial_r^3\ell_j(r_{j,\smi},u_{j,\smi})(\tilde{r}_{j,i}-r_{j,\smi})+\partial_r^2\partial_u\ell_j(r_{j,\smi},u_{j,\smi})(\tilde{u}_{j,i}-u_{j,\smi})\right)\frac{x_j^\top H^{-2}_\smi x_j}{n}\right)^2}\\
    &\le 3\Ea{
\left(\frac{1}{n}\sum\limits_{j\ne i} \partial_r^3\ell_j(r_{j,\smi},u_{j,\smi})\inprod{x_j,\eta_i}\frac{x_j^\top H^{-2}_\smi x_j}{n}\right)^2}+3\Ea{
\left(\frac{1}{n}\sum\limits_{j\ne i} \partial_r^3\ell_j(r_{j,\smi},u_{j,\smi})\inprod{x_j,\zeta_i}\frac{x_j^\top H^{-2}_\smi x_j}{n}\right)^2}\\&\qquad+3 \Ea{\left(\frac{1}{n}\sum\limits_{j\ne i} \partial_r^2\partial_u\ell_j(r_{j,\smi},u_{j,\smi})\inprod{x_j,\xi_i}\frac{x_j^\top H^{-2}_\smi x_j}{n}\right)^2} \label{eq:decompo_V_2}
\end{align}
with the aim of establishing $O(\sfrac{1}{n^2})$ control. We successively examine the three terms, leveraging the explicit expressions of $\eta_i,\zeta_i,\xi_i$.
\begin{align}
    \Ea{\scriptstyle
\left(\frac{1}{n}\sum\limits_{j\ne i} \partial_r^3\ell_j(r_{j,\smi},u_{j,\smi})\inprod{x_j,\eta_i}\frac{x_j^\top H^{-2}_\smi x_j}{n}\right)^2}&=\Ea{
\left(\frac{1}{n}\sum\limits_{j\ne i} \partial_r\ell(\tilde{r}_{i,i})\partial_r^3\ell_j(r_{j,\smi},u_{j,\smi})\frac{x_j^\top H^{-2}_\smi x_j x_j^\top H_\smi^{-1} }{n^2} x_i \right)^2}\\
&\le \norm{\partial_r\ell}_\infty^2\Ea{\norm{\frac{1}{n}\sum\limits_{j\ne i} \partial_r^3\ell_j(r_{j,\smi},u_{j,\smi})\frac{x_j^\top H^{-2}_\smi x_j  H_\smi^{-1} x_j }{n^2}}^2},
\end{align}
where we took the expectation over $x_i$ only. The norm can be controlled as
\begin{align}
    \norm{\frac{1}{n}\sum\limits_{j\ne i} \partial_r^3\ell_j(r_{j,\smi},u_{j,\smi})\frac{x_j^\top H^{-2}_\smi x_j  H_\smi^{-1} x_j }{n^2}} = \norm{\frac{1}{n^2} h^\top X_\smi H_\smi^{-1}} \le \frac{1}{n\lambda } \frac{\norm{X_\smi}}{\sqrt{n}}\frac{\norm{h}}{\sqrt{n}}.
\end{align}
We have introduced the vector $h\in\R^{n-1}$ with entries $h_j=\partial_r^3\ell_j(r_{j,\smi},u_{j,\smi})\sfrac{x_j^\top H^{-2}_\smi x_j   }{n}$. From Lemma \ref{lemma:normx}, $\frac{\norm{X_\smi}}{\sqrt{n}}=O_{L_k}(\polylog(n)).$ Finally,
\begin{align}
    \norm{h}\le \sqrt{n}\norm{\partial_r^3\ell}_\infty \frac{1}{\lambda^2} \sup_{j\in[n]}\frac{\norm{x_j}^2}{n}=O_{L_k}(\sqrt{n}\polylog(n)).
\end{align}
Thus,
\begin{align}
    \Ea{
\left(\frac{1}{n}\sum\limits_{j\ne i} \partial_r^3\ell_j(r_{j,\smi},u_{j,\smi})\inprod{x_j,\eta_i}\frac{x_j^\top H^{-2}_\smi x_j}{n}\right)^2}=O_{L_k}\left(\frac{\polylog(n)}{n^2}\right).
\end{align}
We follow the same lines for the remaining terms of \eqref{eq:decompo_V_2}. On the one hand,
\begin{align}
\Ea{\scriptstyle
\left(\frac{1}{n}\sum\limits_{j\ne i} \partial_r^3\ell_j(r_{j,\smi},u_{j,\smi})\inprod{x_j,\zeta_i}\frac{x_j^\top H^{-2}_\smi x_j}{n}\right)^2}&\le \norm{\partial_u\ell_0}_\infty^2\Ea{\norm{\frac{1}{n}\sum\limits_{j\ne i} \partial_r^3\ell_j(r_{j,\smi},u_{j,\smi})\frac{x_j^\top H^{-2}_\smi x_j  H_{0,\smi}^{-1} G_\smi H_\smi^{-1} x_j }{n^2}}^2}\\
& \le \norm{\partial_u\ell_0}_\infty^2 \frac{1}{\lambda^6 \lambda_0^2}\norm{\partial_r^3\ell}_\infty^2 \norm{\partial_r\partial_u\ell}_\infty^2 O\left(\frac{\polylog(n)}{n^2}\right).
\end{align}
On the other hand
\begin{align}
    \Ea{\scriptstyle\left(\frac{1}{n}\sum\limits_{j\ne i} \partial_r^2\partial_u\ell_j(r_{j,\smi},u_{j,\smi})\inprod{x_j,\xi_i}\frac{x_j^\top H^{-2}_\smi x_j}{n}\right)^2} & \le
\norm{\partial_u\ell_0}_\infty^2\Ea{\norm{\frac{1}{n}\sum\limits_{j\ne i} \partial_r^2\partial_u\ell_j(r_{j,\smi},u_{j,\smi})\frac{x_j^\top H^{-2}_\smi x_j  H_{0,\smi}^{-1}  x_j }{n^2}}^2}\\
& \le \norm{\partial_u\ell_0}_\infty^2 \frac{1}{\lambda^4 \lambda_0^2}\norm{\partial_r^2\partial_u\ell}_\infty^2 O\left(\frac{\polylog(n)}{n^2}\right).
\end{align}
Returning to \eqref{eq:decompo_V_2} finally allows one to reach the bound
\begin{align}
    \Ea{\left(\frac{1}{n}\sum\limits_{j\ne i} \left(\partial_r^3\ell_j(r_{j,\smi},u_{j,\smi})(\tilde{r}_{j,i}-r_{j,\smi})+\partial_r^2\partial_u\ell_j(r_{j,\smi},u_{j,\smi})(\tilde{u}_{j,i}-u_{j,\smi})\right)\frac{x_j^\top H^{-2}_\smi x_j}{n}\right)^2}=O\left(\frac{\polylog(n)}{n^2}\right)
\end{align}
Thus, putting all the intermediary results together,
\begin{align}
     2\sum\limits_{i\in[n]}\Ea{(\mathfrak{V}_\smi-\tilde{\mathfrak{V}}_i)^2}=O_{L_k}\left(\frac{\polylog(n)}{n}\right). 
\end{align}
The Efron-Stein lemma then guarantees that
\begin{align}
        \Var{\mathfrak{V}}=O\left(\frac{\polylog(n)}{n}\right),
    \end{align}
thereby concluding the proof.
\end{proof}

We now prove a similar concentration inequality for $\upsilon_i,\mathfrak{X}$.

\begin{lemma}[$\mathfrak{X}$ concentrates]\label{lem:X_concentrates}
   We remind $\upsilon_i=\sfrac{1}{n}x_i^\top H_\smi^{-1}G_\smi H_{0,\smi}^{-1}x_i,\mathfrak{X}= \frac{1}{n}\tr[H^{-1}G H_{0}^{-1}]$. $\upsilon_i$ admits an equivalent
    \begin{align}
        \sup_{i\in[n]}\left|
        \upsilon_i -\mathfrak{X}
        \right| =O_{L_k}\left(\frac{\polylog(n)}{\lambda \sqrt{n}}\right).
    \end{align}
Furthermore, $\mathfrak{X}$ concentrates,
with
    \begin{align}
        \Var{\mathfrak{X}}=O\left(\frac{\polylog(n)}{n}\right).
    \end{align}
\end{lemma}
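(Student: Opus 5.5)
The plan mirrors the proof of Lemma \ref{lemma:V_concentrates}, with $H_\smi^{-1}$ replaced by the product $H_\smi^{-1}G_\smi H_{0,\smi}^{-1}$. Throughout, its operator norm is $O_{L_k}(\polylog(n))$, as already used in the proofs of Lemma \ref{lem:eta_zeta} and Lemma \ref{lem:supsup}.

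\textbf{First point.} I would decompose $x_i=\mu_{c_i}+z_i$. The mean contributions to $\upsilon_i$ are $O_{L_k}(\polylog(n)/\sqrt{n})$ by Cauchy--Schwarz and Lemma \ref{lemma:normx}. Since $H_\smi^{-1}G_\smi H_{0,\smi}^{-1}$ is independent of $z_i$ and has bounded norm, the quadratic-form concentration (Lemma G.2 of \cite{el2018impact}, or Hanson--Wright conditionally on $X_\smi$, then a union bound over $i$) gives
\begin{align}
\sup_{i\in[n]}\left|\frac{1}{n}z_i^\top H_\smi^{-1}G_\smi H_{0,\smi}^{-1}z_i-\mathfrak{X}_\smi\right|=O_{L_k}\left(\frac{\polylog(n)}{\sqrt{n}}\right),
\end{align}
where $\mathfrak{X}_\smi=\frac{1}{n}\tr[H_\smi^{-1}G_\smi H_{0,\smi}^{-1}]$. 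The remaining difference $\mathfrak{X}-\mathfrak{X}_\smi$ is telescoped as
\begin{align}
ABC-A'B'C'=(A-A')BC+A'(B-B')C+A'B'(C-C').
\end{align}
Each Hessian difference splits into a rank-one term in $x_i$, which contributes $O(\polylog(n)/n)$ after normalizing the trace, and a sum over $j\neq i$ of coefficient differences. By the mean value theorem, these differences are bounded by $\sup_{i}\sup_{j\neq i}(|r_{j,\smi}-r_j|+|u_{j,\smi}-u_j|)=O_{L_k}(\polylog(n)/\sqrt{n})$, using Propositions \ref{prop: base_estimator_approxs} and \ref{prop: reweighted_estimator_approxs}. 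The $G$ difference needs derivatives $\partial_r^2\partial_u\ell$ and $\partial_r\partial_u^2\ell$, both of which are covered by Assumption \ref{ass:ell}.

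\textbf{Second point.} I would apply Efron--Stein and split $\mathfrak{X}-\mathfrak{X}_\smi$ through a surrogate $\tilde{\mathfrak{X}}_i=\frac{1}{n}\tr[\tilde{H}_i^{-1}\tilde{G}_i\tilde{H}_{0,i}^{-1}]$. The piece $\mathfrak{X}-\tilde{\mathfrak{X}}_i$ is handled exactly as $\mathfrak{V}-\tilde{\mathfrak{V}}_i$: write each of the three factor differences as $\frac{1}{n}\langle h,\delta\rangle$ with $\norm{h}=O_{L_k}(\sqrt{n}\polylog(n))$, and bound $\norm{\delta}=O_{L_k}(\polylog(n)/\sqrt{n})$ by Lemmas \ref{lemma:l2_norm_surrogate_full} and \ref{lemma:l2_norm_surrogate_full_reweighted}. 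This gives $O_{L_k}(\polylog(n)/n)$. For $\tilde{\mathfrak{X}}_i-\mathfrak{X}_\smi$, I would again telescope the three factors and Taylor-expand each coefficient to second order around $(r_{j,\smi},u_{j,\smi})$. The rank-one $x_i$ terms and the second-order remainders are $O_{L_k}(\polylog(n)/n)$ by Lemma \ref{lem:supsup}. In the first-order terms, the surrogate matrices are replaced by their leave-one-out versions at cost $O(\polylog(n)/n)$, using $\norm{\tilde{H}_i-H_\smi},\norm{\tilde{G}_i-G_\smi},\norm{\tilde{H}_{0,i}-H_{0,\smi}}=O_{L_k}(\polylog(n)/\sqrt{n})$.

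\textbf{Main obstacle.} The remaining linear terms have the form
\begin{align}
\frac{1}{n}\sum_{j\neq i} a_j\langle x_j,v_i\rangle \frac{x_j^\top M_\smi x_j}{n},
\end{align}
with $v_i\in\{\eta_i,\zeta_i,\xi_i\}$ and $M_\smi$ a product of leave-one-out matrices. Each $v_i$ equals a bounded scalar times $\frac{1}{n}K_\smi x_i$ for an $x_i$-independent $K_\smi$ of bounded norm. Taking the expectation over $x_i$ first, as in the end of the proof of Lemma \ref{lemma:V_concentrates}, bounds the square by
\begin{align}
\norm{\frac{1}{n^2}h^\top X_\smi K_\smi^\top}^2 \le \frac{\polylog(n)}{n^2}, \qquad \norm{h}=O(\sqrt{n}\polylog(n)).
\end{align}
Summing over $i$ then yields $\Var{\mathfrak{X}}=O(\polylog(n)/n)$. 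The delicate points I expect are:
\begin{itemize}
\item the mean $\mu_{c_i}$ inside $x_i$ is not centered, so $\mathbb{E}_{x_i}[x_ix_i^\top]=I+\mu_{c_i}\mu_{c_i}^\top$, which still has bounded norm;
\item the bookkeeping of the nine first-order terms coming from the three factors;
\item the fact that the $\tilde{u}$ coefficients in $\tilde{H}_{0,i}$ involve only $\xi_i$.
\end{itemize}
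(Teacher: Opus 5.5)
Your plan is sound in outline, and for the variance bound it is the paper's argument. The ingredients are the same: Efron--Stein through $\tilde{\mathfrak{X}}_i$, the $\frac{1}{n}\inprod{h,\delta}$ bounds for $\mathfrak{X}-\tilde{\mathfrak{X}}_i$, a second-order Taylor expansion for $\tilde{\mathfrak{X}}_i-\mathfrak{X}_\smi$, and decoupling of $x_i$ by averaging over it first.

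For the first point you take a genuinely shorter route. The paper writes $\mathfrak{X}_\smi-\mathfrak{X}$ as seven traces, including products of two and three Hessian differences. It controls them through bounds on cross quadratic forms such as $x_j^\top H_{0,\smi}^{-1}H^{-1}x_k$, obtained via Sherman--Morrison and leave-two-out Hessians. Your exact three-term telescoping applies the resolvent identity once per factor and never produces products of differences. The rank-one $x_i$ piece contributes $O(\polylog(n)/n)$ after the normalized trace. The $j\neq i$ piece is bounded by $\frac{d}{n}$ times an operator norm carrying $\sup_i\sup_{j\neq i}|\Lambda_{jj}|$, exactly as the paper itself does for $\mathfrak{V}$. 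This is valid, and the paper's finer cross-term estimates buy nothing for an $O(\polylog(n)/\sqrt{n})$ target.

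One step of your second point, however, fails as justified; the paper asserts the same thing. The bounds $\norm{\tilde{H}_i-H_\smi},\norm{\tilde{G}_i-G_\smi},\norm{\tilde{H}_{0,i}-H_{0,\smi}}=O_{L_k}(\polylog(n)/\sqrt{n})$ are false. Each difference contains the $j=i$ summand, e.g.\ $\frac{1}{n}\partial_r^2\ell_i(\tilde{r}_{i,i},\tilde{u}_{i,i})x_ix_i^\top$, whose operator norm is of order $\norm{x_i}^2/n\approx d/n=\Theta(1)$.

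Let $c_j$ denote the first-order coefficients, with $\sup_{j\neq i}|c_j|=O_{L_k}(\polylog(n)/\sqrt{n})$. Bounding the replacement cost by $\sup_{j\neq i}|c_j|\,\sup_j\frac{\norm{x_j}^2}{n}\,\norm{\tilde{H}_i^{-1}-H_\smi^{-1}}$ then only gives $O(\polylog(n)/\sqrt{n})$. After squaring and summing over $i$ this yields $O(\polylog(n))$ instead of $O(\polylog(n)/n)$.

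The fix is to split each difference, via the resolvent identity, into its rank-one $x_i$ part and its $j\neq i$ part. The latter really has operator norm $O_{L_k}(\polylog(n)/\sqrt{n})$ and costs $O(\polylog(n)/n)$ as you claim. The former produces terms of the form
\[
\frac{1}{n^3}\Big|\sum_{j\neq i}c_j\,(x_j^\top P x_i)(x_i^\top Q x_j)\Big|\le \frac{1}{n^3}\sup_{j\neq i}|c_j|\,\norm{X_\smi P x_i}\,\norm{X_\smi Q^\top x_i}=O_{L_k}\Big(\frac{\polylog(n)}{n^{3/2}}\Big).
\]
Here $\norm{P},\norm{Q}=O_{L_k}(\polylog(n))$, and $P,Q$ may depend on $x_i$. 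The bound uses Cauchy--Schwarz in $j$ together with $\norm{X_\smi Px_i}\le\norm{X_\smi}\norm{P}\norm{x_i}=O_{L_k}(n\polylog(n))$. With this repair the rest of your argument goes through.

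A minor point: the second-order expansions of the $\tilde{G}_i$ and $\tilde{H}_{0,i}$ coefficients need $\partial_r\partial_u^3\ell$ and $\partial_u^4\ell_0$. These are covered by the four-derivative hypothesis stated in the main text, but not by the appendix assumptions.
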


\begin{proof}
    The proof proceeds similarly to that for $\mathfrak{V}$. From Lemma G.2 of \cite{el2018impact} applied to the symmetrized matrix $\sfrac{1}{2}(H_\smi^{-1}G_\smi H_{0,\smi}^{-1}+ H_{0,\smi}^{-1}G_\smi H_\smi^{-1})$,
    \begin{align}
     \sup_{i\in[n]}\left|\upsilon_i-\mathfrak{X}_\smi\right|=O_{L_k}\left(\frac{\polylog(n)}{\lambda \sqrt{n}}\right),
\end{align}
where $\mathfrak{X}_\smi=\sfrac{1}{n}\tr[H_\smi^{-1}G_\smi H_{0,\smi}^{-1}]$. To complete the derivation of the first point, one now needs to bound $|\mathfrak{X}_\smi-\mathfrak{X}|$. Developing the difference,
\begin{align}\label{eq:decompo_diff_X}
    \mathfrak{X}_\smi-\mathfrak{X}=&\frac{1}{n}\tr[H^{-1}(H_\smi-H)H_\smi^{-1} G_\smi H_{0,\smi}^{-1}]+\frac{1}{n}\tr[H^{-1}(G- G_\smi)H_{0,\smi}^{-1}]\\
    &+\frac{1}{n}\tr[H^{-1} G_\smi H_0^{-1}(H_{0,\smi}-H_0)H_{0,\smi}^{-1}]+\frac{1}{n}\tr[H^{-1}(H_\smi-H)H_\smi^{-1} G_\smi H_0^{-1}(H_{0,\smi}-H_0)H_{0,\smi}^{-1}]\\
    &+\frac{1}{n}\tr[H^{-1} (G- G_\smi) H_0^{-1}(H_{0,\smi}-H_0)H_{0,\smi}^{-1}]+\frac{1}{n}\tr[H^{-1}(H_\smi-H)H_\smi^{-1}  (G- G_\smi)  H_{0,\smi}^{-1}]\\
    &+\frac{1}{n}\tr[H^{-1}(H_\smi-H)H_\smi^{-1}  (G- G_\smi)  H_0^{-1}(H_{0,\smi}-H_0)H_{0,\smi}^{-1}]
\end{align}
The control of each of these seven terms proceeds in much the same way. We illustrate the reasoning on the more complex last term.
\begin{align}\label{eq:develop_traces}
    &\left|\frac{1}{n}\tr[ H^{-1}(H_\smi-H)H_\smi^{-1}  (G- G_\smi)  H_0^{-1}(H_{0,\smi}-H_0)H_{0,\smi}^{-1}]\right|\\
    &\le  a_1\frac{\norm{x_i}^6}{\lambda^2\lambda_0^2n^4}+a_2 \sup_{j\ne i}|u_j-u_{j,\smi}|\sup_{j\ne i}\frac{\norm{x_i}^4\norm{x_j}^2}{\lambda^2\lambda_0^2 n^3}+a_3(\sup_{j\ne i}|u_j-u_{j,\smi}|+\sup_{j\ne i}|r_j-r_{j,\smi}|)\sup_{j\ne i}\frac{\norm{x_i}^4\norm{x_j}^2}{\lambda^2\lambda_0^2 n^3}\\
   &+a_4(\sup_{j\ne i}|u_j-u_{j,\smi}|+\sup_{j\ne i}|r_j-r_{j,\smi}|)\sup_{j\ne i}\sup_{j\ne i}\frac{\norm{x_i}^4\norm{x_j}^2}{\lambda^2\lambda_0^2 n^3}\\
   &+a_5(\sup_{j\ne i}|u_j-u_{j,\smi}|+\sup_{j\ne i}|r_j-r_{j,\smi}|)\sup_{j\ne i}|u_j-u_{j,\smi}| \\
   &\qquad\left[\sup_{j\ne i}\sup_{k\ne i,j} \frac{|x_i^\top H_\smi^{-1}x_jx_j^\top H_0^{-1}x_kx_k^\top H_{0,\smi}^{-1}H^{-1}x_i|}{n^2}+\frac{1}{n^3}\sup_{j\ne i} \frac{\norm{x_i}^2\norm{x_j}^4}{\lambda^2\lambda_0^2}\right]\\
   &+a_6(\sup_{j\ne i}|u_j-u_{j,\smi}|+\sup_{j\ne i}|r_j-r_{j,\smi}|)\sup_{j\ne i}|u_j-u_{j,\smi}| \\
   &\qquad\left[\sup_{j\ne i}\sup_{k\ne i,j} \frac{|x_j^\top H_\smi^{-1}x_ix_i^\top H_0^{-1}x_kx_k^\top H_{0,\smi}^{-1}H^{-1}x_j|}{n^2}+\frac{1}{n^3}\sup_{j\ne i} \frac{\norm{x_i}^2\norm{x_j}^4}{\lambda^2\lambda_0^2}\right]\\
   &+a_7(\sup_{j\ne i}|u_j-u_{j,\smi}|+\sup_{j\ne i}|r_j-r_{j,\smi}|)^2\left[\sup_{j\ne i}\sup_{k\ne i,j} \frac{|x_j^\top H_\smi^{-1}x_kx_k^\top H_0^{-1}x_ix_i^\top H_{0,\smi}^{-1}H^{-1}x_j|}{n^2}+\frac{1}{n^3}\sup_{j\ne i} \frac{\norm{x_i}^2\norm{x_j}^4}{\lambda^2\lambda_0^2}\right]\\
   &+ a_8 (\sup_{j\ne i}|u_j-u_{j,\smi}|+\sup_{j\ne i}|r_j-r_{j,\smi}|)^2\sup_{j\ne i}|u_j-u_{j,\smi}|\\
   &\qquad \Bigg[\sup_{j\ne i}\sup_{k\ne i,j} \sup_{l\ne i,j,k} \frac{|x_j^\top H_\smi^{-1}x_kx_k^\top H_0^{-1}x_lx_l^\top H_{0,\smi}^{-1}H^{-1}x_j|}{n}+\frac{3}{n^2}\sup_{j\ne i}\sup_{k\ne i,j} \frac{\norm{x_k}^2\norm{x_j}^4}{\lambda^2\lambda_0^2}+\frac{1}{n^3}\sup_{j\ne i}\frac{\norm{x_j}^6}{\lambda^2\lambda_0^2}\Bigg]
\end{align}
The absolute constants $a_m$ are simple products or maxima of the $\ell_\infty$ norms $\norm{\partial_r^2\ell}_\infty$,$\norm{\partial_u^2\ell_0}_\infty$, $\norm{\partial_r^3\ell}_\infty$,$\norm{\partial_r\partial_u\ell}_\infty$,  $\norm{\partial_r^2\partial_u\ell}_\infty$, $\norm{\partial_r\partial_u^2\ell}_\infty$, $\norm{\partial_u^3\ell_0}_\infty$. In the above, we resorted to a coarse Cauchy-Schwartz bound wherever it proved sufficient. It follows from Lemma \ref{lemma:quad} that when $j\ne i$,
\begin{align}
    \sup_{j\ne i}|x_i^\top H_\smi^{-1} x_j|\le O_{L_k}\left(\sqrt{n}\polylog(n)\right).
\end{align}
We now turn to the other terms. Let $i\ne j$, and let us focus on bounding the quadratic term $|x_i^\top H_0^{-1} x_j|$. 
We first note that
\begin{align}
    \norm{H_0^{-1}-\left(H_{0,\smi}+\frac{\partial_u^2\ell_{0,i}(u_i)}{n} x_ix_i^\top \right)^{-1}}&\le \frac{1}{\lambda_0^2}\norm{\partial_u^3\ell_0}_\infty \sup_{j\ne i}|u_j-u_{j,\smi}|\frac{\norm{X_\smi}^2}{n}=O_{L_k}\left(\frac{\polylog(n)}{\sqrt{n}}\right),
\end{align}
using Lemma \ref{lemma:normx} and Proposition \ref{prop: base_estimator_approxs}. 
Using Woodbury's inversion lemma to make $H_{0,\smi}$ appear,
\begin{align}
    \sup_{j\ne i}\left|x_i^\top H_0^{-1} x_j\right|&=\sup_{j\ne i}\left|
    x_i^\top
    \left(
    H_{0,\smi}^{-1}-\frac{\partial_u^2\ell_{0,i}(u_i)}{n}\frac{1}{1+\frac{\partial_u^2\ell_{0,i}(u_i)}{n} x_i^\top H_{0,\smi}^{-1} x_i }H_{0,\smi}^{-1} x_ix_i^\top H_{0,\smi}^{-1}\right) x_j
    \right|\\
    &\qquad +O_{L_k}\left(\sqrt{n}\polylog(n)\right)\\
    &\le \sup_{j\ne i} \left|x_i^\top H_{0,\smi}^{-1} x_j\right|+\frac{\norm{\partial_u^2\ell_0}_\infty}{\lambda_0} \frac{\norm{x_i}^2}{n}\sup_{j\ne i}\left|x_i^\top H_{0,\smi}^{-1} x_j\right|+O_{L_k}\left(\sqrt{n}\polylog(n)\right)\\
    &= O_{L_k}\left(\sqrt{n}\polylog(n)\right),
\end{align}
again using Lemma \ref{lemma:quad}. Similarly, 
\begin{align}
    \norm{H^{-1}-\left(H_{\smi}+\frac{\partial_r^2\ell_{i}(r_i,u_i)}{n} x_ix_i^\top \right)^{-1}}=O_{L_k}\left(\frac{\polylog(n)}{\sqrt{n}}\right),
\end{align}
from which it follows that for $j\ne i$,
\begin{align}
    \sup_{j\ne i}\left|x_j^\top H_{0,\smi}^{-1} H^{-1} x_i\right|&=\sup_{j\ne i}\left|
    x_j^\top H_{0,\smi}^{-1}
    \left(
    H_{\smi}^{-1}-\frac{\partial_r^2\ell_i(r_i,u_i)}{n}\frac{1}{1+\frac{\partial_r^2\ell_i(r_i,u_i)}{n} x_i^\top H_{\smi}^{-1} x_i }H_{\smi}^{-1} x_ix_i^\top H_{\smi}^{-1}\right) x_j
    \right|\\
    &\qquad +O_{L_k}\left(\sqrt{n}\polylog(n)\right)\\
    &\le \sup_{j\ne i} \left|x_i^\top H_{0,\smi}^{-1} x_j\right|+\frac{\norm{\partial_u^2\ell_0}_\infty}{\lambda_0} \frac{\norm{x_i}^2}{n}\sup_{j\ne i}\left|x_i^\top H_{0,\smi}^{-1} x_j\right|+O_{L_k}\left(\sqrt{n}\polylog(n)\right)\\
    &= O_{L_k}\left(\sqrt{n}\polylog(n)\right).
\end{align}
Finally, one needs to control $\sup_{j\ne i}\sup_{k\ne i,j} \left|x_j^\top H_{0,\smi}^{-1} H^{-1} x_k\right|$ for $k\ne j$ and $k,j\ne i$. To this end, we will approximate both $H_{0,\setminus j}^{-1}$ and $ H^{-1}$ by their leave-$j$ out versions. We have already controlled the difference between $ H^{-1}$and $\left(H_{\smi}+\sfrac{\partial_r^2\ell_{j}(r_j,u_j)}{n} x_jx_j^\top \right)^{-1}$. By the same token,
\begin{align}
    \sup_{j\ne i}\norm{H_{0,\smi}^{-1}-\left(H_{0,\smi,\smj}+\frac{\partial_u^2\ell_{0,j}(u_{j,\smi})}{n} x_jx_j^\top \right)^{-1}}&\le \frac{1}{\lambda_0^2}\norm{\partial_u^3\ell_0}_\infty \sup_{k\ne i,j}|u_{k,\smi}-u_{k,\smi,\smj}|\sup_{j\ne i}\frac{\norm{X_{\smi,\smj}}^2}{n}\\
    &=O_{L_k}\left(\frac{\polylog(n)}{\sqrt{n}}\right),
\end{align}
We denoted $X_{\smi,\smj}\in\R^{(n-2)\times d}$ the matrix with rows $\{x_k\}_{k\ne i,j}$ and 
\begin{align}
    H_{0,\smi,\smj}=\frac{1}{n}\sum\limits_{k\ne i,j} \partial_u^2\ell_0(u_{k,\smi,\smj})x_kx_k^\top +\lambda_0I_d,
\end{align}
using the fact that Proposition \ref{prop: base_estimator_approxs} and Lemma \ref{lemma:normx} directly apply to $X_\smi$, yielding control over the leave-two-out quantities. Thus,
\begin{align}
   &\sup_{j\ne i}\sup_{k\ne i,j}  \left|x_j^\top H_{0,\smi}^{-1} H^{-1} x_k\right|\\
   &\qquad\qquad=\sup_{j\ne i}\sup_{k\ne i,j} \Bigg|
    x_j^\top  \left(
    H_{\smj}^{-1}-\frac{\partial_r^2\ell_j(r_j,u_j)}{n}\frac{1}{1+\frac{\partial_r^2\ell_j(r_j,u_j)}{n} x_j^\top H_{\smj}^{-1} x_j }H_{\smj}^{-1} x_jx_j^\top H_{\smj}^{-1}\right)\\
    &\qquad\qquad\qquad \left(
    H_{0,\smi,\smj}^{-1}-\frac{\partial_u^2\ell_{0,j}(u_{j,\smi})}{n}\frac{1}{1+\frac{\partial_u^2\ell_{0,j}(u_{j,\smi})}{n} x_j^\top H_{0,\smi,\smj}^{-1} x_j }H_{0,\smi,\smj}^{-1} x_jx_j^\top H_{0,\smi,\smj}^{-1}\right)x_k
    \Bigg|\\
    &\qquad\qquad\qquad+ O_{L_k}\left(\polylog(n)\sqrt{n}\right)\\
    &\qquad\qquad=\sup_{j\ne i}\sup_{k\ne i,j}\left| x_j^\top H_{\smj}^{-1} H_{0,\smi,\smj}^{-1} x_k\right|+\norm{\partial_u^2\ell_{0}}_\infty \sup_{j\ne i}\frac{\norm{x_j}^2}{\lambda \lambda_0 n}\sup_{j\ne i}\sup_{k\ne i,j}\left|x_j^\top H_{0,\smi,\smj}^{-1}x_k \right|\\
    &\qquad\qquad\qquad+\norm{\partial_r^2\ell}_\infty \sup_{j\ne i}\frac{\norm{x_j}^2}{\lambda n}  \sup_{j\ne i}\sup_{k\ne i,j}\left|x_j^\top H_\smj^{-1} H_{0,\smi,\smj}^{-1}x_k \right|\\
    &\qquad\qquad\qquad+\norm{\partial_r^2\ell}_\infty \norm{\partial_u^2\ell_{0}}_\infty\sup_{j\ne i} \frac{\norm{x_j}^4}{\lambda^2 \lambda_0 n^2}\sup_{j\ne i}\sup_{k\ne i,j}\left|x_j^\top H_{0,\smi,\smj}^{-1}x_k \right|+ O_{L_k}\left(\polylog(n)\sqrt{n}\right)\\
    &\qquad\qquad=O_{L_k}\left(\polylog(n)\sqrt{n}\right),
\end{align}
using repeatedly the bound offered by Lemma \ref{lemma:quad} in the last step. The same strategy shows
\begin{align}
    \sup_{j\ne i}\sup_{k\ne i,j}  \left|x_j^\top H_{\smi}^{-1} x_k\right|=O_{L_k}\left(\polylog(n)\sqrt{n}\right).
\end{align}
Putting all these results together finally leads to the bounds
\begin{align}
    & \sup_{j\ne i}\sup_{k\ne i,j}\frac{|x_i^\top H_\smi^{-1}x_jx_j^\top H_0^{-1}x_kx_k^\top H_{0,\smi}^{-1}H^{-1}x_i|}{n^2}= O_{L_k}\left(\frac{\polylog(n)}{\sqrt{n}}\right)\\
    &\sup_{j\ne i}\sup_{k\ne i,j} \frac{|x_j^\top H_\smi^{-1}x_kx_k^\top H_0^{-1}x_ix_i^\top H_{0,\smi}^{-1}H^{-1}x_j|}{n^2}= O_{L_k}\left(\frac{\polylog(n)}{\sqrt{n}}\right)\\
    & \sup_{j\ne i}\sup_{k\ne i,j} \frac{|x_j^\top H_\smi^{-1}x_ix_i^\top H_0^{-1}x_kx_k^\top H_{0,\smi}^{-1}H^{-1}x_j|}{n^2}= O_{L_k}\left(\frac{\polylog(n)}{\sqrt{n}}\right)\\
    &\sup_{j\ne i}\sup_{k\ne i,j} \sup_{l\ne i,j,k} \frac{|x_j^\top H_\smi^{-1}x_kx_k^\top H_0^{-1}x_lx_l^\top H_{0,\smi}^{-1}H^{-1}x_j|}{n}=O_{L_k}\left(\polylog(n)\sqrt{n}\right).
\end{align}
Incorporating these bounds into \eqref{eq:develop_traces}, and using Proposition \ref{prop: base_estimator_approxs} and Proposition \ref{prop: reweighted_estimator_approxs} to bound the differences of residuals, allows one to reach
\begin{align}
    \left|\frac{1}{n}\tr[ H^{-1}(H_\smi-H)H_\smi^{-1}  (G- G_\smi)  H_0^{-1}(H_{0,\smi}-H_0)H_{0,\smi}^{-1}]\right|= O_{L_k}\left(\frac{\polylog(n)}{\sqrt{n}}\right).
\end{align}
An identical proof may be repeated for each of the other terms of \eqref{eq:decompo_diff_X}, and finally yield 
\begin{align}
    | \mathfrak{X}_\smi-\mathfrak{X}|= O_{L_k}\left(\frac{\polylog(n)}{\sqrt{n}}\right),
\end{align}
implying the first claim.\\

We now turn to the second claim.  Starting again from the Efron-Stein lemma,
\begin{align}
    \Var{\mathfrak{X}} \le \sum\limits_{i\in[n]} \Ea{(\mathfrak{X}-\mathfrak{X}_\smi)^2}\le 2\sum\limits_{i\in[n]} \left(\Ea{(\mathfrak{X}-\tilde{\mathfrak{X}}_i)^2}+\Ea{(\tilde{\mathfrak{X}}_i-\mathfrak{X}_\smi)^2}\right),
\end{align}
We defined $\tilde{\mathfrak{X}}_i=\sfrac{1}{n}\tr[\tilde{H}_i^{-1}\tilde{G}_i\tilde{H}_{0,i}^{-1}]$. We focus on the first term  $\Ea{(\mathfrak{X}-\tilde{\mathfrak{X}}_i)^2}$. We can decompose
\begin{align}
    \mathfrak{X}-\tilde{\mathfrak{X}}_i=\frac{1}{n}\tr[(H^{-1}-\tilde{H}_i^{-1})\tilde{G}_i\tilde{H}_{0,i}^{-1}]+\frac{1}{n}\tr[H^{-1}(G-\tilde{G}_i)\tilde{H}_{0,i}^{-1}]+\frac{1}{n}\tr[H^{-1}G(H_0^{-1}-\tilde{H}_{0,i}^{-1})].
\end{align}
The first term can be explicitly written as
\begin{align}
    &\left|\frac{1}{n}\tr[(H^{-1}-\tilde{H}_i^{-1})\tilde{G}_i\tilde{H}_{0,i}^{-1}]\right|\\
    &=\left|\frac{1}{n}\sum\limits_{j\in[n]}\left[\partial_r^3\ell_j(\check{r}_{j,i},\check{u}_{j,i})(r_j-\tilde{r}_{j,i})+\partial_r^2\partial_u\ell_j(\check{r}_{j,i},\check{u}_{j,i})(u_j-\tilde{u}_{j,i})\right]\frac{x_j^\top \tilde{H}_i^{-1}\tilde{G}_i\tilde{H}_{0,i}^{-1}H^{-1}x_j}{n}\right|\\
    &\le \frac{1}{n}|\inprod{h^r,\delta_r}|+ \frac{1}{n}|\inprod{h^u,\delta_u}| \le \frac{1}{n}\left(\norm{h^r}\norm{\delta^r}+\norm{h^u}\norm{\delta^u}\right)
\end{align}
for some $\check{r}_{j,i},\check{u}_{j,i}$ lying on the segment connecting $\tilde{r}_{j,i},\tilde{u}_{j,i}$ and $r_j,u_j$, from the mean value theorem.
We introduced the vectors $h^r\in \R^n, h^u\in \R^n, $ with entries $h^r_j=\partial_r^3\ell_j(\check{r}_{j,i},\check{u}_{j,i})\sfrac{x_j^\top \tilde{H}_i^{-1}\tilde{G}_i\tilde{H}_{0,i}^{-1}H^{-1}x_j}{n}$ and $h^u_j=\partial_r^2\partial_u\ell_j(\check{r}_{j,i},\check{u}_{j,i})\sfrac{x_j^\top \tilde{G}_i\tilde{H}_{0,i}^{-1}x_j}{n}$, alongside $\delta_r,\delta_u\in\R^n$ with entries $\delta_{r,j}=  (r_j -\tilde{r}_{j,i})$ and $\delta_{u,j}=  (u_j -\tilde{u}_{j,i})$. Lemma \ref{lemma:l2_norm_surrogate_full} and Lemma \ref{lemma:l2_norm_surrogate_full_reweighted} ensure that $\norm{\delta_r},\norm{\delta_u}=O_{L_k}(\sfrac{\polylog(n)}{\sqrt{n}}).$ We now aim to establish $O_{L_k}(\polylog(n)\sqrt{n})$ control of $\norm{h^u},\norm{h^r}$. These follow from the bounds
\begin{align}
    &\norm{h^r} \le \sqrt{n}\norm{\partial_r^3\ell}_\infty \sup_{j\in[n]}\frac{\norm{x_j}^2}{\lambda_0\lambda^2 n}\norm{\tilde{G}_i},\\
    &\norm{h^u} \le \sqrt{n}\norm{\partial_r^2\partial_u\ell}_\infty \sup_{j\in[n]}\frac{\norm{x_j}^2}{\lambda_0 \lambda^2 n}\norm{\tilde{G}_i},
\end{align}
Lemma \ref{lemma:normx} and the bound $\norm{\tilde{G}_i}\le \norm{\partial_r\partial_u\ell}_\infty \sfrac{\norm{X}^2}{n}=O_{L_k}(\polylog(n)). $ Similarly, one can establish
\begin{align}
    \frac{1}{n}\tr[(H^{-1}-\tilde{H}_i^{-1})\tilde{G}_i\tilde{H}_{0,i}^{-1}],\frac{1}{n}\tr[H^{-1}(G-\tilde{G}_i)\tilde{H}_{0,i}^{-1}],\frac{1}{n}\tr[H^{-1}G(H_0^{-1}-\tilde{H}_{0,i}^{-1})]=O_{L_k}\left(\frac{\polylog(n)}{n}\right),
\end{align}
from which it follows that
\begin{align}
    \Ea{(\mathfrak{X}-\tilde{\mathfrak{X}}_i)^2}=O\left(\frac{\polylog(n)}{n^2}\right).
\end{align}
We now turn to controlling the second term $\Ea{(\mathfrak{X}_\smi-\tilde{\mathfrak{X}}_i)^2}$ in the Efron-Stein upper bound. We start from a similar decomposition
\begin{align}
    (\mathfrak{X}_\smi-\tilde{\mathfrak{X}}_i)^2\le \frac{3}{n^2}\tr[(H_\smi^{-1}-\tilde{H}_i^{-1})\tilde{G}_i\tilde{H}_{0,i}^{-1}]^2+\frac{3}{n}\tr[H_\smi^{-1}(G_\smi-\tilde{G}_i)\tilde{H}_{0,i}^{-1}]^2+\frac{3}{n}\tr[H_\smi^{-1}G_\smi (H_{0,\smi}^{-1}-\tilde{H}_{0,i}^{-1})]^2.\label{eq:decompo_X_traces_2}
\end{align}
Let us examine the first term; the two other terms can be controlled using identical steps. 
The strategy follows very closely that of the corresponding step in the proof of the concentration of $\mathfrak{V}$. Using a second-order Taylor expansion:
\begin{align}
   &\frac{\tr[(H_\smi^{-1}-\tilde{H}_i^{-1})\tilde{G}_i\tilde{H}_{0,i}^{-1}]}{n}\\
   &=\frac{1}{n}\partial_r^2\ell_i(\tilde{r}_{i,i},\tilde{u}_{i,i})\frac{x_i^\top \tilde{H}_i^{-1} \tilde{G}_i\tilde{H}_{0,i}^{-1}H^{-1}_\smi x_i}{n}\\
    &\qquad+\frac{1}{n}\sum\limits_{j\ne i} \left(\partial_r^3\ell_j(r_{j,\smi},u_{j,\smi})(\tilde{r}_{j,i}-r_{j,\smi})+\partial_r^2\partial_u\ell_j(r_{j,\smi},u_{j,\smi})(\tilde{u}_{j,i}-u_{j,\smi})\right)\frac{x_j^\top \tilde{H}_i^{-1} \tilde{G}_i\tilde{H}_{0,i}^{-1}H^{-1}_\smi x_j}{n}\\
    &\qquad+\frac{1}{2n}\sum\limits_{j\ne i} \Bigg(\partial_r^4\ell_j(r_{j,\smi},u_{j,\smi})^2(\tilde{r}_{j,i}-r_{j,\smi})^2+\partial_r^2\partial_u^2\ell_j(r_{j,\smi},u_{j,\smi})(\tilde{u}_{j,i}-u_{j,\smi})^2\\
    &\qquad\qquad \qquad \qquad +2\partial_r^3\partial_u\ell_j(r_{j,\smi},u_{j,\smi})(\tilde{u}_{j,i}-u_{j,\smi})(\tilde{r}_{j,i}-r_{j,\smi})\Bigg)\frac{x_j^\top \tilde{H}_i^{-1} \tilde{G}_i\tilde{H}_{0,i}^{-1} H^{-1}_\smi x_j}{n}. \label{eq:decompo_X_ES}
\end{align}
The first term can be bounded as
\begin{align}
    \left|\frac{1}{n}\partial_r^2\ell_i(\tilde{r}_{i,i},\tilde{u}_{i,i})\frac{x_i^\top  \tilde{H}_i^{-1} \tilde{G}_i\tilde{H}_{0,i}^{-1}H^{-1}_\smi x_i}{n}\right|=\frac{\norm{\partial_r^2\ell}_\infty \norm{\partial_r\partial_u\ell}_\infty }{\lambda^2\lambda_0} O_{L_k}\left(\frac{\polylog(n)}{n}\right).
\end{align}
The control of the last term follows from Propositions \ref{prop: base_estimator_approxs} and \ref{prop: reweighted_estimator_approxs} and Lemma \ref{eq: supsup}:
\begin{align}
    \left|\scriptstyle\frac{1}{2n}\sum\limits_{j\ne i} \partial_r^4\ell_j(r_{j,\smi},u_{j,\smi})^2(\tilde{r}_{j,i}-r_{j,\smi})^2\frac{x_j^\top  \tilde{H}_i^{-1} \tilde{G}_i\tilde{H}_{0,i}^{-1}H^{-1}_\smi x_j}{n}\right|&\le \frac{\norm{\partial_r^4\ell}_\infty \norm{\partial_r\partial_u\ell}_\infty}{2\lambda^2\lambda_0} \left(
    \sup_{j\ne i}\frac{\norm{x_j}}{\sqrt{n}} \sup_{j\ne i}|\tilde{r}_{j,i}-r_{j,\smi}|
    \right)^2\\
    &=O_{L_k}\left(\frac{\polylog(n)}{n}\right)
\end{align}
and
\begin{align}
     \left|\scriptstyle\frac{1}{2n}\sum\limits_{j\ne i} \partial_r^2\partial_u^2\ell_j(r_{j,\smi},u_{j,\smi})^2(\tilde{u}_{j,i}-u_{j,\smi})^2\frac{x_j^\top  \tilde{H}_i^{-1} \tilde{G}_i\tilde{H}_{0,i}^{-1}H^{-1}_\smi x_j}{n}\right|&\le  \frac{\norm{\partial_r^2\partial_u^2\ell}_\infty\norm{\partial_r\partial_u\ell}_\infty}{2\lambda^2\lambda_0} \left(
    \sup_{j\ne i}\frac{\norm{x_j}}{\sqrt{n}} \sup_{j\ne i}|\tilde{u}_{j,i}-u_{j,\smi}|
    \right)\\
    &=O_{L_k}\left(\frac{\polylog(n)}{n}\right)
\end{align}
and
\begin{align}
     &\left|\frac{1}{2n}\sum\limits_{j\ne i} \partial_r^3\partial_u\ell_j(r_{j,\smi},u_{j,\smi})^2(\tilde{u}_{j,i}-u_{j,\smi})(\tilde{r}_{j,i}-r_{j,\smi})\frac{x_j^\top \tilde{H}_i^{-1} \tilde{G}_i\tilde{H}_{0,i}^{-1}H^{-1}_\smi x_j}{n}\right|\\&\le \frac{\norm{\partial_r\partial_u\ell}_\infty \norm{\partial_r^2\partial_u^2\ell}_\infty }{2\lambda^2\lambda_0} \left(
    \sup_{j\ne i}\frac{\norm{x_j}}{\sqrt{n}} \sup_{j\ne i}|\tilde{u}_{j,i}-u_{j,\smi}|\sup_{j\ne i}|\tilde{r}_{j,i}-r_{j,\smi}|
    \right)=O_{L_k}\left(\frac{\polylog(n)}{n}\right).
\end{align}
The control of the second term of \eqref{eq:decompo_X_ES} requires a finer analysis of the statistical dependencies on $x_i$. As for the proof of the concentration of $\mathfrak{V}$, the key step is to approximate the matrices $\tilde{G}_i,\tilde{H}_{0,i}^{-1}$ by their leave-$i$ out approximations, in order to unravel all statistical correlations with $x_i$. On the one hand,
\begin{align}
    &\left|\scriptstyle\frac{1}{n}\sum\limits_{j\ne i} \left(\partial_r^3\ell_j(r_{j,\smi},u_{j,\smi})(\tilde{r}_{j,i}-r_{j,\smi})+\partial_r^2\partial_u\ell_j(r_{j,\smi},u_{j,\smi})(\tilde{u}_{j,i}-u_{j,\smi})\right)\frac{x_j^\top  \left(\tilde{H}_i^{-1}\tilde{G}_i\tilde{H}_{0,i}^{-1}-H_\smi^{-1}G_\smi H_{0,\smi}^{-1}\right)H^{-1}_\smi x_j}{n}\right|\\
    &\le a_1\left(\sup_{i\in [n]}\sup_{j\ne i} |r_{j,\smi}-\tilde{r}_{j,i}|+\sup_{i\in [n]}\sup_{j\ne i}|u_{j,\smi}-\tilde{u}_{j,i}|\right)\frac{1}{\lambda \lambda_0} \sup_{j\in[n]}\frac{\norm{x_j}^2}{n}\\
    &\qquad \Bigg[{\scriptstyle b_1\sup_{i\in[n]}\norm{\tilde{H}_i^{-1}-H^{-1}_\smi}+b_2\sup_{i\in[n]}\norm{\tilde{G}_i-G_\smi}+b_3\sup_{i\in[n]}\norm{\tilde{H}_{0,i}^{-1}-H^{-1}_{0,\smi}}+b_4\sup_{i\in[n]}\norm{\tilde{H}_i^{-1}-H^{-1}_\smi}\norm{\tilde{H}_{0,i}^{-1}-H^{-1}_{0,\smi}}}\\
    &\qquad{\scriptstyle +b_5\sup_{i\in[n]}\norm{\tilde{H}_i^{-1}-H^{-1}_\smi}\norm{\tilde{G}_i-G_\smi}+b_6\sup_{i\in[n]}\norm{\tilde{G}_i-G_\smi}\norm{\tilde{H}_{0,i}^{-1}-H^{-1}_{0,\smi}}+b_7\sup_{i\in[n]}\norm{\tilde{H}_i^{-1}-H^{-1}_\smi}\norm{\tilde{G}_i-G_\smi}\norm{\tilde{H}_{0,i}^{-1}-H^{-1}_{0,\smi}}}\Bigg]
\end{align}
with $a_1=(\norm{\partial_r^3\ell}_\infty \vee\norm{\partial_r^2\partial_u\ell}_\infty)\norm{\partial_r\partial_u\ell}_\infty,$ and $b_1,b_2,b_3,b_4,b_5,b_6, b_7=O_{L_k}(\polylog(n))$.
We recall from the proof of Lemma \ref{lemma:V_concentrates} that
\begin{align}
    \norm{\tilde{H}_i-H_\smi}&=O_{L_k}\left(\frac{\polylog(n)}{\sqrt{n}}\right).
\end{align}
Similarly,
\begin{align}
    \norm{\tilde{H}_{0,i}-H_{0,\smi}}&=\norm{\frac{1}{n}\partial_u^2\ell_{0,i}(\tilde{u}_{i,i})\frac{x_ix_i^\top}{n}+\frac{1}{n}\sum\limits_{j\ne i}\left(\partial_u^2\ell_{0,j}(u_{j,\smi})-\partial_u^2\ell_{0,j}(\tilde{u}_{j,i})\right)\frac{x_jx_j^\top}{n} }\\
    & \le \norm{\partial_u^2 \ell_0}_\infty O_{L_k}\left(\frac{\polylog(n)}{n}\right) + \norm{\partial_u^3\ell_0}_\infty \sup_{i\in [n]}\sup_{j\ne i}|u_{j,\smi}-\tilde{u}_{j,i}|\sup_{j\in[n]}\frac{\norm{x_j}^2}{n}\\
    &=O_{L_k}\left(\frac{\polylog(n)}{\sqrt{n}}\right).
\end{align}
and

\begin{align}
    \norm{\tilde{G}_{i}-G_{\smi}}&=\norm{\frac{1}{n}\partial_r\partial_u\ell_{i}(\tilde{r}_{i,i},\tilde{u}_{i,i})\frac{x_ix_i^\top}{n}+\frac{1}{n}\sum\limits_{j\ne i}\left(\partial_r\partial_u\ell_{j}(r_{j,\smi},u_{j,\smi})-\partial_r\partial_u\ell_{j}(\tilde{r}_{j,i},\tilde{u}_{j,i})\right)\frac{x_jx_j^\top}{n} }\\
    & \le \norm{\partial_r\partial_u\ell}_\infty O_{L_k}\left(\frac{\polylog(n)}{n}\right) \\
    &\qquad + (\norm{\partial_r^2\partial_u\ell}_\infty \vee\norm{\partial_r\partial_u^2\ell}_\infty)\left(\sup_{i\in [n]}\sup_{j\ne i} |r_{j,\smi}-\tilde{r}_{j,i}|+\sup_{i\in [n]}\sup_{j\ne i}|u_{j,\smi}-\tilde{u}_{j,i}|\right)\sup_{j\in[n]}\frac{\norm{x_j}^2}{n}\\
    &=O_{L_k}\left(\frac{\polylog(n)}{\sqrt{n}}\right)
\end{align}
Thus,
\begin{align}
    \left|\scriptstyle\frac{1}{n}\sum\limits_{j\ne i} \left(\partial_r^3\ell_j(r_{j,\smi},u_{j,\smi})(\tilde{r}_{j,i}-r_{j,\smi})+\partial_r^2\partial_u\ell_j(r_{j,\smi},u_{j,\smi})(\tilde{u}_{j,i}-u_{j,\smi})\right)\frac{x_j^\top  H^{-1}_\smi\left(\tilde{H}_i^{-1}\tilde{G}_i\tilde{H}_{0,i}^{-1}-H_\smi^{-1}G_\smi H_{0,\smi}^{-1}\right)x_j}{n}\right|=O_{L_k}\left(\frac{\polylog(n)}{n}\right),
\end{align}
and one can replace the product $\tilde{H}_i^{-1}\tilde{G}_i\tilde{H}_{0,i}^{-1}$ in the second term of \eqref{eq:decompo_X_ES} by the leave-one-out approximation $H_\smi^{-1}G_\smi H_{0,\smi}^{-1}$, at the price of a $O_{L_k}\left(\sfrac{\polylog(n)}{n}\right)$ correction. One can thus focus on studying
\begin{align}
    &\Ea{\left(\frac{1}{n}\sum\limits_{j\ne i} \left(\partial_r^3\ell_j(r_{j,\smi},u_{j,\smi})(\tilde{r}_{j,i}-r_{j,\smi})+\partial_r^2\partial_u\ell_j(r_{j,\smi},u_{j,\smi})(\tilde{u}_{j,i}-u_{j,\smi})\right)\frac{x_j^\top H^{-2}_\smi G_\smi H_{0,\smi}^{-1} x_j}{n}\right)^2}\\
    &\le 3\Ea{
\left(\frac{1}{n}\sum\limits_{j\ne i} \partial_r^3\ell_j(r_{j,\smi},u_{j,\smi})\inprod{x_j,\eta_i}\frac{x_j^\top H^{-2}_\smi G_\smi H_{0,\smi}^{-1} x_j}{n}\right)^2}\\
&\qquad +3\Ea{
\left(\frac{1}{n}\sum\limits_{j\ne i} \partial_r^3\ell_j(r_{j,\smi},u_{j,\smi})\inprod{x_j,\zeta_i}\frac{x_j^\top H^{-2}_\smi G_\smi H_{0,\smi}^{-1} x_j}{n}\right)^2}\\&\qquad+3 \Ea{\left(\frac{1}{n}\sum\limits_{j\ne i} \partial_r^2\partial_u\ell_j(r_{j,\smi},u_{j,\smi})\inprod{x_j,\xi_i}\frac{x_j^\top H^{-2}_\smi G_\smi H_{0,\smi}^{-1}  x_j}{n}\right)^2} \label{eq:decompo_X_2}
\end{align}
with the aim of establishing $O(\sfrac{1}{n^2})$ control. The remainder of the reasoning proceeds in exaclty the same way as the corresponding step in the proof of Lemma \ref{lemma:V_concentrates}, where the corresponding quantity \eqref{eq:decompo_X_2} is controlled.  We have 
\begin{align}
 \Ea{\scriptstyle
\left(\frac{1}{n}\sum\limits_{j\ne i} \partial_r^3\ell_j(r_{j,\smi},u_{j,\smi})\inprod{x_j,\eta_i}\frac{x_j^\top H^{-2}_\smi G_\smi H_{0,\smi}^{-1}  x_j}{n}\right)^2}
&=\Ea{\scriptstyle
\left(\frac{1}{n}\sum\limits_{j\ne i} \partial_r\ell(\tilde{r}_{i,i})\partial_r^3\ell_j(r_{j,\smi},u_{j,\smi})\frac{x_j^\top H^{-2}_\smi G_\smi H_{0,\smi}^{-1}  x_j x_j^\top H_\smi^{-1} }{n^2} x_i \right)^2}\\
&\le \norm{\partial_r\ell}_\infty^2\Ea{\norm{\frac{1}{n}\sum\limits_{j\ne i} \partial_r^3\ell_j(r_{j,\smi},u_{j,\smi})\frac{x_j^\top H^{-2}_\smi G_\smi H_{0,\smi}^{-1}  x_j  H_\smi^{-1} x_j }{n^2}}^2}\\
&\le \norm{\partial_r\ell}_\infty^2\norm{\partial_r\partial_u\ell}_\infty^2\norm{\partial_r^3\ell}^2_\infty \frac{1}{\lambda^6\lambda_0^2} \norm{\partial_r\ell}_\infty^2O_{L_k}\left(\frac{\polylog(n)}{n^2}\right).
\end{align}
and
\begin{align}
\Ea{\scriptstyle
\left(\frac{1}{n}\sum\limits_{j\ne i} \partial_r^3\ell_j(r_{j,\smi},u_{j,\smi})\inprod{x_j,\zeta_i}\frac{x_j^\top H^{-2}_\smi G_\smi H_{0,\smi}^{-1} x_j}{n}\right)^2}&\le \norm{\partial_u\ell_0}_\infty^2\Ea{\scriptstyle\norm{\frac{1}{n}\sum\limits_{j\ne i} \partial_r^3\ell_j(r_{j,\smi},u_{j,\smi})\frac{x_j^\top H^{-2}_\smi G_\smi H_{0,\smi}^{-1}x_j  H_{0,\smi}^{-1} G_\smi H_\smi^{-1} x_j }{n^2}}^2}\\
& \le \norm{\partial_u\ell_0}_\infty^2 \frac{1}{\lambda^6 \lambda_0^2}\norm{\partial_r^3\ell}_\infty^2 \norm{\partial_r\partial_u\ell}_\infty^4 O\left(\frac{\polylog(n)}{n^2}\right).
\end{align}
and finally
\begin{align}
    \Ea{\scriptstyle\left(\frac{1}{n}\sum\limits_{j\ne i} \partial_r^2\partial_u\ell_j(r_{j,\smi},u_{j,\smi})\inprod{x_j,\xi_i}\frac{x_j^\top H^{-2}_\smi G_\smi H_{0,\smi}^{-1} x_j}{n}\right)^2} & \le
\norm{\partial_u\ell_0}_\infty^2\Ea{\scriptstyle\norm{\frac{1}{n}\sum\limits_{j\ne i} \partial_r^2\partial_u\ell_j(r_{j,\smi},u_{j,\smi})\frac{x_j^\top H^{-2}_\smi G_\smi H_{0,\smi}^{-1} x_j  H_{0,\smi}^{-1}  x_j }{n^2}}^2}\\
& \le \norm{\partial_u\ell_0}_\infty^2 \frac{1}{\lambda^4\lambda_0^4}\norm{\partial_r^2\partial_u\ell}_\infty^2\norm{\partial_r\partial_u\ell}^2_\infty O\left(\frac{\polylog(n)}{n^2}\right).
\end{align}
Returning to \eqref{eq:decompo_V_2} finally allows one to reach the bound
\begin{align}
    \Ea{\scriptstyle\left(\frac{1}{n}\sum\limits_{j\ne i} \left(\partial_r^3\ell_j(r_{j,\smi},u_{j,\smi})(\tilde{r}_{j,i}-r_{j,\smi})+\partial_r^2\partial_u\ell_j(r_{j,\smi},u_{j,\smi})(\tilde{u}_{j,i}-u_{j,\smi})\right)\frac{x_j^\top  H^{-2}_\smi G_\smi H_{0,\smi}^{-1}  x_j}{n}\right)^2}=O\left(\frac{\polylog(n)}{n^2}\right)
\end{align}
Thus, putting all the intermediary results together,
\begin{align}
   \frac{3}{n^2}\Ea{\tr[(H_\smi^{-1}-\tilde{H}_i^{-1})\tilde{G}_i\tilde{H}_{0,i}^{-1}]^2}=O_{L_k}\left(\frac{\polylog(n)}{n^2}\right). 
\end{align}
This takes care of the first term in \eqref{eq:decompo_X_traces_2}. The remaining two terms can be controlled as 
\begin{align}
    &\frac{3}{n}\Ea{\tr[H_\smi^{-1}(G_\smi-\tilde{G}_i)\tilde{H}_{0,i}^{-1}]}^2=O_{L_k}\left(\frac{\polylog(n)}{n^2}\right),\\
    &\frac{3}{n}\Ea{\tr[H_\smi^{-1}G_\smi (H_{0,\smi}^{-1}-\tilde{H}_{0,i}^{-1})]^2}=O_{L_k}\left(\frac{\polylog(n)}{n^2}\right),
\end{align}
using an identical proof strategy. Returning to \eqref{eq:decompo_X_traces_2}, one thus has the bound
\begin{align}
    \Ea{(\mathfrak{X}_\smi-\tilde{\mathfrak{X}}_i)^2}=O\left(\frac{\polylog(n)}{n^2}\right).
\end{align}
The Efron-Stein lemma finally guarantees that
\begin{align}
    \Var{\mathfrak{X}}=O\left(\frac{\polylog(n)}{n}\right),
\end{align}
which concludes the proof.
\end{proof}

Before concluding this subsection, we state one last concentration result, regarding the quantity $\omega_i=\sfrac{1}{n}x_i^\top G_\smi H_{0,\smi}^{-1} x_i$ and $\mathfrak{W}=\sfrac{1}{n}\Tr[GH_0^{-1}]$. Although this quantity does not directly appear in the final characterization, it will prove instrumental in establishing a characterization for $\Ea{\mathfrak{X}}$ in the later subsection \ref{subsec:expectations}. 

\begin{lemma}[$\mathfrak{W}$ concentrates]\label{lem:W_concentrates}
   We remind $\omega_i=\sfrac{1}{n}x_i^\top G_\smi H_{0,\smi}^{-1}x_i,\mathfrak{W}= \frac{1}{n}\tr[G H_{0}^{-1}]$. $\omega_i$ admits an equivalent
    \begin{align}
        \sup_{i\in[n]}\left|
        \omega_i -\mathfrak{W}
        \right| =O_{L_k}\left(\frac{\polylog(n)}{\lambda \sqrt{n}}\right).
    \end{align}
Furthermore, $\mathfrak{W}$ concentrates,
with
    \begin{align}
        \Var{\mathfrak{W}}=O\left(\frac{\polylog(n)}{n}\right).
    \end{align}
\end{lemma}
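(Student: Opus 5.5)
The plan is to copy the proof of Lemma \ref{lem:X_concentrates}, with the matrix $H_\smi^{-1}G_\smi H_{0,\smi}^{-1}$ replaced by $G_\smi H_{0,\smi}^{-1}$. The only structural change is that one resolvent fewer appears, so every bound becomes slightly simpler. We use $\norm{G_\smi}\le\norm{\partial_r\partial_u\ell}_\infty\norm{X_\smi}^2/n=O_{L_k}(\polylog(n))$ (proof of Lemma \ref{lem:eta_zeta}) and $\norm{H_{0,\smi}^{-1}}\le 1/\lambda_0$. Hence $\norm{G_\smi H_{0,\smi}^{-1}}=O_{L_k}(\polylog(n))$, and this matrix is independent of $x_i$ conditionally on $c_i$.

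\textbf{First claim.} We write $x_i=\mu_{c_i}+z_i$. The cross terms and the mean term of $\omega_i$ are $O_{L_k}(\polylog(n)/\sqrt{n})$ by Cauchy--Schwarz and Lemma \ref{lemma:normx}. We then apply Lemma G.2 of \cite{el2018impact} to the symmetrized matrix $\frac12(G_\smi H_{0,\smi}^{-1}+H_{0,\smi}^{-1}G_\smi)$. Since $z_i^\top Mz_i=z_i^\top M^{\rm sym}z_i$, this gives
\begin{align}
\sup_i\left|\frac{1}{n}z_i^\top G_\smi H_{0,\smi}^{-1}z_i-\mathfrak{W}_\smi\right|=O_{L_k}\left(\frac{\polylog(n)}{\sqrt n}\right),
\end{align}
where $\mathfrak{W}_\smi=\frac1n\tr[G_\smi H_{0,\smi}^{-1}]$.

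It remains to bound $|\mathfrak{W}_\smi-\mathfrak{W}|$. We decompose
\begin{align}
\mathfrak{W}-\mathfrak{W}_\smi=\frac1n\tr[(G-G_\smi)H_{0,\smi}^{-1}]+\frac1n\tr[G_\smi(H_0^{-1}-H_{0,\smi}^{-1})]+\frac1n\tr[(G-G_\smi)(H_0^{-1}-H_{0,\smi}^{-1})].
\end{align}
Both $G-G_\smi$ and $H_0-H_{0,\smi}$ split into two parts:
\begin{itemize}
\item a rank-one piece $\frac1n(\cdot)\,x_ix_i^\top$, whose contribution to each trace is $O_{L_k}(\polylog(n)/n)$ by Lemma \ref{lemma:normx};
\item a sum over $j\ne i$ with coefficients $\partial_r\partial_u\ell_j(r_j,u_j)-\partial_r\partial_u\ell_j(r_{j,\smi},u_{j,\smi})$ (resp. the analogous $\partial_u^2\ell_0$ difference). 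These coefficients are bounded by third derivatives times $\sup_{j\ne i}(|r_j-r_{j,\smi}|+|u_j-u_{j,\smi}|)=O_{L_k}(\polylog(n)/\sqrt n)$, by Propositions \ref{prop: base_estimator_approxs} and \ref{prop: reweighted_estimator_approxs}. Each term $x_j^\top(\cdots)x_j/n$ is $O(\polylog(n))$.
\end{itemize}
This yields $O_{L_k}(\polylog(n)/\sqrt n)$ for every term, and the first claim follows.

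\textbf{Second claim.} We use Efron--Stein with the surrogate $\tilde{\mathfrak{W}}_i=\frac1n\tr[\tilde G_i\tilde H_{0,i}^{-1}]$.
\begin{itemize}
\item \emph{The term $\Ea{(\mathfrak{W}-\tilde{\mathfrak{W}}_i)^2}$.} We use the mean value theorem and write the difference as inner products $\frac1n\inprod{h,\delta}$, where $\norm{h}=O(\sqrt n\polylog n)$. Lemmas \ref{lemma:l2_norm_surrogate_full} and \ref{lemma:l2_norm_surrogate_full_reweighted} give $\norm{\delta}=O_{L_k}(\polylog(n)/\sqrt n)$. Hence this term is $O(\polylog(n)/n^2)$.
\item \emph{The term $\Ea{(\tilde{\mathfrak{W}}_i-\mathfrak{W}_\smi)^2}$.} We split into $\frac1n\tr[(\tilde G_i-G_\smi)\tilde H_{0,i}^{-1}]$ and $\frac1n\tr[G_\smi(\tilde H_{0,i}^{-1}-H_{0,\smi}^{-1})]$, and Taylor-expand each to second order. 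The $x_i$ rank-one term and the quadratic remainders are $O_{L_k}(\polylog(n)/n)$, by Lemma \ref{lem:supsup} and Proposition \ref{prop: base_estimator_approxs}.
\end{itemize}

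The main obstacle is the linear term, for example $\frac1n\sum_{j\ne i}\partial_r^2\partial_u\ell_j\,(\tilde r_{j,i}-r_{j,\smi})\,x_j^\top M x_j/n$. A crude bound gives only $1/\sqrt n$, but $O(1/n)$ in $L_2$ is needed. We handle it as in Lemma \ref{lemma:V_concentrates}. First, replace the $x_i$-dependent matrices $\tilde H_{0,i}^{-1},\tilde G_i$ by $H_{0,\smi}^{-1},G_\smi$. This costs $O(\polylog(n)/\sqrt n)\cdot O(\polylog(n)/\sqrt n)$, using the bounds $\norm{\tilde H_{0,i}-H_{0,\smi}},\norm{\tilde G_i-G_\smi}=O_{L_k}(\polylog(n)/\sqrt n)$ established earlier. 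Next, substitute $\tilde r_{j,i}-r_{j,\smi}=\inprod{x_j,\eta_i+\zeta_i}$ and $\tilde u_{j,i}-u_{j,\smi}=\inprod{x_j,\xi_i}$. Each of these is a linear form in $x_i$ with coefficient vector of the shape $\frac1{n^2}A^\top X_\smi^\top h$, where $A$ is independent of $x_i$. Taking the expectation over $x_i$ first and bounding $\norm{X_\smi}/\sqrt n$ and $\norm{h}/\sqrt n$ by Lemma \ref{lemma:normx} gives $O(\polylog(n)/n^2)$. Summing over $i$ then gives $\Var{\mathfrak{W}}=O(\polylog(n)/n)$.
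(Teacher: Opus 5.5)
Your plan matches the paper's own proof, which reruns the concentration argument of Lemma~\ref{lemma:V_concentrates} (in effect that of Lemma~\ref{lem:X_concentrates}) with $H^{-1}$ replaced by the identity. Your first claim is fine, and so is the $\mathfrak{W}-\tilde{\mathfrak{W}}_i$ part of the second claim. One step, which you inherited from the paper, fails as written: replacing $\tilde H_{0,i}^{-1}$ by $H_{0,\smi}^{-1}$ inside the linear Taylor term. The bound $\norm{\tilde H_{0,i}-H_{0,\smi}}=O_{L_k}(\polylog(n)/\sqrt n)$ that you invoke (and likewise for $\tilde G_i-G_\smi$) is false in operator norm. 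The difference contains the rank-one term $\frac1n\partial_u^2\ell_{0,i}(\tilde u_{i,i})x_ix_i^\top$. Its norm is $\partial_u^2\ell_{0,i}(\tilde u_{i,i})\norm{x_i}^2/n$, which is of order one in general; for the square loss it is exactly $\norm{x_i}^2/n\approx d/n$. The paper's displays in the proofs of Lemmas~\ref{lemma:V_concentrates} and~\ref{lem:X_concentrates} carry a spurious extra factor $1/n$ on this term, which is where the claim comes from. As a result $\norm{\tilde H_{0,i}^{-1}-H_{0,\smi}^{-1}}$ is also of order one, and the advertised cost $O(\polylog(n)/\sqrt n)\cdot O(\polylog(n)/\sqrt n)$ is unjustified.

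The conclusion still holds once you treat the rank-one piece separately.
\begin{itemize}
\item Set $\check H_{0,i}=\tilde H_{0,i}-\frac1n\partial_u^2\ell_{0,i}(\tilde u_{i,i})x_ix_i^\top\succeq\lambda_0 I_d$. Then $\norm{\check H_{0,i}-H_{0,\smi}}\le\norm{\partial_u^3\ell_0}_\infty\sup_{j\ne i}|\tilde u_{j,i}-u_{j,\smi}|\,\norm{X_\smi}^2/n=O_{L_k}(\polylog(n)/\sqrt n)$. So swapping $\check H_{0,i}^{-1}$ for $H_{0,\smi}^{-1}$ costs $O_{L_k}(\polylog(n)/n)$, exactly as you argue.
\item By Sherman--Morrison, $R_i=\tilde H_{0,i}^{-1}-\check H_{0,i}^{-1}$ has rank one and $\norm{R_i}\le\norm{\partial_u^2\ell_0}_\infty\norm{x_i}^2/(n\lambda_0^2)=O_{L_k}(\polylog(n))$. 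Its nuclear norm therefore equals its operator norm.
\item Write the linear term as $\frac1n\sum_{j\ne i}w_j\,x_j^\top Q\tilde H_{0,i}^{-1}x_j/n$, with $\sup_{j\ne i}|w_j|=O_{L_k}(\polylog(n)/\sqrt n)$ and $Q\in\{I_d,\,H_{0,\smi}^{-1}G_\smi\}$ for your two pieces. Then
\[
\Bigl|\frac1n\sum_{j\ne i}w_j\frac{x_j^\top QR_ix_j}{n}\Bigr|=\Bigl|\frac1n\operatorname{tr}\Bigl[R_i\Bigl(\frac1n\sum_{j\ne i}w_jx_jx_j^\top\Bigr)Q\Bigr]\Bigr|\le\frac{\norm{R_i}\,\norm{Q}}{n}\,\sup_{j\ne i}|w_j|\,\frac{\norm{X_\smi}^2}{n}=O_{L_k}\Bigl(\frac{\polylog(n)}{n^{3/2}}\Bigr).
\]
\item The same device handles $\tilde G_i$ wherever it appears.
\end{itemize}
With this patch your decoupling step goes through, with two small points to record:
\begin{itemize}
\item $\inprod{x_j,\eta_i}$, $\inprod{x_j,\zeta_i}$ and $\inprod{x_j,\xi_i}$ are linear in $x_i$ only up to the bounded but $x_i$-dependent scalars $\partial_r\ell_i(\tilde r_{i,i},\tilde u_{i,i})$ and $\partial_u\ell_{0,i}(\tilde u_{i,i})$. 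Pull these out in absolute value before averaging over $x_i$.
\item Your second-order expansions of $\partial_u^2\ell_0$ and $\partial_r\partial_u\ell$ use $\partial_u^4\ell_0$ and $\partial_r\partial_u^3\ell$. These are covered by the main text's four-derivative hypothesis but are not listed in (A.3)/(B.1).
\end{itemize}
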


\begin{proof}
    The proof is identical to that of Lemma \ref{lemma:V_concentrates}. One can check that in all steps, the matrix $H^{-1}$ in Lemma \ref{lemma:V_concentrates} can be replaced by the identity, with all steps remaining valid. 
\end{proof}

%% file: Appendices_ArXiv/Distribution.tex
We showed in Appendix \ref{app:concentration} that the summary statistics $\mathfrak{m}_c=\inprod{\hat{w},\mu_c},\mathfrak{h}=\inprod{\hat{w},\beta}, \mathfrak{t}=\inprod{\hat{w}, \hat{w}_0},\mathfrak{q}=\norm{\hat{w}}^2$ and $, \mathfrak{V}= \sfrac{1}{n}\tr[H_\smi^{-1}]$,  $\mathfrak{X}=\sfrac{1}{n}\tr[H^{-1}G H_{0}^{-1}]$ concentrate in $L_2$, making it hence sufficient to evaluate their expectations, which we denote $m_c,\theta,t,q, V,\chi$. To do so, it remains to ascertain the asymptotic distribution of the residuals $r_i$,  $r_{i,\smi}$ and $ \tilde{r}_{i,i}$, which is the objective of the present subsection. We start with the leave-one-out residuals $r_{i,\smi}$. It is possible to show, such as in \cite{barnfield2025high}, that the leave-on-out residual are Gaussian up to vanishing corrections in $L_2$. For later convenience, we rather state the convergence of the expectations of certain functions, which we define in the following, to the corresponding Gaussian averages.

\begin{definition}
    Let us define $\mathcal{F}$ the class of sequences of functions $\varphi :\R^{|\mathcal{V}|+3}\times \mathcal{V}\times \mathcal{E}\to\R$ \footnote{For conciseness of notations, we keep the dependence of $\varphi$ on $n$ is implicit.} which are Lipschitz-continuous in its first two variables up to linear functions, and polynomially bounded in the others. Namely, $\varphi\in\mathcal{F}$ if (a) on the one hand there exists $\mathsf{L}_\varphi=O(\polylog(n))$ such that for any $R,R^\prime \in\R^{|\mathcal{V}|+3} $ that coincide everywhere except the first two entries ($R_l=R^\prime_l$ for all $l\ge 3$), and any $\epsilon\in\mathcal{E}, c\in \mathcal{V}$,
\begin{align}\label{eq:phi_liphschitz}
    |\varphi(R,c,\epsilon)-\varphi(R^\prime,c,\epsilon)|\le \mathsf{L}_\varphi (1+|R_1|+|R^\prime_1|+|R_2|+|R^\prime_2|)\sqrt{(R_1-R_1^\prime)^2+(R_2-R_2^\prime)^2},
\end{align}
and (b) on the other hand there exists a polynomial $Q_\varphi$ in $|\mathcal{V}|+1$ variables, with $O(\polylog(n))$ coefficients and $O(1)$ degree, such that for all $R\in\R^{|\mathcal{V}|+3}$ with vanishing first two entries $R_1=R_2=0$
\begin{align}\label{eq:phi_poly}
    |\varphi(R,c,\epsilon)|\le \left|Q_\varphi\left(R_{[3:|\mathcal{V}|+3]},\epsilon\right)\right|,
\end{align}
where $R_{[3:|\mathcal{V}|+3]}=(R_i)_{i\ge 3}$ denotes the entries of $R$ starting from the third.
\end{definition}

\subsection{Leave-one-out residuals}

\begin{lemma}[Distribution of the leave-one-out residuals]\label{lemma:distrib_rsmi}
For any $i\in[n]$, introduce $R_{i,\smi}\in\R^{|\mathcal{V}|+3}$ as
\begin{align}
    R_{i,\smi}=\left[
    \begin{array}{c}
         r_{i,\smi}\\
         u_{i,\smi}\\
         \inprod{x_i,\beta}\\
         \hline
         \\
         \mathcal{M}x_i\\
         ~
    \end{array}
    \right].
\end{align}
    Then for any function $\varphi:\R^{|\mathcal{V}|+3}\in\mathcal{F}$, one has the convergence
    \begin{align}
        \Bigg|
        \Ea{\varphi\left(R_{i,\smi},c_i,\epsilon_i\right)}
-\Ea{\varphi\left(\Psi_{c}+\Phi^{\frac{1}{2}}g,c,\epsilon\right)}
        \Bigg|=O\left(\frac{\polylog(n)}{n^{\frac{1}{4}}}\right),
    \end{align}
    with $g\sim \mathcal{N}(0,I_{ |\mathcal{V}|+3})$. The first average bears over the training set; the second over $g,c,\epsilon$. We defined the mean and covariance $\Psi \in\R^{|\mathcal{V}|+3},\Phi\in R^{(|\mathcal{V}|+3)\times (|\mathcal{V}|+3)}$ as
    \begin{align}
        \Psi_{c}=\left[\begin{array}{c}
        m_{c}\\  
        m_{0,c}\\
             \nu_{c} \\
            \hline
           \\
            \rho_{c} \\
            ~
        \end{array}\right], && \Phi= \left[
        \begin{array}{cccccc}
           q & t & \theta & \vline& m& \\
            t & q_0 & \theta_0 &\vline &m_0&\\
            \theta &\theta_0 &\varrho& \vline& \nu&\\
            \hline
            &&&\vline&&\\
            m &m_0& \nu & \vline&\rho&\\
            &&&\vline&& 
        \end{array}
        \right],
    \end{align}
where we denoted $\rho_c$ the $c-$th row of $\rho $. We recall that the definitions of $\rho\in\R^{|\mathcal{V}|\times|\mathcal{V}|}$, $\nu\in\R^{|\mathcal{V}|} $ can be found in Assumption \ref{ass:data}.
\end{lemma}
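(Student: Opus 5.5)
The approach is to exploit the conditional independence of $x_i$ from the leave-one-out estimators. Write $x_i=\mu_{c_i}+z_i$ with $z_i\sim\mathcal{N}(0,I_d)$ independent of $(\hat{w}_\smi,\hat{w}_{0,\smi})$, which are functions of $\{x_j,c_j,\epsilon_j\}_{j\neq i}$ only. Conditionally on $c_i$ and on the data $\mathcal{D}_\smi=\{x_j,c_j,\epsilon_j\}_{j\ne i}$, the vector $R_{i,\smi}$ is then \emph{exactly} Gaussian. Its mean is
\begin{align}
\big(\inprod{\hat{w}_\smi,\mu_{c_i}},\inprod{\hat{w}_{0,\smi},\mu_{c_i}},\nu_{c_i},\rho_{c_i}\big),
\end{align}
and its covariance $\hat\Phi_\smi$ is the Gram matrix of $(\hat{w}_\smi,\hat{w}_{0,\smi},\beta,\mu_1,\dots,\mu_{|\mathcal{V}|})$. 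This Gram matrix has entries $\norm{\hat{w}_\smi}^2$, $\inprod{\hat{w}_\smi,\hat{w}_{0,\smi}}$, $\inprod{\hat{w}_\smi,\beta}$, $\inprod{\hat{w}_\smi,\mu_c}$, the analogous first-stage quantities, and the deterministic $\varrho,\nu,\rho$. Moreover $\epsilon_i$ is independent of everything else. The proof therefore reduces to comparing $\mathbb{E}[\varphi(\hat\Psi_{c,\smi}+\hat\Phi_\smi^{1/2}g,c,\epsilon)]$ with the same expression evaluated at the deterministic $(\Psi_c,\Phi)$, where $m_c,q,t,\theta$ (resp. $m_{0,c},q_0,\theta_0$) denote the expectations of the corresponding summary statistics.

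The first step is to replace leave-one-out statistics by full ones and control their fluctuations. Remark~\ref{remark:distances}, together with its first-stage analogue from Proposition~\ref{prop: base_estimator_approxs}, gives $\norm{\hat{w}-\hat{w}_\smi},\norm{\hat{w}_0-\hat{w}_{0,\smi}}=O_{L_k}(\polylog(n)/\sqrt n)$. Lemmas~\ref{lem: normw} and~\ref{lem: normw0} bound the norms. Combining these, each entry of $(\hat\Psi_\smi,\hat\Phi_\smi)$ differs from its full-sample counterpart by $O_{L_k}(\polylog(n)/\sqrt n)$. Lemmas~\ref{lem:m_concentrates}, \ref{lem:h_concentrates}, \ref{lem:t_concentrates} and~\ref{lemma:q_concentrates}, and their first-stage versions (obtained through the trivial-ERM remark), then give $L_2$ deviations of order $\polylog(n)/\sqrt n$ from the expectations $\Psi_c,\Phi$.

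The second step is a Gaussian coupling. With the same $g$, write $R=\hat\Psi+\hat\Phi^{1/2}g$ and $R'=\Psi+\Phi^{1/2}g$. Only the first two coordinates differ, since the rows and columns of $\beta$ and the $\mu$'s are deterministic; in fact the lower-right block $A$ coincides. The Lipschitz condition \eqref{eq:phi_liphschitz} then yields
\begin{align}
|\varphi(R)-\varphi(R')|\le \mathsf{L}_\varphi(1+|R_1|+|R_2|+|R'_1|+|R'_2|)\big(\norm{\hat\Psi-\Psi}+\norm{\hat\Phi^{1/2}-\Phi^{1/2}}\norm{g}\big).
\end{align}
We take expectations and apply Cauchy--Schwarz, using the moment bounds on $R$ from the norm lemmas. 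It then remains to bound $\mathbb{E}\norm{\hat\Phi^{1/2}-\Phi^{1/2}}$. Since $A$ has eigenvalues bounded below, I would handle the square root via a Schur-complement/Cholesky ordering with $(\beta,\mu)$ first. The first two rows of the Cholesky factor are then Lipschitz in the entries, except through the Schur complement $S$ of the $(\hat w,\hat w_0)$ block, which may be degenerate (for instance, $\hat{w}$ nearly in the span of the means). For this block I would use the matrix square-root bound $\norm{S^{1/2}-S'^{1/2}}\le\norm{S-S'}^{1/2}$ for PSD matrices. This gives an error $O((\polylog(n)/\sqrt n)^{1/2})=O(\polylog(n)n^{-1/4})$, which explains the announced $n^{-1/4}$ rate.

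Finally, the polynomial growth \eqref{eq:phi_poly} ensures that $\varphi(R',c,\epsilon)$ and $\varphi(R,c,\epsilon)$ are integrable, with finite moments of $\epsilon$. The expectation over $c_i$ is a finite sum, so conditioning on $c_i=c$ poses no issue. I expect the main obstacle to be the square-root step: it needs the PSD Hölder-$1/2$ estimate in place of a Lipschitz one, together with uniform-in-$n$ moment control of $\norm{\hat w}$ and $\norm{\hat w_0}$ to absorb the $(1+|R_1|+\dots)$ prefactor. Everything else reduces to the concentration lemmas already established.
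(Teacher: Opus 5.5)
Your proposal is correct and follows essentially the same route as the paper: exact conditional Gaussianity of $R_{i,\smi}$ given the leave-one-out data, then replacement of leave-one-out statistics by full ones plus the Efron--Stein concentration lemmas, then a coupling in which the block $A$ is shared. Your block-Cholesky ordering with $(\beta,\mu)$ first is exactly the paper's conditioning on $g_A$, with $B=\Phi_{[1:2],[3:]}A^{-1}$ and Schur complement $C$; note that ``only the first two coordinates differ'' holds for this block-triangular factor, not for the symmetric square root. The only real difference is cosmetic: you control the Schur-complement square root with the operator-norm H\"older-$1/2$ bound $\|S^{1/2}-S'^{1/2}\|\le\|S-S'\|^{1/2}$, whereas the paper uses the Powers--St{\o}rmer inequality $\mathrm{tr}[(\mathfrak{C}^{1/2}-C^{1/2})^2]\le\|\mathfrak{C}-C\|_*$, and both give the same $n^{-1/4}$ rate.
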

\begin{proof}
    Let us introduce the stochastic mean and covariance
     \begin{align}
        \mathfrak{M}_{\smi, c}=\left[\begin{array}{c}
                  \mathfrak{m}_{c,\smi}\\   
        \mathfrak{m}_{0,c,\smi}\\
             \nu_{c} \\
            \hline
           \\
            \rho_{c} \\
            ~
        \end{array}\right], && \mathfrak{S}_\smi= \left[
        \begin{array}{cccccc}
          \mathfrak{q}_\smi&\mathfrak{t}_\smi &\mathfrak{h}_\smi & \vline& \mathfrak{m}_\smi&\\
           \mathfrak{t}_\smi &\mathfrak{q}_{0,\smi} &\mathfrak{h}_{0,\smi} &  \vline &\mathfrak{m}_{0,\smi}&\\
                \mathfrak{h}_\smi & \mathfrak{h}_{0,\smi} &  \varrho &  \vline& \nu& \\
            \hline
            &&&\vline&&\\
          \mathfrak{m}_\smi & \mathfrak{m}_{0,\smi}&   \nu &\vline&\rho&\\
            &&&\vline&& 
        \end{array}
        \right].
    \end{align}
    We used the shorthands
    $\mathfrak{m}_{c,\smi}=\inprod{\hat{w}_\smi,\mu_c},\mathfrak{h}_\smi=\inprod{\hat{w}_\smi,\beta}, \mathfrak{t}_\smi=\inprod{\hat{w}_\smi, \hat{w}_{0,_\smi}},\mathfrak{q}_\smi=\norm{\hat{w}_\smi}^2$ and $\mathfrak{m}_{0, c,\smi}=\inprod{\hat{w}_{0,\smi},\mu_c},\mathfrak{h}_{0,\smi}=\inprod{\hat{w}_{0,\smi},\beta}, \mathfrak{t}_\smi=\inprod{\hat{w}_{0,\smi}, \hat{w}_{0,\smi}},\mathfrak{q}_{0,\smi }=\norm{\hat{w}_{0,\smi}}^2$. 
We start from the identity
\begin{align}\label{eq:decompo_expectation}
       \Bigg|
        \Ea{\varphi\left( R_{i,\smi},c_i,\epsilon_i\right)}
-\Ea{\varphi\left(\Psi_{c}+\Phi^{\frac{1}{2}}g,c,\epsilon\right)}
        \Bigg|
        =&\Bigg|
        \Ea{\varphi\left( \mathfrak{M}_{\smi, c}+\mathfrak{S}_\smi^{\frac{1}{2}}g,c_i,\epsilon_i\right)}
-\Ea{\varphi\left(\Psi_{c}+\Phi^{\frac{1}{2}}g,c,\epsilon\right)}
        \Bigg|,
    \end{align}
which follows from the independence of $z_i$ from $\hat{w}_i$.
 To control the right-hand side, we first show that the leave-one-out statistics   $\mathfrak{m}_{c,\smi}, \mathfrak{t}_\smi,\mathfrak{q}_\smi$, $\mathfrak{m}_{0, c,\smi},\mathfrak{h}_{0,\smi}, \mathfrak{t}_\smi, ,\mathfrak{q}_{0,\smi }$ are close to the full statistics $\mathfrak{m}_{c}, \mathfrak{t},\mathfrak{q}$, $\mathfrak{m}_{0, c},\mathfrak{h}_{0}, \mathfrak{t}, ,\mathfrak{q}_{0 }$. We have shown in the proof of Lemma \ref{lemma:q_concentrates} that
\begin{align}
    \mathfrak{q}_\smi=\mathfrak{q}+O_{L_2}\left(\frac{\polylog(n)}{n}\right)&=q+O_{L_2}\left(\frac{\polylog(n)}{n}\right)+\sqrt{O_{L_2}\left(\frac{\polylog(n)}{n}\right)}\\
    &=q+O_{L_2}\left(\frac{\polylog(n)}{\sqrt{n}}\right),
\end{align}
using the concentration result of Lemma \ref{lemma:q_concentrates} in the last step. 
By the same token,
\begin{align}
    \mathfrak{t}_\smi&=\mathfrak{t}+\inprod{\hat{w}_\smi-\hat{w},\hat{w}_0}+\inprod{\hat{w},\hat{w}_{0,\smi}-\hat{w}_0}+\inprod{\hat{w}_\smi-\hat{w},\hat{w}_{0,\smi}-\hat{w}_0}\\
    &=\mathfrak{t}+O_{L_k}\left(\frac{\polylog(n)}{\sqrt{n}}\right)\\
    &=t+O_{L_2}\left(\frac{\polylog(n)}{\sqrt{n}}\right).
\end{align}
We used Proposition \ref{prop: base_estimator_approxs} and Proposition \ref{prop: reweighted_estimator_approxs} in the first step, and Lemma \ref{lem:t_concentrates} in the last. Similarly,
\begin{align}
     \mathfrak{h}_\smi&= \mathfrak{h}+O_{L_k}\left(\frac{\polylog(n)}{\sqrt{n}}\right)=\theta+O_{L_2}\left(\frac{\polylog(n)}{\sqrt{n}}\right),
\end{align}
and for any $c\in\mathcal{V}$,
\begin{align}
    \mathfrak{m}_{c,\smi}&= \mathfrak{m}_c+O_{L_k}\left(\frac{\polylog(n)}{\sqrt{n}}\right)=m_c+O_{L_2}\left(\frac{\polylog(n)}{\sqrt{n}}\right).
\end{align}
The same controls hold for the first-stage statistics $\mathfrak{m}_{0, c,\smi},\mathfrak{h}_{0,\smi}, \mathfrak{t}_\smi, ,\mathfrak{q}_{0,\smi }$. Thus, the differences $\mathfrak{M}_{\smi, c}-\Psi_{c}$ and $\mathfrak{S}_{\smi, c_i}-\Phi_{c_i}$ have $O_{L_2}(\sfrac{\polylog(n)}{\sqrt{n}})$ entries. We are now in a position to bound 
\begin{align}
    &\Bigg|
        \Ea{\varphi\left( \mathfrak{M}_{\smi, c}+\mathfrak{S}_\smi^{\frac{1}{2}}g,c_i,\epsilon_i\right)}
-\Ea{\varphi\left(\Psi_{c}+\Phi^{\frac{1}{2}}g,c,\epsilon\right)}
        \Bigg|
\end{align}
A fine analysis of the different arguments is required, as $\varphi$ is only Lipschitz in its first two variables. Let us denote $g^{\mathfrak{S}}=\mathfrak{S}_\smi^{\frac{1}{2}}g, g^\Phi=\Phi^{\frac{1}{2}}g$ the anisotropic Gaussian variables. Because $\mathfrak{S}_\smi, \Phi$ coincide in their lower right $(|\mathcal{V}|+1)\times (|\mathcal{V}|+1)$ minor $A$, the entries $g^{\mathfrak{S}}_{[3:]},g^{\Phi}_{[3:]}$ from the third on are identically distributed. Thus, making explicit the Gaussian conditioning, one can write
\begin{align}
    &\Ea{\varphi\left( \mathfrak{M}_{\smi, c}+\mathfrak{S}_\smi^{\frac{1}{2}}g,c_i,\epsilon_i\right)}
-\Ea{\varphi\left(\Psi_{c}+\Phi^{\frac{1}{2}}g,c,\epsilon\right)}\\
&=\mathbb{E}_{X_\smi}\Eb{g_A}{\Eb{g_2}{\varphi\left( \mathfrak{M}_{\smi, c}+\left[\begin{array}{cc}
    \mathfrak{B}g_A+\mathfrak{C}^{\frac{1}{2}}g_2 \\ \hline g_A
\end{array}\right],c_i,\epsilon_i\right)}-\Eb{g_2}{\varphi\left(\Psi_{c}+\left[\begin{array}{cc}
   Bg_A+C^{\frac{1}{2}}g_2\\\hline g_A
\end{array}\right],\epsilon,c\right)}},
\end{align}
where $g_A\sim\mathcal{N}(0,A)$ is a $(|\mathcal{V}|+1)-$ dimensional Gaussian vector, and $g_2\sim\mathcal{N}(0,I_2)$ is an independent variable. We denoted 
\begin{align}
    A=\Phi_{[3:],[3:]}=\mathfrak{S}_{[3:],[3:]}, && B= \Phi_{[1:2],[3:]}A^{-1},&& C=\Phi_{[1:2],[1:2]}- BAB^\top,
\end{align}
and 
\begin{align}
    \mathfrak{B}= \mathfrak{S}_{[1:2],[3:]}A^{-1},&& \mathfrak{C}=\mathfrak{S}_{[1:2],[1:2]}- \mathfrak{B}A\mathfrak{B}^\top
\end{align}

Noting that $\mathfrak{M}_{\smi, c,[3:]}=\Psi_{c,[3:]}$, we thus have that the arguments of the two $\varphi$ functions coincide on all entries except the first two. Leveraging the Lipschitzness of $\varphi$ in its first two variables, and further using the independence of $\mathfrak{M}_{\smi, c},\mathfrak{S}_\smi$ with respect to $\epsilon_i,c_i$,
\begin{align}
   &\left|\Ea{\varphi\left( \mathfrak{M}_{\smi, c}+\mathfrak{S}_\smi^{\frac{1}{2}}g,c_i,\epsilon_i\right)}
-\Ea{\varphi\left(\Psi_{c}+\Phi^{\frac{1}{2}}g,c,\epsilon\right)}\right|\\
&\le \mathsf{L}_\varphi \mathbb{E}_{X_\smi} \mathbb{E}_{g_A} \mathbb{E}_{g_2}\Bigg[\left(
1+\norm{\mathfrak{M}_{\smi, c,[1:2]}}_1+\norm{\Psi_{c,[1:2]}}_1+\norm{ \mathfrak{B}g_A+\mathfrak{C}^{\frac{1}{2}}g_2}_1+\norm{ Bg_A+C^{\frac{1}{2}}g_2}_1
\right)\\
&\qquad \qquad\qquad\qquad\left(\norm{\mathfrak{M}_{\smi, c,[1:2]}-\Psi_{c,[1:2]}}+\norm{(\mathfrak{B}-B)g_A+(\mathfrak{C}^{\frac{1}{2}}-C^{\frac{1}{2}})g_2}\right)\Bigg]\\
&\le
\mathsf{L}_\varphi  \Ea{\left(
1+\norm{\mathfrak{M}_{\smi, c,[1:2]}}_1+\norm{\Psi_{c,[1:2]}}_1+\norm{ \mathfrak{B}g_A+\mathfrak{C}^{\frac{1}{2}}g_2}_1+\norm{ Bg_A+C^{\frac{1}{2}}g_2}_1
\right)^2}^{\frac{1}{2}}\\
&\qquad \qquad\qquad\qquad\Ea{\left(\norm{\mathfrak{M}_{\smi, c,[1:2]}-\Psi_{c,[1:2]}}+\norm{(\mathfrak{B}-B)g_A+(\mathfrak{C}^{\frac{1}{2}}-C^{\frac{1}{2}})g_2}\right)^2}^{\frac{1}{2}}
\end{align}
But
\begin{align}
&\Ea{\left(\norm{\mathfrak{M}_{\smi, c,[1:2]}-\Psi_{c,[1:2]}}+\norm{(\mathfrak{B}-B)g_A+(\mathfrak{C}^{\frac{1}{2}}-C^{\frac{1}{2}})g_2}\right)^2}\\
&\qquad\qquad\qquad\le 2\mathsf{L}_\varphi\Ea{\norm{\mathfrak{M}_{\smi, c}-\Psi_{c_i}}^2}+2\mathsf{L}_\varphi\Ea{
\tr[(\mathfrak{B}-B)A(\mathfrak{B}-B)^\top ]+\tr[(\mathfrak{C}^{\frac{1}{2}}-C^{\frac{1}{2}})^2]}
\end{align}
From the observation above,
\begin{align}
  \Ea{\norm{\mathfrak{M}_{\smi, c}-\Psi_{c}}^2}=O\left(\frac{\polylog(n)}{n}\right).
\end{align}
Turning to the second term,
\begin{align}
    \Ea{\tr[(\mathfrak{B}-B)A(\mathfrak{B}-B)^\top ]}&\le 2 \Ea{\tr[(\mathfrak{S}_{[1:2],[3:]}-\Phi_{[1:2],[3:]})A^{-1}(\mathfrak{S}_{[1:2],[3:]}-\Phi_{[1:2],[3:]})^\top] }\\
    &\le 4\norm{A^{-1}}\Ea{\norm{\mathfrak{S}_{[1:2],[3:]}-\Phi_{[1:2],[3:]}}^2 }=O\left(\frac{\polylog(n)}{n}\right)
\end{align}
We can bound the third term using the Powers-St\o rmer inequality \citep{powers1970free}
\begin{align}
    \tr[\left(\mathfrak{C}^{\frac{1}{2}}-C^{\frac{1}{2}}\right)^2]&\le \norm{\mathfrak{C}-C}_*\\
    &\le \sqrt{2}\left(\norm{\Phi_{[1:2],[1:2]}-\mathfrak{S}_{[1:2],[1:2]}}_F+\norm{\Phi_{[1:2],[3:]}A^{-1}\Phi_{[1:2],[3:]}^\top -\mathfrak{S}_{[1:2],[3:]}A^{-1}\mathfrak{S}_{[1:2],[3:]}^\top }_F\right)\\
    &\le \sqrt{2}\left(\norm{\Phi_{[1:2],[1:2]}-\mathfrak{S}_{[1:2],[1:2]}}_F+2\norm{(\Phi_{[1:2],[3:]}-\mathfrak{S}_{[1:2],[3:]})A^{-1}\Phi_{[1:2],[3:]}^\top}_F\right)\\
    &\qquad+\sqrt{2}\norm{(\Phi_{[1:2],[3:]}-\mathfrak{S}_{[1:2],[3:]})A^{-1}(\Phi_{[1:2],[3:]}-\mathfrak{S}_{[1:2],[3:]})}_F
\end{align}
But
\begin{align}
\Ea{\norm{\Phi_{[1:2],[1:2]}-\mathfrak{S}_{[1:2],[1:2]}}_F}\le \Ea{\norm{\Phi_{[1:2],[1:2]}-\mathfrak{S}_{[1:2],[1:2]}}_F^2}^{\frac{1}{2}}=O\left(\frac{\polylog(n)}{\sqrt{n}}\right)
\end{align}
and
\begin{align}
    \Ea{\norm{(\Phi_{[1:2],[3:]}-\mathfrak{S}_{[1:2],[3:]})A^{-1}\Phi_{[1:2],[3:]}^\top}_F} &\le \norm{\Phi_{[1:2],[3:]}}_F 2\norm{A^{-1}}\Ea{\norm{\Phi_{[1:2],[3:]}-\mathfrak{S}_{[1:2],[3:]}}_F}\\
    & =O\left(\frac{\polylog(n)}{\sqrt{n}}\right),
\end{align}
using Lemma \ref{lem: normw} and Lemma \ref{lem: normw0} to bound $\norm{\Phi_{[1:2],[3:]}}_F $.
since $\mathfrak{S}_{\smi, c_i}-\Phi_{c_i}$ has $O_{L_2}(\sfrac{\polylog(n)}{\sqrt{n}})$ entries.
Finally,
\begin{align}
    \Ea{\norm{(\Phi_{[1:2],[3:]}-\mathfrak{S}_{[1:2],[3:]})A^{-1}\Phi_{[1:2],[3:]}-\mathfrak{S}_{[1:2],[3:]})^\top}_F} &\le  2\norm{A^{-1}}\Ea{\norm{\Phi_{[1:2],[3:]}-\mathfrak{S}_{[1:2],[3:]}}_F^2}\\
    & =O\left(\frac{\polylog(n)}{n}\right),
\end{align}
Thus,
\begin{align}
\Ea{\left(\norm{\mathfrak{M}_{\smi, c,[1:2]}-\Psi_{c,[1:2]}}+\norm{(\mathfrak{B}-B)g_A+(\mathfrak{C}^{\frac{1}{2}}-C^{\frac{1}{2}})g_2}\right)^2}^{\frac{1}{2}}=O\left(\frac{\polylog(n)}{n^{\frac{1}{4}}}\right).
\end{align}
It now remains to bound 
\begin{align}
    &\Ea{\left(
1+\norm{\mathfrak{M}_{\smi, c,[1:2]}}_1+\norm{\Psi_{c,[1:2]}}_1+\norm{ \mathfrak{B}g_A+\mathfrak{C}^{\frac{1}{2}}g_2}_1+\norm{ Bg_A+C^{\frac{1}{2}}g_2}_1
\right)^2}\\
&\le 4\Ea{
1+\norm{\mathfrak{M}_{\smi, c,[1:2]}}^2+\norm{\Psi_{c,[1:2]}}^2+\norm{g^{\mathfrak{S}}_{[1:2]}}^2+\norm{g^{\Phi}_{[1:2]}}^2
}\\
&\le  4
+O\left(\frac{\polylog(n)}{n}\right)+12\norm{\Psi_{c,[1:2]}}^2+4\Ea{\norm{g^{\mathfrak{S}}_{[1:2]}}^2+\norm{g^{\Phi}_{[1:2]}}^2}\\
&\le 4
+O\left(\frac{\polylog(n)}{n}\right)+12(m_c^2+m_{0,c}^2)+8\Ea{\tr[\Phi_{[:2,:2]}]+O_{L_2}\left(\frac{\polylog(n)}{\sqrt{n}}\right)}\\
&= 4
+O\left(\frac{\polylog(n)}{\sqrt{n}}\right)+12(m_c^2+m_{0,c}^2)+8(q+q_0).
\end{align}
It follows directly from Lemma \ref{lem: normw} and Lemma \ref{lem: normw0} and the Cauchy-Schwartz inequality that the sequences $m_c,m_{0,c},q,q_0$ are bounded $O(\polylog(n))$. Thus, returning to \eqref{eq:decompo_expectation},
\begin{align}
    \Bigg|
        \Ea{\varphi\left( \mathfrak{M}_{\smi, c}+\mathfrak{S}_\smi^{\frac{1}{2}}g,c_i,\epsilon_i\right)}
-\Ea{\varphi\left(\Psi_{c_i}+\Phi^{\frac{1}{2}}g,c,\epsilon\right)}
        \Bigg|=O\left(\frac{\polylog(n)}{n^{\frac{1}{4}}}\right),
\end{align}
which concludes the proof.
\end{proof}

\subsection{Surrogate residuals}

We are now in a position to prove a similar distributional statement for the surrogate residuals
\begin{align}
    &\tilde{r}_{i,i}=\prox_{\gamma_i \ell_{i}(\cdot, \tilde{u}_{i,i})}\left(r_{i,\smi}+\partial_u\ell_{0,i}(\tilde{u}_{i,i})\upsilon_i\right), \\
    & \tilde{u}_{i,i}=\prox_{\gamma_{0,i} \ell_{0,i}(\cdot)}(u_{i,\smi}).
\end{align}
The goal is to use the convergence result of Lemma \ref{lemma:distrib_rsmi}, leveraging the regularity of the proximal operators, and the concentration results of Lemmas \ref{lemma:V_concentrates} and \ref{lem:X_concentrates}.

\begin{lemma}[Distribution of the surrogate residuals]\label{lemma:distrib_tilder}
    For any $i\in[n]$, introduce $\tilde{R}_{i,\smi}\in\R^{|\mathcal{V}|+3}$ as
\begin{align}
    \tilde{R}_{i,\smi}=\left[
    \begin{array}{c}
         \tilde{r}_{i,\smi}\\
         \tilde{u}_{i,\smi}\\
         \inprod{x_i,\beta}\\
         \hline
         \\
         \mathcal{M}x_i\\
         ~
    \end{array}
    \right].
\end{align}
    Then for any function $\varphi:\R^{|\mathcal{V}|+3}\in \mathcal{F}$, one has the convergence
    \begin{align}
        \Bigg|
        \Ea{\varphi\left(\tilde{R}_{i,\smi},c_i,\epsilon_i\right)}
-\Ea{\varphi\left(G,c,\epsilon\right)}
        \Bigg|=O\left(\frac{\polylog(n)}{n^{\frac{1}{4}}}\right),
    \end{align}
where
\begin{align}
    G=\left[
    \begin{array}{c}
         \prox_{V \ell(\cdot,\prox_{V_0 \ell_{0}(\cdot,g_3,c,\epsilon)}(g_2),g_3,c,\epsilon )}\left(\scriptstyle g_1+\chi\partial_u\ell_{0}(\prox_{V_0 \ell_{0}(\cdot,g_3,c,\epsilon)}(g_2),g_3,c,\epsilon ),g_3,c,\epsilon)\right)\\
         \prox_{V_0 \ell_{0}(\cdot,g_3,c,\epsilon)}(g_2)\\
         g_3\\
         \hline
         \\
         g_{[4:|\mathcal{V}|+3]}\\
         ~
    \end{array}
    \right].
\end{align}
where $g_1, g_2, g_3, g_{[4:|\mathcal{V}|+3]}=(g_i)_{i\ge 4}$ designate the entries of a Gaussian vector $g\in \R^{|\mathcal{V}|+3}$ with law
\begin{align}
    g\sim \mathcal{N}\left(
         \Psi_c,\Phi
    \right),
\end{align}
conditionally on $c$. $\Psi_c,\Phi$ were defined in Lemma \ref{lemma:distrib_rsmi}.
\end{lemma}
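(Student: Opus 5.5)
The plan is to reduce the statement to Lemma~\ref{lemma:distrib_rsmi} by composing $\varphi$ with the (random) proximal maps, and then replacing the random scalars $\gamma_i,\gamma_{0,i},\upsilon_i$ by their deterministic equivalents $V,V_0,\chi$.

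\textbf{Step 1: an explicit composed map.} For deterministic scalars $(a,a_0,b)$ define $T_{a,a_0,b}:\R^{|\mathcal{V}|+3}\times\mathcal{V}\times\mathcal{E}\to\R^{|\mathcal{V}|+3}$ by leaving all entries from the third on unchanged and setting
\begin{align}
T_{a,a_0,b}(R)_2=\prox_{a_0\ell_0(\cdot,R_3,c,\epsilon)}(R_2), \qquad T_{a,a_0,b}(R)_1=\prox_{a\ell(\cdot,T(R)_2,R_3,c,\epsilon)}\big(R_1+b\,\partial_u\ell_0(T(R)_2,R_3,c,\epsilon)\big).
\end{align}
By construction $\tilde{R}_{i,\smi}=T_{\gamma_i,\gamma_{0,i},\upsilon_i}(R_{i,\smi})$ and $G=T_{V,V_0,\chi}(g)$.

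\textbf{Step 2: substitute the deterministic equivalents.} We have $\gamma_i=V+O_{L_k}(\polylog(n)/\sqrt{n})$ and $\upsilon_i=\chi+O_{L_k}(\polylog(n)/\sqrt n)$. These follow from Lemmas~\ref{lemma:V_concentrates} and \ref{lem:X_concentrates}, with $V=\Ea{\mathfrak V}$ and $\chi=\Ea{\mathfrak X}$. The analogous statement for $\gamma_{0,i}$ and $V_0$ holds via the first-stage version of these lemmas, as in the remark on the trivial ERM.

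Next, the proximal map is Lipschitz in its parameters. By the implicit function theorem, $\partial_a \prox_{a f}(x)=-f'(p)/(1+af''(p))$, which is bounded by $\norm{\partial_r\ell}_\infty$. The derivative in the input is at most $1$, and the derivative in the second loss argument $u$ is bounded by $a\norm{\partial_r\partial_u\ell}_\infty$. Together with the bounds on $\partial_u\ell_0$ and $\partial_u^2\ell_0$, this gives
\begin{align}
|T_{\gamma_i,\gamma_{0,i},\upsilon_i}(R)_{1:2}-T_{V,V_0,\chi}(R)_{1:2}|\le \polylog(n)\,(|\gamma_i-V|+|\gamma_{0,i}-V_0|+|\upsilon_i-\chi|)
\end{align}
uniformly in $R$, using that $\gamma_i\le \norm{x_i}^2/(n\lambda)$ is $O_{L_k}(\polylog n)$. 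Since $\varphi$ is Lipschitz in its first two entries with a linear growth factor, Cauchy--Schwarz (with moment bounds on $R_{i,\smi}$ coming from Lemma~\ref{lemma:normx}, Lemma~\ref{lem: normw} and Lemma~\ref{lem: normw0}) shows that replacing the random scalars costs $O(\polylog(n)/\sqrt n)$ in expectation.

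\textbf{Step 3: apply Lemma~\ref{lemma:distrib_rsmi} to the composition.} It remains to check that $\psi:=\varphi\circ T_{V,V_0,\chi}$ belongs to $\mathcal{F}$. For the Lipschitz condition, $T_{1:2}$ is $\polylog(n)$-Lipschitz in $(R_1,R_2)$ with the other entries fixed. Moreover $|T(R)_{1:2}|\le |R_1|+|R_2|+\polylog(n)$, because $|\prox_{af}(x)-x|\le a\norm{f'}_\infty$. The growth factor $(1+|T_1|+|T_2|+\dots)$ is therefore dominated by $\polylog(n)(1+|R_1|+|R_2|+\dots)$.

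For the polynomial bound at $R_1=R_2=0$, we use $|T(R)_{1:2}|\le\polylog(n)$. Combining the Lipschitz property of $\varphi$ with its polynomial bound at zero then yields $|\psi(R)|\le \polylog(n)+|Q_\varphi|$.

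Invoking Lemma~\ref{lemma:distrib_rsmi} for $\psi$ gives the $O(\polylog(n)/n^{1/4})$ rate, and adding the Step~2 error concludes the proof.

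\textbf{Main obstacle.} The delicate point is Step~2. The random scalars $\gamma_i$ and $\upsilon_i$ are correlated with $x_i$, and $\varphi$ is only Lipschitz with linear growth. The argument therefore relies on the uniform-in-$R$ parameter-Lipschitz bound for $T$ together with $L_k$ moment control of both the differences and the growth factor, rather than on any independence between them. One must also make sure the bound on $\partial_a\prox$ uses only $\norm{\partial_r\ell}_\infty$ and convexity, so that no lower bound on the curvature is needed.
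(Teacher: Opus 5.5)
Your proposal is correct and is essentially the paper's proof. The paper likewise shows that $\varphi\circ(f_{V,V_0,\chi},h_{V_0})$ lies in $\mathcal{F}$, using contractivity of the prox, $|\chi|=O(\polylog(n))$, and bounds on $|f_{V,V_0,\chi}(0,0)|$ and $|h_{V_0}(0)|$, so that Lemma~\ref{lemma:distrib_rsmi} applies; it then replaces $(\upsilon_i,\gamma_i,\gamma_{0,i})$ by $(\chi,V,V_0)$ one at a time via mean-value bounds on the prox and H\"older, with the order chosen so that each Lipschitz constant is deterministic. One small correction: $\gamma_i-V$ and $\upsilon_i-\chi$ are only $O_{L_2}(\polylog(n)/\sqrt{n})$, not $O_{L_k}$, because $\mathfrak{V}-V$ and $\mathfrak{X}-\chi$ are controlled only through their variances; this is all your H\"older step needs, provided you take the growth factor and any random Lipschitz constant in higher moments.
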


\begin{proof}
    To alleviate the notations, we will in this proof use the shorthands
    \begin{align}
        &f_{v_1,v_2,v_3}(r,u)=\prox_{v_1 \ell(\cdot,\prox_{v_2 \ell_{0}(\cdot,\inprod{x_i,\beta},c_i,\epsilon)}(u),\inprod{x_i,\beta},c_i,\epsilon_i )}\left( r+v_3\partial_u\ell_{0}(\prox_{v_2 \ell_{0}(\cdot,\inprod{x_i,\beta},c_i,\epsilon_i)}(u), \inprod{x_i,\beta},c_i,\epsilon_i)\right),\\
        &h_{v_2}(u)=  \prox_{v_2 \ell_{0}(\cdot,\inprod{x_i,\beta},c_i,\epsilon_i)}(u).
    \end{align}
We can decompose the objective as
\begin{align}
&\Bigg|
        \Ea{\varphi\left(\tilde{R}_{i,\smi},c_i,\epsilon_i\right)}
-\Ea{\varphi\left(G,c,\epsilon\right)}
        \Bigg|\\
&=\left|
        \Ea{\varphi\left(\left[\begin{array}{cc}
     f_{\gamma_i,\gamma_{0,i},\upsilon_i} (r_{i,\smi},u_{i,\smi}) \\
   h_{\gamma_{0,i}}(u_{i,\smi})\\
   \inprod{x_i,\beta}\\
         \hline
         \\
         \mathcal{M}x_i\\
         ~
\end{array}\right],c_i,\epsilon_i\right)}
-\Ea{\varphi\left(G,c,\epsilon\right)}
        \right|
\\
        &\le \left|
        \Ea{\varphi\left(\left[\begin{array}{cc}
     f_{\gamma_i,\gamma_{0,i},\upsilon_i }(r_{i,\smi},u_{i,\smi}) \\
   h_{\gamma_{0,i}}(u_{i,\smi})\\
   \inprod{x_i,\beta}\\
         \hline
         \\
         \mathcal{M}x_i\\
         ~
\end{array}\right],c_i,\epsilon_i\right)}
-\Ea{\varphi\left(\left[\begin{array}{cc}
     f_{V,V_0,\chi }(r_{i,\smi},u_{i,\smi}) \\
   h_{V_0}(u_{i,\smi})\\
   \inprod{x_i,\beta}\\
         \hline
         \\
         \mathcal{M}x_i\\
         ~
\end{array}\right],c_i,\epsilon_i\right)}
        \right|\\
        &\qquad +\left|
        \Ea{\varphi\left(\left[\begin{array}{cc}
     f_{V,V_0,\chi }(r_{i,\smi},u_{i,\smi}) \\
   h_{V_0}(u_{i,\smi})\\
   \inprod{x_i,\beta}\\
         \hline
         \\
         \mathcal{M}x_i\\
         ~
\end{array}\right],c_i,\epsilon_i\right)}
-\Ea{\varphi\left(G,c,\epsilon\right)}
        \right|.  \label{eq:obj_tilder_clt}
\end{align}
In words, we first replace the stochastic statistics $\gamma_i,\gamma_{0,i},\upsilon_i$ by their deterministic equivalents $V, V_0,\chi$. The associated error is controlled by the first term of the right-hand side. Then, the second term bounds the approximation when replacing the expectations over the leave-one-out residuals $r_{i,\smi},u_{i,\smi}$ by the associated Gaussian averages. We first examine the latter. One can guarantee that the associated error term vanishes as $n\to\infty$ from Lemma \ref{lemma:distrib_rsmi}, provided one establishes beforehand that the coumpound function
\begin{align}
    \tilde{\varphi}: ~ \left[\begin{array}{cc}
    r \\
   u\\
   s\\
         \hline
         \\
      \omega\\
         ~
\end{array}\right],c,\epsilon \to \varphi\left(\left[\begin{array}{cc}
     f_{V,V_0,\chi }(r,u) \\
   h_{V_0}(u)\\
s\\
         \hline
         \\
         \omega\\
         ~
\end{array}\right],c,\epsilon\right).
\end{align}
belongs to $\mathcal{F}$, namely is pseudo-Lipschitz in its first two arguments and admits a polynomial majorant in the others. 
For that purpose, it is sufficient to show that $h_{V_0}$ and $f_{V,V_0,\chi}$ are pseudo-Lipschitz. The first claim follows from the contractivity of the proximal operator.  Examining the second claim, let us take $(u_1, r_1), (u_2, r_2)\in \R^2$ and bound
\begin{align}
    |f_{V,V_0,\chi}(r_1,u_1)-f_{V,V_0,\chi}(r_2,u_2)|&\le|f_{V,V_0,\chi}(r_1,u_1)-f_{V,V_0,\chi}(r_2,u_1)|+|f_{V,V_0,\chi}(r_2,u_1)-f_{V,V_0,\chi}(r_2,u_2)|.
\end{align}
It follows from the contractivity of the proximal operator that
\begin{align}
    |f_{V,V_0,\chi}(r_1,u_1)-f_{V,V_0,\chi}(r_2,u_1)|\le |r_1-r_2|.
\end{align}
On the other hand,
\begin{align}
    &f_{V,V_0,\chi}(r_2,u_1)-f_{V,V_0,\chi}(r_2,u_2)\\
    &=\chi\left( \partial_u\ell_{0}(\prox_{V_0 \ell_{0}(\cdot,s,c,\epsilon)}(u_1),s,c,\epsilon )-\partial_u\ell_{0}(\prox_{V_0 \ell_{0}(\cdot,s,c,\epsilon)}(u_2),s,c,\epsilon )\right)\\
    &~ + V\left[\partial_r\ell\left(f_{V,V_0,\chi}(r_2,u_2),\prox_{V_0 \ell_{0}(\cdot,s,c,\epsilon)}(u_2),s,c,\epsilon \right)-\partial_r\ell\left(f_{V,V_0,\chi}(r_2,u_1),\prox_{V_0 \ell_{0}(\cdot,s,c,\epsilon)}(u_1),s,c,\epsilon \right)\right]\\
     &=\chi\left( \partial_u\ell_{0}(\prox_{V_0 \ell_{0}(\cdot,s,c,\epsilon)}(u_1),s,c,\epsilon )-\partial_u\ell_{0}(\prox_{V_0 \ell_{0}(\cdot,s,c,\epsilon)}(u_2),s,c,\epsilon )\right)\\
    &\qquad + V\Bigg[\partial_r^2\ell\left(\check{r},\check{u},s,c,\epsilon \right)(f_{V,V_0,\chi}(r_2,u_2)-f_{V,V_0,\chi}(r_2,u_1))\\
    &\qquad \qquad\qquad \qquad +\partial_r\partial_u\ell\left(\check{r},\check{u},s,c,\epsilon \right)(\prox_{V_0 \ell_{0}(\cdot,s,c,\epsilon)}(u_2)-\prox_{V_0 \ell_{0}(\cdot,s,c,\epsilon)}(u_1))\Bigg],
\end{align}
for some $(\check{r},\check{u})$ lying on the line connecting $f_{V,V_0,\chi}(r_2,u_2),\prox_{V_0 \ell_{0}(\cdot,y,c_i,\epsilon)}(u_2)$ and $f_{V,V_0,\chi}(r_2,u_1)$, $\prox_{V_0 \ell_{0}(\cdot,y,c_i,\epsilon)}(u_1)$, leveraging the mean-value theorem. Thus, 
\begin{align}
    |f_{V,V_0,\chi}(r_2,u_1)-f_{V,V_0,\chi}(r_2,u_2)|&\le \frac{1}{1+V\partial_r^2\ell\left(\check{r},\check{u},y,c_i,\epsilon \right)}\left[
    \left(|\chi| \norm{\partial_u^2\ell_0}_\infty
    +V\norm{\partial_r\partial_u\ell}_\infty
    \right) |u_2-u_1|
    \right]\\
    &\le  \left(|\chi| \norm{\partial_u^2\ell_0}_\infty
    +\frac{1}{\lambda}\norm{\partial_r\partial_u\ell}_\infty
    \right) |u_2-u_1|,
\end{align}
using again the contractivity of the proximal operator and the rough bound $V=\sfrac{1}{n}\Ea{\tr[H^{-1}]}\le \Ea{\norm{H^{-1}}}\le\sfrac{1}{\lambda}$. It now remains to bound $\chi=\sfrac{1}{n}\Ea{\tr[H^{-1}GH_{0}^{-1}]}$. One has
\begin{align}
    \chi&=\frac{1}{n}\Ea{\tr[H^{-1}GH_{0}^{-1}]}\le \frac{1}{\lambda \lambda_0}\Ea{\norm{G}}\\
    &\le  \frac{1}{\lambda \lambda_0}\norm{\partial_r\partial_u\ell}_\infty\Ea{\norm{\hat{\Sigma}}}\le O(\polylog(n))
\end{align}
using Lemma \ref{lem:cov}. Thus,
\begin{align}
       &|f_{V,V_0,\chi}(r_1,u_1)-f_{V,V_0,\chi}(r_2,u_2)|\le 
  O(\polylog(n))(|r_1-r_2|+|u_1-u_2|),
\end{align}
which shows that $f_{V,V_0,\chi}$ is Lipschitz, and thus so is $\tilde{\varphi}$. 
To complete the demonstration that $\tilde{\varphi}\in\mathcal{F}$, we now need to construct a polynomial majorant in its remaining variables. We note that
\begin{align}
      \tilde{\varphi}\left( \left[\begin{array}{cc}
    0 \\
   0\\
   s\\
         \hline
         \\
      \omega\\
         ~
\end{array}\right],c,\epsilon\right)= \varphi\left(\left[\begin{array}{cc}
     f_{V,V_0,\chi }(0,0) \\
   h_{V_0}(0)\\
s\\
         \hline
         \\
         \omega\\
         ~
\end{array}\right],c,\epsilon\right)\le \mathsf{L}_\varphi (|f_{V,V_0,\chi }(0,0)|+| h_{V_0}(0)|)+ \left|Q_\varphi\left(\left[\begin{array}{cc}
   s\\
   \hline\\
   \omega\\
   ~
\end{array}\right],\epsilon\right)\right|
\end{align}
But on the one hand 
\begin{align}
    | h_{V_0}(0)|\le \frac{1}{\lambda_0} \norm{\partial_u\ell_0}_\infty,
\end{align}
while on the other hand
\begin{align}
    |f_{V,V_0,\chi }(0,0)|\le\chi\norm{\partial_u\ell_0}_\infty+\frac{1}{\lambda}\norm{\partial_r\ell}_\infty.
\end{align}
One can thus bound the test function as 
\begin{align}
        \tilde{\varphi}\left( \left[\begin{array}{cc}
    0 \\
   0\\
   s\\
         \hline
         \\
      \omega\\
         ~
\end{array}\right],c,\epsilon\right)&\le O(\polylog(n))+\left|Q_\varphi\left(\left[\begin{array}{cc}
   s\\
   \hline\\
   \omega\\
   ~
\end{array}\right],c,\epsilon\right)\right|\\
&\le 1+ O(\polylog(n))+ Q_\varphi\left(\left[\begin{array}{cc}
   s\\
   \hline\\
   \omega\\
   ~
\end{array}\right],\epsilon\right)^2,
\end{align}
yielding a polynomial majorant (recall that we allow coefficients to be $O(\polylog(n))$). Thus, we have established that $\tilde{\varphi}\in \mathcal{F}$.
One can thus apply Lemma \ref{lemma:distrib_rsmi}, and approximate the expectation over leave-one-out residuals by the corresponding Gaussian expectation
\begin{align}
    \left|
        \Ea{\varphi\left(\left[\begin{array}{cc}
     f_{V,V_0,\chi }(r_{i,\smi},u_{i,\smi}) \\
   h_{V_0}(u_{i,\smi})\\
   \inprod{x_i,\beta}\\
         \hline
         \\
         \mathcal{M}x_i\\
         ~
\end{array}\right],c_i,\epsilon_i\right)}
-\Ea{\varphi\left(G,c,\epsilon\right)}
        \right|=O\left(\frac{\polylog(n)}{n^{\frac{1}{4}}}\right).
\end{align}
Returning to \eqref{eq:obj_tilder_clt}, we now aim to bound the first term in the decomposition \eqref{eq:obj_tilder_clt}. We decompose it into three error terms, which we sequentially control:
\begin{align}
    &\left|
        \Ea{\varphi\left(\left[\begin{array}{cc}
     f_{\gamma_i,\gamma_{0,i},\upsilon_i }(r_{i,\smi},u_{i,\smi}) \\
   h_{\gamma_{0,i}}(u_{i,\smi})\\
   \inprod{x_i,\beta}\\
         \hline
         \\
         \mathcal{M}x_i\\
         ~
\end{array}\right],c_i,\epsilon_i\right)}
-\Ea{\varphi\left(\left[\begin{array}{cc}
     f_{V,V_0,\chi }(r_{i,\smi},u_{i,\smi}) \\
   h_{V_0}(u_{i,\smi})\\
   \inprod{x_i,\beta}\\
         \hline
         \\
         \mathcal{M}x_i\\
         ~
\end{array}\right],c_i,\epsilon_i\right)}
        \right|\\
        &\le \left|
        \Ea{\varphi\left(\left[\begin{array}{cc}
     f_{\gamma_i,\gamma_{0,i},\upsilon_i }(r_{i,\smi},u_{i,\smi}) \\
   h_{\gamma_{0,i}}(u_{i,\smi})\\
   \inprod{x_i,\beta}\\
         \hline
         \\
         \mathcal{M}x_i\\
         ~
\end{array}\right],c_i,\epsilon_i\right)}
-\Ea{\varphi\left(\left[\begin{array}{cc}
     f_{\gamma_i,\gamma_{0,i},\chi}(r_{i,\smi},u_{i,\smi}) \\
   h_{\gamma_{0,i}}(u_{i,\smi})\\
   \inprod{x_i,\beta}\\
         \hline
         \\
         \mathcal{M}x_i\\
         ~
\end{array}\right],c_i,\epsilon_i\right)}
        \right|\\
&~~~~\left|
        \Ea{\varphi\left(\left[\begin{array}{cc}
     f_{\gamma_i,\gamma_{0,i},\chi }(r_{i,\smi},u_{i,\smi}) \\
   h_{\gamma_{0,i}}(u_{i,\smi})\\
   \inprod{x_i,\beta}\\
         \hline
         \\
         \mathcal{M}x_i\\
         ~
\end{array}\right],c_i,\epsilon_i\right)}
-\Ea{\varphi\left(\left[\begin{array}{cc}
     f_{V,\gamma_{0,i},\chi}(r_{i,\smi},u_{i,\smi}) \\
   h_{\gamma_{0,i}}(u_{i,\smi})\\
   \inprod{x_i,\beta}\\
         \hline
         \\
         \mathcal{M}x_i\\
         ~
\end{array}\right],c_i,\epsilon_i\right)}
        \right|\\
&~~~~\left|
        \Ea{\varphi\left(\left[\begin{array}{cc}
     f_{V,\gamma_{0,i},\chi }(r_{i,\smi},u_{i,\smi}) \\
   h_{\gamma_{0,i}}(u_{i,\smi})\\
   \inprod{x_i,\beta}\\
         \hline
         \\
         \mathcal{M}x_i\\
         ~
\end{array}\right],c_i,\epsilon_i\right)}
-\Ea{\varphi\left(\left[\begin{array}{cc}
     f_{V,V_0,\chi}(r_{i,\smi},u_{i,\smi}) \\
   h_{V_0}(u_{i,\smi})\\
   \inprod{x_i,\beta}\\
         \hline
         \\
         \mathcal{M}x_i\\
         ~
\end{array}\right],c_i,\epsilon_i\right)}
        \right|.
\end{align}
Because $\varphi$ is $\mathsf{L}_\varphi-$ Lipschitz in its first variable up to linear terms, the first error can be controlled by
\begin{align}
    \mathsf{L}_\varphi \Ea{R(r_{i,\smi},u_{i,\smi})| f_{\gamma_i,\gamma_{0,i},\upsilon_i }(r_{i,\smi},u_{i,\smi})- f_{\gamma_i,\gamma_{0,i}, \chi }(r_{i,\smi},u_{i,\smi})|}&\le \mathsf{L}_\varphi\norm{\partial_u\ell_0}_\infty \Ea{|\chi-\upsilon_i|^2}^{\frac{1}{2}} \Ea{R(r_{i,\smi},u_{i,\smi})^2}^{\frac{1}{2}}
\end{align}
We used H\"older's inequality, and introduced the linear Lipschitz majorant
\begin{align}
    R(r_{i,\smi},u_{i,\smi})=&1+|f_{\gamma_i,\gamma_{0,i},\upsilon_i }(r_{i,\smi},u_{i,\smi})|+|f_{\gamma_i,\gamma_{0,i},\chi}(r_{i,\smi},u_{i,\smi})|\\
    &+|f_{V,\gamma_{0,i},\chi }(r_{i,\smi},u_{i,\smi})|+|f_{V,V_0,\chi }(r_{i,\smi},u_{i,\smi})|+|h_{\gamma_{0,i}}(u_{i,\smi})|+|h_{V_0}(u_{i,\smi})|.
\end{align}
So as not to repeat several times the analysis, we chose to include more terms than required by the property \eqref{eq:phi_liphschitz}, so that the bound we establish for  $R(r_{i,\smi},u_{i,\smi})$ can be reemployed for the other two error terms. Since Lemma \ref{lem:X_concentrates} guarantees $\Ea{|\chi-\upsilon_i|^2}^{\frac{1}{2}}=O\left(\sfrac{\polylog(n)}{\sqrt{n}}\right)$, we only need to bound $\Ea{R(r_{i,\smi},u_{i,\smi})^2}$ by $O(\polylog(n))$. We have
\begin{align}
    \frac{1}{7}R(r_{i,\smi},u_{i,\smi})^2&\le1+ 12 r_{i,\smi}^2+3(\upsilon_i^2+3\chi^2)\norm{\partial_u\ell_0}_\infty^2+3(2\gamma_i^2+2V^2)\norm{\partial_r\ell}_\infty^2+4u_{i,\smi}^2+2(\gamma_{0,i}^2+V_0^2)\norm{\partial_u\ell_0^2}_\infty^2\\
    &\le1+ 12 (\sup_{i\in[n]} |r_{i,\smi}|)^2+12\chi^2\norm{\partial_u\ell_0}_\infty^2+12 V^2\norm{\partial_r\ell}_\infty^2+4(\sup_{i\in[n]}|u_{i,\smi}|)^2\\
    &\qquad+4V_0^2\norm{\partial_u\ell_0^2}_\infty^2+O_{L_1}\left(\frac{\polylog(n)}{\sqrt{n}}\right)
\end{align}
Lemma \ref{lem:sup_s} ensures that $\sup_{i\in[n]} |r_{i,\smi}|,\sup_{i\in[n]} |u_{i,\smi}|+O_{L_k}(\polylog(n))$. Finally, we established above that $\chi,V,V_0$ are bounded by $O(\polylog(n))$. Thus
\begin{align}
    \Ea{R(r_{i,\smi},u_{i,\smi})^2}^{\frac{1}{2}}=O(\polylog(n)).
\end{align}
Hence
\begin{align}
    \mathsf{L}_\varphi \Ea{R(r_{i,\smi},u_{i,\smi})| f_{\gamma_i,\gamma_{0,i},\upsilon_i }(r_{i,\smi},u_{i,\smi})- f_{\gamma_i,\gamma_{0,i}, \chi }(r_{i,\smi},u_{i,\smi})|}&\le \mathsf{L}_\varphi\norm{\partial_u\ell_0}_\infty O\left(\frac{\polylog(n)}{\sqrt{n}}\right).
\end{align}
 The second error term is similarly controlled by $$\mathsf{L}_\varphi \Ea{R(r_{i,\smi},u_{i,\smi})| f_{\gamma_i,\gamma_{0,i},\chi }(r_{i,\smi},u_{i,\smi})- f_{V,\gamma_{0,i}, \chi }(r_{i,\smi},u_{i,\smi})|}.$$ Since
\begin{align}
     &f_{\gamma_i,\gamma_{0,i},\chi }(r_{i,\smi},u_{i,\smi})- f_{V,\gamma_{0,i}, \chi }(r_{i,\smi},u_{i,\smi})\\
     &\qquad =-\gamma_i \partial_r\ell\left(  f_{\gamma_i,\gamma_{0,i},\chi }(r_{i,\smi},u_{i,\smi}),\prox_{\gamma_{0,i} \ell_{0}(\cdot,\inprod{\beta,x_i},c_i,\epsilon)}(u_{i,\smi}),\inprod{\beta,x_i},c_i,\epsilon_i \right)\\
     &\qquad\qquad +V \partial_r\ell\left(  f_{V,\gamma_{0,i},\chi }(r_{i,\smi},u_{i,\smi}),\prox_{\gamma_{0,i} \ell_{0}(\cdot,\inprod{\beta,x_i},c_i,\epsilon)}(u_{i,\smi}),\inprod{\beta,x_i},c_i,\epsilon_i \right)
     \\&\qquad= -V \partial_r^2\ell\left(\check{r},\prox_{\gamma_{0,i} \ell_{0}(\cdot,\inprod{\beta,x_i},c_i,\epsilon)}(u_{i,\smi}),\inprod{\beta,x_i},c_i,\epsilon_i \right)(f_{\gamma_i,\gamma_{0,i},\chi }(r_{i,\smi},u_{i,\smi})- f_{V,\gamma_{0,i}, \chi }(r_{i,\smi},u_{i,\smi}))\\
     &\qquad\qquad +(V-\gamma_i) \partial_r\ell\left(  f_{\gamma_i,\gamma_{0,i},\chi }(r_{i,\smi},u_{i,\smi}),\prox_{\gamma_{0,i} \ell_{0}(\cdot,\inprod{\beta,x_i},c_i,\epsilon)}(u_{i,\smi}),\inprod{\beta,x_i},c_i,\epsilon_i \right),
\end{align}
for some $\check{r}$ in the unordered interval $(f_{\gamma_i,\gamma_{0,i},\chi }(r_{i,\smi},u_{i,\smi}),f_{V,\gamma_{0,i}, \chi }(r_{i,\smi},u_{i,\smi}))$, from the mean value theorem.
Thus
\begin{align}
    \left|f_{\gamma_i,\gamma_{0,i},\chi }(r_{i,\smi},u_{i,\smi})- f_{V,\gamma_{0,i}, \chi }(r_{i,\smi},u_{i,\smi})\right|\le \left|\frac{(V-\gamma_i) \partial_r\ell\left(  f_{\gamma_i,\gamma_{0,i},\chi }(r_{i,\smi},u_{i,\smi}),\prox_{\gamma_{0,i} \ell_{0}(\cdot,\inprod{\beta,x_i},c_i,\epsilon)}(u_{i,\smi}),\inprod{\beta,x_i},c_i,\epsilon_i \right)}{1+V \partial_r^2\ell\left(\check{r},\prox_{\gamma_{0,i} \ell_{0}(\cdot,\inprod{\beta,x_i},c_i,\epsilon)}(u_{i,\smi}),\inprod{\beta,x_i},c_i,\epsilon_i \right)}\right|,
\end{align}
from which the bound follows:
\begin{align}
\Ea{R(r_{i,\smi},u_{i,\smi})| f_{\gamma_i,\gamma_{0,i},\chi }(r_{i,\smi},u_{i,\smi})- f_{V,\gamma_{0,i}, \chi }(r_{i,\smi},u_{i,\smi})|} &\le \norm{\partial_r\ell}_\infty \Ea{R(r_{i,\smi},u_{i,\smi})^2}^{\frac{1}{2}} \Ea{|V-\gamma_i|^2}^{\frac{1}{2}}\\
&\le \norm{\partial_r\ell}_\infty O\left(\frac{\polylog(n)}{\sqrt{n}}\right),
\end{align}
using H\"older's inequality and the concentration result \ref{lemma:V_concentrates}. The last error term is controlled by
\begin{align}
    \mathsf{L}_\varphi \Ea{R(r_{i,\smi},u_{i,\smi})| f_{V,\gamma_{0,i},\chi }(r_{i,\smi},u_{i,\smi})- f_{V,V_0, \chi }(r_{i,\smi},u_{i,\smi})|}+\mathsf{L}_\varphi \Ea{R(r_{i,\smi},u_{i,\smi})| h_{\gamma_{0,i}}(u_{i,\smi})-h_{V_0}(u_{i,\smi})|}.
\end{align}
Using the same reasoning as above, there exists $\check{u}\in(h_{\gamma_{0,i}}(u_{i,\smi}),h_{V_0}(u_{i,\smi}))$ such that
\begin{align}
     h_{\gamma_{0,i}}(u_{i,\smi})-h_{V_0}(u_{i,\smi})=\frac{1}{1+V_0 \partial_u^2\ell_0(\check{u},\inprod{\beta,x_i}, c_i,\epsilon_i)}\partial_u\ell_0(h_{\gamma_{0,i}}(u_{i,\smi}),\inprod{\beta,x_i}, c_i,\epsilon_i)(V_0-\gamma_{0,i}).
\end{align}
Using Lemma \ref{lemma:V_concentrates}, it follows that
\begin{align}
    \Ea{R(r_{i,\smi},u_{i,\smi})|h_{\gamma_{0,i}}(u_{i,\smi})-h_{V_0}(u_{i,\smi})|}\le \norm{\partial_u\ell}_\infty O\left(\frac{\polylog(n)}{\sqrt{n}}\right).
\end{align}
Finally, control of $\Ea{| f_{V,\gamma_{0,i},\chi }(r_{i,\smi},u_{i,\smi})- f_{V,V_0, \chi }(r_{i,\smi},u_{i,\smi})|}$ can be obtained as follows:
\begin{align}
    &f_{V,\gamma_{0,i},\chi }(r_{i,\smi},u_{i,\smi})- f_{V,V_0, \chi }(r_{i,\smi},u_{i,\smi})\\&
    \qquad\qquad=\chi\left(
    \partial_u\ell_0(h_{\gamma_{0,i}}(u_{i,\smi}),\inprod{x_i,\beta},c_i,\epsilon_i)-\partial_u\ell_0(h_{V_0}(u_{i,\smi}),\inprod{x_i,\beta},c_i,\epsilon_i)
    \right)\\
    &\qquad\qquad\qquad-V\left(
    \partial_r\ell\left(  f_{V,\gamma_{0,i} ,\chi }(r_{i,\smi},u_{i,\smi}),h_{\gamma_{0,i} }(u_{i,\smi}),\inprod{\beta,x_i},c_i,\epsilon_i \right)-\partial_r\ell\left(  f_{V,V_0,\chi }(r_{i,\smi},u_{i,\smi}),h_{\gamma_{0,i} }(u_{i,\smi}),\inprod{\beta,x_i},c_i,\epsilon_i \right)
    \right)\\
    &\qquad\qquad=\chi \partial_u^2\ell_0(\check{u},\inprod{\beta,x_i},c_i,\epsilon_i)\left(
    h_{\gamma_{0,i}}(u_{i,\smi})-h_{V_0}(u_{i,\smi})
    \right)\\
    &\qquad\qquad\qquad-V\partial_r^2\ell\left(  \check{r},\hat{u},\inprod{\beta,x_i},c_i,\epsilon_i \right)\left(
     f_{V,\gamma_{0,i},\chi }(r_{i,\smi},u_{i,\smi})-f_{V,V_0,\chi }(r_{i,\smi},u_{i,\smi})
    \right)
    \\
    &\qquad\qquad\qquad-V\partial_r\partial_u\ell\left(  \check{r},\hat{u},\inprod{\beta,x_i},c_i,\epsilon_i \right)\left(
     h_{\gamma_{0,i}}(u_{i,\smi})-h_{V_0}(u_{i,\smi})
    \right),
\end{align}
for some $\check{u},\hat{u}\in (  h_{\gamma_{0,i}}(u_{i,\smi}),h_{V_0}(u_{i,\smi}))$ and $\check{r}\in (f_{V,\gamma_{0,i},\chi }(r_{i,\smi},u_{i,\smi}),f_{V,V_0,\chi }(r_{i,\smi},u_{i,\smi})) $, applying the mean value theorem. One can thus express $f_{V,\gamma_{0,i},\chi }(r_{i,\smi},u_{i,\smi})- f_{V,V_0, \chi }(r_{i,\smi},u_{i,\smi})$ as 
\begin{align}
    f_{V,\gamma_{0,i},\chi }(r_{i,\smi},u_{i,\smi})- f_{V,V_0, \chi }(r_{i,\smi},u_{i,\smi})=\frac{\scriptstyle  \left(\chi \partial_u^2\ell_0(\check{u},\inprod{\beta,x_i},c_i,\epsilon_i)-V\partial_r\partial_u\ell\left(  \check{r},\hat{u},\inprod{\beta,x_i},c_i,\epsilon_i \right)\right)\left(
    h_{\gamma_{0,i}}(u_{i,\smi})-h_{V_0}(u_{i,\smi})
    \right)}{1+V\partial_r^2\ell\left(  \check{r},\hat{u},\inprod{\beta,x_i},c_i,\epsilon_i \right)},
\end{align}
from which it follows that
\begin{align}
    \left|f_{V,\gamma_{0,i},\chi }(r_{i,\smi},u_{i,\smi})- f_{V,V_0, \chi }(r_{i,\smi},u_{i,\smi})\right|
    \le \left[\chi\norm{\partial_u^2\ell_0}_\infty +\frac{1}{\lambda }\norm{\partial_r\partial_u\ell}_\infty\right] |h_{\gamma_{0,i}}(u_{i,\smi})-h_{V_0}(u_{i,\smi})|.
\end{align}
Thus, from H\"older's inequality and the previous bound on $|h_{\gamma_{0,i}}(u_{i,\smi})-h_{V_0}(u_{i,\smi})|$,
\begin{align}
    \Ea{R(r_{i,\smi},u_{i,\smi})| f_{V,\gamma_{0,i},\chi }(r_{i,\smi},u_{i,\smi})- f_{V,V_0, \chi }(r_{i,\smi},u_{i,\smi})|}=O\left(\frac{\polylog(n)}{\sqrt{n}}\right).
\end{align}

Putting these three bounds together finally yields
\begin{align}
    \left|
        \Ea{\varphi\left(\left[\begin{array}{cc}
     f_{\gamma_i,\gamma_{0,i},\upsilon_i }(r_{i,\smi},u_{i,\smi}) \\
   h_{\gamma_{0,i}}(u_{i,\smi})\\
   y_i\\
         \hline
         \\
         \mathcal{M}x_i\\
         ~
\end{array}\right],c_i,\epsilon_i\right)}
-\Ea{\varphi\left(\left[\begin{array}{cc}
     f_{V,V_0,\chi }(r_{i,\smi},u_{i,\smi}) \\
   h_{V_0}(u_{i,\smi})\\
   y_i\\
         \hline
         \\
         \mathcal{M}x_i\\
         ~
\end{array}\right],c_i,\epsilon_i\right)}
        \right| =O\left(\frac{\polylog(n)}{\sqrt{n}}\right).
\end{align}
Returning to the original objective \eqref{eq:obj_tilder_clt}, one finally reaches
\begin{align}
    \Bigg|
        \Ea{\varphi\left(\tilde{R}_{i,\smi},c_i,\epsilon_i\right)}
-\Ea{\varphi\left(G,c,\epsilon\right)}
        \Bigg|=O\left(\frac{\polylog(n)}{n^{\frac{1}{4}}}\right),
\end{align}
which completes the proof.
\end{proof}

%% file: Appendices_ArXiv/Proof_end.tex
Equipped with the results of Appendices \ref{app:approx}, \ref{app:concentration} and \ref{app:distrib}, we are now in a position to conclude the proof of Theorem \ref{theorem:main_theorem}.

\subsection{Deterministic equivalents} \label{subsec:expectations}

To summarize the proof so far, we have shown in Appendix \ref{app:concentration} that the summary statistics $\mathfrak{m}_c=\inprod{\hat{w},\mu_c},\mathfrak{h}=\inprod{\hat{w},\beta}, \mathfrak{t}=\inprod{\hat{w}, \hat{w}_0},\mathfrak{q}=\norm{\hat{w}}^2$ and $, \mathfrak{V}= \sfrac{1}{n}\tr[H_\smi^{-1}]$,  $\mathfrak{X}=\sfrac{1}{n}\tr[H^{-1}G H_{0}^{-1}]$ concentrate in $L_2$ to their expectations $m_c,\theta,t,q, V,\chi$. In this subsection, we compute the latter, leveraging the results of Appendix \ref{app:distrib} that characterize the distribution of the involved residuals. Throughout this section $g\sim\mathcal{N}(\Psi_c,\Phi)$ denotes a Gaussian variable with mean and covariance 
 \begin{align}\label{eq:cov_g}
        \Psi_{c}=\left[\begin{array}{c}
        m_{c}\\  
        m_{0,c}\\
             \nu_{c} \\
            \hline
           \\
            \rho_{c} \\
            ~
        \end{array}\right], && \Phi= \left[
        \begin{array}{cccccc}
           q & t & \theta & \vline& m& \\
            t & q_0 & \theta_0 &\vline &m_0&\\
            \theta &\theta_0 &\varrho& \vline& \nu&\\
            \hline
            &&&\vline&&\\
            m &m_0& \nu & \vline&\rho&\\
            &&&\vline&& 
        \end{array}
        \right],
    \end{align}
conditionally on $c$. We will also use the shorthands
\begin{align}
     &r=\prox_{V \ell(\cdot,u,g_3,c,\epsilon )}\left(g_1+\chi\partial_u\ell_{0}(u,g_3,c,\epsilon ),g_3,c,\epsilon)\right),\\
    &u=\prox_{V_0 \ell_{0}(\cdot,g_3,c,\epsilon)}(g_2),
\end{align}
where the values of $\epsilon,c$ are understood from the expectations inside which $r,u$ appear.
All expectations are understood to bear over $c,\epsilon, g$. We will also employ the shorthand $s_i=\inprod{x_i,\beta}$, so that $y_i=\phi(s_i,c_i,\epsilon_i)$. 

\begin{lemma}[Characterization of $\theta$]\label{eq:Lemma:self_cons_theta}
    The summary statistic $\theta=\Ea{\mathfrak{h}}$ is asymptotically characterized by the self-consistent equation
    \begin{align}
        \left|
        \theta +\frac{1}{\lambda}\Ea{\partial_r\ell(r,u,g_3,c,\epsilon)g_3}
        \right|=O\left(\frac{\polylog(n)}{n^{\frac{1}{4}}}\right).
    \end{align}
\end{lemma}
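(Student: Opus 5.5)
The starting point is the first-order optimality condition of the second-stage ERM, $\nabla\hat{\mathcal{R}}(\hat{w})=0$. Written out, it gives the exact identity
\begin{align}
\hat{w}=-\frac{1}{\lambda n}\sum_{i\in[n]}\partial_r\ell_i(r_i,u_i)\,x_i .
\end{align}
Taking the inner product with $\beta$ yields $\mathfrak{h}=-\frac{1}{\lambda n}\sum_i \partial_r\ell_i(r_i,u_i)s_i$, where $s_i=\inprod{x_i,\beta}$. Taking expectations and using exchangeability of the samples, we get
\begin{align}
\theta=\Ea{\mathfrak{h}}=-\frac{1}{\lambda}\Ea{\partial_r\ell(r_i,u_i,s_i,c_i,\epsilon_i)\,s_i}
\end{align}
for any fixed $i$. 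This reduces the lemma to showing that the single-sample average $\Ea{\partial_r\ell(r_i,u_i,s_i,c_i,\epsilon_i)s_i}$ matches its Gaussian counterpart $\Ea{\partial_r\ell(r,u,g_3,c,\epsilon)g_3}$ up to $O(\polylog(n)n^{-1/4})$.

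The comparison goes in two steps. First, replace the full residuals $(r_i,u_i)$ by the surrogate residuals $(\tilde{r}_{i,i},\tilde{u}_{i,i})$. By the mean value theorem and the $O(\polylog(n))$ bounds on $\partial_r^2\ell$ and $\partial_r\partial_u\ell$,
\begin{align}
|\partial_r\ell_i(r_i,u_i)-\partial_r\ell_i(\tilde{r}_{i,i},\tilde{u}_{i,i})|\le O(\polylog(n))\big(|r_i-\tilde{r}_{i,i}|+|u_i-\tilde{u}_{i,i}|\big).
\end{align}
Both differences are $O_{L_k}(\polylog(n)/\sqrt{n})$ by Proposition~\ref{prop: base_estimator_approxs} and Proposition~\ref{prop: reweighted_estimator_approxs}. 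Multiplying by $s_i$ and applying Cauchy--Schwarz, with $\Ea{s_i^2}=O(1)$ since $s_i$ is Gaussian with bounded mean and variance $\varrho$, the error of this step is $O(\polylog(n)/\sqrt{n})$.

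Second, apply Lemma~\ref{lemma:distrib_tilder} to the test function $\varphi(R,c,\epsilon)=\partial_r\ell(R_1,R_2,R_3,c,\epsilon)R_3$. This choice produces exactly the Gaussian expectation in the statement, because the first two entries of $G$ are the variables $r,u$ defined in \eqref{eq:proximals_ru} and the third is $g_3$.

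The main obstacle is checking that $\varphi\in\mathcal{F}$. The definition of $\mathcal{F}$ demands a Lipschitz bound in $(R_1,R_2)$ with constant $O(\polylog(n))$ at fixed remaining entries, but here the Lipschitz constant is $|R_3|\,(\norm{\partial_r^2\ell}_\infty\vee\norm{\partial_r\partial_u\ell}_\infty)$, which grows with $R_3$ rather than with $R_1,R_2$. The polynomial majorant condition is harmless, since $|\varphi(0,0,R_3,\dots)|\le\norm{\partial_r\ell}_\infty|R_3|$. I would handle the Lipschitz issue in one of two ways. One option is to truncate $R_3$ at $\polylog(n)$: the Gaussian tails of $s_i$ and $g_3$ make the truncation error negligible, and the truncated function lies in $\mathcal{F}$ with an $O(\polylog(n))$ constant. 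The cleaner option is to redo the last step of the proof of Lemma~\ref{lemma:distrib_rsmi} directly. There the Lipschitz factor multiplies a difference that is controlled in $L_2$, so one only needs $\Ea{R_3^2(1+|R_1|+|R_2|)^2}=O(\polylog(n))$ in the H\"older step. This holds by the Gaussian moments of $s_i$ together with Lemma~\ref{lem:sup_s}, and then the rate $O(\polylog(n)n^{-1/4})$ carries over unchanged.

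Combining the two steps and dividing by $\lambda$ gives the claim.
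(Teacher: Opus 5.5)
Your proposal is correct and follows essentially the same route as the paper: the optimality condition dotted with $\beta$, exchangeability, replacement of $(r_i,u_i)$ by $(\tilde r_{i,i},\tilde u_{i,i})$ via Propositions~\ref{prop: base_estimator_approxs} and~\ref{prop: reweighted_estimator_approxs}, and then Lemma~\ref{lemma:distrib_tilder}. Taking expectations before the replacement and using $\Ea{s_i^2}=O(1)$, rather than $\sup_i|s_i|$ via Lemma~\ref{lem:sup_s}, makes no difference.

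Your concern about $\varphi=\partial_r\ell\cdot R_3\in\mathcal{F}$ is legitimate, since the paper only asserts $\partial_r\ell\in\mathcal{F}$. Your truncation fix closes this gap cleanly. Your second option would additionally require redoing the first step of Lemma~\ref{lemma:distrib_tilder}, where $\gamma_i,\gamma_{0,i},\upsilon_i$ are replaced by their deterministic equivalents, and not only the last step of Lemma~\ref{lemma:distrib_rsmi}.
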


\begin{proof}
    We start from the optimality identity
    \begin{align}
        \hat{w}=-\frac{1}{\lambda} \frac{1}{n}\sum\limits_{i\in [n]} \partial_r\ell_i(r_i,u_i)x_i.
    \end{align}
Taking an inner product with $\beta$,
\begin{align}
    \inprod{\hat{w},\beta}=-\frac{1}{\lambda} \frac{1}{n}\sum\limits_{i\in [n]} \partial_r \ell_i(r_i,u_i)s_i.\label{eq:theta_step1}
\end{align}
Before taking an expectation, we first approximate the full residuals $r_i,u_i$ by the surrogates $\tilde{r}_{i,i},\tilde{u}_{i,i}$. The associated correction can be controlled as
\begin{align}
    &\left|
    \frac{1}{n}\sum\limits_{i\in [n]} \left(\partial_r\ell_i(r_i,u_i)-\partial_r\ell_i(\tilde{r}_{i,i},\tilde{u}_{i,i}\right)s_i
    \right|\\
    &\le (\norm{\partial_r^2\ell}_\infty\vee \norm{\partial_r\partial_u\ell}_\infty)\left(\sup_{i\in[n]}|r_i-\tilde{r}_{i,i}|+\sup_{i\in[n]}|u_i-\tilde{u}_{i,i}|\right) \sup_{i\in[n]}|s_i|=O_{L_k}\left(\frac{\polylog(n)}{\sqrt{n}}\right).
\end{align}
We used Proposition \ref{prop: base_estimator_approxs}, Proposition \ref{prop: reweighted_estimator_approxs} and Lemma \ref{lem:sup_s}. Thus, taking expectation in \eqref{eq:theta_step1}, 
\begin{align}
    \theta= -\frac{1}{\lambda} \frac{1}{n}\sum\limits_{i\in [n]} \Ea{\partial_r \ell(\tilde{r}_{i,i},\tilde{u}_{i,i},y_i, c_i,\epsilon_i)s_i}+O\left(\frac{\polylog(n)}{\sqrt{n}}\right)
\end{align}
By permutation symmetry of the problem, all expectations are equal, and 
\begin{align}
    \theta= -\frac{1}{\lambda} \Ea{\partial_r \ell(\tilde{r}_{1,1},\tilde{u}_{1,1},y_1, c_1,\epsilon_1)s_1}+O\left(\frac{\polylog(n)}{\sqrt{n}}\right)
\end{align}
Note that $\partial_r\ell\in\mathcal{F}$ from Assumption \ref{ass:ell}. Thus, one can apply Lemma \ref{lemma:distrib_tilder}, from which it follows that
\begin{align}
    \left|\Ea{\partial_r \ell(\tilde{r}_{1,1},\tilde{u}_{1,1},y_1, c_1,\epsilon_1)s_1}- \Ea{\partial_r\ell(r,u,g_3,c,\epsilon)g_3}\right| \xrightarrow{n\to\infty} 0.
\end{align}
This proves the Lemma.
\end{proof}

A similar characterization for $m_c$ follows from identical steps.

\begin{lemma}[Characterization of $m_c$]\label{eq:Lemma:self_cons_m}
    For any $c\in\mathcal{V}$, the summary statistic $m_c=\Ea{\mathfrak{m}_c}$ is asymptotically characterized by the self-consistent equation
    \begin{align}
        \left|
        m_c +\frac{1}{\lambda}\Ea{\partial_r\ell(r,u,g_3,c,\epsilon)g_c}
        \right|=O\left(\frac{\polylog(n)}{n^{\frac{1}{4}}}\right),
    \end{align}
where $g_c$ is the entry of $g$ associated to $c$.
\end{lemma}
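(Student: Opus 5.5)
The argument mirrors the proof of Lemma~\ref{eq:Lemma:self_cons_theta}, with $\beta$ replaced by $\mu_c$. Start from the stationarity condition of the second-stage risk,
\begin{align}
\hat{w}=-\frac{1}{\lambda}\frac{1}{n}\sum_{i\in[n]}\partial_r\ell_i(r_i,u_i)x_i ,
\end{align}
and take the inner product with $\mu_c$. This gives
\begin{align}
\mathfrak{m}_c=-\frac{1}{\lambda}\frac{1}{n}\sum_{i\in[n]}\partial_r\ell_i(r_i,u_i)\inprod{\mu_c,x_i}.
\end{align}
Note that $\inprod{\mu_c,x_i}=(\mathcal{M}x_i)_c$ is exactly one of the trailing entries of the vector $R_{i,\smi}$ (equivalently $\tilde{R}_{i,\smi}$) considered in Appendix~\ref{app:distrib}.

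The first step is to replace the full residuals $(r_i,u_i)$ by the surrogates $(\tilde{r}_{i,i},\tilde{u}_{i,i})$. By the mean value theorem, the error is bounded by
\begin{align}
\big(\norm{\partial_r^2\ell}_\infty\vee\norm{\partial_r\partial_u\ell}_\infty\big)\Big(\sup_i|r_i-\tilde{r}_{i,i}|+\sup_i|u_i-\tilde{u}_{i,i}|\Big)\sup_i|\inprod{\mu_c,x_i}| .
\end{align}
The first factor in parentheses is $O_{L_k}(\polylog(n)/\sqrt{n})$ by Propositions~\ref{prop: base_estimator_approxs} and~\ref{prop: reweighted_estimator_approxs}. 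For the last factor, $\inprod{\mu_c,x_i}=\rho_{c,c_i}+\inprod{\mu_c,z_i}$ with $\inprod{\mu_c,z_i}\sim\mathcal{N}(0,\norm{\mu_c}^2)$ and $\norm{\mu_c}\le M$, so a maximal inequality over $n$ Gaussians (the same device as Lemma~\ref{lem:sup_s} used for $s_i$) gives $O_{L_k}(\polylog(n))$. Hölder then bounds the product by $O_{L_1}(\polylog(n)/\sqrt{n})$.

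The second step is to take expectations. Since $m_c=\Ea{\mathfrak{m}_c}$, exchangeability of the samples reduces the sum to the single term
\begin{align}
-\lambda^{-1}\Ea{\partial_r\ell(\tilde{r}_{1,1},\tilde{u}_{1,1},s_1,c_1,\epsilon_1)(\mathcal{M}x_1)_c}.
\end{align}
This is $\Ea{\varphi(\tilde{R}_{1,\setminus 1},c_1,\epsilon_1)}$ with the test function $\varphi(R,c',\epsilon)=\partial_r\ell(R_1,R_2,R_3,c',\epsilon)\,R_{3+c}$.

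The only point needing care is that $\varphi\in\mathcal{F}$. For pseudo-Lipschitzness in the first two arguments, $\partial_r\ell$ is $O(\polylog(n))$-Lipschitz in $(r,u)$ by Assumption~\ref{ass:ell}. However, the multiplier $R_{3+c}$ is not among the first two arguments, so the Lipschitz constant in \eqref{eq:phi_liphschitz} would be $O(\polylog(n))\,|R_{3+c}|$ rather than a uniform constant. I would handle this the same way the $\theta$ case implicitly does with $g_3$: either read the class $\mathcal{F}$ as allowing polynomial growth in the remaining coordinates, or truncate $R_{3+c}$ at level $\polylog(n)$. The truncation error is negligible by Gaussian tails together with the moment bounds above. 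The polynomial majorant is immediate, since $|\varphi|\le\norm{\partial_r\ell}_\infty|R_{3+c}|$.

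Lemma~\ref{lemma:distrib_tilder} then replaces $\tilde{R}_{1,\setminus 1}$ by the Gaussian vector $G$ at cost $O(\polylog(n)n^{-1/4})$. This yields $\Ea{\partial_r\ell(r,u,g_3,c,\epsilon)g_c}$, where $g_c$ is the coordinate of $g$ indexed by $c$ among $g_{[4:]}$. Combining the two errors proves the claim. I expect the $\mathcal{F}$-membership and truncation step to be the only genuine (mild) obstacle; everything else is verbatim from the $\theta$ case.
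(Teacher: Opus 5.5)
Your proposal is correct and follows essentially the same route as the paper, which simply repeats the $\theta$ argument with $\beta$ replaced by $\mu_c$: the stationarity condition, replacing $(r_i,u_i)$ by $(\tilde r_{i,i},\tilde u_{i,i})$ via Propositions~\ref{prop: base_estimator_approxs} and~\ref{prop: reweighted_estimator_approxs} together with Lemma~\ref{lem:sup_s} applied to $\mu_c/\norm{\mu_c}$, exchangeability, and then Lemma~\ref{lemma:distrib_tilder}. The $\mathcal{F}$-membership issue you flag is genuine, since the paper only asserts $\partial_r\ell\in\mathcal{F}$ and never handles the unbounded multiplier $(\mathcal{M}x_1)_c$ that sits outside the first two coordinates; your truncation at level $\polylog(n)$ is a valid way to close it, with cost negligible by the Gaussian tails of $\inprod{\mu_c,x_1}$ and of $g_c$.
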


\begin{proof}
The proof proceeds in identical fashion to that of Lemma \ref{eq:Lemma:self_cons_theta}. Taking an inner product on the optimality condition,
\begin{align}
    \inprod{\hat{w},\mu_c}=-\frac{1}{\lambda} \frac{1}{n}\sum\limits_{i\in [n]} \partial_r \ell(r_i,u_i,y_i, c_i,\epsilon_i)\inprod{x_i,\mu_c}.
\end{align}
Once more, we approximate the full residuals $r_i,u_i$ by the surrogates $\tilde{r}_{i,i},\tilde{u}_{i,i}$. The associated correction can be controlled as
\begin{align}
    &\left|
    \frac{1}{n}\sum\limits_{i\in [n]} \left(\partial_r\ell(r_i,u_i,y_i, c_i,\epsilon_i)-\partial_r\ell(\tilde{r}_{i,i},\tilde{u}_{i,i},y_i, c_i,\epsilon_i)\right)\inprod{x_i,\mu_c}
    \right|\\
    &\le (\norm{\partial_r^2\ell}_\infty\vee \norm{\partial_r\partial_u\ell}_\infty)\left(\sup_{i\in[n]}|r_i-\tilde{r}_{i,i}|+\sup_{i\in[n]}|u_i-\tilde{u}_{i,i}|\right) \sup_{i\in[n]}|\inprod{x_i,\mu_c}|=O_{L_k}\left(\frac{\polylog(n)}{\sqrt{n}}\right).
\end{align}
We used Proposition \ref{prop: base_estimator_approxs}, Proposition \ref{prop: reweighted_estimator_approxs} and Lemma \ref{lem:sup_s} on the normalized vector $\sfrac{\mu_c}{\norm{\mu}_c}$. The remainder of the proof is identical to that of Lemma \ref{eq:Lemma:self_cons_theta}.
\end{proof}

\begin{lemma}[Characterization of $t$]\label{eq:Lemma:self_cons_t}
   The summary statistic $t=\Ea{\mathfrak{t}}$ is asymptotically characterized by the self-consistent equation
    \begin{align}
        \left|
        t +\frac{1}{\lambda}\Ea{\partial_r\ell(r,u,g_3,c,\epsilon)u}
        \right|=O\left(\frac{\polylog(n)}{n^{\frac{1}{4}}}\right).
    \end{align}
\end{lemma}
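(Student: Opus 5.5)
We follow the template of Lemma \ref{eq:Lemma:self_cons_theta}, now taking the inner product of the optimality condition $\hat{w}=-\frac{1}{\lambda}\frac{1}{n}\sum_{i}\partial_r\ell_i(r_i,u_i)x_i$ with $\hat{w}_0$:
\begin{align}
\mathfrak{t}=\inprod{\hat{w},\hat{w}_0}=-\frac{1}{\lambda}\frac{1}{n}\sum_{i\in[n]}\partial_r\ell_i(r_i,u_i)\,u_i .
\end{align}
The difference with the $\theta$ and $m_c$ cases is that the multiplier $u_i=\inprod{\hat{w}_0,x_i}$ is itself a random residual of the first-stage ERM, not a fixed projection. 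We therefore replace \emph{both} the loss arguments and the multiplier by surrogates, writing $u_i$ as $\tilde{u}_{i,i}$ everywhere.

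First, replace $(r_i,u_i)$ by $(\tilde{r}_{i,i},\tilde{u}_{i,i})$ inside $\partial_r\ell_i$. As before, this costs at most
\begin{align}
(\norm{\partial_r^2\ell}_\infty\vee\norm{\partial_r\partial_u\ell}_\infty)\left(\sup_i|r_i-\tilde{r}_{i,i}|+\sup_i|u_i-\tilde{u}_{i,i}|\right)\sup_i|u_i|,
\end{align}
by Propositions \ref{prop: base_estimator_approxs} and \ref{prop: reweighted_estimator_approxs}. To control $\sup_i|u_i|$, write $|u_i|\le|u_{i,\smi}|+|u_i-u_{i,\smi}|$. Lemma \ref{lem:sup_s} gives $\sup_i|u_{i,\smi}|=O_{L_k}(\polylog(n))$, and $|u_i-u_{i,\smi}|\le \norm{x_i}\norm{\hat{w}_0-\hat{w}_{0,\smi}}=O_{L_k}(\polylog(n))$ by Lemma \ref{lemma:normx} and the bound $\norm{\xi_i}=O_{L_k}(\polylog(n)/\sqrt{n})$. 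So this error is $O_{L_k}(\polylog(n)/\sqrt{n})$.

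Second, replace the multiplier $u_i$ by $\tilde{u}_{i,i}$. This costs $\norm{\partial_r\ell}_\infty\sup_i|u_i-\tilde{u}_{i,i}|=O_{L_k}(\polylog(n)/\sqrt{n})$.

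Taking expectations and using exchangeability of the samples then gives
\begin{align}
t=-\frac{1}{\lambda}\Ea{\partial_r\ell_1(\tilde{r}_{1,1},\tilde{u}_{1,1})\,\tilde{u}_{1,1}}+O\left(\frac{\polylog(n)}{\sqrt{n}}\right).
\end{align}

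The remaining step, and the main obstacle, is to apply Lemma \ref{lemma:distrib_tilder} to the test function $\varphi(R,c,\epsilon)=\partial_r\ell(R_1,R_2,R_3,c,\epsilon)R_2$. We must check that $\varphi\in\mathcal{F}$.
\begin{itemize}
\item \emph{Pseudo-Lipschitz in the first two entries.} We have
\begin{align}
|\varphi(R)-\varphi(R')|\le \norm{\partial_r\ell}_\infty|R_2-R_2'|+(\norm{\partial_r^2\ell}_\infty\vee\norm{\partial_r\partial_u\ell}_\infty)\,|R_2'|\left(|R_1-R_1'|+|R_2-R_2'|\right).
\end{align}
This is exactly of the form \eqref{eq:phi_liphschitz}, with $\mathsf{L}_\varphi=O(\polylog(n))$. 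The linear growth factor $(1+|R_2|+|R_2'|)$ is what the class $\mathcal{F}$ permits, and it is why the definition allows growth up to linear terms.
\item \emph{Polynomial majorant.} When $R_1=R_2=0$, $\varphi$ vanishes identically, so the majorant condition \eqref{eq:phi_poly} holds trivially.
\end{itemize}

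Lemma \ref{lemma:distrib_tilder} then identifies the limit as $\Ea{\partial_r\ell(r,u,g_3,c,\epsilon)\,u}$, where $u=\prox_{V_0\ell_0}(g_2)$ is the second entry of $G$ and $r$ is as in \eqref{eq:proximals_ru}. This holds up to $O(\polylog(n)/n^{1/4})$, which dominates the earlier $n^{-1/2}$ errors and yields the claim.
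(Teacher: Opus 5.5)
Your proof is correct and follows the paper's argument essentially step for step. Both take the inner product of the stationarity condition with $\hat{w}_0$, replace $(r_i,u_i)$ and the multiplier $u_i$ by $(\tilde{r}_{i,i},\tilde{u}_{i,i})$, control $\sup_i|u_i|$ through $u_{i,\smi}$ and Lemma~\ref{lem:sup_s}, check that $\partial_r\ell\cdot u$ lies in $\mathcal{F}$, and conclude with Lemma~\ref{lemma:distrib_tilder}. Your bound $|u_i-u_{i,\smi}|=O_{L_k}(\polylog(n))$ is the accurate order: the paper states $\polylog(n)/\sqrt{n}$, but the leave-one-out shift of the own residual is $\approx-\gamma_{0,i}\partial_u\ell_{0,i}(\tilde{u}_{i,i})$, which is of order one. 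Either version suffices for $\sup_i|u_i|=O_{L_k}(\polylog(n))$.
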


\begin{proof}
We again follow the lines of Lemma \ref{eq:Lemma:self_cons_theta}. Taking an inner product on the optimality condition,
\begin{align}
    \inprod{\hat{w},\hat{w}_0}=-\frac{1}{\lambda} \frac{1}{n}\sum\limits_{i\in [n]} \partial_r \ell(r_i,u_i,y_i, c_i,\epsilon_i)u_i.
\end{align}
Once more, we approximate the full residuals $r_i,u_i$ by the surrogates $\tilde{r}_{i,i},\tilde{u}_{i,i}$. The associated correction can be controlled as
\begin{align}
    &\left|
    \frac{1}{n}\sum\limits_{i\in [n]} \left(\partial_r\ell(r_i,u_i,y_i, c_i,\epsilon_i)-\partial_r\ell(\tilde{r}_{i,i},\tilde{u}_{i,i},y_i, c_i,\epsilon_i)\right)u_i+\partial_r\ell(\tilde{r}_{i,i},\tilde{u}_{i,i},y_i, c_i,\epsilon_i)(u_i-\tilde{u}_{i,i})
    \right|\\
    &\le (\norm{\partial_r^2\ell}_\infty\vee \norm{\partial_r\partial_u\ell}_\infty)\left(\sup_{i\in[n]}|r_i-\tilde{r}_{i,i}|+\sup_{i\in[n]}|u_i-\tilde{u}_{i,i}|\right) \sup_{i\in[n]}|u_i|+\norm{\partial_r\ell}_\infty\sup_{i\in[n]}|u_i-\tilde{u}_{i,i}|.
\end{align}
Controlling $\sup_{i\in[n]}|u_i|$ requires slightly more work. One has
\begin{align}
    \sup_{i\in[n]}|u_i|\le \sup_{i\in[n]}|u_i-u_{i,\smi}|+\sup_{i\in[n]}|u_{i,\smi}|\le O_{L_k}\left(\frac{\polylog(n)}{\sqrt{n}}\right)+O_{L_k}\left(\polylog(n)\right).
\end{align}
We used Proposition \ref{prop: base_estimator_approxs} with Lemma \ref{lem:eta_zeta} with a Cauchy-Schwartz bound for the first term, and Lemma \ref{lem:sup_s} in conjunction with Lemma \ref{lem: normw0} for the second. Thus, 
\begin{align}
    &\left|
    \frac{1}{n}\sum\limits_{i\in [n]} \left(\partial_r\ell(r_i,u_i,y_i, c_i,\epsilon_i)-\partial_r\ell(\tilde{r}_{i,i},\tilde{u}_{i,i},y_i, c_i,\epsilon_i)\right)u_i
    \right|=O_{L_k}\left(\frac{\polylog(n)}{\sqrt{n}}\right).
\end{align}
Thus, taking expectations and by symmetry,
\begin{align}
    t=-\frac{1}{\lambda} \Ea{\partial_r \ell(\tilde{r}_{1,1},\tilde{u}_{1,1},y_1, c_1,\epsilon_1)\tilde{u}_{1,1}}+O\left(\frac{\polylog(n)}{\sqrt{n}}\right)
\end{align}
It remains to check that the function $\varphi:r,u,s,c,\epsilon \to \partial_r\ell(r,u,\phi(s,c,\epsilon),c,\epsilon) u$ belongs to the class $\mathcal{F}$. For any pairs $(r_1, u_1), (r_2, u_2)$, the mean value theorem guarantees the existence of $(\check{r},\check{u})$ on the line connecting the points such that
\begin{align}
    |\varphi(r_1, u_1,s,c,\epsilon)-\varphi(r_2, u_2,s,c,\epsilon)|\le \norm{\nabla_{r,u}\varphi(\check{r},\check{u},s,c,\epsilon)}\sqrt{(r_1-r_2)^2+(u_1-u_2)^2}
\end{align}
where $\norm{\nabla_{r,u}\varphi(\check{r},\check{u},s,c,\epsilon)}$ denotes the norm of the gradient of $\varphi$ with respect to its first two arguments, evaluated at $(\check{r},\check{u})$. This norm can in turn be bounded as
\begin{align}
    \norm{\nabla_{r,u}\varphi(\check{r},\check{u},s,c,\epsilon)}&=\sqrt{(\partial_r^2\ell(\check{r},\check{u},\phi(s,c,\epsilon),c,\epsilon)\check{u})^2+\left(\partial_r\partial_u\ell(\check{r},\check{u},\phi(s,c,\epsilon),c,\epsilon) \check{u}+\partial_r\ell(\check{r},\check{u},\phi(s,c,\epsilon),c,\epsilon) \right)^2}\\
    &\le \sqrt{\norm{\partial_r^2\ell}^2_\infty \check{u}^2 +2\norm{\partial_r\partial_u\ell}^2_\infty \check{u}^2+2\norm{\partial_r\ell}^2_\infty }\\
    &\le \sqrt{\norm{\partial_r^2\ell}^2_\infty+2\norm{\partial_r\partial_u\ell}^2_\infty }(|u_1|+|u_2|)+\sqrt{2\norm{\partial_r\ell}^2_\infty}.
\end{align}
Thus $\varphi$ satisfies the local Lipschitz property \eqref{eq:phi_liphschitz}. Furthermore, $\varphi$ vanishes when evaluated at $0$, implying trivially the existence of a polynomial majorant. Thus, $\varphi\in\mathcal{F}$, and from Lemma \ref{lemma:distrib_tilder},
\begin{align}
    \left|\Ea{\partial_r \ell(\tilde{r}_{1,1},\tilde{u}_{1,1},y_1, c_1,\epsilon_1)\tilde{u}_{1,1}}- \Ea{\partial_r\ell(r,u,g_3,c,\epsilon)u}\right| \xrightarrow{n\to\infty} 0.
\end{align}
This proves the Lemma.
\end{proof}

\begin{lemma}[Characterization of $q$]\label{eq:Lemma:self_cons_q}
   The summary statistic $q=\Ea{\mathfrak{q}}$ is asymptotically characterized by the self-consistent equation
    \begin{align}
        \left|
        q +\frac{1}{\lambda}\Ea{\partial_r\ell(r,u,g_3,c,\epsilon)r}
        \right|=O\left(\frac{\polylog(n)}{n^{\frac{1}{4}}}\right).
    \end{align}
\end{lemma}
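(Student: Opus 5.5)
The plan follows the template of Lemma \ref{eq:Lemma:self_cons_t}. We start from the stationarity condition $\hat{w}=-\frac{1}{\lambda}\frac{1}{n}\sum_{i\in[n]}\partial_r\ell_i(r_i,u_i)x_i$ and take its inner product with $\hat{w}$. This gives the exact identity
\begin{align}
\mathfrak{q}=\norm{\hat{w}}^2=-\frac{1}{\lambda}\frac{1}{n}\sum\limits_{i\in[n]}\partial_r\ell_i(r_i,u_i)\,r_i .
\end{align}
So $q=\Ea{\mathfrak{q}}$ is an empirical average of the function $\partial_r\ell\cdot r$ evaluated at the full residuals. The rest of the argument transfers this average to the surrogate residuals and then to the Gaussian model.

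The first step replaces $(r_i,u_i)$ by the surrogates $(\tilde{r}_{i,i},\tilde{u}_{i,i})$. We write
\begin{align}
\partial_r\ell_i(r_i,u_i)r_i-\partial_r\ell_i(\tilde{r}_{i,i},\tilde{u}_{i,i})\tilde{r}_{i,i}=\left[\partial_r\ell_i(r_i,u_i)-\partial_r\ell_i(\tilde{r}_{i,i},\tilde{u}_{i,i})\right]r_i+\partial_r\ell_i(\tilde{r}_{i,i},\tilde{u}_{i,i})(r_i-\tilde{r}_{i,i}).
\end{align}
The first bracket is at most $(\norm{\partial_r^2\ell}_\infty\vee\norm{\partial_r\partial_u\ell}_\infty)(\sup_i|r_i-\tilde{r}_{i,i}|+\sup_i|u_i-\tilde{u}_{i,i}|)\sup_i|r_i|$. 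The second term is at most $\norm{\partial_r\ell}_\infty\sup_i|r_i-\tilde{r}_{i,i}|$. Propositions \ref{prop: base_estimator_approxs} and \ref{prop: reweighted_estimator_approxs} give $O_{L_k}(\polylog(n)/\sqrt{n})$ for the residual differences. It remains to show $\sup_i|r_i|=O_{L_k}(\polylog(n))$, which is the only new ingredient relative to the proof for $t$. We decompose $|r_i|\le|r_i-r_{i,\smi}|+|r_{i,\smi}|$. The first piece is at most $\sup_i\norm{x_i}\,\sup_i\norm{\hat{w}-\hat{w}_\smi}$, which Remark \ref{remark:distances} and Lemma \ref{lemma:normx} control as $O_{L_k}(\polylog(n))$. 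For the second piece, $r_{i,\smi}=\inprod{\hat{w}_\smi,\mu_{c_i}}+\inprod{\hat{w}_\smi,z_i}$ with $z_i$ independent of $\hat{w}_\smi$, so Lemma \ref{lem:sup_s} together with the norm bound of Lemma \ref{lem: normw} gives $O_{L_k}(\polylog(n))$. Hölder's inequality then combines these pieces into an $O_{L_k}(\polylog(n)/\sqrt{n})$ error.

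Next we take expectations and use the exchangeability of the samples to reduce to $q=-\frac{1}{\lambda}\Ea{\partial_r\ell(\tilde{r}_{1,1},\tilde{u}_{1,1},y_1,c_1,\epsilon_1)\tilde{r}_{1,1}}+O(\polylog(n)/\sqrt{n})$. To apply Lemma \ref{lemma:distrib_tilder} we must check that $\varphi(r,u,s,c,\epsilon)=\partial_r\ell(r,u,\phi(s,c,\epsilon),c,\epsilon)\,r$ belongs to $\mathcal{F}$. By the mean value theorem,
\begin{align}
\norm{\nabla_{r,u}\varphi}\le\sqrt{\left(\partial_r^2\ell\, r+\partial_r\ell\right)^2+\left(\partial_r\partial_u\ell\, r\right)^2}\le C\,(\norm{\partial_r^2\ell}_\infty+\norm{\partial_r\partial_u\ell}_\infty)(|r_1|+|r_2|)+C\norm{\partial_r\ell}_\infty,
\end{align}
with all constants $O(\polylog(n))$. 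This is exactly the locally Lipschitz condition \eqref{eq:phi_liphschitz}. The polynomial majorant is immediate because $\varphi$ vanishes at $r=0$. Lemma \ref{lemma:distrib_tilder} then replaces $(\tilde{r}_{1,1},\tilde{u}_{1,1})$ by $(r,u)$ from \eqref{eq:proximals_ru}, with an $O(\polylog(n)/n^{1/4})$ error, and this yields the claim.

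The main obstacle is the uniform bound on $\sup_i|r_i|$. Unlike $u_i$, the quantity $r_i$ involves $\hat{w}$, whose leave-one-out perturbation contains the extra $\zeta_i$ term. I would therefore go through $r_{i,\smi}$ together with Remark \ref{remark:distances}, which already absorbs $\zeta_i$, rather than attempt to bound $r_i$ directly.
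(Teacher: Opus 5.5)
Your proposal is correct and follows the paper's proof essentially step for step. You test the stationarity condition against $\hat{w}$, replace $(r_i,u_i)$ by $(\tilde{r}_{i,i},\tilde{u}_{i,i})$ using Propositions~\ref{prop: base_estimator_approxs} and~\ref{prop: reweighted_estimator_approxs}, reduce to a single sample by exchangeability, check that $\partial_r\ell\cdot r\in\mathcal{F}$, and conclude with Lemma~\ref{lemma:distrib_tilder}; your bound on $\sup_i|r_i|$ through $r_{i,\smi}$, Remark~\ref{remark:distances}, Lemma~\ref{lemma:normx} and Lemma~\ref{lem:sup_s} spells out the one step the paper only covers by ``steps identical to those of the previous Lemmas'', and it correctly treats $|r_i-r_{i,\smi}|$ as $O_{L_k}(\polylog(n))$ rather than as vanishing.
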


\begin{proof}
We again follow the lines of Lemma \ref{eq:Lemma:self_cons_theta}. Taking an inner product on the optimality condition, and from steps identical to those of the previous Lemmas,
\begin{align}
    \norm{\hat{w}}^2&=-\frac{1}{\lambda} \frac{1}{n}\sum\limits_{i\in [n]} \partial_r \ell(r_i,u_i,y_i, c_i,\epsilon_i)r_i\\
    &= -\frac{1}{\lambda} \frac{1}{n}\sum\limits_{i\in [n]} \partial_r\ell(\tilde{r}_{i,i},\tilde{u}_{i,i},y_i, c_i,\epsilon_i)\tilde{r}_{i,i}+O_{L_k}\left(\polylog(n)\right).
\end{align}
Thus, taking expectations and by symmetry,
\begin{align}
    q=-\frac{1}{\lambda} \Ea{\partial_r \ell(\tilde{r}_{1,1},\tilde{u}_{1,1},y_1, c_1,\epsilon_1)\tilde{r}_{1,1}}+O\left(\frac{\polylog(n)}{\sqrt{n}}\right)
\end{align}
Again, before applying Lemma \ref{lemma:distrib_tilder}, one needs to check that the function $\varphi:r,u,s,c, \epsilon \to \partial_r\ell(r,u,\phi(s,c,\epsilon),c,\epsilon) r$ belongs to the class $\mathcal{F}$. It suffices similarly to the proof of Lemma \ref{eq:Lemma:self_cons_t} to bound the norm of the gradient $\norm{\nabla_{r,u}\varphi(\check{r},\check{u},s,c,\epsilon)}$  evaluated at $(\check{r},\check{u})$ on the line connecting $(r_1, u_1), (r_2, u_2)$. This norm can in turn be bounded as
\begin{align}
    \norm{\nabla_{r,u}\varphi(\check{r},\check{u},s,c,\epsilon)}&=\sqrt{(\partial_r\partial_u\ell(\check{r},\check{u},\phi(s,c,\epsilon),c,\epsilon)\check{r})^2+\left(\partial_r^2\ell(\check{r},\check{u},\phi(s,c,\epsilon),c,\epsilon) \check{r}+\partial_r\ell(\check{r},\check{r},\phi(s,c,\epsilon),c,\epsilon) \right)^2}\\
    &\le \sqrt{\norm{\partial_r\partial_u\ell}^2_\infty+2\norm{\partial_\ell}^2_\infty }(|r_1|+|r_2|)+\sqrt{2\norm{\partial_r\ell}^2_\infty}.
\end{align}
Thus, $\varphi\in\mathcal{F}$, and from Lemma \ref{lemma:distrib_tilder},
\begin{align}
    \left|\Ea{\partial_r \ell(\tilde{r}_{1,1},\tilde{u}_{1,1},y_1, c_1,\epsilon_1)\tilde{r}_{1,1}}- \Ea{\partial_r\ell(r,u,g_3,c,\epsilon)r}\right| \xrightarrow{n\to\infty} 0.
\end{align}
This proves the Lemma.
\end{proof}

\begin{lemma}[Characterization of $V$]\label{eq:Lemma:self_cons_V}
   The summary statistic $V=\Ea{\mathfrak{V}}$ is asymptotically characterized by the self-consistent equation
    \begin{align}
        \left|-\frac{1}{\sfrac{n}{d} V}+\Ea{\frac{\partial_r^2\ell(r,u,g_3,c,\epsilon)}{1+V\partial_r^2\ell(r,u,g_3,c,\epsilon)}}+\lambda
        \right|=O\left(\frac{\polylog(n)}{n^{\frac{1}{4}}}\right).
    \end{align}
\end{lemma}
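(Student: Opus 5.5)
We derive the self-consistent equation for $V=\Ea{\mathfrak{V}}$ from the trace identity $\tr[H^{-1}H]=d$. Since $H=\frac1n\sum_j \partial_r^2\ell_j(r_j,u_j)x_jx_j^\top+\lambda I_d$,
\begin{align}
\frac{d}{n}=\frac{1}{n}\tr[H^{-1}H]=\frac{1}{n}\sum_{j\in[n]}\partial_r^2\ell_j(r_j,u_j)\,\frac{x_j^\top H^{-1}x_j}{n}+\lambda\,\mathfrak{V}.
\end{align}
The heart of the argument is to rewrite each quadratic form $\frac1n x_j^\top H^{-1}x_j$ in terms of the leave-one-out quantity $\gamma_j$. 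We first replace $H$ by the rank-one update $H_\smj+\frac1n\partial_r^2\ell_j(r_j,u_j)x_jx_j^\top$. This is the step already carried out in the proof of Lemma \ref{lem:X_concentrates}: the operator-norm error is $O_{L_k}(\polylog(n)/\sqrt n)$ by Propositions \ref{prop: base_estimator_approxs}, \ref{prop: reweighted_estimator_approxs} and Lemma \ref{lemma:normx}. The Sherman--Morrison formula then gives
\begin{align}
\frac{x_j^\top H^{-1}x_j}{n}=\frac{\gamma_j}{1+\partial_r^2\ell_j(r_j,u_j)\gamma_j}+O_{L_k}\Big(\frac{\polylog(n)}{\sqrt n}\Big).
\end{align}
Lemma \ref{lemma:V_concentrates} lets us replace $\gamma_j$ by $\mathfrak{V}$ uniformly in $j$, at cost $O_{L_k}(\polylog(n)/\sqrt n)$. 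The map $\gamma\mapsto a\gamma/(1+a\gamma)$ is $1$-Lipschitz in $a\gamma$ for $a\ge0$, so this error propagates with bounded constants.

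Next we divide by $\mathfrak{V}$, which requires a lower bound on it. Since $\norm{H}\le \norm{\partial_r^2\ell}_\infty\norm{X}^2/n+\lambda=O_{L_k}(\polylog(n))$, we have $\mathfrak{V}\ge \frac{d}{n}\norm{H}^{-1}$, which is bounded below up to polylog factors. After dividing we obtain
\begin{align}
\frac{1}{\frac nd\mathfrak{V}}=\frac1n\sum_j \frac{\partial_r^2\ell_j(r_j,u_j)}{1+\mathfrak{V}\partial_r^2\ell_j(r_j,u_j)}+\lambda+\tilde O_{L_1}(n^{-1/2}).
\end{align}
We then make three replacements, each costing $\tilde O(n^{-1/2})$ in $L_1$, with the resulting bounds combined through Hölder's inequality:
\begin{itemize}
\item Replace the random $\mathfrak{V}$ by its mean $V$ on both sides, using $\Var{\mathfrak{V}}=O(\polylog(n)/n)$ from Lemma \ref{lemma:V_concentrates} and the Lipschitz dependence on $\mathfrak{V}$ (bounded by $\norm{\partial_r^2\ell}_\infty$).
\item Replace $(r_j,u_j)$ by the surrogates $(\tilde r_{j,j},\tilde u_{j,j})$, using the $\partial_r^3\ell$ and $\partial_r^2\partial_u\ell$ bounds together with Propositions \ref{prop: base_estimator_approxs} and \ref{prop: reweighted_estimator_approxs}.
\item Take expectations and use exchangeability to reduce the average to a single index $j=1$.
\end{itemize}

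Finally we apply Lemma \ref{lemma:distrib_tilder} to the test function $\varphi(r,u,\dots)=\partial_r^2\ell(r,u,\cdot)/(1+V\partial_r^2\ell(r,u,\cdot))$. This function lies in $\mathcal F$: it is bounded by $\norm{\partial_r^2\ell}_\infty$, and its gradient in $(r,u)$ is controlled by the $\partial_r^3\ell$ and $\partial_r^2\partial_u\ell$ bounds of Assumption \ref{ass:ell}. Lemma \ref{lemma:distrib_tilder} then replaces the expectation by the Gaussian expectation with error $\tilde O(n^{-1/4})$, which dominates all previous errors and gives the claimed equation.

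The main obstacle is the Sherman--Morrison step. The full residuals $(r_j,u_j)$ enter $H$, and perturbing $H$ to $H_\smj$ changes all the other curvature coefficients as well. Bounding this perturbation in operator norm requires the uniform residual controls from Propositions \ref{prop: base_estimator_approxs} and \ref{prop: reweighted_estimator_approxs}, and these must hold simultaneously over all $j$ with polylog losses.
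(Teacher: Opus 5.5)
Your proposal is correct and follows essentially the same route as the paper. The shared steps are: the trace identity $\frac{1}{n}\tr[H^{-1}H]=\frac{d}{n}$; a Sherman--Morrison reduction of $\frac{1}{n}x_j^\top H^{-1}x_j$ to the leave-one-out quantity $\gamma_j$; concentration of $\mathfrak{V}$; replacement of $(r_j,u_j)$ by the surrogates; exchangeability; and Lemma \ref{lemma:distrib_tilder}. The one cosmetic difference is in the Sherman--Morrison step: the paper applies the exact inversion lemma with $\hat{H}_j=H-\frac{1}{n}\partial_r^2\ell_j(r_j,u_j)x_jx_j^\top$ and then controls $\sup_j|\hat{\gamma}_j-\gamma_j|$ through $\norm{\hat{H}_j-H_\smj}$, which is the same operator-norm estimate you use.

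Your explicit lower bound $\mathfrak{V}\ge\frac{d}{n}\norm{H}^{-1}$ usefully justifies the division by $\mathfrak{V}$, which the paper leaves implicit when passing to the stated form. When you swap $1/\mathfrak{V}$ for $1/V$, you also need the deterministic counterpart $V\ge\frac{d}{n}\,\Ea{\norm{H}}^{-1}$, which follows from Jensen.
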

\begin{proof}
    The proof strategy follows closely that of Lemma 15 in \cite{barnfield2025high}. By definition of $H$, the identity
    \begin{align}
        \frac{1}{n^2}\sum\limits_{i\in[n]} \partial_r^2\ell_i(r_i, u_i) x_i^\top H^{-1} x_i+\lambda \mathfrak{V}=\frac{d}{n}
    \end{align}
follows.
Applying the matrix inversion lemma yields the more compact expression
\begin{align}
    \frac{1}{n}\sum\limits_{i\in[n]} \frac{\partial_r^2\ell_i(r_i, u_i) \hat{\gamma}_i}{1+\partial_r^2\ell_i(r_i, u_i) \hat{\gamma}_i}+\lambda \mathfrak{V}=\frac{d}{n},\label{eq:selc_cons_V_before}
\end{align}
where we defined $\hat{\gamma}_i=\sfrac{1}{n}x_i^\top \hat{H}_i^{-1} x_i$ and 
\begin{align}
    \hat{H}_i=\frac{1}{n}\sum\limits_{j\ne i} \partial_r^2\ell_j(r_j, u_j) x_jx_j^\top+\lambda I_d=H-\frac{1}{n} \partial_r^2\ell_i(r_i, u_i) x_ix_i^\top.
\end{align}
To use Lemma \ref{lemma:V_concentrates}, we first establish that $\hat{\gamma}_i\approx \gamma_i$. To see this, we start from a coarse Cauchy-Schwartz bound
\begin{align}
    \sup_{i\in[n]}|\hat{\gamma}_i- \gamma_i|&\le \sup_{i\in[n]}\frac{\norm{x_i}^2}{n} \frac{1}{\lambda^2} \sup_{i\in[n]}\norm{\hat{H}_i-H_\smi}\\
    &=\sup_{i\in[n]}\frac{\norm{x_i}^2}{n} \frac{1}{\lambda^2} \sup_{i\in[n]}\frac{1}{n}\norm{X_\smi^\top \Lambda X_\smi}\\
    &\le \sup_{i\in[n]}\frac{\norm{x_i}^2}{n} \frac{1}{\lambda^2} \sup_{i\in[n]}\frac{\norm{X_\smi}^2}{n}\sup_{i\in[n]}\norm{\Lambda_i},
\end{align}
introducing the diagonal matrix $\Lambda_i\in\R^{(n-1)\times (n-1)}$ with entries $\Lambda_{i,jj}=\partial_r^2\ell_j(r_j, u_j)-\partial_r^2\ell_j(r_{j,\smi}, u_{j,\smi})$. Proposition \ref{prop: base_estimator_approxs} and Proposition \ref{prop: reweighted_estimator_approxs} allow to bound the norm of $\Lambda$ as
\begin{align}
    \sup_{i\in[n]}\norm{\Lambda_i}\le \left(\norm{\partial_r^3\ell}_\infty\vee \norm{\partial_r^2\partial_u\ell}_\infty\right)\left(
    \sup_{i\in[n]}\sup_{j\ne i}|r_j-r_{j,\smi}|+ \sup_{i\in[n]}\sup_{j\ne i}|u_j-u_{j,\smi}|
    \right)=O_{L_k}\left(\frac{\polylog(n)}{\sqrt{n}}\right).
\end{align}
Thus
\begin{align}
     \sup_{i\in[n]}|\hat{\gamma}_i- \gamma_i|=O_{L_k}\left(\frac{\polylog(n)}{\sqrt{n}}\right).
\end{align}
As a consequence, 
\begin{align}
    \left|\frac{1}{n}\sum\limits_{i\in[n]}\frac{\partial_r^2\ell_i(r_i, u_i) \hat{\gamma}_i}{1+\partial_r^2\ell_i(r_i, u_i) \hat{\gamma}_i}-\frac{1}{n}\sum\limits_{i\in[n]}\frac{\partial_r^2\ell_i(r_i, u_i) V}{1+\partial_r^2\ell_i(r_i, u_i) V}\right|&\le \norm{\partial_r^2\ell}_\infty \left(\sup_{i\in[n]}|\hat{\gamma}_i-\gamma_i|+\sup_{i\in[n]}|\mathfrak{V}-\gamma_i|+|\mathfrak{V}-V|\right)\\
    &=O_{L_2}\left(\frac{\polylog(n)}{\sqrt{n}}\right).
\end{align}
using the concentration results of Lemma \ref{lemma:V_concentrates}. It now remains to approximate the full residuals $r_i,u_i$ by the surrogates $\tilde{r}_{i,i},\tilde{u}_{i,i}$. The difference can be bounded as
\begin{align}
    \sup_{i\in[n]}\left|
    \partial_r^2\ell_i(r_i, u_i)-\partial_r^2\ell_i(\tilde{r}_{i,i},\tilde{u}_{i,i})
    \right|&\le \left(\norm{\partial_r^3\ell}_\infty\vee \norm{\partial_r^2\partial_u\ell}_\infty\right)\left(
    \sup_{i\in[n]}|r_i-\tilde{r}_{i,i}|+ \sup_{i\in[n]}|u_i-\tilde{u}_{i,i}|
    \right)\\
    &=O_{L_k}\left(\frac{\polylog(n)}{\sqrt{n}}\right),
\end{align}
using Proposition \ref{prop: reweighted_estimator_approxs}. Thus,
\begin{align}
    \left|\frac{1}{n}\sum\limits_{i\in[n]}\frac{\partial_r^2\ell_i(r_i, u_i) V}{1+\partial_r^2\ell_i(r_i, u_i) V}-\frac{1}{n}\sum\limits_{i\in[n]}\frac{\partial_r^2\ell_i(\tilde{r}_{i,i},\tilde{u}_{i,i}) V}{1+\partial_r^2\ell_i(\tilde{r}_{i,i},\tilde{u}_{i,i}) V}\right|&=O_{L_k}\left(\frac{\polylog(n)}{\sqrt{n}}\right).
\end{align}
Returning to \eqref{eq:selc_cons_V_before},
\begin{align}
    \frac{1}{n}\sum\limits_{i\in[n]}\frac{\partial_r^2\ell_i(\tilde{r}_{i,i},\tilde{u}_{i,i}) V}{1+\partial_r^2\ell_i(\tilde{r}_{i,i},\tilde{u}_{i,i}) V}+\lambda V+O_{L_2}\left(\frac{\polylog(n)}{\sqrt{n}}\right)=\frac{d}{n}.
\end{align}
Taking expectations, and again appealing to the permutation symmetry of the samples,
\begin{align}
    \Ea{\frac{\partial_r^2\ell_i(\tilde{r}_{1,1},\tilde{u}_{1,1},y_1,c_1,\epsilon_1) V}{1+\partial_r^2\ell(\tilde{r}_{1,1},\tilde{u}_{1,1},y_1,c_1,\epsilon_1) V}}+\lambda V+O_{L_2}\left(\frac{\polylog(n)}{\sqrt{n}}\right)=\frac{d}{n}.
\end{align}
Since $\varphi:r,u,s,c,\epsilon \to \sfrac{1}{1+\partial_r^2\ell(r,u,\phi(s,c,\epsilon),c,\epsilon) V}$ is Lipschitz and bounded, it belongs to $\mathcal{F}$. Thus, applying Lemma \ref{lemma:distrib_tilder},
\begin{align}
    \left|\Ea{\frac{\partial_r^2\ell_i(\tilde{r}_{1,1},\tilde{u}_{1,1},y_1,c_1,\epsilon_1) }{1+\partial_r^2\ell(\tilde{r}_{1,1},\tilde{u}_{1,1},y_1,c_1,\epsilon_1) V}}- \Ea{\frac{\partial_r^2\ell(r,u,g_3,c,\epsilon)}{1+V\partial_r^2\ell(r,u,g_3,c,\epsilon)}}\right| =O\left(\frac{\polylog(n)}{n^{\frac{1}{4}}}\right).
\end{align}
This completes the proof.
\end{proof}

Before moving on to characterizing $\chi$, we first need to establish, as an intermediary step, a similar result for $\Omega=\Ea{\mathfrak{W}}$.
\begin{lemma}[Characterization of $\Omega$]\label{eq:Lemma:self_cons_W}
   The summary statistic $\Omega=\Ea{\mathfrak{W}}$ is asymptotically characterized by the equation
    \begin{align}
        \left|\Omega-\frac{d}{n}\frac{\Ea{\frac{\partial_r\partial_u \ell(r, u,g_3,c,\epsilon) }{1+V_0\partial_u^2\ell_0(u,g_3,c,\epsilon)  }
 }}{\lambda_0+\Ea{\frac{\partial_u^2\ell_0(u,g_3,c,\epsilon)}{1+V_0\partial_u^2\ell_0(u,g_3,c,\epsilon)  }
 }}
        \right|=O\left(\frac{\polylog(n)}{n^{\frac{1}{4}}}\right).
    \end{align}
    
\end{lemma}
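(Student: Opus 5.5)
We will mimic the proof of Lemma \ref{eq:Lemma:self_cons_V}, using the defining identity of $H_0$ to isolate $\mathfrak{W}=\frac{1}{n}\tr[GH_0^{-1}]$. Since $H_0=\frac{1}{n}\sum_i\partial_u^2\ell_{0,i}(u_i)x_ix_i^\top+\lambda_0 I_d$, we have $G=G H_0^{-1}H_0$. Taking normalized traces gives the exact identity
\begin{align}
\frac{1}{n}\tr[G]=\frac{1}{n^2}\sum_{i\in[n]}\partial_u^2\ell_{0,i}(u_i)\,x_i^\top G H_0^{-1}x_i+\lambda_0\mathfrak{W}.
\end{align}
The left-hand side equals $\frac{1}{n^2}\sum_i\partial_r\partial_u\ell_i(r_i,u_i)\norm{x_i}^2$. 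By Lemma \ref{lemma:normx}, the mean terms and the concentration of $\norm{z_i}^2/d$, this is $\frac{d}{n}\cdot\frac{1}{n}\sum_i\partial_r\partial_u\ell_i(r_i,u_i)+O_{L_k}(\polylog(n)/\sqrt n)$.

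The key step is to decouple $x_i$ from $G H_0^{-1}$ in each quadratic form. First, replace $H_0$ by $\hat H_{0,i}+\frac1n\partial_u^2\ell_{0,i}(u_i)x_ix_i^\top$, where $\hat H_{0,i}$ is $H_0$ with the $i$-th term removed. Similarly, write $G=\hat G_i+\frac1n\partial_r\partial_u\ell_i(r_i,u_i)x_ix_i^\top$. Sherman--Morrison gives $H_0^{-1}x_i=\hat H_{0,i}^{-1}x_i/(1+\partial_u^2\ell_{0,i}(u_i)\hat\gamma_{0,i})$ with $\hat\gamma_{0,i}=\frac1n x_i^\top\hat H_{0,i}^{-1}x_i$. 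Hence
\begin{align}
\frac1n x_i^\top GH_0^{-1}x_i=\frac{\hat\omega_i+\partial_r\partial_u\ell_i(r_i,u_i)\,\frac{\norm{x_i}^2}{n}\hat\gamma_{0,i}}{1+\partial_u^2\ell_{0,i}(u_i)\hat\gamma_{0,i}},\qquad \hat\omega_i=\frac1n x_i^\top\hat G_i\hat H_{0,i}^{-1}x_i .
\end{align}
The hatted matrices differ from the leave-one-out $G_\smi,H_{0,\smi}$ only through residual shifts. As in the proof of Lemma \ref{eq:Lemma:self_cons_V}, Propositions \ref{prop: base_estimator_approxs} and \ref{prop: reweighted_estimator_approxs} therefore give $\sup_i|\hat\gamma_{0,i}-\gamma_{0,i}|$ and $\sup_i|\hat\omega_i-\omega_i|$ of order $O_{L_k}(\polylog(n)/\sqrt n)$. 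Lemma \ref{lemma:V_concentrates}, applied to the first stage, then replaces $\gamma_{0,i}$ by $V_0$. Lemma \ref{lem:W_concentrates} replaces $\omega_i$ by $\mathfrak W$ and then by $\Omega$. Finally, $\frac{\norm{x_i}^2}{n}\to\frac dn$.

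Substituting these replacements and writing $a_i=\partial_u^2\ell_{0,i}(u_i)$, $b_i=\partial_r\partial_u\ell_i(r_i,u_i)$, the identity becomes
\begin{align}
\frac dn\frac1n\sum_i b_i=\frac1n\sum_i a_i\frac{\Omega+b_i\frac dn V_0}{1+a_iV_0}+\lambda_0\Omega+O_{L_2}(\polylog(n)/\sqrt n).
\end{align}
Using $b_i-\frac{a_ib_iV_0}{1+a_iV_0}=\frac{b_i}{1+a_iV_0}$, this rearranges to
\begin{align}
\Omega\Big(\lambda_0+\frac1n\sum_i\frac{a_i}{1+a_iV_0}\Big)=\frac dn\frac1n\sum_i\frac{b_i}{1+a_iV_0}.
\end{align}
Next, replace $(r_i,u_i)$ by $(\tilde r_{i,i},\tilde u_{i,i})$ using the same propositions, take expectations, and use exchangeability to reduce to $i=1$. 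The functions $(r,u)\mapsto a/(1+aV_0)$ and $b/(1+aV_0)$ are bounded and Lipschitz under Assumptions \ref{ass:base_loss}--\ref{ass:ell}, so they lie in $\mathcal F$. Lemma \ref{lemma:distrib_tilder} then converts both averages to Gaussian expectations with error $O(\polylog(n)/n^{1/4})$. Dividing by $\lambda_0+\mathbb E[\cdot]\ge\lambda_0>0$ gives the claim.

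The main obstacle is the product of random quantities with the only-$L_2$ concentration of $\mathfrak W$. Bounding $\Omega\cdot\frac1n\sum_i a_i/(1+a_iV_0)$ after replacing $\mathfrak W$ by $\Omega$ requires Hölder together with $O_{L_k}(\polylog)$ bounds on $\mathfrak W$. These bounds follow from $\norm{G}\le\norm{\partial_r\partial_u\ell}_\infty\norm{X}^2/n$ and $\norm{H_0^{-1}}\le1/\lambda_0$. The residual-shift control of $\hat G_i-G_\smi$ should follow the pattern of the $\tilde G_i-G_\smi$ bound in Lemma \ref{lem:X_concentrates}.
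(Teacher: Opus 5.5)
Your proposal is correct and follows essentially the same route as the paper: the trace identity from $G=GH_0^{-1}H_0$, Sherman--Morrison decoupling of $x_i$ from $\hat H_{0,i},\hat G_i$, replacement of $\hat\gamma_{0,i},\hat\omega_i$ by $V_0,\Omega$ through the leave-one-out and concentration lemmas, passage to the surrogate residuals, and a final application of Lemma~\ref{lemma:distrib_tilder}. The only difference is cosmetic. You absorb the left-hand side $\frac{d}{n}\frac{1}{n}\sum_i b_i$ via $b_i-\frac{a_ib_iV_0}{1+a_iV_0}=\frac{b_i}{1+a_iV_0}$ before taking expectations, whereas the paper computes $\Gamma=\mathbb{E}[\mathfrak{G}]$ separately; your handling of the $\lambda_0\Omega$ term and the $\frac{d}{n}$ prefactor is also cleaner than the paper's write-up.
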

\begin{proof}
    We start from the identity
    \begin{align}
G=\frac{1}{n}\sum\limits_{i\in[n]} \partial_r\partial_u\ell_i(r_i,u_i)x_ix_i^\top H_0^{-1}G+\lambda_0 H_0^{-1}G.
    \end{align}
Taking the trace and denoting $\mathfrak{G}=\sfrac{1}{n}\tr[G],$
\begin{align}
   \mathfrak{G}=\frac{1}{n}\sum\limits_{i\in[n]} \partial_u^2\ell_{0,i}(u_i)x_i^\top \left(\hat{H}_{0,i}+\partial_u^2\ell_{0,i}(u_i)\frac{x_ix_i^\top}{n}\right)^{-1}\left(\hat{G}_i+\partial_r\partial_u\ell_{i}(r_i,u_i)\frac{x_ix_i^\top}{n}\right)x_i+\lambda_0 \mathfrak{W},
\end{align}
where we defined
\begin{align}
    &\hat{H}_0=\frac{1}{n}\sum\limits_{j\ne i} \partial_u^2\ell_{0,j}(u_j)x_jx_j^\top+\lambda_0 I_d,\\
    &G=\frac{1}{n}\sum\limits_{j\ne i} \partial_r\partial_u\ell_{j}(r_j,u_j)x_jx_j^\top.
\end{align}
Denoting $\hat{\gamma}_{0,i}=\sfrac{1}{n}x_i^\top\hat{H}_{0,i}^{-1}x_i, \hat{\omega}_i=\sfrac{1}{n}x_i^\top\hat{H}_{0,i}^{-1}\hat{G}_{i}x_i $, and using the matrix inversion lemma, one has
\begin{align}
    \mathfrak{G}=&\frac{1}{n}\sum\limits_{i\in[n]} \partial_u^2\ell_{0,i}(u_i)\left[
    \hat{\omega}_i-\frac{\partial_u^2\ell_{0,i}(u_i)}{1+\partial_u^2\ell_{0,i}(u_i) \hat{\gamma}_{0,i}}\hat{\gamma}_{0,i}\hat{\omega}_{i}
    +\frac{d}{n} \partial_r\partial_u\ell_{i}(r_i,u_i)\hat{\gamma}_{0,i}
    -\frac{d}{n}\hat{\gamma}_{0,i}^2\frac{\partial_u^2\ell_{0,i}(u_i) \partial_r\partial_u\ell_{i}(r_i,u_i)}{1+\partial_u^2\ell_{0,i}(u_i) \hat{\gamma}_{0,i}}
    \right]\\
    &+\lambda_0  \mathfrak{G}.
\end{align}
Using the same line of reasoning as the proof of Lemma \ref{eq:Lemma:self_cons_V}, one reaches the approximations
\begin{align}
     &\sup_{i\in[n]}|\hat{\gamma}_{0,i}- \omega_i|=O_{L_k}\left(\frac{\polylog(n)}{\sqrt{n}}\right),\\
     &\sup_{i\in[n]}|\hat{\omega}_i- \omega_i|=O_{L_k}\left(\frac{\polylog(n)}{\sqrt{n}}\right).
\end{align}
Thus,
\begin{align}
    \left|\frac{1}{n}\sum\limits_{i\in[n]} \partial_u^2\ell_{0,i}(u_i)
    \hat{\omega}_i -\frac{1}{n}\sum\limits_{i\in[n]} \partial_u^2\ell_{0,i}(u_i)
    \Omega\right|&\le \norm{\partial_r\partial_u\ell}_\infty \left(\sup_{i\in[n]}|\hat{\omega}_i-\omega_i|+\sup_{i\in[n]}|\mathfrak{W}-\omega_i|+|\mathfrak{W}-\Omega|\right)\\
    &=O_{L_2}\left(\frac{\polylog(n)}{\sqrt{n}}\right).
\end{align}
By the same token
\begin{align}
    &\left|\frac{1}{n}\sum\limits_{i\in[n]} \frac{\partial_u^2\ell_{0,i}(u_i)^2}{1+\partial_u^2\ell_{0,i}(u_i) \hat{\gamma}_{0,i}}\hat{\gamma}_{0,i}\hat{\omega}_{i} -\frac{1}{n}\sum\limits_{i\in[n]} \frac{\partial_u^2\ell_{0,i}(u_i)^2}{1+\partial_u^2\ell_{0,i}(u_i) V_0}V_0\Omega\right|=O_{L_2}\left(\frac{\polylog(n)}{\sqrt{n}}\right)\\
    &\left|\frac{1}{n}\sum\limits_{i\in[n]} \partial_u^2\ell_{0,i}(u_i)\partial_r\partial_u\ell_{i}(r_i,u_i)\hat{\gamma}_{0,i} -\frac{1}{n}\sum\limits_{i\in[n]} \partial_u^2\ell_{0,i}(u_i)\partial_r\partial_u\ell_{i}(r_i,u_i)V_0\right|=O_{L_2}\left(\frac{\polylog(n)}{\sqrt{n}}\right)\\
       &\left|\frac{1}{n}\sum\limits_{i\in[n]} \hat{\gamma}_{0,i}^2\frac{\partial_u^2\ell_{0,i}(u_i)^2 \partial_r\partial_u\ell_{i}(r_i,u_i)}{1+\partial_u^2\ell_{0,i}(u_i) \hat{\gamma}_{0,i}} -\frac{1}{n}\sum\limits_{i\in[n]} V_0^2\frac{\partial_u^2\ell_{0,i}(u_i)^2 \partial_r\partial_u\ell_{i}(r_i,u_i)}{1+\partial_u^2\ell_{0,i}(u_i) V_0}\right|=O_{L_2}\left(\frac{\polylog(n)}{\sqrt{n}}\right).
\end{align}
Thus,
\begin{align}
     \mathfrak{G}=&\frac{1}{n}\sum\limits_{i\in[n]} \partial_u^2\ell_{0,i}(u_i)\left[
    \Omega-\frac{\partial_u^2\ell_{0,i}(u_i)}{1+\partial_u^2\ell_{0,i}(u_i) V_0}V_0\Omega
    +\frac{d}{n} \partial_r\partial_u\ell_{i}(r_i,u_i)V_0
    -\frac{d}{n}V_0^2\frac{\partial_u^2\ell_{0,i}(u_i) \partial_r\partial_u\ell_{i}(r_i,u_i)}{1+\partial_u^2\ell_{0,i}(u_i) V_0}
    \right]\\
    &+\lambda_0  \mathfrak{W}+ O_{L_2}\left(\frac{\polylog(n)}{\sqrt{n}}\right).
\end{align}
Now using the approximations
\begin{align}
    &\sup_{i\in[n]}\left|
    \partial_r\partial_u\ell_i(r_i, u_i)-\partial_r\partial_u\ell_i(\tilde{r}_{i,i},\tilde{u}_{i,i})
    \right|=O_{L_k}\left(\frac{\polylog(n)}{\sqrt{n}}\right),\\
    &\sup_{i\in[n]}\left|
    \partial_u^2\ell_{0,i}( u_i)-\partial_u^2\ell_{0,i}(\tilde{u}_{i,i})
    \right|=O_{L_k}\left(\frac{\polylog(n)}{\sqrt{n}}\right),
\end{align}
that follow from Proposition \ref{prop: base_estimator_approxs} and \ref{prop: reweighted_estimator_approxs}, one has (using that $\Omega=O(\polylog(n))$):
\begin{align}
     \mathfrak{G}=&\frac{1}{n}\sum\limits_{i\in[n]} \partial_u^2\ell_{0,i}(\tilde{u}_{i,i})\left[
    \frac{\Omega}{1+\partial_u^2\ell_{0,i}(\tilde{u}_{i,i}) V_0}
    +\frac{d}{n}\frac{\partial_r\partial_u\ell_{i}(\tilde{r}_{i,i},\tilde{u}_{i,i})V_0}{1+\partial_u^2\ell_{0,i}(\tilde{u}_{i,i}) V_0}
    \right]+\lambda_0  \mathfrak{W}+ O_{L_2}\left(\frac{\polylog(n)}{\sqrt{n}}\right).
\end{align}
Taking the expectation, and using again the argument of symmetry between sample indices, yields
\begin{align}
    \Gamma &=\Ea{\partial_u^2\ell_{0,i}(\tilde{u}_{1,1})\left[
   \frac{\Omega}{1+\partial_u^2\ell_{0,i}(\tilde{u}_{1,1}) V_0}
    +\frac{d}{n} \frac{\partial_r\partial_u\ell_{i}(\tilde{r}_{1,1},\tilde{u}_{1,1})V_0}{1+\partial_u^2\ell_{0,i}(\tilde{u}_{1,1}) V_0}
    \right]} -\lambda_0 \Omega+ O\left(\frac{\polylog(n)}{\sqrt{n}}\right),
\end{align}
denoting $\Gamma=\Ea{\mathfrak{G}}.$ One can readily verify that all the functions in the expectation are Lipschitz with respect to $\tilde{r}_{1,1},\tilde{u}_{1,1}.$ Applying Lemma \ref{lemma:distrib_tilder},
\begin{align}
    \Gamma -\Ea{\scriptstyle \partial_u^2\ell_0(u,g_3,c,\epsilon)\left[
 \frac{\Omega}{1+\partial_u^2\ell_{0}(u,g_3,c,\epsilon) V_0}
    +\frac{d}{n} 
    \frac{\partial_r\partial_u\ell(r,u,g_3,c,\epsilon)V_0}{1+\partial_u^2\ell_{0}(u,g_3,c,\epsilon) V_0}
    \right]} +\lambda_0 \Omega=O\left(\frac{\polylog(n)}{n^{\frac{1}{4}}}\right).
\end{align}
It remains to compute $\Gamma$. By definition 
\begin{align}
    \mathfrak{G}=\frac{1}{n}\sum\limits_{j\ne i} \partial_r\partial_u\ell_{j}(r_j,u_j)\frac{\norm{x_i}^2}{n}=\frac{d}{n^2}\sum\limits_{j\ne i} \partial_r\partial_u\ell_{j}(r_j,u_j)+O_{L_k}\left(\frac{\polylog(n)}{\sqrt{n}}\right).
\end{align}
Since using Lemma \ref{lemma:quad}, $\sup_{i\in[n]}|\sfrac{\norm{x_i}^2}{d}-1|=O_{L_k}(\sfrac{\polylog(n)}{\sqrt{n}})$. Taking expectation, and following the same steps as above yields
\begin{align}
    \Gamma=\frac{d}{n}\Ea{\partial_r\partial_u\ell(r,u,g_3,c,\epsilon)}.
\end{align}
Thus, finally
\begin{align}
    \Omega\left(\lambda_0+
 \Ea{\frac{\partial_u^2\ell_0(u,g_3,c,\epsilon)}{1+\partial_u^2\ell_{0}(u,g_3,c,\epsilon) V_0}}\right)-\Ea{\frac{\partial_r\partial_u\ell(u,g_3,c,\epsilon)}{1+\partial_u^2\ell_{0}(u,g_3,c,\epsilon) V_0}}=O\left(\frac{\polylog(n)}{n^{\frac{1}{4}}}\right),
\end{align}
proving the Lemma.
\end{proof}

\begin{lemma}[Characterization of $\chi$]\label{eq:Lemma:self_cons_X}
   The summary statistic $\chi=\Ea{\mathfrak{X}}$ is asymptotically characterized by the equation
    \begin{align}
        \left|\chi-\frac{\frac{d}{n}\frac{\Ea{\frac{\partial_r\partial_u \ell(r, u,g_3,c,\epsilon) }{1+V_0\partial_u^2\ell_0(u,g_3,c,\epsilon)  }
 }}{\lambda_0+\Ea{\frac{\partial_u^2\ell_0(u,g_3,c,\epsilon)}{1+V_0\partial_u^2\ell_0(u,g_3,c,\epsilon)  }
 }}-\Ea{\frac{\partial_r^2\ell(r,u,g_3,c,\epsilon)\partial_r\partial_u\ell(r,u,g_3,c,\epsilon)V_0V}{(1+V\partial_r^2\ell(r,u,g_3,c,\epsilon))(1+V_0\partial_u^2\ell_0(u,g_3,c,\epsilon))}}}{\lambda+\Ea{\frac{\partial_r^2\ell(r,u,g_3,c,\epsilon)}{(1+V\partial_r^2\ell(r,u,g_3,c,\epsilon))(1+V_0\partial_u^2\ell_0(u,g_3,c,\epsilon))}}}
        \right|=O\left(\frac{\polylog(n)}{n^{\frac{1}{4}}}\right).
    \end{align}
    \end{lemma}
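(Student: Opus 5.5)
We mirror the proofs of Lemmas \ref{eq:Lemma:self_cons_V} and \ref{eq:Lemma:self_cons_W}, starting from an algebraic identity for the product $H^{-1}GH_0^{-1}$. Since $H=\frac{1}{n}\sum_i\partial_r^2\ell_i(r_i,u_i)x_ix_i^\top+\lambda I_d$, we can write
\begin{align}
GH_0^{-1}=HH^{-1}GH_0^{-1}=\frac{1}{n}\sum_{i\in[n]}\partial_r^2\ell_i(r_i,u_i)x_ix_i^\top H^{-1}GH_0^{-1}+\lambda H^{-1}GH_0^{-1}.
\end{align}
Taking normalized traces gives
\begin{align}
\mathfrak{W}=\frac{1}{n}\sum_{i\in[n]}\partial_r^2\ell_i(r_i,u_i)\,\frac{1}{n}x_i^\top H^{-1}GH_0^{-1}x_i+\lambda\mathfrak{X}.
\end{align}
The quantity $\mathfrak{W}$ is already characterized by Lemma \ref{eq:Lemma:self_cons_W}, with expectation $\Omega$. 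It therefore remains to express each quadratic form $\frac{1}{n}x_i^\top H^{-1}GH_0^{-1}x_i$ in terms of $\mathfrak{X}$ and the other scalar statistics.

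For this we peel off the $i$-th rank-one contribution in all three matrices. Write $H=\hat H_i+\frac{a_i}{n}x_ix_i^\top$, $G=\hat G_i+\frac{b_i}{n}x_ix_i^\top$ and $H_0=\hat H_{0,i}+\frac{a_{0,i}}{n}x_ix_i^\top$, where $a_i=\partial_r^2\ell_i$, $b_i=\partial_r\partial_u\ell_i$ and $a_{0,i}=\partial_u^2\ell_{0,i}$. Sherman--Morrison gives $H^{-1}x_i=\hat H_i^{-1}x_i/(1+a_i\hat\gamma_i)$, and similarly $x_i^\top H_0^{-1}$ carries a factor $1/(1+a_{0,i}\hat\gamma_{0,i})$. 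Expanding the product then produces the following four scalar quadratic forms:
\begin{itemize}
\item $\hat\upsilon_i=\frac1n x_i^\top\hat H_i^{-1}\hat G_i\hat H_{0,i}^{-1}x_i$;
\item $\hat\gamma_i$;
\item $\hat\gamma_{0,i}$;
\item the cross term $b_i\hat\gamma_i\hat\gamma_{0,i}$ coming from $\frac{b_i}{n}x_ix_i^\top$.
\end{itemize}
Collecting terms, the quadratic form should equal
\begin{align}
\frac{\hat\upsilon_i+b_i\hat\gamma_i\hat\gamma_{0,i}}{(1+a_i\hat\gamma_i)(1+a_{0,i}\hat\gamma_{0,i})}.
\end{align}
As in the proof of Lemma \ref{eq:Lemma:self_cons_V}, we then replace $\hat H_i,\hat G_i,\hat H_{0,i}$ by $H_\smi,G_\smi,H_{0,\smi}$. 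Propositions \ref{prop: base_estimator_approxs} and \ref{prop: reweighted_estimator_approxs} together with Lemma \ref{lemma:normx} bound the resulting errors by $O_{L_k}(\polylog(n)/\sqrt n)$. Lemmas \ref{lemma:V_concentrates} and \ref{lem:X_concentrates} then let us substitute $\hat\gamma_i\to V$, $\hat\gamma_{0,i}\to V_0$ and $\hat\upsilon_i\to\chi$, each at cost $O_{L_2}(\polylog(n)/\sqrt n)$.

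After these substitutions we take expectations and use permutation symmetry across samples. We also replace $(r_i,u_i)$ by the surrogates $(\tilde r_{i,i},\tilde u_{i,i})$, and finally apply Lemma \ref{lemma:distrib_tilder}, whose test functions are bounded Lipschitz ratios and hence lie in $\mathcal{F}$. This yields
\begin{align}
\Omega=\chi\,\Ea{\frac{\partial_r^2\ell}{(1+V\partial_r^2\ell)(1+V_0\partial_u^2\ell_0)}}+\Ea{\frac{\partial_r^2\ell\,\partial_r\partial_u\ell\,VV_0}{(1+V\partial_r^2\ell)(1+V_0\partial_u^2\ell_0)}}+\lambda\chi+O\big(\polylog(n)n^{-1/4}\big).
\end{align}
Inserting the characterization of $\Omega$ from Lemma \ref{eq:Lemma:self_cons_W} and solving this linear equation for $\chi$ gives the claimed formula. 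The denominator is at least $\lambda>0$, so the division only rescales the error by a constant.

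The main obstacle is the rank-one expansion of the triple product, specifically controlling the cross terms. Terms such as $\frac1n x_i^\top\hat H_i^{-1}x_i\cdot\frac1n x_i^\top\hat G_i\hat H_{0,i}^{-1}x_i$ must be identified as products of concentrating scalars. This requires, in particular, the concentration of $\omega_i$ from Lemma \ref{lem:W_concentrates}, which must be checked to cancel against the Sherman--Morrison correction coming from $H_0$. Getting the algebra to collapse exactly to the stated denominators is the delicate part. To secure it, we would verify the expansion on the explicit formula for $\upsilon_i$ first, before carrying out the error control.
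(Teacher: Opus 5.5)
Your proposal is correct and follows the paper's proof essentially step for step. Both arguments take the normalized trace of $GH_0^{-1}=HH^{-1}GH_0^{-1}$ to get $\mathfrak{W}=\frac1n\sum_i\partial_r^2\ell_i\,\frac1n x_i^\top H^{-1}GH_0^{-1}x_i+\lambda\mathfrak{X}$, peel off rank-one terms, substitute $V,V_0,\chi$, pass to surrogates and Lemma \ref{lemma:distrib_tilder}, and close with Lemma \ref{eq:Lemma:self_cons_W}. Your factorized form $\frac{\hat\upsilon_i+b_i\hat\gamma_i\hat\gamma_{0,i}}{(1+a_i\hat\gamma_i)(1+a_{0,i}\hat\gamma_{0,i})}$ is exact and is simply the compact version of the paper's eight-term expansion. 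So the obstacle you flag at the end does not arise: $x_i$ sits on both outer sides, so Sherman--Morrison acts directly on it, no $\omega_i$-type cross term ever appears, and $\mathfrak{W}$ enters only through $\Omega$ on the left-hand side.
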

\begin{proof}
    We again adopt the same proof strategy, taking as a starting point the identity
    \begin{align}
        G H_0^{-1}=\frac{1}{n}\sum\limits_{i\in[n]}\partial_r^2\ell_i(r_i,u_i)x_ix_i^\top H^{-1} G H_0^{-1}+\lambda H^{-1} G H_0^{-1},
    \end{align}
namely, by taking a normalized trace
\begin{align}
    \mathfrak{W}=\frac{1}{n}\sum\limits_{i\in[n]}\partial_r^2\ell_i(r_i,u_i)\frac{1}{n}x_i^\top H^{-1} G H_0^{-1} x_i+\lambda \mathfrak{X}.
\end{align}
Using the matrix inversion lemma,
\begin{align}
    \mathfrak{W}=\frac{1}{n}\sum\limits_{i\in [n]} \partial_r^2\ell_i(r_i, u_i) \Bigg[&
    \hat{\upsilon}_i -\frac{\partial_r^2\ell_{i}(r_i,u_i)}{1+\hat{\gamma}_i\partial_r^2\ell_{i}(r_i,u_i) } \hat{\upsilon}_i \hat{\gamma}_i
    -\frac{\partial_u^2\ell_{0,i}(u_i)}{1+\hat{\gamma}_{0,i}\partial_u^2\ell_{0,i}(u_i) } \hat{\gamma}_{0,i} \hat{\upsilon}_i +\partial_r\partial_u \ell_i(r_i, u_i) \hat{\gamma}_{0,i}\hat{\gamma}_i\\&
    - \frac{\partial_r\partial_u \ell_i(r_i, u_i)\partial_r^2\ell_{i}(r_i,u_i)}{1+\hat{\gamma}_i\partial_r^2\ell_{i}(r_i,u_i) } \hat{\gamma}_{0,i} \hat{\gamma}_i^2 -\frac{\partial_r\partial_u \ell_i(r_i, u_i)\partial_u^2\ell_{0,i}(u_i)}{1+\hat{\gamma}_{0,i}\partial_u^2\ell_{0,i}(u_i) } \hat{\gamma}_{0,i}^2 \hat{\gamma}_i \\
    &+\frac{\partial_u^2\ell_{0,i}(u_i)}{1+\hat{\gamma}_{0,i}\partial_u^2\ell_{0,i}(u_i) }\frac{\partial_r^2\ell_{i}(r_i,u_i)}{1+\hat{\gamma}_i\partial_r^2\ell_{i}(r_i,u_i) } \hat{\gamma}_{0,i} \hat{\upsilon}_i \hat{\gamma}_i
    \\
    &+\partial_r\partial_u \ell_i(r_i, u_i)\frac{\partial_u^2\ell_{0,i}(u_i)}{1+\hat{\gamma}_{0,i}\partial_u^2\ell_{0,i}(u_i) }\frac{\partial_r^2\ell_{i}(r_i,u_i)}{1+\hat{\gamma}_i\partial_r^2\ell_{i}(r_i,u_i) } \hat{\gamma}_{0,i}^2\hat{\gamma}_i^2
    \Bigg]+\lambda \mathfrak{X}
\end{align}
defining $\hat{\upsilon}_i=\sfrac{1}{n}\tr[\hat{H}_i^{-1} \hat{G}_i \hat{H}_{0,i}^{-1}]$. Using the concentration Lemmas \ref{lemma:V_concentrates} and \ref{lem:X_concentrates},
\begin{align}
    \mathfrak{W}&=\frac{1}{n}\sum\limits_{i\in [n]} \partial_r^2\ell_i(r_i, u_i) \Bigg[
    \chi -\frac{\partial_r^2\ell_{i}(r_i,u_i)}{1+V\partial_r^2\ell_{i}(r_i,u_i) } \chi V
    -\frac{\partial_u^2\ell_{0,i}(u_i)}{1+V_0\partial_u^2\ell_{0,i}(u_i) } V_0 \chi +\partial_r\partial_u \ell_i(r_i, u_i) V_0V\\&
    - \frac{\partial_r\partial_u \ell_i(r_i, u_i)\partial_r^2\ell_{i}(r_i,u_i)}{1+V\partial_r^2\ell_{i}(r_i,u_i) } V_0 V^2 -\frac{\partial_r\partial_u \ell_i(r_i, u_i)\partial_u^2\ell_{0,i}(u_i)}{1+V_0\partial_u^2\ell_{0,i}(u_i) } V_0^2 V +\frac{\partial_u^2\ell_{0,i}(u_i)}{1+V_0\partial_u^2\ell_{0,i}(u_i) }\frac{\partial_r^2\ell_{i}(r_i,u_i)}{1+V\partial_r^2\ell_{i}(r_i,u_i) } V_0 \chi V
    \\
    &+\partial_r\partial_u \ell_i(r_i, u_i)\frac{\partial_u^2\ell_{0,i}(u_i)}{1+V_0\partial_u^2\ell_{0,i}(u_i) }\frac{\partial_r^2\ell_{i}(r_i,u_i)}{1+V\partial_r^2\ell_{i}(r_i,u_i) } V_0^2V^2
    \Bigg]+\lambda \mathfrak{X}+O_{L_2}\left(\frac{\polylog(n)}{\sqrt{n}}\right)\\
    &= \chi \frac{1}{n}\sum\limits_{i\in [n]} \partial_r^2\ell_i(r_i, u_i)\frac{1}{1+V\partial_r^2\ell_{i}(r_i,u_i) }\frac{1}{1+V_0\partial_u^2\ell_{0,i}(u_i) }\\
    &\qquad +VV_0\frac{1}{n}\sum\limits_{i\in [n]} \frac{\partial_r^2\ell_i(r_i, u_i)\partial_r\partial_u\ell_i(r_i, u_i)}{1+V\partial_r^2\ell_{i}(r_i,u_i) }\frac{1}{1+V_0\partial_u^2\ell_{0,i}(u_i) }+\lambda \mathfrak{X}+O_{L_2}\left(\frac{\polylog(n)}{\sqrt{n}}\right)
\end{align}
Using Proposition \ref{prop: base_estimator_approxs} and Proposition \ref{prop: reweighted_estimator_approxs}, one can replace the full residuals by the surrogates $\tilde{r}_{i,i},\tilde{u}_{i,i}$ at the price of a $O_{L_k}\left(\sfrac{\polylog(n)}{\sqrt{n}}\right)$ correction:
\begin{align}
    \mathfrak{W}&=\chi \frac{1}{n}\sum\limits_{i\in [n]} \partial_r^2\ell_i(\tilde{r}_{i,i},\tilde{u}_{i,i})\frac{1}{1+V\partial_r^2\ell_{i}(\tilde{r}_{i,i},\tilde{u}_{i,i}) }\frac{1}{1+V_0\partial_u^2\ell_{0,i}(\tilde{u}_{i,i}) }\\
    &\qquad +VV_0\frac{1}{n}\sum\limits_{i\in [n]} \frac{\partial_r^2\ell_i(\tilde{r}_{i,i},\tilde{u}_{i,i})\partial_r\partial_u\ell_i(\tilde{r}_{i,i},\tilde{u}_{i,i})}{1+V\partial_r^2\ell_{i}(\tilde{r}_{i,i},\tilde{u}_{i,i}) }\frac{1}{1+V_0\partial_u^2\ell_{0,i}(\tilde{u}_{i,i}) }+\lambda \mathfrak{X}+O_{L_2}\left(\frac{\polylog(n)}{\sqrt{n}}\right)
\end{align}
Taking an expectation,
\begin{align}
    \Omega&=\chi \Ea{ \partial_r^2\ell_i(\tilde{r}_{1,1},\tilde{u}_{1,1})\frac{1}{1+V\partial_r^2\ell_{i}(\tilde{r}_{1,1},\tilde{u}_{1,1}) }\frac{1}{1+V_0\partial_u^2\ell_{0,i}(\tilde{u}_{1,1}) }}\\
    &\qquad +VV_0\Ea{\frac{\partial_r^2\ell_i(\tilde{r}_{1,1},\tilde{u}_{1,1})\partial_r\partial_u\ell_i(\tilde{r}_{1,1},\tilde{u}_{1,1})}{1+V\partial_r^2\ell_{i}(\tilde{r}_{1,1},\tilde{u}_{1,1}) }\frac{1}{1+V_0\partial_u^2\ell_{0,i}(\tilde{u}_{1,1}) }}+\lambda \chi+O\left(\frac{\polylog(n)}{\sqrt{n}}\right).
\end{align}
One can verify that all functions appearing on the right-hand side are in the class $\mathcal{F}$. Thus, using Lemma \ref{lemma:distrib_tilder},
\begin{align}
    &\Omega -\chi\left(\lambda+ \Ea{ \frac{\partial_r^2\ell(r,u,g_3,c,\epsilon)}{(1+V\partial_r^2\ell(r,u,g_3,c,\epsilon) )(1+V_0\partial_u^2\ell_{0}(u,g_3,c,\epsilon) )}}\right)\\
   &  -VV_0\Ea{\frac{\partial_r^2\ell(r,u,g_3,c,\epsilon)\partial_r\partial_u\ell(r,u,g_3,c,\epsilon)}{(1+V\partial_r^2\ell(r,u,g_3,c,\epsilon) )(1+V_0\partial_u^2\ell_{0}(u,g_3,c,\epsilon) )}}\xrightarrow{n\to\infty} 0.
\end{align}
This completes the proof.
\end{proof}

\subsection{Test metrics}
Having established a characterization of all summary statistics, we now detail how they can be used to access a characterization of test metrics, such as the generalization error.

\begin{lemma}\label{lem:test}
    Consider any test metric 
    \begin{align}
        \egen=\Eb{x,y,c,\epsilon}{L\left(
        \inprod{\hat{w},x},s,c,\epsilon
        \right)},\label{eq:etest}
    \end{align}
where $L$ is $O(\polylog(n))-$Lipschitz in its first variable, and $s,\epsilon\to L(0,s,c,\epsilon)$ admits a polynomial majorant with $O(1)$ degree and $O(\polylog(n))$ coefficients.
In \eqref{eq:etest}, $(x,y,c,\epsilon)$ is a test sample from the same distribution as the training samples, and thus satisfies in particular Assumption \ref{ass:data}. Then $\egen$ converges in $L_1$ to 
\begin{align}
 \egen=\Ea{L\left(
      g_1,g_3,c,\epsilon
        \right)}   
    +O_{L_1}\left(\frac{\polylog(n)}{n^{\frac{1}{4}}}\right).
\end{align}
\end{lemma}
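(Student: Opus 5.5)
We will condition on the training set and exploit that the test sample $(x,c,\epsilon)$ is independent of $\hat{w}$. Write $x=\mu_c+z$ with $z\sim\mathcal{N}(0,I_d)$ independent of everything else. Conditionally on the training data, the vector $(\inprod{\hat{w},x},\inprod{\beta,x})$ is then jointly Gaussian given $c$. Its mean is $(\mathfrak{m}_c,\nu_c)$ and its covariance has entries $\mathfrak{q},\mathfrak{h},\varrho$. Hence
\begin{align}
\egen=\Ea{L\left(\mathfrak{m}_c+\mathfrak{B}g_3'+\mathfrak{C}^{1/2}g',\,g_3,c,\epsilon\right)\,\middle|\,\hat{w}},
\end{align}
where $g_3\sim\mathcal{N}(\nu_c,\varrho)$, $g'\sim\mathcal{N}(0,1)$ is independent, $g_3'=g_3-\nu_c$, $\mathfrak{B}=\mathfrak{h}/\varrho$, and $\mathfrak{C}=\mathfrak{q}-\mathfrak{h}^2/\varrho$. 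The target $\Ea{L(g_1,g_3,c,\epsilon)}$ has the same form, with $(\mathfrak{m}_c,\mathfrak{h},\mathfrak{q})$ replaced by their deterministic equivalents $(m_c,\theta,q)$. This mirrors the conditioning step in the proof of Lemma \ref{lemma:distrib_rsmi}, but here no leave-one-out step is needed, since the test point is genuinely fresh.

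Next we use the Lipschitz property of $L$ in its first argument, with constant $O(\polylog(n))$, and couple the two Gaussian representations through the same $(g_3,g',c,\epsilon)$. This gives
\begin{align}
|\egen-\Ea{L(g_1,g_3,c,\epsilon)}|\le \mathsf{L}_L\Big(\max_c|\mathfrak{m}_c-m_c|+|\mathfrak{B}-B|\,\Ea{|g_3'|}+|\mathfrak{C}^{1/2}-C^{1/2}|\Big).
\end{align}
The polynomial majorant on $L(0,\cdot)$ is only needed to guarantee that both expectations are finite, using the finite moments of $\epsilon$ and Gaussian moments. Integrability of the remaining terms uses Lemma \ref{lem: normw}, which gives polylog moment bounds on $\mathfrak{q}$.

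Taking the $L_1$ norm of the right-hand side, the concentration Lemmas \ref{lem:m_concentrates}, \ref{lem:h_concentrates} and \ref{lemma:q_concentrates} give
\begin{align}
\mathfrak{m}_c-m_c,\ \mathfrak{h}-\theta,\ \mathfrak{q}-q=O_{L_2}\big(\polylog(n)/\sqrt{n}\big).
\end{align}
It follows that $|\mathfrak{B}-B|$ is of this order, since $\varrho$ is a fixed constant. The same holds for $|\mathfrak{C}-C|$, after expanding the difference $\mathfrak{h}^2-\theta^2$ and bounding $|\mathfrak{h}|+|\theta|$ by polylog moments.

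The main obstacle is the square root. The map $C\mapsto C^{1/2}$ is not Lipschitz near $0$, and $C$ is not bounded below. We will use $|\sqrt{a}-\sqrt{b}|\le\sqrt{|a-b|}$, the scalar Powers--St\o rmer bound already used in Lemma \ref{lemma:distrib_rsmi}. This yields $\Ea{|\mathfrak{C}^{1/2}-C^{1/2}|}\le \Ea{|\mathfrak{C}-C|}^{1/2}=O(\polylog(n)/n^{1/4})$, which is exactly the rate claimed. Collecting the terms, and identifying $(m_c,\theta,q)$ with the solutions of the self-consistent equations through Lemmas \ref{eq:Lemma:self_cons_m}, \ref{eq:Lemma:self_cons_theta} and \ref{eq:Lemma:self_cons_q}, gives the statement with an $O_{L_1}(\polylog(n)/n^{1/4})$ error.
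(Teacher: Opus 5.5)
Your proposal is correct and follows essentially the same route as the paper. In both, one conditions on the training set and on the $\beta$-component of the fresh test point, couples the two Gaussian representations, and applies the Lipschitz property of $L$. The square-root term is then handled via $|\sqrt{a}-\sqrt{b}|\le\sqrt{|a-b|}$; the paper uses the squared form of this after a Cauchy--Schwarz step over the Gaussians, which is equivalent. The proof concludes with the concentration lemmas for $\mathfrak{m}_c,\mathfrak{h},\mathfrak{q}$ and the deterministic bound $|\mathfrak{h}|\le\norm{\hat{w}}\norm{\beta}$. Your final identification with the self-consistent equations is not needed for this lemma, since here $m_c,\theta,q$ are simply the expectations of $\mathfrak{m}_c,\mathfrak{h},\mathfrak{q}$.
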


\begin{proof}
Let us introduce $ \tilde{\mathfrak{g}}\sim\mathcal{N}(0_2, \tilde{\mathfrak{S}}), \tilde{g}\sim\mathcal{N}(0_2,\tilde{\Phi})$, where
\begin{align}
  \tilde{\mathfrak{S}}=\left[\begin{array}{cc}
         \mathfrak{q}&\mathfrak{h}\\
         \mathfrak{h} &\varrho
    \end{array}\right], &&  \tilde{\Phi}=\left[\begin{array}{cc}
         q&\theta\\
        \theta &\varrho
    \end{array}\right].
\end{align}
We will also need, for $c\in\mathcal{V}$, 
\begin{align}
    \tilde{\mathfrak{M}}_c=\left[
    \begin{array}{cc}
       \mathfrak{m}_c\\
       \nu_{c}
    \end{array}
    \right], && \tilde{\Psi}_c=\left[
    \begin{array}{cc}
       m_c\\
       \nu_{c}
    \end{array}
    \right].
\end{align}
To ease notations, we will also denote $\varphi:\R^2\times \mathcal{V}\times \mathcal{E}\to\R$,
with
\begin{align}
    \varphi(R,c,\epsilon) = L(R_1,R_2,c,\epsilon).
\end{align}
Note in particular $\Ea{\varphi(\tilde{\Psi}_c+\tilde{g},c,\epsilon)}=\Ea{L\left(
      g_1,g_3,c,\epsilon
        \right)}$.
We are now in a position to start the proof. 
We have
\begin{align}
    &\Big|
 \egen- \Ea{L\left(
      g_1,g_3,c,\epsilon
        \right)}   
    \Big|=\left|\Ea{\varphi(\tilde{\mathfrak{M}}_c+\tilde{\mathfrak g},c,\epsilon)}
    -  \Ea{\varphi(\tilde{\Psi}_c+\tilde{g},c,\epsilon)}
    \right|.
\end{align}
First notice that $\tilde{g}_2\overset{d}{=} \tilde{\mathfrak{g}}_2=g_\varrho$. Conditioning on this component, and using the Lipschitzness of $\varphi$ in its first variable, yields
\begin{align}
    &\left|\Ea{\varphi(\tilde{\mathfrak{M}}_c+\tilde{\mathfrak g},c,\epsilon)}
    -  \Ea{\varphi(\tilde{\Psi}_c+\tilde{g},c,\epsilon)}
    \right|\\
    &\le \norm{L}_{\mathrm{Lip}}\Eb{g_\varrho, \hat{g}}{\left|(\sfrac{\mathfrak{h}-\theta}{\varrho})g_\varrho+\left(
   \sqrt{\mathfrak{q}-\sfrac{\mathfrak{h}^2}{\varrho}}-\sqrt{q-\sfrac{\theta^2}{\varrho}}
    \right)\hat{g}\right|+\sup_c\left|
    \mathfrak{m}_c-m_c
    \right|}\\
    &\le \norm{L}_{\mathrm{Lip}}\Eb{g_\varrho, \hat{g}}{\left|(\sfrac{\mathfrak{h}-\theta}{\varrho})g_\varrho+\left(
   \sqrt{\mathfrak{q}-\sfrac{\mathfrak{h}^2}{\varrho}}-\sqrt{q-\sfrac{\theta^2}{\varrho}}
    \right)\hat{g}\right|^2}^{\frac{1}{2}}+\norm{L}_{\mathrm{Lip}}\sup_c\left|
    \mathfrak{m}_c-m_c
    \right|\\
    &\le \norm{L}_{\mathrm{Lip}}\left(\frac{(\mathfrak{h}-\theta)^2}{\varrho}+\left(
   \sqrt{\mathfrak{q}-\sfrac{\mathfrak{h}^2}{\varrho}}-\sqrt{q-\sfrac{\theta^2}{\varrho}}
    \right)^2\right)^{\frac{1}{2}}+\norm{L}_{\mathrm{Lip}}\sup_c\left|
    \mathfrak{m}_c-m_c
    \right|\\
    &\le \norm{L}_{\mathrm{Lip}}\left(\frac{(\mathfrak{h}-\theta)^2}{\varrho}+
   \left|\mathfrak{q}-q-\frac{(\mathfrak{h}-\theta)(\mathfrak{h}+\theta)}{\varrho}\right|\right)^{\frac{1}{2}}+\norm{L}_{\mathrm{Lip}}\sup_c\left|
    \mathfrak{m}_c-m_c
    \right|
\end{align}
where $\hat{g}\sim\mathcal{N}(0,1).$ From Lemma \ref{lemma:q_concentrates} and Lemma \ref{lem:h_concentrates}, 
\begin{align}
   \frac{(\mathfrak{h}-\theta)^2}{\varrho}+
   \left|\mathfrak{q}-q\right|=O_{L_1}\left(\frac{\polylog(n)}{\sqrt{n}}\right).
\end{align}
On the other hand, since $\mathfrak{h},\theta\le \varrho \sqrt{\sfrac{\norm{\tilde{\ell}}_\infty}{\lambda}}$ from Lemma \ref{lem: normw}, Lemma \ref{lem:h_concentrates} guarantees the control
\begin{align}
    |\mathfrak{h}-\theta|(\mathfrak{h}+\theta)=O_{L_2}\left(
    \frac{\polylog(n)}{\sqrt{n}}
    \right).
\end{align}
Finally, from the concentration result of Lemma \ref{lem:m_concentrates},
\begin{align}
    \sup_c\left|
    \mathfrak{m}_c-m_c
    \right|=O_{L_2}\left(\frac{\polylog(n)}{\sqrt{n}}\right)
\end{align}
Therefore,
\begin{align}
    \left|\Ea{\varphi(\tilde{\mathfrak{M}}+\tilde{\mathfrak g},c,\epsilon)}
    -  \Ea{\varphi(\tilde{\Psi}+\tilde{g},c,\epsilon)}
    \right|=O_{L_1}\left(\frac{\polylog(n)}{n^{\frac{1}{4}}}\right)
\end{align}
This proves the Lemma.
\end{proof}

Putting all these results together finally yields Theorem \ref{theorem:main_theorem}.\qed

%% file: Appendices_ArXiv/Auxiliary.tex
This subsection regroups for ease of reading miscellaneous auxiliary lemmas that do not directly belong to the main body of the proof of Theorem \ref{theorem:main_theorem}.

\subsection{Concentration of empirical averages of first-stage predictions}

 The first result formalizes the intuition discussed in Section \ref{sec: main_setting} in the main text that the number of acquired samples $  \mathfrak{n}=\sfrac{1}{n}\sum_{j\in[n]}\pi_j(u_j)$ selected by a policy $\pi$ concentrates in $L_2$, under some smoothness conditions on $\pi$.

\begin{lemma}\label{lemma:pi_concentrates}
   For any function $\pi:\R\to\R_+$ such that $\pi$ is everywhere twice differentiable in its first argument, and with $\pi,\partial_u\pi,\partial_u^2\pi$ are bounded as $O(\polylog(n))$, the empirical average
\begin{align}
    \mathfrak{n}=\frac{1}{n}\sum\limits_{j\in[n]}\pi_j(u_j)
\end{align}
concentrates, namely 
\begin{align}
    \Var{\mathfrak{n}}=O\left(\frac{\polylog(n)}{n}\right).
\end{align}
\end{lemma}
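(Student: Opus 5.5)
The plan is to apply the Efron--Stein lemma, as in Lemma \ref{lem:m_concentrates} and Lemma \ref{lemma:V_concentrates}. Let $\mathfrak{n}_\smi$ be the quantity $\mathfrak{n}$ recomputed with sample $i$ resampled, or equivalently compared against a leave-one-out version. Then
\begin{align}
\Var{\mathfrak{n}}\le \sum_{i\in[n]}\Ea{(\mathfrak{n}-\mathfrak{n}_\smi)^2},
\qquad
\mathfrak{n}_\smi=\frac{1}{n}\sum_{j\ne i}\pi_j(u_{j,\smi}).
\end{align}
Since the $i$-th summand $\pi_i(u_i)/n$ is $O(\polylog(n)/n)$, it suffices to show
\begin{align}
\frac{1}{n}\sum_{j\ne i}\big(\pi_j(u_j)-\pi_j(u_{j,\smi})\big)=O_{L_2}\left(\frac{\polylog(n)}{n}\right)
\end{align}
uniformly in $i$. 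A naive Lipschitz bound combined with Proposition \ref{prop: base_estimator_approxs} only gives $O(\polylog(n)/\sqrt{n})$. Summed over $i$, this yields $O(\polylog(n))$, which is insufficient. A finer decomposition is therefore needed, mirroring the $\mathfrak{V}$ argument.

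I would split the difference into two pieces. The first is $\pi_j(u_j)-\pi_j(\tilde u_{j,i})$. This piece is handled exactly as $\mathfrak{V}-\tilde{\mathfrak{V}}_i$ in Lemma \ref{lemma:V_concentrates}: by the mean value theorem and Cauchy--Schwarz,
\begin{align}
\frac{1}{n}\left|\sum_j \partial_u\pi(\check u_j)(u_j-\tilde u_{j,i})\right|\le \frac{\norm{\partial_u\pi}_\infty}{\sqrt n}\left(\sum_j (u_j-\tilde u_{j,i})^2\right)^{1/2}=O_{L_k}\left(\frac{\polylog(n)}{n}\right),
\end{align}
using Lemma \ref{lemma:l2_norm_surrogate_full}. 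The second piece is $\pi_j(\tilde u_{j,i})-\pi_j(u_{j,\smi})$. For this one I would Taylor expand to second order around $u_{j,\smi}$, using $\tilde u_{j,i}-u_{j,\smi}=\inprod{x_j,\xi_i}$. The quadratic remainder is bounded by $\norm{\partial_u^2\pi}_\infty\sup_{j\ne i}|\tilde u_{j,i}-u_{j,\smi}|^2=O_{L_k}(\polylog(n)/n)$ via Proposition \ref{prop: base_estimator_approxs}.

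The main obstacle is the linear term, namely $\frac{1}{n}\sum_{j\ne i}\partial_u\pi_j(u_{j,\smi})\inprod{x_j,\xi_i}$. Here the idea is to exploit independence of $x_i$ from the leave-$i$-out quantities, as in the treatment of \eqref{eq:decompo_V_2}. Substituting $\xi_i=-\frac1n\partial_u\ell_{0,i}(\tilde u_{i,i})H_{0,\smi}^{-1}x_i$, the term is bounded by $\norm{\partial_u\ell_0}_\infty\,|\inprod{v_i,x_i}|$, where
\begin{align}
v_i=\frac{1}{n^2}H_{0,\smi}^{-1}X_\smi^\top a,\qquad a_j=\partial_u\pi_j(u_{j,\smi}).
\end{align}
The vector $v_i$ is independent of $x_i$ and satisfies $\norm{v_i}\le \frac{1}{n\lambda_0}\frac{\norm{X_\smi}}{\sqrt n}\frac{\norm{a}}{\sqrt n}=O_{L_k}(\polylog(n)/n)$ by Lemma \ref{lemma:normx}. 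Conditioning on $X_\smi$, and absorbing the mean $\mu_{c_i}$ through the bound $\norm{\mu_c}\le M$, gives $\Ea{\inprod{v_i,x_i}^2}=O(\polylog(n)/n^2)$. One subtlety is that the prefactor $\partial_u\ell_{0,i}(\tilde u_{i,i})$ depends on $x_i$. I would remove it by bounding it in absolute value before taking expectations, which is legitimate since we only need a second moment. Combining the pieces gives $\Ea{(\mathfrak{n}-\mathfrak{n}_\smi)^2}=O(\polylog(n)/n^2)$, and summing over $i$ yields the claim.
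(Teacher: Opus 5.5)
Your proposal is correct and follows the paper's proof essentially step for step. Both arguments apply Efron--Stein against the leave-one-out average and split the difference into three pieces: the $i$-th summand; the full-versus-surrogate piece, controlled by Cauchy--Schwarz and Lemma \ref{lemma:l2_norm_surrogate_full}; and a second-order Taylor expansion of the surrogate-versus-leave-one-out piece, whose linear term is handled through independence of $x_i$ from $H_{0,\smi}$ and $\{u_{j,\smi}\}_{j\neq i}$. One detail is actually slightly cleaner in your version: you bound $|\partial_u\ell_{0,i}(\tilde u_{i,i})|$ by $\norm{\partial_u\ell_0}_\infty$ before averaging over $x_i$, whereas the paper folds this $x_i$-dependent factor into its vector $h$ and then averages over $x_i$ as if $h$ were independent of it.
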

\begin{proof}
    From the Efron-Stein lemma, 
    \begin{align}
    \Var{\mathfrak{n}}&\le \sum\limits_{i\in [n]} \Ea{\left(
    \frac{1}{n}\sum\limits_{j\in [n]} \pi_j(u_j)-\frac{1}{n} \sum\limits_{j\ne i} \pi_j(u_{j,\smi})
    \right)^2}  \\
    &\le \frac{3}{n^2}\sum\limits_{i\in [n]} \left[\Ea{\pi_i(u_i)^2}+ \Ea{\left(
     \sum\limits_{j\ne i} (\pi_j(\tilde{u}_{j,i})-\pi_j(u_j))
    \right)^2} + \Ea{\left(
     \sum\limits_{j\ne i} (\pi_j(u_{j,\smi})-\pi_j(\tilde{u}_{j,i}))
    \right)^2}\right]
    \end{align}
By assumption, the first term is bounded as
\begin{align}
     3\sum\limits_{i\in [n]} \frac{1}{n^2}\Ea{\pi_i(u_i)^2} \le \frac{3\norm{\pi}_\infty}{n}.
\end{align}
We now turn to the second term.
\begin{align}
    \left|
    \sum\limits_{j\ne i} (\pi_j(\tilde{u}_{j,i})-\pi_j(u_j))
    \right| \le \norm{\partial_u \pi}_\infty  \norm{\tilde{u}_{,i}-u}_1\le\norm{\partial_u \pi}_\infty  \sqrt{n} \norm{\tilde{u}_{,i}-u}_2=O_{L_k}\left(\frac{\norm{\partial_u \pi}_\infty  \sqrt{n}\polylog(n)}{\sqrt{n}}\right).
\end{align}
In the last line, we viewed $\tilde{u}_{,i}=(\tilde{u}_{j,i})_{j\ne i}, u=(u_j)_{j\ne i}$ as vectors in $\R^{n-1}$ and used Lemma \ref{lemma:l2_norm_surrogate_full}. Hence,
\begin{align}
    \frac{3}{n^2}\sum\limits_{i\in [n]}  \Ea{\left(
     \sum\limits_{j\ne i} (\pi_j(\tilde{u}_{j,i})-\pi_j(u_j))
    \right)^2} =O\left(\frac{\polylog(n)}{n}\right).
\end{align}
We finally turn to the third term. From a Taylor expansion,
\begin{align}
    \left(
     \sum\limits_{j\ne i} (\pi_j(u_{j,\smi})-\pi_j(\tilde{u}_{j,i}))
    \right)^2 \le 2\left(
     \sum\limits_{j\ne i} \partial_u\pi_j(u_{j\smi})(u_{j,\smi}-\tilde{u}_{j,i})
    \right)^2+\frac{1}{2}\left(
     \sum\limits_{j\ne i}\partial_u^2\pi_j(\check{u}_{j,i})(u_{j,\smi}-\tilde{u}_{j,i})^2
    \right)^2
\end{align}
for some $\check{u}_{j,i}\in (u_{j,\smi},\tilde{u}_{j,i})$. The last term is bounded as
\begin{align}
    \frac{1}{2}\left(
     \sum\limits_{j\ne i}\partial_u^2\pi_j(\check{u}_{j,i})(u_{j,\smi}-\tilde{u}_{j,i})^2
    \right)^2 \le \frac{\norm{\partial_u^2\pi}_\infty^2\norm{\partial_u\ell_0}_\infty^2 }{2}\left(n~ \underset{j\ne i}{\sup} \left|\frac{x_j^\top H_{0,\smi} x_i}{n}\right|^2\right)^2.
\end{align}
Note that from Lemma C.4 in \cite{el2018impact},
\begin{align}
    \underset{j\ne i}{\sup} \left|\frac{x_j^\top H_{0,\smi} x_i}{n}\right|=O_{L_k}\left(\frac{\polylog(n)}{\sqrt{n}}\right).
\end{align}
Thus,
\begin{align}
    \frac{1}{2}\left(
     \sum\limits_{j\ne i}\partial_u^2\pi_j(\check{u}_{j,i})(u_{j,\smi}-\tilde{u}_{j,i})^2
    \right)^2 =O_{L_k}(\polylog(n)).
\end{align}
Finally,
\begin{align}
     \sum\limits_{j\ne i} \partial_u\pi_j(u_{j\smi})(u_{j,\smi}-\tilde{u}_{j,i})=\inprod{\frac{1}{n} \sum\limits_{j\ne i} \partial_u\pi_j(u_{j\smi})\partial_u\ell_{0,i}(\tilde{u}_{i,i})H_{0,\smi}^{-1}x_j,x_i}
\end{align}
Thus
\begin{align}
    \Ea{ \left( \sum\limits_{j\ne i} \partial_u\pi_j(u_{j\smi})(u_{j,\smi}-\tilde{u}_{j,i})\right)^2}=\Eb{\{x_j\}_{j\ne i}}{\norm{\frac{1}{n} \sum\limits_{j\ne i} \partial_u\pi_j(u_{j\smi})\partial_u\ell_{0,i}(\tilde{u}_{i,i})H_{0,\smi}^{-1}x_j}^2}
\end{align}
Introducing $h\in\R^{n-1}$ with components $h_j=\partial_u\pi_j(u_{j\smi})\partial_u\ell_{0,i}(\tilde{u}_{i,i})$, and $X_\smi\in\R^{(n-1)\times d}$ the matrix with rows $\{x_j\}_{j\ne i}$, one can compactly rewrite the norm as
\begin{align}
    \norm{\frac{1}{n} \sum\limits_{j\ne i} \partial_u\pi_j(u_{j\smi})\partial_u\ell_{0,i}(\tilde{u}_{i,i})H_{0,\smi}^{-1}x_j}&=\frac{1}{n}\norm{h^\top X_\smi H_{0,\smi}^{-1}}\\
&\le\norm{\partial_u\pi}_\infty\norm{\partial_u\ell_0}_\infty \frac{1}{\lambda_0} \frac{\norm{X_\smi}}{\sqrt{n}}.
\end{align}
Using Lemma \ref{lem:cov} and putting all these results together yields
\begin{align}
    \Ea{ \left( \sum\limits_{j\ne i} \partial_u\pi_j(u_{j\smi})(u_{j,\smi}-\tilde{u}_{j,i})\right)^2}=O\left(\polylog(n)\right)
\end{align}
and finally that
\begin{align}
    \frac{3}{n^2}\sum\limits_{i\in [n]}  \Ea{\left(
     \sum\limits_{j\ne i} (\pi_j(u_{j,\smi})-\pi_j(\tilde{u}_{j,i}))
    \right)^2}=O\left(\frac{\polylog(n)}{n}\right),
\end{align}
implying
\begin{align}
    \Var{\mathfrak{n}}=O\left(\frac{\polylog(n)}{n}\right),
\end{align}
concluding the proof.
\end{proof}

\subsection{Norm of estimators}

\begin{lemma}[Norm of the base estimators] \label{lem: normw0}
For any  $k\in\mathbb{N}$, the norm of the estimators $\hat{w}_0,\hat{w}_{0,\smi},\tilde{w}_{0,i}$ are bounded as
\begin{align}
    &\norm{\hat{w}_0}\le \sqrt{\frac{1}{\lambda_0 }\norm{\tilde{\ell}_0}_\infty},\\
    &\sup_{i\in[n]}\norm{\hat{w}_{0,\smi}}\le \sqrt{\frac{1}{\lambda_0 }\norm{\tilde{\ell}_0}_\infty},\\
    &\sup_{i\in[n]}\norm{\tilde{w}_{0,i}}=O_{L_k}(\polylog(n)).
\end{align}
\end{lemma}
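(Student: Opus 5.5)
The first two bounds follow from comparing the risk at the minimizer with the risk at the origin. Since $\hat{w}_0$ minimizes $\hat{\mathcal{R}}_0$ and $\ell_0\ge 0$, we have
\begin{align}
\frac{\lambda_0}{2}\norm{\hat{w}_0}^2\le \hat{\mathcal{R}}_0(\hat{w}_0)\le \hat{\mathcal{R}}_0(0)=\frac{1}{n}\sum_{j\in[n]}\ell_0(0,s_j,c_j,\epsilon_j)\le \norm{\tilde{\ell}_0}_\infty ,
\end{align}
using Assumption \ref{ass:base_loss}(A.4). This gives $\norm{\hat{w}_0}\le \sqrt{2\norm{\tilde{\ell}_0}_\infty/\lambda_0}$, which matches the stated bound up to the harmless factor $\sqrt{2}$; I would either absorb this factor into the definition of $\norm{\tilde{\ell}_0}_\infty$ or note that the bound is only used up to constants. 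The bound is deterministic, so it holds uniformly. The leave-one-out risk $\hat{\mathcal{R}}_{0,\smi}$ has the same structure with one fewer nonnegative term, so the identical argument applied to $\hat{w}_{0,\smi}$ gives the bound simultaneously for every $i$, and hence for the supremum.

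For the surrogate, write $\tilde{w}_{0,i}=\hat{w}_{0,\smi}+\xi_i$ and use the triangle inequality. The first part is bounded deterministically by the previous step. For the second part, note that $H_{0,\smi}\succeq \lambda_0 I_d$, so $\norm{H_{0,\smi}^{-1}}\le 1/\lambda_0$. Combined with $\abs{\partial_u\ell_{0,i}}\le \norm{\partial_u\ell_0}_\infty=O(\polylog(n))$, this gives
\begin{align}
\sup_{i\in[n]}\norm{\xi_i}\le \frac{\norm{\partial_u\ell_0}_\infty}{\lambda_0\sqrt{n}}\,\sup_{i\in[n]}\frac{\norm{x_i}}{\sqrt{n}} .
\end{align}
Lemma \ref{lemma:normx} controls $\sup_i \norm{x_i}/\sqrt{n}$ as $O_{L_k}(\polylog(n))$. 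Hence $\sup_i\norm{\xi_i}=O_{L_k}(\polylog(n)/\sqrt{n})$, and so $\sup_i\norm{\tilde{w}_{0,i}}=O_{L_k}(\polylog(n))$.

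There is no real obstacle here. The only mild points are the following. First, the sup over $i$ of the $\norm{x_i}$ must be handled inside $L_k$, and this is exactly what Lemma \ref{lemma:normx} provides, since it holds for the supremum over all $n$ samples even though the $x_i$ have nonzero means $\mu_{c_i}$ with $\norm{\mu_c}\le M$. Second, the deterministic bound must be combined with an $O_{L_k}$ bound; this is immediate because a deterministic constant is trivially $O_{L_k}$.
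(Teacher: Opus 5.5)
Your argument is the paper's argument. For the first two bounds you compare $\hat{\mathcal{R}}_0(\hat{w}_0)$ with $\hat{\mathcal{R}}_0(0)$. For $\tilde{w}_{0,i}=\hat{w}_{0,\smi}+\xi_i$ you use the triangle inequality, $\norm{H_{0,\smi}^{-1}}\le 1/\lambda_0$ (which follows from convexity of $\ell_0$), and Lemma~\ref{lemma:normx}. The surrogate part is correct as you wrote it.

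The only thing to fix is the constant in the first two bounds. You are right that $\frac{\lambda_0}{2}\norm{\hat{w}_0}^2\le \hat{\mathcal{R}}_0(0)$ only yields $\norm{\hat{w}_0}^2\le 2\norm{\tilde{\ell}_0}_\infty/\lambda_0$. The paper's proof writes exactly this inequality and then states the $1/\lambda_0$ bound, silently dropping the same factor $2$.

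However, you should not absorb $\sqrt{2}$ into $\norm{\tilde{\ell}_0}_\infty$, which is a fixed quantity from (A.4) in Assumption~\ref{ass:base_loss}. The stated constant is in fact correct. It follows once you use that $\hat{\mathcal{R}}_0$ is $\lambda_0$-strongly convex with $\nabla\hat{\mathcal{R}}_0(\hat{w}_0)=0$, rather than only comparing values:
\begin{align*}
\norm{\tilde{\ell}_0}_\infty\ge \hat{\mathcal{R}}_0(0)\ge \hat{\mathcal{R}}_0(\hat{w}_0)+\frac{\lambda_0}{2}\norm{\hat{w}_0}^2\ge \lambda_0\norm{\hat{w}_0}^2,
\end{align*}
where the last step uses $\ell_0\ge 0$. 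Equivalently, stationarity gives $\lambda_0\norm{\hat{w}_0}^2=-\frac{1}{n}\sum_j\partial_u\ell_{0,j}(u_j)u_j$, and convexity gives $-\partial_u\ell_{0,j}(u_j)u_j\le \ell_{0,j}(0)-\ell_{0,j}(u_j)\le \ell_{0,j}(0)$.

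The same change handles every $\hat{w}_{0,\smi}$, since $\hat{\mathcal{R}}_{0,\smi}$ is also $\lambda_0$-strongly convex and $\hat{\mathcal{R}}_{0,\smi}(0)\le\frac{n-1}{n}\norm{\tilde{\ell}_0}_\infty$. As you observe, downstream uses only need these norms to be $O(\polylog(n))$, so the factor is immaterial there. Still, the lemma as stated is provable without weakening it.
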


\begin{proof}
    From the definition of $\hat{w}_0$,
\begin{align}
    \frac{\lambda_0}{2}\norm{\hat{w}_0}^2+\frac{1}{n}\sum\limits_{j\in [n]}\ell_{0,j}(u_j)\le \hat{\mathcal{R}}_0(0_d)=\frac{1}{n}\sum\limits_{j\in [n]}\ell_{0,j}(0).
\end{align}
Thus
\begin{align}
    \norm{\hat{w}}^2 &\le \frac{1}{\lambda_0 }\norm{\tilde{\ell}_0}_\infty.
\end{align}
The same bound holds for $\sup_{i\in[n]}\norm{\hat{w}_{0,\smi}}$, following the same reasoning. Finally
\begin{align}
    \sup_{i\in[n]}\norm{\tilde{w}_{0,i}} &\le \sup_{i\in[n]}\norm{\hat{w}_{0,\smi}}+\sup_{i\in[n]}\norm{\frac{1}{n} \partial_u \ell_0(\tilde{u}_{i,i})H_{0,\smi}^{-1}x_i} \\
    &\le \sqrt{\frac{1}{\lambda }\norm{\tilde{\ell}_0}_\infty}+\norm{\partial_u\ell_0}_\infty \frac{1}{\lambda_0 \sqrt{n}} \sup_{i\in[n]} \frac{\norm{x_i}}{\sqrt{n}}=O_{L_k}(\polylog(n)),
\end{align}
using Lemma \ref{lemma:normx}.
\end{proof}

Establishing the equivalent Lemma for the reweighted estimators warrants a finer analysis.

\begin{lemma}[Norm of the reweighted estimators] \label{lem: normw}
The norm of the estimators $\hat{w},\hat{w}_\smi,\tilde{w}_i$ are bounded as
\begin{align}
   &\norm{\hat{w}}\le \sqrt{\frac{1}{\lambda }\norm{\tilde{\ell}}_\infty},\\
    &\sup_{i\in[n]}\norm{\hat{w}_{\smi}}\le \sqrt{\frac{1}{\lambda }\norm{\tilde{\ell}}_\infty},\\
    &\sup_{i\in[n]}\norm{\tilde{w}_{i}}=O_{L_k}(\polylog(n)).
\end{align}
\end{lemma}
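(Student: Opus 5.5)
The first two bounds follow from comparing the objective with its value at the origin, exactly as in Lemma \ref{lem: normw0}; the subtlety is that the second-stage loss also depends on the random predictions $u_j$, but Assumption \ref{ass:ell} (B.3) bounds the reduced function $\tilde{\ell}(u,s,c,\epsilon)=\ell(0,u,s,c,\epsilon)$ uniformly in $u$. Since $\ell\ge 0$ and $\hat{w}$ minimizes $\hat{\mathcal{R}}$, we have
\begin{align}
\frac{\lambda}{2}\norm{\hat{w}}^2\le \hat{\mathcal{R}}(\hat{w})\le \hat{\mathcal{R}}(0_d)=\frac{1}{n}\sum_{j\in[n]}\ell_j(0,u_j)\le \norm{\tilde{\ell}}_\infty,
\end{align}
which gives $\norm{\hat{w}}\le\sqrt{2\norm{\tilde{\ell}}_\infty/\lambda}$. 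Up to the harmless factor $2$, which we absorb in the same way as in Lemma \ref{lem: normw0}, this is the stated bound. The same computation applies verbatim to $\hat{\mathcal{R}}_\smi$ for each $i$. There the arguments $u_{j,\smi}$ replace $u_j$, but the bound on $\tilde{\ell}$ is uniform in its first argument. The resulting bound is deterministic and uniform in $i$, so the supremum over $i\in[n]$ costs nothing.

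For the surrogate $\tilde{w}_i=\hat{w}_\smi+\eta_i+\zeta_i$, we will use the triangle inequality:
\begin{align}
\sup_{i\in[n]}\norm{\tilde{w}_i}\le \sup_{i\in[n]}\norm{\hat{w}_\smi}+\sup_{i\in[n]}\norm{\eta_i}+\sup_{i\in[n]}\norm{\zeta_i}.
\end{align}
The first term is bounded deterministically by the previous step. The other two are controlled by Lemma \ref{lem:eta_zeta}, which gives $O_{L_k}(\polylog(n)/\sqrt{n})$ with prefactors $\norm{\partial_r\ell}_\infty/\lambda$ and $\norm{\partial_u\ell_0}_\infty\norm{\partial_r\partial_u\ell}_\infty/(\lambda\lambda_0)$, all of which are $O(\polylog(n))$ by the assumptions. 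That lemma itself rests on Lemma \ref{lemma:normx} for $\sup_i\norm{x_i}/\sqrt{n}$ and $\sup_i\norm{X_\smi}^2/n$; it does not use the present lemma, so the argument is not circular. Summing the three terms yields $O_{L_k}(\polylog(n))$.

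The only step requiring care is making sure that the evaluation at $w=0$ is not spoiled by the random, data-dependent second argument $u_j$ (or $u_{j,\smi}$). This is precisely why (B.3) is stated for the function $u,s,c,\epsilon\mapsto\ell(0,u,s,c,\epsilon)$ uniformly in all its arguments. Once this is noted, the rest is routine.
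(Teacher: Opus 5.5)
Your proposal follows the paper's proof step for step. You compare $\hat{\mathcal{R}}$ (resp.\ $\hat{\mathcal{R}}_\smi$) with its value at $0_d$, using (B.3) uniformly in $u$. You then bound $\tilde{w}_i=\hat{w}_\smi+\eta_i+\zeta_i$ by the triangle inequality and Lemma~\ref{lem:eta_zeta}, and your check that this is not circular is correct.

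The one flaw is the constant. The inequality $\frac{\lambda}{2}\norm{\hat{w}}^2\le\hat{\mathcal{R}}(0_d)$ only gives $\norm{\hat{w}}\le\sqrt{2\norm{\tilde{\ell}}_\infty/\lambda}$. An explicit inequality cannot ``absorb'' a factor $\sqrt{2}$. The paper's proof has the same slip: it, like the proof of Lemma~\ref{lem: normw0}, drops the $2$ silently.

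The stated constant is nevertheless true and easy to recover. Since $\hat{\mathcal{R}}$ is $\lambda$-strongly convex with minimizer $\hat{w}$, $\hat{\mathcal{R}}(0_d)\ge\hat{\mathcal{R}}(\hat{w})+\frac{\lambda}{2}\norm{\hat{w}}^2$. Because $\ell\ge 0$, we also have $\hat{\mathcal{R}}(\hat{w})\ge\frac{\lambda}{2}\norm{\hat{w}}^2$. Together these give $\lambda\norm{\hat{w}}^2\le\hat{\mathcal{R}}(0_d)\le\norm{\tilde{\ell}}_\infty$, and the same argument applies to each $\hat{w}_\smi$. For the downstream uses, which only need $O(\polylog(n))$ bounds, the factor is indeed harmless.
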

\begin{proof}
From the definition of $\hat{w}$,
\begin{align}
    \frac{\lambda}{2}\norm{\hat{w}}^2+\frac{1}{n}\sum\limits_{j\in [n]}\ell_j(r_j,u_j)\le \hat{\mathcal{R}}(0_d)=\frac{1}{n}\sum\limits_{j\in [n]}\ell_j(0,u_j).
\end{align}
Thus
\begin{align}
    \norm{\hat{w}}^2 &\le \frac{1}{\lambda }\frac{1}{n}\sum\limits_{j\in [n]}|\ell_j(0,u_j)|\le \frac{1}{\lambda }\norm{\tilde{\ell}}_\infty.
\end{align}

The bound on $\sup_{i\in[n]}\norm{\hat{w}_\smi}$ follows identically. Finally
\begin{align}
    \sup_{i\in[n]}\norm{\tilde{w}_{i}} \le \sup_{i\in[n]}\norm{\hat{w}_\smi}+\sup_{i\in[n]}\norm{\zeta_i}+\sup_{i\in[n]}\norm{\eta_i}=O_{L_k}(\polylog(n))
\end{align}
using Lemma \ref{lem:eta_zeta} to bound the second and third term.
\end{proof}

\subsection{Sample statistics}
The following Lemmas describe properties of the data. While most of these results are very close to classical result, we state them here for the present setting for self-containedness. For simplicity, we state them for any random variables $z=x-\mu_c$ (see Assumption \ref{ass:data}) satisfying the tail bound
\begin{align}
    \mathbb{P}\left(
    |\varphi(z)-\Ea{\varphi(z)}|>t
    \right)\le C_n e^{-c_n t^2}
\end{align}
for any convex $1-$Lipschitz function $\varphi$ and any $t>0$, with $C_n<C,\sfrac{1}{c_n}=O(\polylog(n))$, where $C$ is an absolute constant. In the main proof, we apply those results in the Gaussian case of $z\sim\mathcal{N}(0,I_d)$, with $C_n=2, c_n=\sfrac{1}{2}$.

\begin{lemma}[Operator norm of the empirical covariance]\label{lem:cov}
Let $\hat{\Sigma}=\frac{X^\top X}{n}$ be the empirical covariance matrix. Then
\begin{align}
    \norm{\hat{\Sigma}}\le  O_{L_k} \left(\frac{\polylog(n)}{c_n}\right).
\end{align}
and 
\begin{align}
    \frac{\norm{X}}{\sqrt{n}}\le   O_{L_k}\left(\frac{\polylog(n)}{c_n}\right).
\end{align}
The same bounds hold for the leave-one-row-out matrix $X_\smi$.
\end{lemma}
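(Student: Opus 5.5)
The statement to prove is Lemma~\ref{lem:cov}: a high-moment bound on $\norm{\hat{\Sigma}}=\norm{X}^2/n$. The plan is to split $X$ into its mean part and its centered part, and to control the centered part through Lipschitz concentration applied to a supremum over a net.

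\textbf{Step 1: mean/noise decomposition.} Write $X = Z + C\mathcal{M}$, where $C\in\{0,1\}^{n\times|\mathcal{V}|}$ is the one-hot class indicator matrix and $Z$ has rows $z_i$. Then
\begin{align}
\norm{X}\le \norm{Z}+\norm{C}\norm{\mathcal{M}}.
\end{align}
Since $\norm{C}\le\sqrt{n}$ and $\norm{\mathcal{M}}\le \sqrt{|\mathcal{V}|}M$ by Assumption~\ref{ass:data}, the mean contribution to $\norm{X}/\sqrt{n}$ is an absolute constant. It therefore suffices to show $\norm{Z}/\sqrt{n}=O_{L_k}(\polylog(n)/c_n)$; squaring then gives the same order for $\norm{\hat{\Sigma}}$, up to polylog factors.

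\textbf{Step 2: concentration of the centered part.} Write $\norm{Z}=\sup_{a\in S^{n-1},b\in S^{d-1}} a^\top Z b$. For fixed unit vectors $(a,b)$, the map $Z\mapsto a^\top Z b=\sum_i a_i\inprod{z_i,b}$ is convex and $1$-Lipschitz in the Frobenius norm of $Z$. Under the assumed tail bound this gives a sub-Gaussian tail with variance proxy $1/c_n$. We take $1/4$-nets of the two spheres, of cardinality $9^{n+d}$, and use the standard argument
\begin{align}
\norm{Z}\le 2\max_{\text{net}} a^\top Z b.
\end{align}
A union bound then yields
\begin{align}
\mathbb{P}\big(\norm{Z}>C'\sqrt{(n+d)/c_n}+t\big)\le C e^{-c_n t^2/4}.
\end{align}
For the Gaussian case used in the main proof one can alternatively invoke Gordon's bound $\Ea{\norm{Z}}\le\sqrt n+\sqrt d$ together with Gaussian concentration of the $1$-Lipschitz function $\norm{Z}$.

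\textbf{Step 3: moments and the leave-one-out matrix.} Integrating the tail gives
\begin{align}
\Ea{\norm{Z}^k}^{1/k}\lesssim \sqrt{(n+d)/c_n}+\sqrt{k/c_n}.
\end{align}
Since $n/d=\Theta(1)$, this is $O(\sqrt{n}\,\polylog(n)/c_n)$ for each fixed $k$, and it stays so for $k$ up to $O(\log n)$, which is the range required in Lemma~\ref{lem:supsup}. For $X_\smi$, note that $\norm{X_\smi}\le\norm{X}$ (removing a row cannot increase the operator norm), so the same bound holds uniformly in $i$ with no need for a union bound.

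\textbf{Main obstacle.} The only delicate point is handling a general concentration-type hypothesis rather than exact Gaussianity: the net argument requires concentration of the scalar $a^\top Z b$ for independent rows. Row independence holds conditionally on $\{c_i\}$, and I would check that the stated convex-Lipschitz tail for single rows $z$ tensorizes to $Z$ with constant $c_n$ (it does in the Gaussian case, which is the case actually used). If it does not tensorize, I would instead bound $\norm{Z}^2\le\norm{Z}_F^2=\sum_i\norm{z_i}^2$, at the cost of a factor $\sqrt d$, and then refine via matrix Bernstein applied to $\sum_i z_iz_i^\top$ with rows truncated at $\norm{z_i}\lesssim\sqrt{d\,\polylog(n)}$.
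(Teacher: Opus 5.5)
Your proposal is correct and follows the paper's proof. The paper also splits $X$ into the mean matrix (rows $\mu_{c_i}$, bounded by $\sqrt{n}M$) plus the centered part $Z$; for $Z$ it simply cites Lemma G.1 of \cite{el2018impact}, which your net/Gordon argument reproves. Your observation $\norm{X_\smi}\le\norm{X}$ settles the leave-one-out case directly. The tensorization worry you flag does not arise: for fixed $(a,b)$, $a^\top Zb=\sum_i a_i\inprod{z_i,b}$ is a weighted sum of independent scalars, each sub-Gaussian by the single-row hypothesis, so no matrix-level concentration is needed.
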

\begin{proof}
Let $P\in\R^{n\times d}$ be the matrix with rows $\{\mu_{c_i}\}_{i\in[n]}$.
    Then $\norm{X}=\norm{P +Z}$, decomposing the samples into their mean plus random part. Then
\begin{align}
    \norm{X}\le \norm{P}_F+\sqrt{\norm{Z^\top Z}}\le \sqrt{n} M+\sqrt{O_{L_k}\left(\frac{n \polylog(n)}{c_n}\right)}=\sqrt{n}  O_{L_k} \left(\frac{\polylog(n)}{c_n}\right).
\end{align}
We used Lemma G.1 of \cite{el2018impact} to bound the operator norm of the empirical covariance.
\end{proof}

\begin{lemma}[Norm of the samples]\label{lemma:normx}
One can control the sample norms as
\begin{align}
    \sup_{i\in[n]}\frac{\norm{x_i}}{\sqrt{n}}=O_{L_k}(\polylog(n)).
\end{align}
Furthermore,
\begin{align}
    \sup_{i\in [n]}\frac{\norm{X_\smi}}{\sqrt{n}}=O_{L_k}(\polylog(n))
\end{align}
\end{lemma}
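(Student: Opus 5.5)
The claim has two parts: a uniform bound on $\sup_{i}\norm{x_i}/\sqrt{n}$, and one on $\sup_i\norm{X_\smi}/\sqrt{n}$, both in every $L_k$ up to polylogarithmic factors. The second is immediate from Lemma \ref{lem:cov}. Deleting a row can only decrease the operator norm, so $\norm{X_\smi}\le\norm{X}$ for every $i$. Hence
\[
\sup_i\norm{X_\smi}/\sqrt{n}\le\norm{X}/\sqrt{n}=O_{L_k}(\polylog(n)),
\]
since $c_n=\sfrac{1}{2}$ in the Gaussian case.

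For the first part I would split $x_i=\mu_{c_i}+z_i$, so that
\[
\norm{x_i}\le M+\norm{z_i}
\]
by Assumption \ref{ass:data}. The constant term contributes only $M/\sqrt{n}$. For the random part, the map $z\mapsto\norm{z}$ is convex and $1$-Lipschitz, so the assumed concentration gives
\[
\mathbb{P}\bigl(\norm{z_i}-\Ea{\norm{z_i}}>t\bigr)\le C e^{-c_n t^2},
\]
and Jensen gives $\Ea{\norm{z_i}}\le\sqrt{d}$. A union bound over $i\in[n]$ then yields
\[
\mathbb{P}\Bigl(\sup_i\norm{z_i}>\sqrt{d}+t\Bigr)\le nCe^{-c_nt^2}.
\]

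To convert this tail into $L_k$ control, I would integrate: $\Ea{(\sup_i\norm{z_i})^k}=\int_0^\infty k s^{k-1}\,\mathbb{P}(\sup_i\norm{z_i}>s)\,ds$. Split the integral at $s=\sqrt{d}+\sqrt{\log(n)/c_n}$. Below this threshold the contribution is at most $(\sqrt d+\sqrt{\log n/c_n})^k$. Above it, the Gaussian tail $nCe^{-c_n t^2}$ makes the remaining integral $O((\sqrt d+\polylog(n))^k)$. Dividing by $n^{k/2}$ and using $d/n=\Theta(1)$ gives
\[
\sup_i\norm{x_i}/\sqrt{n}=O_{L_k}(\polylog(n)).
\]

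There is no real obstacle. The only step needing care is the union bound, which costs only a $\sqrt{\log n}$ factor thanks to the sub-Gaussian tail, and this is absorbed into the polylog.
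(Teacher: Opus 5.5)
Your proof is correct. For the first claim it uses the same decomposition $x_i=\mu_{c_i}+z_i$ as the paper. The difference is that you prove the bound on $\sup_i\norm{z_i}$ directly, via Lipschitz concentration, a union bound and tail integration (the same mechanism the paper uses in Lemmas \ref{lemma:quad} and \ref{lem:sup_s}). The paper instead simply cites \cite{el2018impact} for this step. Your version is self-contained, and since $d/n=\Theta(1)$ it even yields $O_{L_k}(1)$ for fixed $k$.

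For the second claim the routes differ slightly. The paper writes $\norm{X_\smi}^2/n=\norm{\hat{\Sigma}-x_ix_i^\top/n}\le\norm{\hat{\Sigma}}+\sup_i\norm{x_i}^2/n$, so it needs the first claim. You use $X_\smi^\top X_\smi\preceq X^\top X$, which gives $\sup_i\norm{X_\smi}\le\norm{X}$ directly from Lemma \ref{lem:cov}. This is sharper and makes the second claim independent of the first. It is also non-circular, since Lemma \ref{lem:cov} does not rely on the present lemma.
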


\begin{proof}
The lemma follows from a very minor adaptation of \cite{el2018impact} for data with non-zero mean.
\begin{align}
    \sup_{i\in[n]}\frac{\norm{x_i}}{\sqrt{n}}\le \frac{M}{\sqrt{n}}+\sup_{i\in[n]}\frac{\norm{z_i}}{\sqrt{n}}=\frac{M}{\sqrt{n}}+O_{L_k}(\polylog(n)).
\end{align}
The last steps follows from \cite{el2018impact}.
 It remains to control the norm $\sup_{i\in [n]}\sfrac{\norm{X_\smi}}{\sqrt{n}}$. Note that
\begin{align}
    \sup_{i\in [n]}\frac{\norm{X_\smi}}{\sqrt{n}}= \sqrt{\sup_{i\in [n]}\norm{\hat{\Sigma}_\smi}} =\sqrt{\sup_{i\in [n]}\norm{\hat{\Sigma}-\frac{x_ix_i^\top}{n}}}\le \sqrt{\norm{\hat{\Sigma}}+\left(\sup_{i\in [n]} \frac{\norm{x_i}}{\sqrt{n}}\right)^2}.
\end{align}
The control of the two terms follow from Lemma \ref{lem:cov} and the first point.
\end{proof}

\begin{lemma}(Quadratic forms) \label{lemma:quad}
Let  $M_\smi \in \R^{d\times d}$ be a matrix which does not depend on $x_i$, but generically depends on all other samples.  Then for any $k\in [\lceil 4 \log(n) \rceil]$,
\begin{align}
       \sup_{i\in [n]}\sup_{j\ne i}\left|\frac{1}{n}\inprod{x_j,M_\smi x_i}\right| =O_{L_k}\left(\frac{\sup_{i\in[n]}\norm{M_\smi}\polylog(n)}{\sqrt{n}}\right).
\end{align}    
\end{lemma}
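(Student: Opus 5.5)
The plan is to reduce the double supremum to a union bound over the $n(n-1)$ pairs $(i,j)$, combined with a conditional sub-Gaussian tail bound for each bilinear form. The key structural fact is that $M_\smi$ is independent of $x_i$. Fix $i\ne j$ and condition on $\{x_k\}_{k\ne i}$, and hence on $M_\smi$ and $x_j$. The vector $v_{ij}=M_\smi^\top x_j$ is then deterministic. Writing $x_i=\mu_{c_i}+z_i$ with $z_i\sim\mathcal{N}(0,I_d)$ independent of $v_{ij}$, we get
\begin{align}
\frac{1}{n}\inprod{x_j,M_\smi x_i}=\frac{1}{n}\inprod{v_{ij},\mu_{c_i}}+\frac{1}{n}\inprod{v_{ij},z_i}.
\end{align}
The first term is bounded by $\frac{M}{n}\norm{M_\smi}\norm{x_j}$. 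By Lemma \ref{lemma:normx} this is $O_{L_k}(\sup_i\norm{M_\smi}\polylog(n)/\sqrt{n})$ uniformly in $i,j$, since $\sup_j\norm{x_j}/\sqrt n=O_{L_k}(\polylog n)$. Conditionally, the second term is centered Gaussian with standard deviation $\norm{v_{ij}}/n\le \norm{M_\smi}\norm{x_j}/n$.

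To handle the supremum, I normalize. Define
\begin{align}
T_{ij}=\frac{\inprod{v_{ij},z_i}}{\norm{v_{ij}}},
\end{align}
which is exactly $\mathcal{N}(0,1)$ conditionally, and hence also unconditionally. Then
\begin{align}
\sup_{i}\sup_{j\ne i}\frac{1}{n}|\inprod{v_{ij},z_i}|\le \sup_i\norm{M_\smi}\cdot\sup_j\frac{\norm{x_j}}{n}\cdot\sup_{i\ne j}|T_{ij}|.
\end{align}
The variables $T_{ij}$ are dependent, but each is standard Gaussian, so a union bound gives $\mathbb{P}(\sup_{i\ne j}|T_{ij}|>t)\le 2n^2e^{-t^2/2}$. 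Integrating this tail yields $\Ea{\sup|T_{ij}|^{p}}^{1/p}=O(\sqrt{\log n}+\sqrt{p})$, which is $O(\polylog(n))$ for every $p\le C\log n$. This range is exactly why the statement restricts to $k\in[\lceil 4\log(n)\rceil]$.

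To combine the three factors in $L_k$, I apply Hölder's inequality with exponents $3k$ (or use the product rule for $O_{L_{2k}}$ in the Remark twice). The factor $\sup_j\norm{x_j}/n$ is $O_{L_{3k}}(\polylog(n)/\sqrt n)$ by Lemma \ref{lemma:normx}. The factor $\sup|T_{ij}|$ is $O_{L_{3k}}(\polylog n)$, which requires $3k\lesssim\log n$ up to constants, and this is absorbed into the polylog. The factor $\sup_i\norm{M_\smi}$ is kept as a prefactor. If $\sup_i\norm{M_\smi}$ is itself random, as in all the applications where $M_\smi=H_\smi^{-1}$ is bounded deterministically by $1/\lambda$ or by a polylog in $L_k$, the statement should be read with that quantity either deterministic or appearing inside the $L_k$ norm. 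Hölder handles this in the same way.

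The main obstacle is the moment bookkeeping. The union bound costs $\sqrt{\log n}$, and the admissible moment order has to stay below order $\log n$ so that $\sqrt{p}$ is absorbed into the polylog. Beyond that, the mean term and the Gaussian term are handled by the separate elementary bounds above.
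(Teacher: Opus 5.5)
Your proposal is correct and follows essentially the same route as the paper. Both split off the mean of $x_i$, take a union bound over the $n(n-1)$ pairs with a Gaussian tail for each bilinear form, and integrate the tail to control moments up to order $k\lesssim\log n$. Your pairwise normalization by $\norm{M_\smi^\top x_j}$ makes each $T_{ij}$ exactly $\mathcal{N}(0,1)$ conditionally on the other samples, which is slightly cleaner than the paper's global normalizer $\mathsf{L}$: that normalizer depends on $z_i$ itself, so the paper's map $z_i\mapsto \frac{1}{\mathsf{L}\sqrt{n}}\inprod{z_j,M_\smi z_i}$ is not literally a fixed $1$-Lipschitz function of $z_i$.
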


\begin{proof}
We first decompose
\begin{align}
    \left|\frac{1}{n}\inprod{x_j,M_\smi x_i}\right|&\le \left|\frac{1}{n}\inprod{\mu_{c_j},M_\smi \mu_{c_i}}\right|+\left|\frac{1}{n}\inprod{\mu_{c_j},M_\smi z_i}\right|+\left|\frac{1}{n}\inprod{z_j,M_\smi \mu_{c_i}}\right|+\left|\frac{1}{n}\inprod{z_j,M_\smi z_i}\right|\\
    &\le \frac{1}{n}M^2\norm{M_\smi}+2M\norm{M_\smi}O_{L_k}\left(\frac{\polylog(n)}{\sqrt{n}}\right)+\left|\frac{1}{n}\inprod{z_j,M_\smi z_i}\right|,
\end{align}
using Lemma \ref{lemma:normx}. It thus suffices to control the last term.
Introduce
    \begin{align}
    \mathsf{L}=\sup_{j\in [n]} \frac{\sup_{i\in[n]}\norm{M_\smi}\norm{z_j}}{  \sqrt{n}}.
\end{align}
 Observe that from a union bound
\begin{align}
    \P{\frac{1}{\mathsf{L}}\sup_{i\in [n]}\sup_{j\ne i}\left|\frac{1}{\sqrt{n}}\inprod{z_j,M_\smi z_i}\right|>t}\le 1\wedge \sum\limits_{i\in[n]}\sum\limits_{j\ne i} \P{\frac{1}{\mathsf{L}}\left|\frac{1}{n}\inprod{z_j,M_\smi z_i}\right|>t}
\end{align}
Thus for any $u>0, k\in \mathbb{N}$ satisfying $ k \le 4\log(n)+1$, it follows that
\begin{align}
    \Ea{\left(
    \frac{1}{\mathsf{L}}\sup_{i\in [n]}\sup_{j\ne i}\left|\frac{1}{\sqrt{n}}\inprod{z_j,M_\smi z_i}\right|
    \right)^k}&\le u^k+\sum\limits_{i\in[n]}\sum\limits_{j\ne i} \int\limits_u^\infty k t^{k-1}\P{\frac{1}{\mathsf{L}}\left|\frac{1}{\sqrt{n}}\inprod{z_j,M_\smi z_i}\right|>t}\\
    &\le u^k+n(n-1) C_n \int\limits_u^\infty k t^{k-1}e^{-c_n t^2},
\end{align}
using the fact that $\varphi :z_i\to \sfrac{1}{\mathsf{L}\sqrt{n}}\inprod{z_j,M_\smi z_i}$  is $1- $Lipschitz with $\Ea{\varphi(x_i)}=0$, and using Assumption \ref{ass:data}. 
Note the identity
\begin{align}
    \int\limits_u^\infty k t^{k-1}e^{-c_n t^2} \le \frac{1}{2c_n}u^{k-2}e^{-c_n u^2}+\frac{k-2}{2c_n u^2}\int\limits_u^\infty k t^{k-1}e^{-c_n t^2},
\end{align}
which follows from an integration by parts. Thus for any $u>\sqrt{\sfrac{k-2}{2c_n}}$,
\begin{align}
     \Ea{\left(
    \frac{1}{\mathsf{L}}\sup_{i\in [n]}\sup_{j\ne i}\left|\frac{1}{\sqrt{n}}\inprod{z_j,M_\smi z_i}\right|
    \right)^k}&\le u^k+\frac{kn(n-1) C_n}{1-\frac{k-2}{2c_n u^2}} \frac{1}{2c_n}u^{k-2}e^{-c_n u^2}.
\end{align}
Choosing $u=\sqrt{\sfrac{\log(n^2)}{c_n}}$, which satisfies the condition $u>\sqrt{\sfrac{k-2}{2c_n}}$ by assumption on $k$, 
\begin{align}
     \Ea{\left(
    \frac{1}{\mathsf{L}}\sup_{i\in [n]}\sup_{j\ne i}\left|\frac{1}{\sqrt{n}}\inprod{z_j,M_\smi z_i}\right|
    \right)^k}&\le \left(\frac{\log(n^2)}{c_n}\right)^{\frac{k}{2}}\left[
    1+\frac{kC_n}{2\log(n^2)-k+2}
    \right]\\
    &\le  \left(\frac{\log(n^2)}{c_n}\right)^{\frac{k}{2}}\left[
    1+4C\log(n) 
    \right]\\
    &\le \left(\frac{\log(n^2)(1+4C\log(n))^2}{c_n}\right)^{\frac{k}{2}}
\end{align}
Thus for any $k\in [\lceil 4 \log(n) \rceil]$
\begin{align}
    \sup_{i\in [n]}\sup_{j\ne i}\left|\frac{1}{n}\inprod{z_j,M_\smi z_i}\right| =\mathsf{L} O_{L_k}\left(\polylog(n)\right)=\sup_{i\in[n]}\norm{M_\smi}O_{L_k}\left(\frac{\polylog(n)}{\sqrt{n}}\right).
\end{align}
\end{proof}

\begin{lemma}[Sup of one-dimensional projections]\label{lem:sup_s}
Le $v_\smi$ be vectors that do not depend on $x_i$, but can generically depend on all other samples.  Then for any $k\in [\lceil2\log(n)\rceil]$,
\begin{align}
    \sup_{i\in[n]}|\inprod{v_\smi,x_i}|= O_{L_k}\left(\sup_{i\in[n]}\norm{v_\smi} \sqrt{\frac{\polylog(n)}{c_n}}\right).
\end{align}
\end{lemma}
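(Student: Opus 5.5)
The plan is to mirror the proof of Lemma \ref{lemma:quad}: first split off the deterministic mean part, then control the centered Gaussian part with a union bound over $i\in[n]$ combined with Gaussian concentration and a tail-integral computation of the $k$-th moment. Concretely, we write $x_i=\mu_{c_i}+z_i$, so that
\begin{align}
|\inprod{v_\smi,x_i}|\le \norm{v_\smi}\,M+|\inprod{v_\smi,z_i}|,
\end{align}
using $\norm{\mu_c}\le M$ from Assumption \ref{ass:data}. The first term is $O(\sup_i\norm{v_\smi})$ deterministically, so it remains to control $\sup_i|\inprod{v_\smi,z_i}|$.

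For the random part, we set $\mathsf{L}=\sup_{i\in[n]}\norm{v_\smi}$ and, for each fixed $i$, condition on $\{x_j\}_{j\ne i}$. Since $v_\smi$ is independent of $z_i$, the map $z_i\mapsto \inprod{v_\smi,z_i}/\mathsf{L}$ is $1$-Lipschitz, convex (it is linear), and centered. The assumed concentration bound therefore gives
\begin{align}
\P{|\inprod{v_\smi,z_i}|>\mathsf{L}t}\le C_ne^{-c_nt^2}.
\end{align}
Dividing by the random $\mathsf{L}$ is handled exactly as in Lemma \ref{lemma:quad}. The only extra point is that $\norm{v_\smi}\le \mathsf{L}$ still holds after conditioning, and the normalized map remains $1$-Lipschitz conditionally on the other samples.

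A union bound over the $n$ indices then gives
\begin{align}
\Ea{\left(\sup_i|\inprod{v_\smi,z_i}|/\mathsf{L}\right)^k}\le u^k+nC_n\int_u^\infty kt^{k-1}e^{-c_nt^2}\,dt.
\end{align}
We integrate by parts as before and take $u=\sqrt{\log(n)/c_n}$. The condition $u^2>(k-2)/(2c_n)$ holds precisely because $k\le\lceil 2\log n\rceil$, which is where that restriction on $k$ comes from. The tail term is then bounded by $u^{k-2}\cdot O(k)$, so the whole expectation is $(\polylog(n)/c_n)^{k/2}$. Multiplying back by $\mathsf{L}$ gives the claimed $O_{L_k}(\sup_i\norm{v_\smi}\sqrt{\polylog(n)/c_n})$.

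The main obstacle is the bookkeeping around the random normalization $\mathsf{L}$. Strictly, it is cleaner to bound $\sup_i|\inprod{v_\smi,z_i}|/\norm{v_\smi}$, which requires no division by a quantity depending on $z_i$, and then use $\sup_i\norm{v_\smi}$ as an outer factor. I will do it this way to avoid any circularity, since each $\norm{v_\smi}$ is independent of $z_i$.
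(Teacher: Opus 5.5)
Your proposal is correct and matches the paper's proof step for step. Both split $x_i=\mu_{c_i}+z_i$ using $\norm{\mu_c}\le M$, union-bound over the $n$ indices with the conditional sub-Gaussian tail of $\inprod{v_\smi,z_i}$, bound $\int_u^\infty kt^{k-1}e^{-c_nt^2}\,dt$ by integration by parts, and take $u=\sqrt{\log(n)/c_n}$, which is admissible because $k\le\lceil 2\log n\rceil$.

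Normalizing each term by its own $\norm{v_\smi}$ is a clean way to justify what the paper does implicitly when it divides by $\sup_i\norm{v_\smi}$; that step is valid since $|\inprod{v_\smi,z_i}|/\sup_j\norm{v_\smj}\le|\inprod{v_\smi,z_i}|/\norm{v_\smi}$. There is one cosmetic slip: after using $ne^{-c_nu^2}=1$, the tail term equals $kC_nu^k/(2\log n-k+2)=O(k)\,u^k$, not $O(k)\,u^{k-2}$, and this leaves your conclusion unchanged.
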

\begin{proof}
Since
\begin{align}
    \sup_{i\in[n]}|\inprod{v_\smi,x_i}|\le M\sup_{i\in[n]}\norm{v_\smi}+\sup_{i\in[n]}|\inprod{v_\smi,z_i}|,
\end{align}
it is sufficient to bound $\sup_{i\in[n]}|\inprod{v_\smi,z_i}|$.
    We start from the union bound, for any $u> \sqrt{\sfrac{k-2}{2c_n}}$,
    \begin{align}
        \Ea{\left(\frac{\sup_{i\in[n]}|\inprod{v_\smi,z_i}|}{\sup_{i\in[n]}\norm{v_\smi}}\right)^k} &\le u^k+ n\int\limits_u^\infty kt^{k-1}C_ne^{-c_n t^2}\\
        & \le u^k+\frac{kn C_n}{1-\frac{k-2}{2c_n u^2}} \frac{1}{2c_n}u^{k-2}e^{-c_n u^2}.
    \end{align}
We choose $u=\sqrt{\sfrac{\log(n)}{c_n}}$, which is satisfies the condition $u> \sqrt{\sfrac{k-2}{2c_n}}$ by assumption on $k$. Then,
\begin{align}
    \Ea{\left(\frac{\sup_{i\in[n]}|\inprod{v_\smi,z_i}|}{\sup_{i\in[n]}\norm{v_\smi}}\right)^k}&\le \left(\frac{\log(n)}{c_n}\right)^{\frac{k}{2}}\left[
    1+\frac{kC_n}{2\log(n)-k+1}
    \right]\\
    &\le  \left(\frac{\log(n)}{c_n}\right)^{\frac{k}{2}}\left[
    1+2C\log(n) 
    \right]\le \left(\frac{\log(n)(1+2C\log(n))^2}{c_n}\right)^{\frac{k}{2}},
\end{align}
which completes the proof.
\end{proof}

%% file: Appendices_ArXiv/related.tex
\begin{figure}
    \centering
    \includegraphics[scale=0.5]{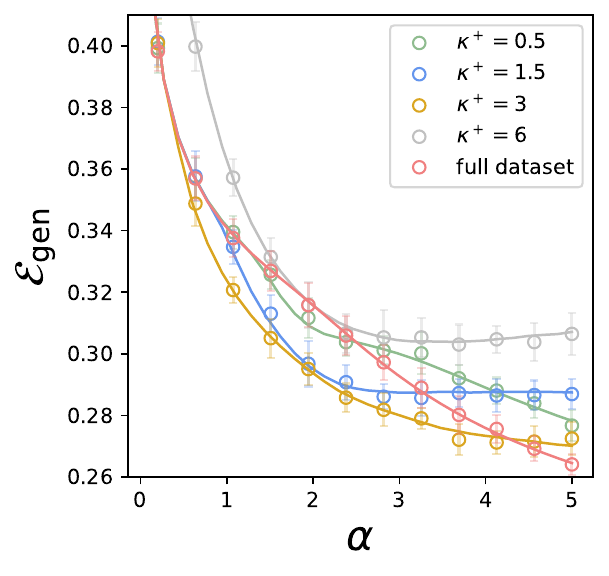}
    \caption{Test error , as a function of the normalized number of samples $\alpha=\sfrac{n}{d}$, of a logistic classifier ($\Tilde{\ell}(y,z)=\log(1+e^{-yz}),\lambda=0.01$)   trained on a subset $\mathcal{S}\subset [n]$ pruned using a large-margin selection policy $\pi(u)=\mathds{1}_{|u|>\kappa^+}$. The base estimator was trained on the whole dataset ($\mathcal{S}_0=[n]$), also with $\Tilde{\ell}(y,z)=\log(1+e^{-yz}),\lambda=0.01$. The data is a binary isotropic Gaussian mixture with centroid norm $\norm{\mu}=0.8$. Different curves represent different thresholds $\kappa^+$, with higher values implying smaller training subsets $|\mathcal{S}|$. Solid lines: theoretical predictions of Theorem \ref{theorem:main_theorem}. Dots: numerical experiments in dimension $d=2000$. Error bars represent one standard deviation over $30$ trials. }
    \label{fig:Clean}
\end{figure}

\begin{figure}
    \centering
    \includegraphics[scale=0.5]{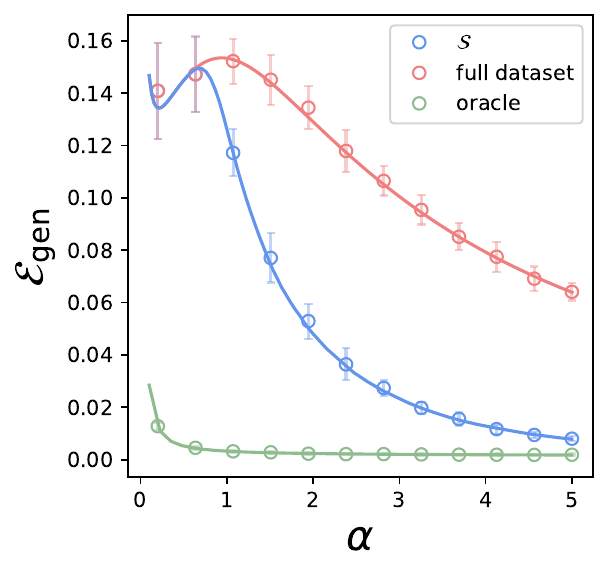}
    \caption{Test error of a logistic classifier ($\Tilde{\ell}(y,z)=\log(1+e^{-yz}),\lambda=0.01$) trained on a subset $\mathcal{S}$ constructed with a base estimator with the same loss and regularization, trained on the whole dataset ($\mathcal{S}_0=[n]$), as a function of the normalized number of samples $\alpha=\sfrac{n}{d}$. The data is a binary isotropic Gaussian mixture centered at $\pm\mu$, with centroid norm $\norm{\mu}=0.8$, and class labels $c_i\in {-1,+1}$. The labels are corrupted by a random flipping noise $\epsilon\in\{-1,+1\}$, which with $\mathbb{P}(\epsilon=-1)=0.2$, as $y_i=\epsilon_i c_i$.  The selection policy $\pi(u,c,\epsilon)=\mathds{1}_{\sign(u)=c\epsilon}$ only retains samples correctly classified by the base estimator. Solid lines: theoretical predictions of Theorem \ref{theorem:main_theorem}. The green line represents the test error of a oracle classifier trained only on samples whose labels were not flipped. Dots: numerical experiments in dimension $d=2000$. Error bars represent one standard deviation over $30$ trials.}
    \label{fig:flipped}
\end{figure}

In this Appendix, we depart from the active learning framework of Section \ref{sec: main_setting}, in order to connect with and revisit some observations of related prior works \cite{sorscher2022beyond, kolossov2023towards, askari2025improving} regarding data pruning. In this section only, we assume all labels are available, and the base classifier is trained on the whole dataset, namely $\mathcal{S}_0=[n]$, then used to construct a final and \textit{smaller} training set $\mathcal{S}\subset [n]$ through a policy $\pi$. Several of the above works report settings in which training a model on a smaller subset $\mathcal{S}\subset[n]$ can yield \textit{better} accuracy than training on the full dataset. 
These conclusions hold in some cases even accounting for the data used in the held-out dataset. We show through two examples that these observations can also be made in the more realistic setting where the base classifier is trained on the whole dataset, then used to prune it, namely with data re-use, leveraging Theorem \ref{theorem:main_theorem}.

\textbf{Example 1 --- } Fig.\,\ref{fig:Clean} illustrates the test error achieved by logistic regression trained on a binary Gaussian mixture classification task with only a subset of the full training set, keeping only larger margin samples using a policy $\pi(u)=\mathds{1}_{|u|>\kappa^+}$. Interestingly, strategies with $\kappa^+>0$ can outperform the model trained on the full dataset (red), despite using less data. Intuitively, large margin samples contain clear information on the centroid of the mixture, affording the model an easy grasp on the data structure. This observation echoes closely related findings in \cite{sorscher2022beyond, kolossov2023towards, hacohen2022active} for a different task.\looseness=-1

\textbf{Example 2 ---}  Fig.\,\ref{fig:flipped} also illustrate a binary Gaussian mixture classification task with logistic regression. The labels $c_i=\pm 1$, indicating the cluster membership, were corrupted by a flipping noise $\epsilon_i=\pm 1$ with probability $\mathbb{P}(\epsilon=-1)=0.2,$ complicating the task. Directly fitting a classifier on the whole dataset yields mediocre accuracy (red). In order to mitigate the effect of the noise, it is possible to leverage a data pruning strategy as a mean of regularization: a base estimator is trained on the whole dataset, and the samples it incorrectly classifies are discarded. This selection process is motivated by the intuition that a large fraction of misclassified samples are, in fact, flipped. The final classifier is trained on the resulting pruned subset $\mathcal{S}$ (blue), and displays improved accuracy. This observation was also made by \cite{feng2024beyond} in a simpler setting.

It is seminal to nuance that all these observations, and those reported in prior works \cite{sorscher2022beyond, kolossov2023towards, feng2024beyond, dohmatob2025less}, have been made for a \textit{fixed model choice} $\Tilde{\ell},\lambda$. A precise study of whether and when these phenomena still hold true when for instance the regularization $\lambda$ is optimized over is still very largely missing, and an important direction for future investigations.\looseness=-1

%% file: references.bib
@article{settles2009active,
  title={Active learning literature survey},
  author={Settles, Burr},
  year={2009},
  publisher={University of Wisconsin-Madison Department of Computer Sciences}
}

@article{liu2004active,
  title={Active learning with support vector machine applied to gene expression data for cancer classification},
  author={Liu, Ying},
  journal={Journal of chemical information and computer sciences},
  volume={44},
  number={6},
  pages={1936--1941},
  year={2004},
  publisher={ACS Publications}
}

@article{andreux2020kymatio,
  title={Kymatio: Scattering transforms in python},
  author={Andreux, Mathieu and Angles, Tom{\'a}s and Exarchakis, Georgios and Leonarduzzi, Roberto and Rochette, Gaspar and Thiry, Louis and Zarka, John and Mallat, St{\'e}phane and And{\'e}n, Joakim and Belilovsky, Eugene and others},
  journal={Journal of Machine Learning Research},
  volume={21},
  number={60},
  pages={1--6},
  year={2020}
}

@article{el2018impact,
  title={On the impact of predictor geometry on the performance on high-dimensional ridge-regularized generalized robust regression estimators},
  author={El Karoui, Noureddine},
  journal={Probability Theory and Related Fields},
  volume={170},
  number={1},
  pages={95--175},
  year={2018},
  publisher={Springer}
}

@article{karoui2013asymptotic,
  title={Asymptotic behavior of unregularized and ridge-regularized high-dimensional robust regression estimators: rigorous results},
  author={Karoui, Noureddine El},
  journal={arXiv preprint arXiv:1311.2445},
  year={2013}
}

@article{efron1981jackknife,
  title={The jackknife estimate of variance},
  author={Efron, Bradley and Stein, Charles},
  journal={The Annals of Statistics},
  pages={586--596},
  year={1981},
  publisher={JSTOR}
}

@article{powers1970free,
  title={Free states of the canonical anticommutation relations},
  author={Powers, Robert T and St{\o}rmer, Erling},
  journal={Communications in Mathematical Physics},
  volume={16},
  number={1},
  pages={1--33},
  year={1970},
  publisher={Springer}
}

@article{barnfield2025high,
  title={High-dimensional analysis of single-layer attention for sparse-token classification},
  author={Barnfield, Nicholas and Cui, Hugo and Lu, Yue M},
  journal={arXiv preprint arXiv:2509.25153},
  year={2025}
}

@article{kermany2018identifying,
  title={Identifying medical diagnoses and treatable diseases by image-based deep learning},
  author={Kermany, Daniel S and Goldbaum, Michael and Cai, Wenjia and Valentim, Carolina CS and Liang, Huiying and Baxter, Sally L and McKeown, Alex and Yang, Ge and Wu, Xiaokang and Yan, Fangbing and others},
  journal={cell},
  volume={172},
  number={5},
  pages={1122--1131},
  year={2018},
  publisher={Elsevier}
}

@article{sorscher2022beyond,
  title={Beyond neural scaling laws: beating power law scaling via data pruning},
  author={Sorscher, Ben and Geirhos, Robert and Shekhar, Shashank and Ganguli, Surya and Morcos, Ari},
  journal={Advances in Neural Information Processing Systems},
  volume={35},
  pages={19523--19536},
  year={2022}
}

@article{kolossov2023towards,
  title={Towards a statistical theory of data selection under weak supervision},
  author={Kolossov, Germain and Montanari, Andrea and Tandon, Pulkit},
  journal={arXiv preprint arXiv:2309.14563},
  year={2023}
}

@article{wang2018optimal,
  title={Optimal subsampling for large sample logistic regression},
  author={Wang, HaiYing and Zhu, Rong and Ma, Ping},
  journal={Journal of the American Statistical Association},
  volume={113},
  number={522},
  pages={829--844},
  year={2018},
  publisher={Taylor \& Francis}
}

@article{ma2022asymptotic,
  title={Asymptotic analysis of sampling estimators for randomized numerical linear algebra algorithms},
  author={Ma, Ping and Chen, Yongkai and Zhang, Xinlian and Xing, Xin and Ma, Jingyi and Mahoney, Michael W},
  journal={Journal of Machine Learning Research},
  volume={23},
  number={177},
  pages={1--45},
  year={2022}
}

@inproceedings{seung1992query,
  title={Query by committee},
  author={Seung, H Sebastian and Opper, Manfred and Sompolinsky, Haim},
  booktitle={Proceedings of the fifth annual workshop on Computational learning theory},
  pages={287--294},
  year={1992}
}

@article{nakkiran2019more,
  title={More data can hurt for linear regression: Sample-wise double descent},
  author={Nakkiran, Preetum},
  journal={arXiv preprint arXiv:1912.07242},
  year={2019}
}

@article{belkin2019reconciling,
  title={Reconciling modern machine-learning practice and the classical bias--variance trade-off},
  author={Belkin, Mikhail and Hsu, Daniel and Ma, Siyuan and Mandal, Soumik},
  journal={Proceedings of the National Academy of Sciences},
  volume={116},
  number={32},
  pages={15849--15854},
  year={2019},
  publisher={National Academy of Sciences}
}

@article{geiger2020scaling,
  title={Scaling description of generalization with number of parameters in deep learning},
  author={Geiger, Mario and Jacot, Arthur and Spigler, Stefano and Gabriel, Franck and Sagun, Levent and d’Ascoli, St{\'e}phane and Biroli, Giulio and Hongler, Cl{\'e}ment and Wyart, Matthieu},
  journal={Journal of Statistical Mechanics: Theory and Experiment},
  volume={2020},
  number={2},
  pages={023401},
  year={2020},
  publisher={IOP Publishing}
}

@article{dohmatob2025less,
  title={Why Less is More (Sometimes): A Theory of Data Curation},
  author={Dohmatob, Elvis and Pezeshki, Mohammad and Askari-Hemmat, Reyhane},
  journal={arXiv preprint arXiv:2511.03492},
  year={2025}
}

@article{tong2001support,
  title={Support vector machine active learning with applications to text classification},
  author={Tong, Simon and Koller, Daphne},
  journal={Journal of machine learning research},
  volume={2},
  number={Nov},
  pages={45--66},
  year={2001}
}

@inproceedings{raj2022convergence,
  title={Convergence of uncertainty sampling for active learning},
  author={Raj, Anant and Bach, Francis},
  booktitle={International conference on machine learning},
  pages={18310--18331},
  year={2022},
  organization={PMLR}
}

@inproceedings{dasgupta2005analysis,
  title={Analysis of perceptron-based active learning},
  author={Dasgupta, Sanjoy and Kalai, Adam Tauman and Monteleoni, Claire},
  booktitle={International conference on computational learning theory},
  pages={249--263},
  year={2005},
  organization={Springer}
}

@inproceedings{wang2016noise,
  title={Noise-adaptive margin-based active learning and lower bounds under tsybakov noise condition},
  author={Wang, Yining and Singh, Aarti},
  booktitle={Proceedings of the AAAI Conference on Artificial Intelligence},
  volume={30},
  number={1},
  year={2016}
}

@inproceedings{balcan2013active,
  title={Active and passive learning of linear separators under log-concave distributions},
  author={Balcan, Maria-Florina and Long, Phil},
  booktitle={Conference on Learning Theory},
  pages={288--316},
  year={2013},
  organization={PMLR}
}

@article{rockafellar1976monotone,
  title={Monotone operators and the proximal point algorithm},
  author={Rockafellar, R Tyrrell},
  journal={SIAM journal on control and optimization},
  volume={14},
  number={5},
  pages={877--898},
  year={1976},
  publisher={SIAM}
}

@article{chandrasekher2024alternating,
  title={Alternating minimization for generalized rank-1 matrix sensing: sharp predictions from a random initialization},
  author={Chandrasekher, Kabir Aladin and Lou, Mengqi and Pananjady, Ashwin},
  journal={Information and Inference: A Journal of the IMA},
  volume={13},
  number={3},
  pages={iaae025},
  year={2024},
  publisher={Oxford University Press}
}

@article{cyffers2025optimal,
  title={Optimal Regularization for Performative Learning},
  author={Cyffers, Edwige and Mirrokni, Alireza and Mondelli, Marco},
  journal={arXiv preprint arXiv:2510.12249},
  year={2025}
}

@article{lou2025hyperparameter,
  title={Hyperparameter tuning via trajectory predictions: stochastic prox-linear methods in matrix sensing: M. Lou et al.},
  author={Lou, Mengqi and Verchand, Kabir Aladin and Pananjady, Ashwin},
  journal={Mathematical Programming},
  pages={1--61},
  year={2025},
  publisher={Springer}
}

@article{kaushik2024precise,
  title={Precise asymptotics of reweighted least-squares algorithms for linear diagonal networks},
  author={Kaushik, Chiraag and Romberg, Justin and Muthukumar, Vidya},
  journal={Advances in Neural Information Processing Systems},
  volume={37},
  pages={123755--123786},
  year={2024}
}

@inproceedings{saglietti2022solvable,
  title={Solvable model for inheriting the regularization through knowledge distillation},
  author={Saglietti, Luca and Zdeborov{\'a}, Lenka},
  booktitle={Mathematical and Scientific Machine Learning},
  pages={809--846},
  year={2022},
  organization={PMLR}
}

@article{takanami2025effect,
  title={The effect of optimal self-distillation in noisy gaussian mixture model},
  author={Takanami, Kaito and Takahashi, Takashi and Sakata, Ayaka},
  journal={arXiv preprint arXiv:2501.16226},
  year={2025}
}

@article{takahashi2022role,
  title={The role of pseudo-labels in self-training linear classifiers on high-dimensional gaussian mixture data},
  author={Takahashi, Takashi},
  journal={arXiv preprint arXiv:2205.07739},
  year={2022}
}

@article{okajima2025asymptotic,
  title={Asymptotic dynamics of alternating minimization for bilinear regression},
  author={Okajima, Koki and Takahashi, Takashi},
  journal={Journal of Statistical Mechanics: Theory and Experiment},
  volume={2025},
  number={5},
  pages={053301},
  year={2025},
  publisher={IOP Publishing}
}

@article{franz1997phase,
  title={Phase diagram of coupled glassy systems: A mean-field study},
  author={Franz, Silvio and Parisi, Giorgio},
  journal={Physical review letters},
  volume={79},
  number={13},
  pages={2486},
  year={1997},
  publisher={APS}
}

@article{donoho2016high,
  title={High dimensional robust m-estimation: Asymptotic variance via approximate message passing},
  author={Donoho, David and Montanari, Andrea},
  journal={Probability Theory and Related Fields},
  volume={166},
  number={3},
  pages={935--969},
  year={2016},
  publisher={Springer}
}

@article{thrampoulidis2018precise,
  title={Precise error analysis of regularized $ M $-estimators in high dimensions},
  author={Thrampoulidis, Christos and Abbasi, Ehsan and Hassibi, Babak},
  journal={IEEE Transactions on Information Theory},
  volume={64},
  number={8},
  pages={5592--5628},
  year={2018},
  publisher={IEEE}
}

@article{dasgupta2004analysis,
  title={Analysis of a greedy active learning strategy},
  author={Dasgupta, Sanjoy},
  journal={Advances in neural information processing systems},
  volume={17},
  year={2004}
}

@inproceedings{balcan2007margin,
  title={Margin based active learning},
  author={Balcan, Maria-Florina and Broder, Andrei and Zhang, Tong},
  booktitle={International Conference on Computational Learning Theory},
  pages={35--50},
  year={2007},
  organization={Springer}
}

@inproceedings{cesa2009robust,
  title={Robust bounds for classification via selective sampling},
  author={Cesa-Bianchi, Nicolo and Gentile, Claudio and Orabona, Francesco},
  booktitle={Proceedings of the 26th annual international conference on machine learning},
  pages={121--128},
  year={2009}
}

@inproceedings{dekel2010robust,
  title={Robust Selective Sampling from Single and Multiple Teachers.},
  author={Dekel, Ofer and Gentile, Claudio and Sridharan, Karthik and others},
  booktitle={COLT},
  pages={346--358},
  year={2010}
}

@inproceedings{orabona2011better,
  title={Better algorithms for selective sampling},
  author={Orabona, Francesco and Cesa-Bianchi, Nicolo and others},
  booktitle={Proceedings of the 28th international conference on machine learning: Bellevue, Washington, USA, june 28. july 2, 2011},
  pages={433--440},
  year={2011},
  organization={Omnipress}
}

@article{hacohen2023select,
  title={How to select which active learning strategy is best suited for your specific problem and budget},
  author={Hacohen, Guy and Weinshall, Daphna},
  journal={Advances in Neural Information Processing Systems},
  volume={36},
  pages={13395--13407},
  year={2023}
}

@article{hacohen2022active,
  title={Active learning on a budget: Opposite strategies suit high and low budgets},
  author={Hacohen, Guy and Dekel, Avihu and Weinshall, Daphna},
  journal={arXiv preprint arXiv:2202.02794},
  year={2022}
}

@inproceedings{wang2014new,
  title={A new active labeling method for deep learning},
  author={Wang, Dan and Shang, Yi},
  booktitle={2014 International joint conference on neural networks (IJCNN)},
  pages={112--119},
  year={2014},
  organization={IEEE}
}

@inproceedings{roth2006margin,
  title={Margin-based active learning for structured output spaces},
  author={Roth, Dan and Small, Kevin},
  booktitle={European conference on machine learning},
  pages={413--424},
  year={2006},
  organization={Springer}
}

@article{roh2019survey,
  title={A survey on data collection for machine learning: a big data-ai integration perspective},
  author={Roh, Yuji and Heo, Geon and Whang, Steven Euijong},
  journal={IEEE Transactions on Knowledge and Data Engineering},
  volume={33},
  number={4},
  pages={1328--1347},
  year={2019},
  publisher={IEEE}
}

@inproceedings{feng2024beyond,
  title={Beyond model collapse: Scaling up with synthesized data requires reinforcement},
  author={Feng, Yunzhen and Dohmatob, Elvis and Yang, Pu and Charton, Francois and Kempe, Julia},
  booktitle={ICML 2024 workshop on theoretical foundations of foundation models},
  year={2024}
}

@article{gordon1985some,
  title={Some inequalities for Gaussian processes and applications},
  author={Gordon, Yehoram},
  journal={Israel Journal of Mathematics},
  volume={50},
  number={4},
  pages={265--289},
  year={1985},
  publisher={Springer}
}

@article{freund1997decision,
  title={A decision-theoretic generalization of on-line learning and an application to boosting},
  author={Freund, Yoav and Schapire, Robert E},
  journal={Journal of computer and system sciences},
  volume={55},
  number={1},
  pages={119--139},
  year={1997},
  publisher={Elsevier}
}

@article{celentano2025state,
  title={State evolution beyond first-order methods I: Rigorous predictions and finite-sample guarantees},
  author={Celentano, Michael and Cheng, Chen and Pananjady, Ashwin and Verchand, Kabir Aladin},
  journal={arXiv preprint arXiv:2507.19611},
  year={2025}
}

@article{garg2025preventing,
  title={Preventing Model Collapse Under Overparametrization: Optimal Mixing Ratios for Interpolation Learning and Ridge Regression},
  author={Garg, Anvit and Bhattacharya, Sohom and Sur, Pragya},
  journal={arXiv preprint arXiv:2509.22341},
  year={2025}
}

@article{askari2025improving,
  title={Improving the Scaling Laws of Synthetic Data with Deliberate Practice},
  author={Askari-Hemmat, Reyhane and Pezeshki, Mohammad and Dohmatob, Elvis and Bordes, Florian and Astolfi, Pietro and Hall, Melissa and Verbeek, Jakob and Drozdzal, Michal and Romero-Soriano, Adriana},
  journal={arXiv preprint arXiv:2502.15588},
  year={2025}
}

@article{cui2021large,
  title={Large deviations in the perceptron model and consequences for active learning},
  author={Cui, Hugo and Saglietti, Luca and Zdeborov{\'a}, Lenka},
  journal={Machine Learning: Science and Technology},
  volume={2},
  number={4},
  pages={045001},
  year={2021},
  publisher={IOP Publishing}
}

@article{hanneke2014theory,
  title={Theory of active learning},
  author={Hanneke, Steve},
  journal={Foundations and Trends in Machine Learning},
  volume={7},
  number={2-3},
  year={2014}
}

@inproceedings{drineas2006sampling,
  title={Sampling algorithms for l 2 regression and applications},
  author={Drineas, Petros and Mahoney, Michael W and Muthukrishnan, Shan},
  booktitle={Proceedings of the seventeenth annual ACM-SIAM symposium on Discrete algorithm},
  pages={1127--1136},
  year={2006}
}

@article{ma2015statistical,
  title={A statistical perspective on algorithmic leveraging},
  author={Ma, Ping and Mahoney, Michael W and Yu, Bin},
  journal={The Journal of Machine Learning Research},
  volume={16},
  number={1},
  pages={861--911},
  year={2015},
  publisher={JMLR. org}
}
